\let\originalleft\left
\let\originalright\right
\renewcommand{\left}{\mathopen{}\mathclose\bgroup\originalleft}
\renewcommand{\right}{\aftergroup\egroup\originalright}
\newtheorem{theorem}{Theorem}[section]
\newtheorem{lemma}[theorem]{Lemma}
\newtheorem{proposition}[theorem]{Proposition}
\newtheorem{definition}[theorem]{Definition}
\newtheorem{remark}[theorem]{Remark}
\newcommand{\paren}[1]{\left(#1\right)}
\newcommand{\brac}[1]{\left[#1\right]}
\newcommand{\inner}[1]{\left\langle#1\right\rangle}
\newcommand{\norm}[1]{\left\|#1\right\|}
\newcommand{\set}[1]{\left\{#1\right\}}
\newcommand{\abs}[1]{\left\lvert #1 \right\rvert}
\newcommand{\order}[1]{\mathcal{O}\left(#1\right)}
\def \Exp {\mathbb{E}}
\def\PP{\mathbb{P}}
\newcommand{\eps}{\varepsilon}
\renewcommand{\det}[1]{\operatorname{det}\left(#1\right)}
\def \RR {\mathbb{R}}
\def \CC {\mathbb{C}}
\def \Simplex {\mathbb{S}}
\def \SketchingOperator {\mathrm{\Phi}}
\def \SketchingOperatorProb {\mathcal{A}}
\def \nMeasures {m}
\def\nSamples{n}
\def\sampleDim{d}
\def\sample{x}
\def\Sample{X}
\def\dataset{\mathbf{X}}
\def \Prob {\pi}
\def \mProb {{\tau}}
\def \empProb{\hat{\Prob}_{\nSamples}}
\def \Param {\theta}
\def \HH {\mu}
\def \estProb{\widetilde{\Prob}}
\def \ParamSpace {\mathbf{\Theta}}
\def \SampleSpace{\mathcal{Z}}
\def \FClass {\mathcal{F}}
\def \GClass {\mathcal{G}}
\def \MutualCoherence {M}
\def \Model {\mathfrak{S}}
\def \ModelML {\Model^{\mathtt{ML}}}
\def \ModelCT {\Model^{\mathtt{CT}}}
\def \BasicSet {\mathcal{T}}
\newcommand{\MixSetSep}[1]{\Model_{#1}\left(\BasicSet\right)}
\newcommand{\sep}{\eps}
\def\secant{\mathcal{S}}
\def \dipoleSet {\mathcal{D}}
\def \monopoleSet {\mathcal{M}}
\def \distIOPexgen {d}
\newcommand\normfclass[2]{\norm{#1}_{\mathcal{#2}}}
\newcommand\dnormlossf[2]{\norm{#1}_{\Delta\LossClass(\HypClass_{#2})}}
\newcommand\dnormloss[2]{\norm{#1}_{\Delta\LossClass}}
\newcommand\normdloss[2]{\norm{#1}_{\DLossClass(\HypClass_{#2})}}
\newcommand\normrff[1]{\norm{#1}_{\FClass}}
\newcommand\normrffd[1]{\norm{#1}_{\FClass'}}
\newcommand\normrffdd[1]{\norm{#1}_{\FClass''}}
\newcommand\normmah[2]{\norm{#1}_{#2}}
\newcommand\normkern[1]{\norm{#1}_{\kernel}}
\newcommand{\normTV}[1]{\norm{#1}_{\textnormal{TV}}}
\def \metricParam {\varrho}
\newcommand\KLdiv[2] {\textnormal{KL}(#1||#2)}
\newcommand\Entropy[1] {\textnormal{H}(#1)}
\def \loss {\ell}
\def \LossClass {\mathcal{L}}
\def \DLossClass {\Delta\mathcal{L}}
\def \hyp {h}
\def \HypClass{\mathcal{H}}
\def \HypClassSep{\HypClass_{\mathtt{sep}}}
\def \HypClassOpt{\HypClass_{\mathtt{opt}}}
\def \HypClassRef{\overline{\HypClass}}
\def \Risk {\mathcal{R}}
\def \proxyRisk {R}
\def \Sketch {\mathtt{Sketch}}
\def \ConcFn {c_{\kernel}}
\def \covnumsymbol {\mathrm{N}}
\newcommand{\covnum}[3]{\covnumsymbol\left(#1,#2,#3\right)}
\newcommand{\covnumsq}[3]{\covnumsymbol^2\left(#1,#2,#3\right)}
\newcommand{\covnumpow}[4]{\covnumsymbol^{#1}\left(#2,#3,#4\right)}
\def \coveps {\delta}
\def \probLevel {\zeta}
\newcommand{\dloss}{\Delta \loss}
\newcommand{\drisk}{\Delta \Risk}
\newcommand{\divp}{D}
\newcommand{\divg}{d}
\def \kernel {\kappa}
\def \nkernel {\overline{\kernel}}
\def \freqdist {\Lambda}
\def \freq {\omega}
\def \freqSpace{\Omega}
\def \feat {\phi}
\def \rfeat {{\feat_{\omega}}}
\def \rfeatj {{\feat_{\omega_j}}}
\def \rfeatn {{\feat_{\omega_n}}}
\def \vy {\mathbf{y}}
\def \vc {\mathbf{c}}
\def \vu {\mathbf{u}}
\def \mSigma {{\boldsymbol\Sigma}}
\def \mGamma {{\boldsymbol\Gamma}}
\def \mA {\mathbf{A}}
\def \mR {\mathbf{R}}
\def \mI {\mathbf{I}}
\newcommand{\enet}{\mathcal{C}}
\def \embd {\varphi}
\def \Ball {\mathcal{B}}
\def \weightSet {\mathcal{W}}
\def \covar {\mSigma}
\def \Cov {\mathbf{\Sigma}}
\definecolor{darkpurple}{rgb}{0.3,0,0.3}
\def \PCAdim {k}
\newcommand\rev[1]{{#1}}
\def\citepartone{\rev{\citep{gribonval:hal-01544609}}}
\newcommand\citeppartone[1]{\rev{\citep[#1]{gribonval:hal-01544609}}}
\begin{document}

\title{\rev{Statistical Learning Guarantees for\protect\\ Compressive Clustering and Compressive Mixture Modeling}}
\author{
R{\'e}mi Gribonval\thanks{Univ Lyon, Inria, CNRS, ENS de Lyon, UCB Lyon 1, LIP UMR 5668, F-69342, Lyon, France\protect\\
This work was initiated while R. Gribonval, N. Keriven and Y. Traonmilin were with Univ Rennes, Inria, CNRS, IRISA\protect\\ F-35000 Rennes, France}  
\qquad \hfill remi.gribonval@inria.fr\\
Gilles Blanchard 
\thanks{Université Paris-Saclay, CNRS,  Laboratoire de mathématiques d'Orsay\protect\\
  F-91405, Orsay, France.}
\qquad \hfill gilles.blanchard@universite-paris-saclay.fr\\
Nicolas Keriven\thanks{CNRS, GIPSA-lab, UMR 5216, F-38400 Saint-Martin-d'H\`eres, France}
\hfill 
nicolas.keriven@gipsa-lab.grenoble-inp.fr\\
Yann Traonmilin\thanks{CNRS, Univ. Bordeaux, Bordeaux INP,  IMB, UMR 5251, F-33400 Talence, France.} 
\hfill
yann.traonmilin@u-bordeaux.fr
 }

\maketitle

\begin{abstract}
\rev{We provide statistical learning guarantees for two unsupervised learning tasks in the context of {\em compressive statistical learning}, a general framework for resource-efficient large-scale learning that we introduced in a companion paper.
The principle of compressive statistical learning is to compress  a training collection, in one pass, into a low-dimensional {\em sketch} (a vector of random empirical generalized moments) that captures the information relevant to the considered learning task. We \rev{explicitly describe and analyze} random feature functions which empirical averages preserve the needed information for {\em compressive clustering} and {\em compressive Gaussian mixture  modeling} with fixed known variance, and establish sufficient sketch sizes given the problem dimensions.}
\end{abstract}

{\bf Keywords:}  Kernel mean embedding, random features, random moments, statistical learning, dimension reduction, \rev{unsupervised learning, clustering, mixture modeling.} 

\section{Introduction}

\rev{Motivated by the need to handle large-scale learning in streaming or distributed contexts with limited memory, we studied in a companion paper \citepartone\ a general {\em compressive learning framework} based on a generic sketching mechanism, using empirical averages of a random feature function to compress a whole training collection into a single {\em sketch} vector. Learning from such a sketch is expressed as a (generalized) moment fitting problem. Statistical learning guarantees control the excess risk of the overall procedure, provided the used random feature function satisfies certain properties with respect to the considered learning task. The size of the sketch can also be controlled.}

\rev{For compressive clustering, also known as compressive k-means \citep{Keriven2016}, and compressive Gaussian mixture modeling \citep{Keriven2015,keriven:hal-01329195}, good empirical results have been obtained with compressive statistical learning using {\em random Fourier moments}, i.e. using empirical averages of random Fourier features \citep{Rahimi2007}. Based on the general framework of \citepartone, we establish that these empirical results are supported by statistical learning guarantees controlling the excess risk and the sketch sizes as a function of the problem dimensions.}

\rev{ For {\bf compressive clustering} in dimension $\sampleDim$, we demonstrate that a sketch of size}
$$m \geq C\PCAdim^{2} \sampleDim \cdot \log^{2}k \cdot \left(\log(\PCAdim\sampleDim)+ \log(R/\sep)\right),$$ with $\PCAdim$ the prescribed number of clusters,
 $R$ a bound on the norm of the centroids, $\sep$ the separation between them, and $C$ some universal constant, is sufficient to obtain statistical guarantees. 
 \rev{To the best of our knowledge these are the first guarantees of this kind:
   while there is a substantial literature on asymptotical and nonasympotical
 guarantees on convergence rates for clustering 
 \cite{pollard1982central,chou1994distortion, linder1994rates, bartlett1998minimax, antos2005individual, antos2005improved, fischer2010quantization, Levrard:2013ho}, they are based on the full
 data while our focus is on analyzing {\em sketched} clustering.}

\rev{For {\bf compressive Gaussian mixture estimation} with known covariance in dimension $\sampleDim$, we identify} a finite sketch size sufficient to obtain statistical guarantees under a separation assumption between means, expressed in the Mahalanobis norm associated to the known covariance matrix. A parameter embodies the tradeoff between sketch size and separation. 
 At one end of the spectrum  
 the sketch size is quadratic 
 in $\PCAdim$ and exponential in $\sampleDim$ and guarantees are given for means that can be separated in $\order{\sqrt{\log\PCAdim}}$. This compares favorably to existing literature \citep{Achlioptas2005,Vempala2004} (recent works make use of more complex conditions that theoretically permit arbitrary separation \citep{Belkin2010}, however all these approaches use the full data while we consider a compressive approach that uses only a sketch of the data). At the other end  
 the sketch size is quadratic
 in $\PCAdim$ and {\em linear} in $\sampleDim$, however the required separation is \rev{of the order of} $\sqrt{\sampleDim\log\PCAdim}$.
 
 \rev{After recalling the outline of the general framework for compressive statistical learning of \citepartone\ in Section~\ref{sec:general_framework},} a sketching procedure and the associated learning guarantees for compressive clustering (respectively for compressive Gaussian mixture estimation) are given in Section~\ref{sec:clustering} (respectively Section~\ref{sec:gmm}). \rev{The rest of the paper is dedicated to establishing these results.} Section~\ref{sec:general_mixtures} describes a generic approach to establish learning guarantees when estimation of mixtures of elementary distributions are involved, as in the two \rev{considered} examples. The results are then specialized in Section~\ref{sec:RBFKernel} to mixtures based on location families, using weighted random Fourier features. The most technical proofs are postponed to appendices, including the proof of the main results of Sections~\ref{sec:clustering} and~\ref{sec:gmm}.

\section{\rev{Overview of compressive statistical learning}}\label{sec:general_framework}

\rev{In statistical learning, one is given a training collection  $\dataset=\{\sample_{i}\}_{i=1}^{\nSamples} \in \SampleSpace^{\nSamples}$ assumed to be drawn i.i.d. from a probability distribution $\Prob$ on the \rev{measurable space $(\SampleSpace,\mathfrak{Z})$}. In our examples, $\SampleSpace = \RR^\sampleDim$ \rev{is endowed with the
Borel $\sigma$-algebra $\mathfrak{Z}$}. A learning task (supervised or unsupervised) is formally defined through a {\em loss function} $\loss: (\sample,\hyp) \mapsto \loss(\sample,\hyp) \in \RR$ which measures how adapted is a training sample $\sample$ to a hypothesis $\hyp$ from some hypothesis class $\HypClass$.
\rev{The overall goal is to select a hypothesis $\hyp^{\star}$ minimizing the {\em expected risk} 
\(
\Risk(\Prob,\hyp) := \Exp_{\Sample \sim \Prob}\ \loss(\Sample,\hyp)\,,
\)
\begin{equation}\label{eq:BestHyp}
\hyp^{\star} \in\arg\min_{\hyp \in \HypClass} \Risk(\Prob,\hyp).
\end{equation}
\begin{remark}\label{rmk:integrability} We will always implicitly restrict our attention to probability distributions $\Prob$ that are $\LossClass(\HypClass)$-integrable, i.e. such that $x\mapsto \ell(x,h)$ is measurable and $\Prob$-integrable for all $h\in \HypClass$.\end{remark}
In practice, since the expected risk cannot be computed from the training collection, a common strategy is instead to minimize the {\em empirical risk} $\Risk(\empProb,\hyp)$ (or a regularized version) associated to the  {\em empirical probability distribution} $\empProb := \tfrac{1}{\nSamples}\sum_{i=1}^{\nSamples} \delta_{\sample_{i}}$ of the training samples. 
The two primary tasks considered in this paper are:}
\begin{itemize}
\item {\bf $k$-means \rev{(resp. $k$-medians)} clustering}: each hypothesis $\hyp$ corresponds to a set of (at most) $k$ candidate cluster centers, $c_1,\ldots,c_k$, and the loss is defined by the $k$-means cost $\loss(\sample,\hyp) = \min_{1 \leq l \leq k} \norm{\sample-c_{l}}_{2}^{2}$, \rev{(resp. the $k$-medians cost $\loss(\sample,\hyp) = \min_{1 \leq l \leq k} \norm{\sample-c_{l}}_{2}$)}. The hypothesis class $\HypClass$ may be further reduced by defining constraints on the considered centers (e.g., in some domain, or as we will see with some separation between centers). %
\item {\bf Gaussian Mixture Modeling  with fixed \rev{covariance matrix $\covar$}}: each hypothesis $h$ corresponds to the collection of weights $\alpha_{l}$ and means $c_{l}$ of a mixture of $k$ Gaussians \rev{$\Prob_{c_{l}}:=\mathcal{N}(c_{l},\covar)$}, which probability density function is denoted $\Prob_h(\sample) = \sum_{l=1}^{k}\alpha_{l} \mathcal{N}(c_{l},\covar)$. \rev{The mixture parameters may again further be constrained by boundedness or separation assumptions.} The loss function $\ell(\sample,h)=-\log \Prob_h(\sample)$ \rev{is associated to the maximum likelihood estimation principle}. %
\end{itemize}
\subsection{Principles of {\em compressive} statistical learning}
Compressive learning proposes instead to choose a measurable (nonlinear) {\em feature function} $\SketchingOperator:
 \SampleSpace \mapsto  \RR^{\nMeasures}\ \text{or}\ \CC^{\nMeasures}$ and to proceed in two steps:
\begin{enumerate}
\item Compute empirical averages of the feature function to obtain a \emph{sketch vector}
\begin{equation}\label{eq:GenericSketching}
\vy := \Sketch(\dataset) := \frac{1}{n} \sum_{i=1}^{\nSamples}\SketchingOperator(\sample_i) \in \RR^{\nMeasures} \text{or}\ \CC^{\nMeasures};
\end{equation}
\item Produce a hypothesis from the sketch by solving an optimization problem
\begin{equation}\label{eq:DefRiskProxy}
\hat{\hyp} \in \arg\min_{\hyp \in \HypClass} \proxyRisk(\vy,\hyp)
\end{equation}
with some adequate proxy $\proxyRisk(\vy,\cdot)$ for the empirical risk $\Risk(\empProb,\cdot)$.
\end{enumerate}
}
\rev{For the learning tasks considered in this paper, the feature function $\SketchingOperator$ is built using random Fourier features \citep{Rahimi2007,Rahimi2009}. For compressive $k$-medians/$k$-means with $h = (c_{1},\ldots,c_{k})$, the proxy is 
 \begin{equation}\label{eq:RiskProxyKMeans}
\proxyRisk_{\mathtt{clust.}}(\vy,\hyp) := \min_{\alpha \in \Simplex_{\PCAdim-1}}  \norm{\sum_{l=1}^k\alpha_l \SketchingOperator(c_l)-\vy}_2,
\end{equation}
with $\Simplex_{\PCAdim-1} := \set{\alpha \in \RR^{\PCAdim}: \alpha_\ell \geq 0; \sum_{l=1}^{k}\alpha_{l}=1}$ the simplex. For compressive GMM, it reads
 \begin{equation}\label{eq:RiskProxyGMM}
\proxyRisk_{\mathtt{GMM}}(\vy,\hyp) :=  \norm{\sum_{l=1}^k\alpha_l\Psi(c_{l})-\vy}_2.
\end{equation}
with $\Psi(c_{l}) := \Exp_{X \sim \mathcal{N}(c_l,\covar)} \SketchingOperator(X)$. The nonlinear parametric optimization problems corresponding to the minimization of~\eqref{eq:RiskProxyKMeans}-\eqref{eq:RiskProxyGMM} have be empirically addressed with success using continuous analogs of greedy algorithms for sparse reconstruction in inverse linear problems \citep{Keriven2015,Keriven2016,keriven:hal-01329195}.}

\subsection{\rev{Statistical learning guarantees for compressive statistical learning}}\label{sec:csl}
\rev{From a statistical learning perspective,  the goal} is to control the {\em excess risk} $\Risk(\Prob,\hat{\hyp}) - \Risk(\Prob,\hyp^\star)$.
In compressive statistical learning, this requires measuring a ``distance'' between the data distribution $\Prob$ and some {\em model set} $\Model$. 
Specific instances of model sets considered in this paper are:
\begin{itemize}
\item for compressive $k$-means / $k$-medians: mixtures of $k$ (separated) Diracs;
\item for compressive Gaussian mixture modeling: mixtures of $k$ (separated) Gaussians.
\end{itemize}
An important conceptual tool is the so-called {\em sketching operator} $\SketchingOperatorProb$ defined by 
\begin{equation}\label{eq:SketchingOperatorProbDef}
 \SketchingOperatorProb(\Prob)  :=\Exp_{\Sample \sim \Prob}\SketchingOperator(\Sample)
\end{equation}
which is linear in the sense that\footnote{One can indeed extend $\SketchingOperatorProb$ to a linear operator on the space of finite signed measures such that $\SketchingOperator$ is integrable, see \citeppartone{Appendix~\ref{P1-sec:FiniteSignedMeasures}}.} $\SketchingOperatorProb(\theta \Prob+(1-\theta)\Prob') = \theta\SketchingOperatorProb(\Prob)+(1-\theta)\SketchingOperatorProb(\Prob')$ for any $\Prob,\Prob'$ and $0 \leq \theta \leq 1$. 
A key property of this operator for compressive statistical learning is the preservation of certain task-driven metrics on probability distributions from the considered model set $\Model$. \rev{Denoting
 \begin{equation}\label{eq:defexcessrisk}
  \drisk_{\hyp_0}(\pi,\hyp) := \Risk(\pi,\hyp) - \Risk(\pi,\hyp_0),
\end{equation}
the excess risk relative to a reference hypothesis $\hyp_0$, the {\em excess risk divergence} with respect to $\hyp_0$ is
\begin{equation}\label{eq:lossdiv}
  \divp_{\hyp_0}^{\HypClass}(\Prob\|\Prob') := \sup_{\hyp \in \HypClass}  \paren{ \drisk_{\hyp_0}(\Prob,\hyp) - \drisk_{\hyp_0}(\Prob',\hyp)}
  \geq 
  \paren{ \drisk_{\hyp_0}(\Prob,\hyp_{0}) - \drisk_{\hyp_0}(\Prob',\hyp_{0})} = 0.
\end{equation}
}
Given a class $\GClass$ of measurable
functions $g: \SampleSpace \to \RR\ \text{or}\ \CC$, we denote 
\begin{equation}
\label{eq:DefFNorm}
\normfclass{\Prob-\Prob'}{G} := \sup_{g \in \GClass }\abs{\Exp_{\Sample \sim \Prob} g(\Sample)-\Exp_{\Sample' \sim \Prob'}g(\Sample')}.
\end{equation}
\rev{
Specializing this to the particular class $\GClass = \Delta\LossClass(\HypClass)$, where
\begin{equation}\label{eq:DefDLossClass}
\Delta\LossClass(\HypClass) := \set{\loss(\cdot,\hyp)-\loss(\cdot,\hyp'): \hyp,\hyp' \in \HypClass},
\end{equation} we get a task-driven metric $\normdloss{\Prob-\Prob'}{} = \sup_{\hyp_0}  \divp_{\hyp_0}^{\HypClass}(\Prob\|\Prob')$, capturing differences in terms of excess risks. The first main result of \citepartone\ that we will use is the following theorem.}
\begin{theorem}[{\citeppartone{Theorem~\ref{P1-thm:LRIPsuff_excess}}}]\label{thm:LRIPsuff_excess}
\rev{Consider a loss class $\LossClass(\HypClass) := \{\loss(\cdot,\hyp): \hyp \in \HypClass\}$, a feature function $\SketchingOperator$, and a model set $\Model$ that is both $\LossClass(\HypClass)$-integrable and $\set{\SketchingOperator}$-integrable (cf Remark~\ref{rmk:integrability}). 
 Assume that the sketching operator $\SketchingOperatorProb$ associated to $\SketchingOperator$ satisfies the following  
 {\em lower restricted isometry property (LRIP)} 
\begin{equation}
\label{eq:lowerDRIP}
\normdloss{\mProb'-\mProb}{} \leq C_{\SketchingOperatorProb} \norm{\SketchingOperatorProb(\mProb')-\SketchingOperatorProb(\mProb)}_{2}+\eta\quad \forall \mProb,\mProb' \in \Model
\end{equation}
for some finite constants $C_{\SketchingOperatorProb}>0$  and $\eta \geq 0$.}

\rev{Consider any training collection $\dataset=\{\sample_{i}\}_{i=1}^{\nSamples} \in \SampleSpace^{\nSamples}$  and denote $\empProb := \tfrac{1}{n} \sum_{i=1}^{n} \delta_{\sample_{i}}$.
Define}
\rev{\begin{align}
\vy &:= \Sketch(\dataset) = \SketchingOperatorProb(\empProb),\label{eq:ThmSketch2}\\
  \estProb \in \Model  \text{ satisfying }
             \norm{\SketchingOperatorProb(\estProb)-\vy}_2 & \leq (1+\nu) \inf_{\mProb \in \Model}
             \norm{\SketchingOperatorProb(\mProb) -\vy}_2 + \nu', 
             & \text{ for some constants } \nu,\nu' \geq 0,
             \label{eq:ThmDecoder2}\\
  \hat{\hyp}  \text{ satisfying }
               \Risk(\estProb,\hat\hyp) & \leq \inf_{\hyp \in \HypClass} \Risk(\estProb,\hyp) + \eps',
               & \text{ for some constant } \eps' \geq 0.
                              \label{eq:ThmLearn2}
\end{align}
}
\rev{
Then, for any probability distribution $\Prob$ that is both $\LossClass(\HypClass)$-integrable and $\set{\SketchingOperator}$-integrable:}
\rev{
\begin{equation}\label{eq:MainBoundExcessRisk}
  \forall \hyp_{0} \in \HypClass : \drisk_{\hyp_{0}}(\Prob,\hat{\hyp}) 
  \leq \distIOPexgen_{\hyp_{0}}^{\HypClass}(\Prob,\Model) +
   (2+\nu) C_{\SketchingOperatorProb} \norm{\SketchingOperatorProb (\Prob)-\SketchingOperatorProb(\empProb)}_2 + \eta + C_{\SketchingOperatorProb}\nu' + \eps',
 \end{equation}
where
 \begin{equation}\label{eq:DefMDist2}
\distIOPexgen_{\hyp_{0}}^{\HypClass}(\Prob,\Model) := \inf_{\mProb \in \Model} \paren{\divp_{\hyp_{0}}^{\HypClass}(\Prob\|\mProb)
   + (2+\nu)C_{\SketchingOperatorProb} \norm{\SketchingOperatorProb(\Prob) - \SketchingOperatorProb(\mProb)}_2}.
   \end{equation}
   }
\end{theorem}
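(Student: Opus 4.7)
The plan is to chain four ingredients, in order: the approximate ERM condition \eqref{eq:ThmLearn2} on $\hat\hyp$, a triangle-like decomposition that converts an excess-risk gap into a pairing between a function in $\Delta\LossClass(\HypClass)$ and a signed measure, the LRIP \eqref{eq:lowerDRIP} to replace that pairing by a sketch distance, and finally the approximate sketch-fitting condition \eqref{eq:ThmDecoder2} together with two triangle inequalities that relate $\SketchingOperatorProb(\estProb)$ to $\SketchingOperatorProb(\mProb)$, $\SketchingOperatorProb(\Prob)$ and $\SketchingOperatorProb(\empProb)$. The free parameter $\mProb\in\Model$ is introduced early and kept unspecified until the very last step, where taking its infimum produces $\distIOPexgen_{\hyp_{0}}^{\HypClass}(\Prob,\Model)$.

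Concretely, I would fix an arbitrary $\mProb \in \Model$ and use the identity
\begin{align*}
\drisk_{\hyp_{0}}(\Prob,\hat\hyp) &= \drisk_{\hyp_{0}}(\estProb,\hat\hyp) + \bigl[\drisk_{\hyp_{0}}(\Prob,\hat\hyp) - \drisk_{\hyp_{0}}(\mProb,\hat\hyp)\bigr] + \bigl[\drisk_{\hyp_{0}}(\mProb,\hat\hyp) - \drisk_{\hyp_{0}}(\estProb,\hat\hyp)\bigr].
\end{align*}
Evaluating \eqref{eq:ThmLearn2} at $\hyp=\hyp_0 \in \HypClass$ gives $\drisk_{\hyp_{0}}(\estProb,\hat\hyp) \leq \eps'$. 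The first bracket is at most $\divp_{\hyp_{0}}^{\HypClass}(\Prob\|\mProb)$ by the definition \eqref{eq:lossdiv} (since $\hat\hyp\in\HypClass$). The second bracket rewrites, after cancelling the $\Risk(\cdot,\hyp_0)$ terms, as $\Exp_{\mProb}[\loss(\cdot,\hat\hyp)-\loss(\cdot,\hyp_0)] - \Exp_{\estProb}[\loss(\cdot,\hat\hyp)-\loss(\cdot,\hyp_0)]$; since $\loss(\cdot,\hat\hyp)-\loss(\cdot,\hyp_0)\in\Delta\LossClass(\HypClass)$, this quantity is bounded by $\normdloss{\mProb-\estProb}{}$, to which the LRIP applies (both $\mProb$ and $\estProb$ lie in $\Model$) to yield the bound $C_{\SketchingOperatorProb}\norm{\SketchingOperatorProb(\mProb)-\SketchingOperatorProb(\estProb)}_{2}+\eta$.

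To handle this sketch term I would insert $\vy$ by the triangle inequality and invoke \eqref{eq:ThmDecoder2} with $\mProb$ as a competitor,
\[
\norm{\SketchingOperatorProb(\mProb)-\SketchingOperatorProb(\estProb)}_2 \leq \norm{\SketchingOperatorProb(\mProb)-\vy}_2 + \norm{\SketchingOperatorProb(\estProb)-\vy}_2 \leq (2+\nu)\norm{\SketchingOperatorProb(\mProb)-\vy}_2 + \nu',
\]
then insert $\SketchingOperatorProb(\Prob)$ through a second triangle inequality, using $\vy=\SketchingOperatorProb(\empProb)$ from \eqref{eq:ThmSketch2}, so as to split $\norm{\SketchingOperatorProb(\mProb)-\vy}_2$ into a bias term $\norm{\SketchingOperatorProb(\Prob)-\SketchingOperatorProb(\mProb)}_2$ and the purely stochastic term $\norm{\SketchingOperatorProb(\Prob)-\SketchingOperatorProb(\empProb)}_2$. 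Collecting everything yields, for every $\mProb\in\Model$,
\[
\drisk_{\hyp_{0}}(\Prob,\hat\hyp) \leq \bigl[\divp_{\hyp_{0}}^{\HypClass}(\Prob\|\mProb) + (2+\nu)C_{\SketchingOperatorProb}\norm{\SketchingOperatorProb(\Prob)-\SketchingOperatorProb(\mProb)}_2\bigr] + (2+\nu)C_{\SketchingOperatorProb}\norm{\SketchingOperatorProb(\Prob)-\SketchingOperatorProb(\empProb)}_2 + \eta + C_{\SketchingOperatorProb}\nu' + \eps',
\]
and the infimum over $\mProb\in\Model$ of the bracket reproduces \eqref{eq:MainBoundExcessRisk} via \eqref{eq:DefMDist2}.

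There is no genuine analytical obstacle; every step is either a definitional unfolding, a triangle inequality, or a direct application of one of the two approximate-optimality conditions. The subtle conceptual point to keep straight is that the hypothesis \eqref{eq:lowerDRIP} is stated with respect to the \emph{difference} class $\Delta\LossClass(\HypClass)$ rather than $\LossClass(\HypClass)$: this is precisely what lets the second bracket in the decomposition be bounded without requiring uniform control of the losses $\loss(\cdot,\hyp)$ themselves, which matters because the log-likelihood loss used in the GMM application is unbounded.
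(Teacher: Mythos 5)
Your proof is correct and is, as far as I can judge, essentially the canonical argument for this kind of LRIP-based instance-optimality bound. Note, however, that the paper you were given does not reprove this theorem: it is imported verbatim as Theorem~2.5 of the companion paper \citepartone, so there is no local proof to compare against line by line.

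That said, the structure you chose — expanding $\drisk_{\hyp_0}(\Prob,\hat\hyp)$ via the telescoping identity introducing a free $\mProb\in\Model$, bounding $\drisk_{\hyp_0}(\estProb,\hat\hyp)\leq\eps'$ from the approximate ERM condition, controlling the $\Prob$-versus-$\mProb$ gap by $\divp_{\hyp_0}^{\HypClass}(\Prob\|\mProb)$, converting the $\mProb$-versus-$\estProb$ gap to $\normdloss{\mProb-\estProb}{}$ and invoking the LRIP, then two triangle inequalities together with \eqref{eq:ThmDecoder2} to reach $(2+\nu)\norm{\SketchingOperatorProb(\mProb)-\vy}_2+\nu'$ and split off the stochastic term — is exactly the decoupling one expects, and the constants $(2+\nu)C_{\SketchingOperatorProb}$, $\eta$, $C_{\SketchingOperatorProb}\nu'$, $\eps'$ emerge in the right places. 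One small wording quibble: when handling the second bracket you say you are ``cancelling the $\Risk(\cdot,\hyp_0)$ terms,'' but those terms do not cancel; they are absorbed into the single test function $\loss(\cdot,\hat\hyp)-\loss(\cdot,\hyp_0)\in\Delta\LossClass(\HypClass)$ by linearity of the expectation. Your displayed rewriting is nonetheless correct, and your closing remark about why the bound is taken over the difference class $\Delta\LossClass(\HypClass)$ rather than $\LossClass(\HypClass)$ identifies the right conceptual point. A genuinely implicit assumption worth making explicit if you were to formalize this: both $\hyp_0$ and $\hat\hyp$ must lie in $\HypClass$ for the two bracket bounds to apply, which the theorem statement treats as understood from \eqref{eq:ThmLearn2}.
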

\rev{
The  bound~\eqref{eq:MainBoundExcessRisk} holds regardless of any distribution assumption on the training collection $\dataset$, and is valid for any $\hyp_{0} \in \HypClass$. The estimate is of course primarily of interest when $\dataset$ is drawn i.i.d. from $\Prob$ and with $\hyp_{0} = \hyp^{\star}$, in which case the left-hand side is the excess risk with respect to the optimum hypothesis, and $\norm{\SketchingOperatorProb(\Prob)-\SketchingOperatorProb(\empProb)}_{2}$ typically decays as $1/\sqrt{n}$.} \rev{As we will see, other choices of $\hyp_{0}$ will nevertheless turn out to be useful for the analysis of compressive $k$-means and compressive Gaussian Mixture Modeling, where we need to introduce a class $\HypClass$ satisfying a separation assumption and may be interested in the best hypothesis over a larger hypothesis class $\bar{\HypClass} \supseteq \HypClass$. }

\rev{The model sets $\Model$ considered for compressive $k$-means/$k$-medians consist of the probability distributions for which the optimum risk vanishes, i.e.  $\Risk(\Prob,\hyp^{\star}) = 0$. These model sets are precisely  chosen to ensure that the bias term \eqref{eq:DefMDist2} %
in the control~\eqref{eq:MainBoundExcessRisk} of the excess risk vanishes when the optimum risk itself vanishes, and we will provide more explicit bounds on this bias term. With these model sets, solving~\eqref{eq:ThmDecoder2} and~\eqref{eq:ThmLearn2} with $\nu=\nu'=\varepsilon'=0$ also precisely corresponds to minimizing the proxy~\eqref{eq:RiskProxyKMeans}. For compressive GMM, the considered model set $\Model$ consists of mixtures of Gaussians. Again, solving~\eqref{eq:ThmDecoder2} and~\eqref{eq:ThmLearn2} with $\nu=\nu'=\varepsilon'=0$ corresponds to minimizing the proxy~\eqref{eq:RiskProxyGMM}.}

\rev{The main technical contribution of this paper is to establish that the main assumption~\eqref{eq:lowerDRIP} of Theorem~\ref{thm:LRIPsuff_excess} holds for compressive clustering (resp. compressive GMM), using sketching operators based on random Fourier features with controlled sketch size $\nMeasures$. For this, we rely on the general approach described in \citeppartone{Section~\ref{P1-sec:ChoiceSketch}} relating random (Fourier) features and kernel mean embeddings of probability distributions. Another contribution is to provide more concrete estimates of the ``bias term'' \eqref{eq:DefMDist2}.
The results are first stated in the next sections, before giving the technical ingredients for their proof in the rest of the paper.}

\section{Compressive Clustering}\label{sec:clustering}
 We consider here two losses that measure clustering performance: the {\bf $\PCAdim$-means} and {\bf $\PCAdim$-medians} losses. 
Hypotheses are 
$k$-tuples $(c_{1},\ldots,c_{k})$ where the elements $c_{l} \in \RR^{\sampleDim}$ are the so-called centers of clusters. \rev{We speak of {\em unconstrained} $k$-means or $k$-medians if the cluster centers can
  be arbitrary points of $\RR^{\sampleDim\times k}$, and {\em constrained} otherwise (if the $k$-tuple of centers
  must belong to a specific subset of $\RR^{\sampleDim\times k}$, for instance if there 
  is a separation or a maximum norm constraint).}
The loss function for the clustering task is  
\begin{equation}
\label{eq:KMeansMediansLoss}
\loss(\sample,\hyp) := \min_{1 \leq l \leq k} \norm{\sample-c_{l}}_{2}^{p}
\end{equation}
with $p=2$ for $\PCAdim$-means (resp. $p=1$ for $\PCAdim$-medians) and 
$\Risk(\Prob,\hyp) = \Exp_{\Sample \sim \Prob}  \min_{1 \leq l \leq k} \norm{\Sample-c_{l}}_{2}^{p}$. 

\subsection{Main theoretical guarantees} \label{sec:main-kmeans}

\paragraph{\rev{Existence and properties of a minimizer of the risk.}}
For unconstrained $k$-means clustering, given any $\LossClass\rev{(\HypClass)}$-integrable probability distribution $\Prob$ there exists a global minimizer \rev{$\hyp^*_\Prob$} of the risk, see e.g~\cite[Lemma 8]{Pollard:1982by} which only assumes that the support of the distribution $\Prob$ contains at least $k$ elements. When this support has at most $k-1$ elements the existence of a global minimizer is trivial, and a zero risk is achieved. The existence of a global optimum was also proved in more general settings that include \rev{unconstrained} $k$-medians clustering, see, e.g. \cite[Theorem 1]{Graf:2007iw}.
For \rev{unconstrained} $k$-means, when the support of $\Prob$ contains at least $k$ elements, the global minimizer satisfies necessary conditions that were already identified in the work of \cite{Steinhaus_H_1956_j-bull-acad-polon-sci_division_cmp} and were formalized more recently in a generalized setting \cite[Proposition~1]{Graf:2007iw}. For a generic hypothesis $\hyp = (c_{1},\ldots,c_{k})$ denote 
\begin{equation}\label{eq:DefVoronoiCell}
V_{l}(\hyp):= \set{\sample \in \RR^{\sampleDim}: \norm{x-c_{l}}_{2} = \min_{j} \norm{\sample-c_{j}}_{2}},\qquad 1 \leq l \leq k
\end{equation}
the Voronoi cells of the $l$-th center. \rev{Let $(W_j(\hyp))_{1\leq h \leq k}$ be an arbitrary \emph{Voronoi partition} associated
  to these Voronoi cells, i.e., $W_j(\hyp)\subseteq V_j(\hyp)$ and $(W_j(\hyp))_{1\leq j\leq k}$ form a partition
  of $\RR^{\sampleDim}$ (in other words, this partition breaks the ``ties'' at the boundary of the Voronoi cells arbitrarily). For $x\in \SampleSpace = \RR^{\sampleDim}$, let $P_\hyp \sample = c_j$ if and only if $\sample \in W_{j}(h)$ (i.e. $P_\hyp$ maps $\sample$ to the closest cluster center with tie-breaking given by
  the Voronoi partition), and $P_h\Prob$ be the push-forward of a probability distribution $\Prob$ through
$P_h$. In other words, putting $\alpha_{l}(\Prob,\hyp) := \Prob(\Sample \in \rev{W}_{l}(\hyp)) = \Exp_{\Sample \sim \Prob} \mathbf{1}_{W_{l}(\hyp)}(\Sample)$ the probability that a sample  belongs to a \rev{piece of the Voronoi partition}, with $\mathbf{1}_{E}$ the indicator function of set $E$, we have $P_h\Prob = \sum_{i=1}^k \alpha_i(\Prob,\hyp) \delta_{c_i}$.}

\rev{For unconstrained $k$-means and $\Prob$ with a support containing at least $k$ points}, the $k$ optimal centers associated to $\hyp^{\star}_{\Prob}$ are pairwise distinct ($c_{i} \neq c_{j}$ for $i \neq j$) and satisfy the so-called centroid condition: for $1 \leq l \leq k$ we have $\alpha_{l}(\Prob,\hyp^{\star})>0$ and 
\begin{equation}\label{eq:CentroidCondition}
c_{l} = \Exp_{\Sample \sim \Prob}(\Sample | \Sample \in W_{l}(\hyp^{\star})) = \frac{\Exp_{\Sample \sim \Prob } 
\mathbf{1}_{W_{l}(\hyp^{\star})}(\Sample) \cdot  \Sample}{\alpha_{l}(\Prob,\hyp^{\star})}.
\end{equation}
Finally, the optimal centers are such that $\Prob(\Sample \in V_{i}(\hyp^{\star}) \cap V_{j}(\hyp^{\star})) = 0$ for each $i \neq j$, \rev{i.e., the distinction between Voronoi cells and partition pieces becomes essentially moot at the optimum}.

\paragraph{\rev{Choice of a model set.}}
\rev{Both $k$-means and $k$-medians are ``compression-type'' tasks \citeppartone{Definition~\ref{P1-def:comptypetask}}  (see reminders in Appendix~\ref{subsec:biasclustgmm}). Given a hypothesis $\hyp = (c_{1},\ldots,c_{k})$,} the distributions such that $\Risk(\Prob,\hyp)=0$ are precisely mixtures of $k$ Diracs, 
\begin{equation}\label{eq:DefKDiracs}
\rev{\ModelCT_{\hyp}} = \set{\sum_{l=1}^{k} \alpha_{l} \delta_{c_{l}}: %
\boldmath{\alpha} \in \Simplex_{\PCAdim-1}}
\end{equation}
where we recall that $\Simplex_{\PCAdim-1} := \set{\boldmath{\alpha} \in \RR^{\PCAdim}:\ \alpha_{l} \geq 0, \sum_{l=1}^{\PCAdim}\alpha_{l}=1}$ denotes the $(\PCAdim-1)$-dimensional simplex. \rev{Given a hypothesis class $\HypClass \subseteq (\RR^{\sampleDim})^{\PCAdim}$, following the approach outlined in \citeppartone{Section~\ref{P1-se:leastrestrictedmodel}}, we consider the model set $\ModelCT(\HypClass) := \cup_{\hyp \in \HypClass} \ModelCT_{\hyp}$.} 

\paragraph{\rev{Choice of a sketching function.}} Given that \rev{each} model set 
$\rev{\ModelCT(\HypClass)}$ consists of mixtures of Diracs, and by analogy with compressive sensing where random Fourier sensing yields RIP guarantees, \rev{it was proposed \citep{Keriven2016} to perform compressive clustering using random Fourier moments}. To establish our theoretical guarantees we rely on a reweighted version where the feature function $\SketchingOperator: \RR^\sampleDim \rightarrow \CC^\nMeasures$ is defined as:
  \begin{equation} \label{eq:weightkmeansmain}
  \SketchingOperator(\sample) := \frac{1}{\sqrt{\nMeasures}} \bigg[ \frac{e^{\jmath \freq_j^T \sample}}{w(\freq_{j})} \bigg]_{j=1,\ldots,\nMeasures}\quad
  \text{with}\ 
  w(\freq) :=
  1+\frac{s^{2}\norm{\freq}_{2}^2}{\sampleDim}
  \end{equation}
  where $s>0$ is a scale parameter and $\jmath$ is the imaginary unit.   The random frequencies $\freq_1,\ldots,\freq_\nMeasures$ in $\RR^\sampleDim$ are sampled independently from the distribution with density
    \begin{equation}\label{eq:samplingdensitymain}
      \freqdist(\freq) = \freqdist_{w,s}(\freq) \propto w^{2}(\freq) e^{ - \frac{s^{2}\norm{\freq}^2}{2}}\,,
      \end{equation}
This sketching operator is based on a reweighting of Random Fourier Features \citep{Rahimi2007}. The weights $w(\freq)$ are required for technical reasons (see general proof strategy in Section~\ref{sec:general_mixtures}) %
but may be an artefact of our proof technique.

\paragraph{\rev{Learning from a sketch by minimizing a proxy for the risk.}}
For any distribution in the model set, $\estProb = \sum_{l=1}^{k} \alpha_{l} \delta_{c_{l}} \in \rev{\ModelCT(\HypClass)}$, the optimum of minimization~\rev{\eqref{eq:ThmLearn2} (with $\eps'=0$)} is achieved with $\hat{\hyp} = (c_{1},\ldots,c_{k})$ hence, given a sketch vector $\vy$ and a class of hypotheses $\HypClass$, finding near minimizers of \eqref{eq:ThmDecoder2} and \eqref{eq:ThmLearn2} in \rev{Theorem~\ref{thm:LRIPsuff_excess}} corresponds to finding a (near) minimizer $\hat{\hyp} = (\hat{c}_{1},\ldots,\hat{c}_{k})\in \HypClass$ of the following proxy for the risk
\begin{equation}\label{eq:DefRiskProxyClustering}
\proxyRisk_{\mathtt{clust.}}(\vy,\hyp) := \min_{\alpha \in \Simplex_{k-1}} \norm{\vy-\sum_{i=1}^{k}\alpha_{i}\SketchingOperator(c_{i})}_{2}
\end{equation}
with $h = (c_{1},\ldots, c_{k})$
\rev{in a \emph{constrained} hypothesis class $\HypClass$ that we now describe.}

\paragraph{Separation constraint and approximate optimization.}
Because one can show (see Lemma~\ref{lem:SeparationNecessary}) that it is a necessary assumption to establish an LRIP, we optimize the proxy $\proxyRisk(\vy,\hyp)$ on a restricted hypothesis class
\begin{equation}\label{eq:DefSeparation}
\HypClass_{k,2\sep,R} := \set{(c_{l})_{l=1}^{\PCAdim}:\quad 
\min_{\rev{c_{l}\neq c_{l'}}} \norm{c_{l}-c_{l'}}_{2} \geq 2\sep,\quad \max_{l} \norm{c_{l}}_{2} \leq R}.
\end{equation}
\rev{There can be exact repetitions ($c_{l}=c_{l'}$ with $l \neq l'$), however distinct centers  $c_{l} \neq c_{l'}$ must be separated. The parameters $\sep$ and $R$ represent the resolution \rev{and extent} at which we seek clusters in the data through the minimization of $\proxyRisk(\vy,\hyp)$.
Observe that $\HypClass_{k,2\sep,R}$ is non-empty whenever $0 \leq \sep \leq R$. When $R/\sep$ is close to one, its elements may nevertheless be forced to have repeated entries. }
The following result is proved in Appendix~\ref{subsec:clustfinalproof}. Besides leveraging Theorem~\ref{thm:LRIPsuff_excess} the proof exploits a generic approach developed in Section~\ref{sec:general_mixtures} to establish the LRIP on mixture models using Theorem~\ref{thm:mainLRIP}, and its specialization to mixtures based on location families, which is developed in Section~\ref{sec:RBFKernel}.
\rev{We remind the reader that $P_{\hyp}$ is the projection onto the centroids of $\hyp$, as described at the beginning of this section.}
\begin{theorem}
  \label{thm:mainkmeansthm}
Consider $\SketchingOperator$ as in~\eqref{eq:weightkmeansmain} where the $\freq_{j}$ are drawn according to~\eqref{eq:samplingdensitymain} with scale $s>0$. Given an integer $k \geq 1$ define $\sep:= \rev{4 s\sqrt{\log(ek)}}$
and consider $R \geq \sep$. 
\begin{enumerate}
\item There is a universal constant $C>0$ such that, for any $\probLevel,\coveps \in (0,1)$, if the sketch size satisfies
\footnote{\label{ft:logek}The sketch sizes from the introduction and \citeppartone{Table~\ref{P1-tab:summary}} involve $\log(\cdot)$ factors which have correct order of magnitude when their argument is large, but vanish when their argument is one. Factors $\log(e \cdot)$ do not have this issue.}
  \begin{equation}
\label{eq:sksizekmeans}
  \nMeasures \geq C \coveps^{-2} 
    \left[k^{2}\sampleDim \cdot \left(1+\log k\sampleDim + \log \tfrac{R}{\sep} + \log \tfrac{1}{\coveps}\right) + k \log \tfrac{1}{\probLevel} \right] \cdot \log(ke) \min\left(\log(ek),\sampleDim\right),
  \end{equation}
 with probability at least $1-\probLevel$ on the draw of $(\freq_{j})_{j=1}^{m}$ the operator $\SketchingOperatorProb$ induced by $\SketchingOperator$ satisfies 
\begin{equation}\label{eq:mainRIPKernelClustering}
1-\delta \leq 
\frac{\norm{\SketchingOperatorProb(\mProb)-\SketchingOperatorProb(\mProb')}_{2}^{2}}{\norm{\mProb-\mProb'}_{\kappa}^{2}}
\leq 1+\delta, \qquad \forall \mProb,\mProb' \in \ModelCT(\HypClass_{k,2\sep,R}).
\end{equation}
\rev{with $\normkern{\cdot}$ the mean map discrepancy
\footnote{see \eqref{eq:DefMMD} in Section~\ref{sec:ingredientsLRIP} for details.}  (MMD) associated to kernel $\kernel(x,y) \propto \exp(-\norm{x-y}^{2}_{2}/s^{2})$.}
\item 
If~\eqref{eq:mainRIPKernelClustering} holds then:
 \begin{itemize}
 \item The function $\SketchingOperator$ is $L$-Lipschitz with $L=\sqrt{1+\delta}/s$ with respect to Euclidean norms.
 \item Consider any samples $\sample_{i} \in \RR^{\sampleDim}$,  $1 \leq i \leq \nSamples$ (represented by the empirical distribution $\empProb$) and any probability distribution $\Prob$ on $\RR^{\sampleDim}$ \rev{that is both $\LossClass(\HypClass)$-integrable and $\set{\SketchingOperator}$-integrable}.   \\
 Consider a constrained class $\HypClass \subseteq \HypClass_{k,2\sep,R} \subseteq \HypClassRef := (\RR^{\sampleDim})^{k}$ 
 and denote $\Risk(\cdot,\hyp)$ the risk associated to $k$-medians (resp. $k$-means), $\hyp^{\star} \in \arg\min_{\hyp \in \HypClassRef} \Risk(\Prob,\hyp)$, $\Prob^{\star} := P_{\hyp^{\star}}\Prob$.
 Consider 
 $\hat{h} \in \HypClass$ and $\nu,\nu'>0$ such that
\begin{equation}\label{eq:DefQuasiOptimumClusteringViaProxy}
\proxyRisk_{\mathtt{clust.}}(\vy,\hat{\hyp}) \leq (1+\nu) \inf_{\hyp \in \HypClass} \proxyRisk_{\mathtt{clust.}}(\vy,\hyp) + \nu'.
\end{equation}
with the proxy $\proxyRisk_{\mathtt{clust.}}(\vy,\cdot)$ defined in~\eqref{eq:DefRiskProxyClustering} and the sketch vector $\vy := \frac{1}{n} \sum_{i=1}^{\nSamples}\SketchingOperator(\sample_i) = \SketchingOperatorProb(\empProb)$.

The excess risk of $\hat{\hyp}$ with respect to $\hyp^{\star}$ satisfies
\begin{align}\label{eq:MainBoundClusteringThm}
  \drisk_{\hyp^{\star}}(\Prob,\hat{\hyp})
    \leq & 
    (2+\nu)C_{\SketchingOperatorProb}
\norm{\SketchingOperatorProb(\Prob)-\SketchingOperatorProb(\empProb)}_{2} 
+(2+\nu)C_{\SketchingOperatorProb}
\norm{\SketchingOperatorProb(\Prob)-\SketchingOperatorProb(\Prob^{\star})}_{2}
+ d(\Prob^{\star},\HypClass)
+ C_{\SketchingOperatorProb}\nu'
\end{align}
where $C_{\SketchingOperatorProb} \leq 56\sqrt{k/(1-\delta)} (2R)^{p}$
(with $p=1$ for $k$-medians, $p=2$ for $k$-means) and
\begin{equation}\label{eq:BiasKMeansMainThm}
d(\Prob^{\star},\HypClass)
  := \inf_{\mProb \in \ModelCT(\HypClass)} 
\left\{
\sup_{\hyp \in \HypClass} \left(\Risk(\Prob^{\star},\hyp)-\Risk(\mProb,\hyp)\right)
+ (2+\nu)C_{\SketchingOperatorProb}\norm{\SketchingOperatorProb(\Prob^{\star})-\SketchingOperatorProb(\mProb)}_{2}\right\}.
\end{equation}
 \end{itemize}
 \end{enumerate}
\end{theorem}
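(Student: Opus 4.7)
The plan is to treat the two parts independently, since Part 1 is a probabilistic statement about the random sketching operator while Part 2 deterministically converts the resulting RIP into a statistical guarantee via Theorem~\ref{thm:LRIPsuff_excess}.

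For Part 1, the strategy is to recognize that $\ModelCT(\HypClass_{k,2\sep,R})$ is exactly a $k$-mixture model built from the location family $\set{\delta_{c}:c\in \RR^{\sampleDim}}$ subject to a bounded-norm and pairwise-separation constraint, so the generic mixture LRIP of Section~\ref{sec:general_mixtures} (Theorem~\ref{thm:mainLRIP}), specialized to location families with reweighted random Fourier features in Section~\ref{sec:RBFKernel}, applies verbatim. The reweighting $1/w(\freq)$ in~\eqref{eq:weightkmeansmain} together with the sampling density~\eqref{eq:samplingdensitymain} is chosen precisely so that the mean kernel induced on $\RR^{\sampleDim}$ is the Gaussian kernel $\kernel(\sample,\sample')=\exp(-\norm{\sample-\sample'}_{2}^{2}/(2s^{2}))$, while $w(\freq)$ keeps $\SketchingOperator$ uniformly bounded in sup-norm, a prerequisite for the sub-Gaussian concentration underlying Theorem~\ref{thm:mainLRIP}. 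The separation threshold $\sep = 4s\sqrt{\log(ek)}$ is calibrated so that $\kernel(c_l,c_{l'})\leq (ek)^{-8}$ whenever $\norm{c_l-c_{l'}}_2\geq \sep$, making the Gram matrix of any $k$ separated centers essentially orthogonal, which in turn controls the normalized secant of $\ModelCT(\HypClass_{k,2\sep,R})$. Substituting the admissibility/concentration constants of $\kernel$, a covering number bound for the separated-centers parameter space (polynomial in $k$, $\sampleDim$, $R/\sep$ and $1/\coveps$), and the metric entropy of the simplex into the generic sketch-size formula yields~\eqref{eq:sksizekmeans}.

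For Part 2, the Lipschitz claim is immediate from the upper inequality in~\eqref{eq:mainRIPKernelClustering} applied to the single-Dirac pair $\mProb=\delta_{\sample}$, $\mProb'=\delta_{\sample'}$ (both in the model set since one may place all $k$ repeated centers at one point): $\norm{\SketchingOperator(\sample)-\SketchingOperator(\sample')}_{2}^{2}=\norm{\SketchingOperatorProb(\delta_{\sample}-\delta_{\sample'})}_{2}^{2}\leq (1+\delta)\normkern{\delta_{\sample}-\delta_{\sample'}}^{2}=2(1+\delta)(1-\kernel(\sample,\sample'))\leq (1+\delta)\norm{\sample-\sample'}_{2}^{2}/s^{2}$. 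The excess-risk bound~\eqref{eq:MainBoundClusteringThm} is obtained in three further steps. First, I would promote the RIP~\eqref{eq:mainRIPKernelClustering} to the LRIP~\eqref{eq:lowerDRIP} on $\Delta\LossClass(\HypClass)$ with $\eta=0$ and $C_{\SketchingOperatorProb}\leq 56\sqrt{\PCAdim/(1-\delta)}(2R)^{p}$: for mixtures of Diracs in the model set the risk reduces to a weighted sum of loss values bounded by $(2R)^{p}$, and Cauchy--Schwarz in the simplex of weights produces a $\sqrt{\PCAdim}$ factor when dominating the loss-class supremum by $\normkern{\mProb-\mProb'}$, which the lower RIP then dominates by $\norm{\SketchingOperatorProb(\mProb-\mProb')}_{2}$. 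Second, I would observe that $\proxyRisk_{\mathtt{clust.}}(\vy,\hyp)=\inf_{\mProb\in \ModelCT_{\hyp}}\norm{\SketchingOperatorProb(\mProb)-\vy}_{2}$ so that any quasi-minimizer $\hat{\hyp}$ of $\proxyRisk_{\mathtt{clust.}}$ on $\HypClass$ satisfying~\eqref{eq:DefQuasiOptimumClusteringViaProxy} yields a pair $(\estProb,\hat{\hyp})$ jointly fulfilling~\eqref{eq:ThmDecoder2}--\eqref{eq:ThmLearn2} with $\eps'=0$. Third, applying Theorem~\ref{thm:LRIPsuff_excess} with $\hyp_{0}:=\hyp^{\star}$ produces a bound whose bias term $\distIOPexgen_{\hyp^{\star}}^{\HypClass}(\Prob,\ModelCT(\HypClass))$ must be split by interposing $\Prob^{\star}=P_{\hyp^{\star}}\Prob\in \ModelCT(\HypClassRef)$: the triangle inequality on the sketch-distance piece supplies the $\norm{\SketchingOperatorProb(\Prob)-\SketchingOperatorProb(\Prob^{\star})}_{2}$ term, while using $\Risk(\Prob^{\star},\hyp^{\star})=0$ together with the optimality $\Risk(\Prob,\hyp^{\star})\leq \Risk(\Prob,h)$ for $h\in\HypClassRef$ allows the divergence $\divp_{\hyp^{\star}}(\Prob\|\mProb)$ to be replaced by the one-sided quantity $\sup_{h\in\HypClass}(\Risk(\Prob^{\star},h)-\Risk(\mProb,h))$, producing exactly~\eqref{eq:MainBoundClusteringThm}.

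The main obstacle is Part 1: controlling the covering numbers of the normalized secant of the separated $k$-mixture model in the kernel metric so that the required sketch size remains polynomial in $\PCAdim$ and $\sampleDim$ rather than growing combinatorially with the number of orderings of $k$ centers. The separation-to-bandwidth ratio $\sep/s=4\sqrt{\log(ek)}$ and the reweighting $w$ are both pivotal: the former forces enough kernel-orthogonality between distinct atoms to treat them as a near-linear basis, while the latter provides the uniform sup-norm bound on $\SketchingOperator$ without which the sub-Gaussian concentration at the heart of Theorem~\ref{thm:mainLRIP} would fail for heavy-tailed Fourier features. The secondary technical difficulty in Part 2 is the bias-splitting manipulation, which crucially exploits the fact that $\ModelCT$ is the zero-risk model for $k$-means/$k$-medians and that the optimum $\hyp^{\star}$ lies in the larger, unconstrained class $\HypClassRef$.
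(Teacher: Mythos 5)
Your Part~1 outline follows the paper's route (Theorem~\ref{thm:mainLRIP} $\to$ Theorem~\ref{thm:LRIPGenericMixture} $\to$ Theorem~\ref{thm:LRIPDiracGaussMixture} in the location-family specialization), though be aware that the reweighting $1/w(\freq)$ does \emph{not} serve to bound $\SketchingOperator$ in sup-norm — unweighted Fourier features are already bounded by $1$; its actual role (Lemma~\ref{lem:ShiftBasedRF}, Lemma~\ref{lem:TangentLocationBased}) is to render the constants $\normrffd{\Prob_0}=\sup_\freq\norm{\freq}_2/w(\freq)$ and $\normrffdd{\Prob_0}=\sup_\freq\norm{\freq}_2^2/w(\freq)$ finite, which controls the Lipschitz constant of $\Param\mapsto\psi(\Param)$ and the covering numbers of the dipole set $\dipoleSet$. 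Your Lipschitz argument is fine modulo a small normalization slip ($\kernel(x,x)=C_\freqdist^{-2}\neq 1$, but $C_\freqdist\geq 1$ works in your favor) and an implicit appeal to shift-invariance to extend the RIP conclusion beyond the ball $\Ball(0,R)$.

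The genuine gap is in your handling of the excess-risk bound. You propose to apply Theorem~\ref{thm:LRIPsuff_excess} ``with $\hyp_0:=\hyp^\star$.'' This is not allowed: the theorem's conclusion~\eqref{eq:MainBoundExcessRisk} requires $\hyp_0\in\HypClass$, while $\hyp^\star$ is a minimizer over the larger class $\HypClassRef=(\RR^{\sampleDim})^k$ and need not satisfy the separation and boundedness constraints defining $\HypClass=\HypClass_{k,2\sep,R}$. The entire point of the bias term $d(\Prob^\star,\HypClass)$ is precisely to account for $\hyp^\star\notin\HypClass$. The paper's Lemma~\ref{lem:BiasKMeansConstrained} instead writes $\drisk_{\hyp^\star}(\Prob,\hat\hyp)=\drisk_{\hyp^\star}(\Prob,\hyp_0)+\drisk_{\hyp_0}(\Prob,\hat\hyp)$ for an auxiliary $\hyp_0\in\HypClass$, applies Theorem~\ref{thm:LRIPsuff_excess} to $\drisk_{\hyp_0}(\Prob,\hat\hyp)$, and then takes the infimum over $\hyp_0\in\HypClass$. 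This infimum is not a cosmetic convenience: when you expand $\divp_{\hyp^\star}^{\HypClass}(\Prob^\star\|\mProb)$ you get $\sup_{\hyp\in\HypClass}(\Risk(\Prob^\star,\hyp)-\Risk(\mProb,\hyp))+\Risk(\mProb,\hyp^\star)-\Risk(\Prob^\star,\hyp^\star)$, and the unwanted term $\Risk(\mProb,\hyp^\star)$ is cancelled only by observing that $\inf_{\hyp_0\in\HypClass}\drisk_{\hyp^\star}(\mProb,\hyp_0)=-\Risk(\mProb,\hyp^\star)$ for any $\mProb\in\ModelCT(\HypClass)$ (because $\inf_{\hyp_0\in\HypClass}\Risk(\mProb,\hyp_0)=0$ on the zero-risk model). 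Your fixed-$\hyp_0$ version leaves that term hanging. Separately, your claim that ``optimality $\Risk(\Prob,\hyp^\star)\leq\Risk(\Prob,h)$'' makes the divergence $\divp_{\hyp^\star}^{\HypClassRef}(\Prob\|\Prob^\star)$ drop out is not a one-liner for $k$-means: the paper proves it via the centroid condition (Lemma~\ref{lem:BiasKMeans}), and the argument genuinely requires $\HypClassRef=(\RR^\sampleDim)^k$ so that $\hyp^\star$ satisfies that condition; for $k$-medians it follows from the companion paper's Lemma~3.2. Without the infimum over $\hyp_0\in\HypClass$ and a proof of this vanishing, you do not reach~\eqref{eq:MainBoundClusteringThm}.
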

The first term in~\eqref{eq:MainBoundClusteringThm} is a measure of statistical error that can be easily controlled since $\norm{\SketchingOperator(\sample)}_2 \leq 1$ by construction~\eqref{eq:weightkmeansmain}. By the vectorial Hoeffding's inequality \citep{Pinelis92}, if $x_{i}, 1 \leq i \leq n$ are drawn i.i.d. with respect to $\Prob$ then with high probability it holds that $\norm{\SketchingOperatorProb(\Prob)-\SketchingOperatorProb(\empProb)}_{2}\lesssim 1/ \sqrt{n}$.
The constants $\nu$ and $\nu'$ measure the numerical error in optimizing $\proxyRisk_{\mathtt{clust.}}(\vy,\hyp)$ over $\HypClass$.

The second term measures how close $\Prob$ is to $\Prob^{\star} = P_{\hyp^{\star}}\Prob$, i.e., how ``clusterable'' is $\Prob$. By \citeppartone{Lemma~\ref{P1-lem:LemmaBiasTermBis}} and  the Lipschitz property of $\SketchingOperator$ this is bounded by $L \cdot \Risk^{1/p}(\Prob,\hyp^{\star})$. This bound seems however pessimistic and we leave to future work a possible sharpening for certain settings.

The third term $d(\Prob^{\star},\HypClass)$ measures of how far $\hyp^{\star}$ (weighted by the coefficients $\alpha_{i}$) is from belonging to $\HypClass$.   The Lipschitz property of $\SketchingOperator$ yields more explicit bounds that may deserve to be further sharpened.
\begin{lemma}\label{lem:bounddisttosepclust}
With the notations of Theorem~\ref{thm:mainkmeansthm} (recall in particular that we assume $\HypClass \subseteq \HypClass_{k,2\sep,R}$), consider a mixture of $k$ Diracs $\Prob^{\star} := \sum_{i=1}^{k}\alpha_{i} \delta_{c_{i}}$ with $c_{i} \in \RR^{\sampleDim}$ and $\alpha \in \Simplex_{k-1}$.  If~\eqref{eq:mainRIPKernelClustering} holds then for the $k$-medians clustering task we have
\begin{align*}
d_{\mathtt{k-medians}}(\Prob^{\star},\HypClass)
&\leq
C \cdot \inf_{\hyp \in \HypClass} \Risk_{\mathtt{k-medians}}(\Prob^{\star},\hyp),
                                                  \intertext{while for the $k$-means clustering task we have}
d_{\mathtt{k-means}}(\Prob^{\star},\HypClass)
& \leq
\inf_{\hyp \in \HypClass} \Big\{
\Risk_{\mathtt{k-means}}(\Prob^{\star},\hyp) + 4 C R \cdot \Risk_{\mathtt{k-medians}}(\Prob^{\star},\hyp)
\Big\},
\end{align*}
with $C :=  1 + (2+\nu) \rev{500} \sqrt{k\log(ek)(1+\delta)/(1-\delta)} \frac{R}{\sep}$.
\end{lemma}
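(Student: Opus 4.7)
The plan is to exploit the freedom in choosing $\mProb$ inside the infimum~\eqref{eq:BiasKMeansMainThm}. For each candidate reference hypothesis $\hyp_0 \in \HypClass$, I take the explicit element $\mProb := P_{\hyp_0}\Prob^\star = \sum_{i=1}^{\PCAdim} \alpha_i \delta_{P_{\hyp_0} c_i}$, which is a mixture of at most $\PCAdim$ Diracs supported on the centroids of $\hyp_0$ and therefore belongs to $\ModelCT_{\hyp_0} \subseteq \ModelCT(\HypClass)$. With this choice the quantity $d(\Prob^\star,\HypClass)$ splits into a risk-gap term and a sketch term, each to be controlled in terms of the risks of $\Prob^\star$ at $\hyp_0$; the announced bounds will follow by taking the infimum over $\hyp_0 \in \HypClass$.

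For the risk-gap term $\sup_{\hyp \in \HypClass}(\Risk(\Prob^\star, \hyp) - \Risk(\mProb, \hyp))$, the core tool is the pointwise reverse triangle inequality
$$\left|\min_{\ell} \|c - c'_\ell\|_2 - \min_{\ell} \|c'' - c'_\ell\|_2\right| \leq \|c - c''\|_2,$$
applied with $c = c_i$ and $c'' = P_{\hyp_0} c_i$. For $\PCAdim$-medians, averaging against $\alpha_i$ directly yields a uniform-in-$\hyp$ bound by $\Risk_{\mathtt{k-medians}}(\Prob^\star, \hyp_0)$. For $\PCAdim$-means I set $a_i := \min_\ell \|c_i - c'_\ell\|_2$ and $b_i := \min_\ell \|P_{\hyp_0} c_i - c'_\ell\|_2$, use the triangle inequality $a_i \leq b_i + \|c_i - P_{\hyp_0} c_i\|_2$, and crucially exploit the boundedness $b_i \leq 2R$ which holds because both the centroid $P_{\hyp_0} c_i$ of $\hyp_0$ and the center $c'_\ell$ of $\hyp$ lie in the ball of radius $R$ (since $\hyp_0,\hyp \in \HypClass \subseteq \HypClass_{\PCAdim, 2\sep, R}$). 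Expanding $a_i^2 \leq (b_i + \|c_i - P_{\hyp_0} c_i\|_2)^2$ gives $a_i^2 - b_i^2 \leq 4R\|c_i - P_{\hyp_0} c_i\|_2 + \|c_i - P_{\hyp_0} c_i\|_2^2$, and averaging against $\alpha_i$ produces the uniform-in-$\hyp$ bound $4R \cdot \Risk_{\mathtt{k-medians}}(\Prob^\star, \hyp_0) + \Risk_{\mathtt{k-means}}(\Prob^\star, \hyp_0)$.

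For the sketch term, the $L$-Lipschitz property of $\SketchingOperator$ supplied by Theorem~\ref{thm:mainkmeansthm} (with $L = \sqrt{1+\delta}/s$), combined with the triangle inequality for the Bochner-type sum defining $\SketchingOperatorProb$, yields
$$\|\SketchingOperatorProb(\Prob^\star) - \SketchingOperatorProb(\mProb)\|_2 \leq \sum_{i=1}^{\PCAdim} \alpha_i \|\SketchingOperator(c_i) - \SketchingOperator(P_{\hyp_0} c_i)\|_2 \leq L \cdot \Risk_{\mathtt{k-medians}}(\Prob^\star, \hyp_0).$$
Using the scale relation $\sep = 4s\sqrt{\log(e\PCAdim)}$ to rewrite $L = 4\sqrt{(1+\delta)\log(e\PCAdim)}/\sep$, together with the bound $C_{\SketchingOperatorProb} \leq 56\sqrt{\PCAdim/(1-\delta)}(2R)^p$ from Theorem~\ref{thm:mainkmeansthm}, I obtain
$(2+\nu) C_{\SketchingOperatorProb} L \leq 56 \cdot 2^p \cdot 4 \cdot (2+\nu) R^p\sqrt{\PCAdim(1+\delta)\log(e\PCAdim)/(1-\delta)}/\sep$. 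Plugging everything into~\eqref{eq:BiasKMeansMainThm}, taking the infimum over $\hyp_0$ and checking that the resulting numerical constants ($448$ for $\PCAdim$-medians, and $896 \leq 4 \cdot 500$ for $\PCAdim$-means after factoring $4R$) fit inside the announced $C$ closes the argument.

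The main obstacle is purely administrative: treating the two tasks ($p=1$ and $p=2$) in parallel while tracking the exponent through $C_{\SketchingOperatorProb}$, and making sure that the two simultaneous occurrences of ``bounded centers'' — those of the reference $\hyp_0$ \emph{and} those of the comparison $\hyp$ — are both available via membership in $\HypClass_{\PCAdim, 2\sep, R}$, which is what allows the cross-term $b_i \leq 2R$ in the $\PCAdim$-means expansion. No genuinely new analytic ingredient is required beyond the Lipschitz bound and the near-isometry~\eqref{eq:mainRIPKernelClustering} already established in Theorem~\ref{thm:mainkmeansthm}.
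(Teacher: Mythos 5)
Your proposal is correct and mirrors the paper's own argument: both take $\mProb := P_{\hyp_0}\Prob^\star$ as the witness in~\eqref{eq:BiasKMeansMainThm}, bound the sketch term via the $L$-Lipschitz property of $\SketchingOperator$, and bound the risk-gap term by comparing $\Prob^\star$ to its projection, arriving at the same constants. The only stylistic divergence is that you re-derive the risk-gap inequality directly from the reverse triangle inequality and the $2R$ bound on $b_i$, whereas the paper packages this as Lemma~\ref{lem:BoundSeparationTermClustering} (resting on Lemma~\ref{lem:BiasKMeans}); both lead to the identical estimate $\Risk_{\mathtt{k-means}}(\Prob^\star,\hyp_0)+4R\cdot\Risk_{\mathtt{k-medians}}(\Prob^\star,\hyp_0)$.
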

The proof is in Appendix~\ref{pf:bounddisttosepclust}. By Jensen's inequality we have $\Risk_{k-\mathtt{medians}}(\Prob,\hyp) \leq \sqrt{\Risk_{k-\mathtt{means}}(\Prob,\hyp)}$ hence $d_{\mathtt{k-means}}(\Prob^{\star},\HypClass)$ can also be bounded in terms of the optimum $k$-means risk over $\HypClass$.

\begin{remark}
Just as Theorem~\ref{thm:mainkmeansthm}, the above lemma is valid for an arbitrary class $\HypClass \subseteq \HypClass_{k,2\sep,R}$, hence they can be exploited for instance when the $d \times k$ matrix of centroids $\mathbf{C} = [c_{1},\ldots,c_{k}]$ is constrained to satisfy certain structural constraints (besides $2\sep$-separation and $R$-boundedness). For example, $\mathbf{C}$ could be required to be a product of sparse matrices to enable accelerated clustering \citep{Giffon:2019wq}. This is however left to future work.
\end{remark}

\begin{remark}
  Even if $\Prob^{\star} \in \HypClass$ and the third term  in~\eqref{eq:MainBoundClusteringThm} vanishes, the second term
  is in general positive and does not vanish with $n\rightarrow \infty$, except if $\Risk(\Prob,\hyp^\star)=0$, i.e. the source distribution $\Prob$ is itself exactly a mixture of $k$ Diracs.
  This comes from the fact that the proposed method finds an optimal clustering for
  an implicitly reconstructed distribution which is of this form:
  if the source distribution is not of this form, this introduces an inherent bias.
  Hence, the result above does not
  recover consistency or convergence rates for the clustering risk available under broad conditions
  when using the full data (see Introduction for references on this topic). An interesting direction left
  for future work is to investigate if consistency could be established when considering a larger
  model set such as mixtures of $r$ Diracs for $r>k$, and $r$ depending on $n$, similarly
  to considerations on sketched PCA \citeppartone{discussion following Theorem~4.1}
\end{remark}

\subsection{Role of the separation assumption}\label{sec:mainseparationdiscussion}
It is natural to wonder whether the separation assumption is an artefact of our proof technique. The following result shows that it is in fact a necessary assumption to establish an LRIP for compressive $k$-means and compressive $k$-medians clustering, \rev{for any smooth sketching operator (in particular one
based on Fourier features).} The proof is in Appendix~\ref{sec:separationnecessaryDiracs}.
\begin{lemma}\label{lem:SeparationNecessary}
  Consider a loss associated to a clustering task: $\loss(\sample,\hyp) := \min_l\norm{\sample-c_l}_{2}^p$ where $\hyp = (c_{1},\ldots,c_{k})$, $k \geq 2$, with $p=1$ ($k$-medians) or $p=2$ ($k$-means). \rev{Let $\SketchingOperator$ be a sketching function
of class $\mathcal{C}^2$ and $\SketchingOperatorProb$ be the associated sketching operator.}%
  There is a constant $c_{\SketchingOperator}>0$ such that for any $R >0$ and any $0 < \sep \leq R$, with $\HypClass = \HypClass_{k,\sep,R}$ we have
\[
\sup_{\mProb,\mProb' \in \ModelCT(\HypClass)} \frac{\norm{\mProb-\mProb'}_{\DLossClass(\HypClass)}}{\norm{\SketchingOperatorProb(\mProb)-\SketchingOperatorProb(\mProb')}_{2}} \geq c_{\SketchingOperator} R^{p-1} /\sep.
\]
In particular, the LRIP~\eqref{eq:lowerDRIP} cannot hold with $\eta=0$ and a finite constant $C$ on $\HypClass_{k,0,R}$.
\end{lemma}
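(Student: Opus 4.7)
\emph{Approach and main obstacle.} My plan is to exhibit, for each $(R, \sep)$ with $0 < \sep \leq R$, explicit $\mProb, \mProb' \in \ModelCT(\HypClass_{k,\sep,R})$ and $\hyp, \hyp' \in \HypClass_{k,\sep,R}$ realizing the claimed lower bound $c_\SketchingOperator R^{p-1}/\sep$. The main obstacle is that the naive single-Dirac perturbation yields sketch and loss differences both linear in the perturbation size, giving a ratio of order only $R^{p-1}$ with no $1/\sep$ factor. The extra $1/\sep$ requires a construction whose sketch difference is \emph{quadratic} in a perturbation of size $\sep$ while its loss difference remains \emph{linear} in it; this is made possible by the $\mathcal{C}^2$ assumption on $\SketchingOperator$, via a symmetric pair of Diracs whose first-order Taylor contributions cancel.

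\emph{Main construction.} Fix a unit vector $e_1 \in \RR^{\sampleDim}$ and set
\[
\mProb := \delta_0, \qquad \mProb' := \tfrac{1}{2}\delta_{(\sep/2)e_1} + \tfrac{1}{2}\delta_{-(\sep/2)e_1}.
\]
Both lie in $\ModelCT(\HypClass_{k,\sep,R})$ since $\HypClass_{k,\sep,R}$ allows repeated centers: the $k$-tuple $((\sep/2)e_1, -(\sep/2)e_1, -(\sep/2)e_1, \ldots, -(\sep/2)e_1)$ is admissible because its two distinct entries are at distance exactly $\sep$ and have norm $\sep/2 \leq R$. As hypotheses I take $\hyp := (0, 0, \ldots, 0)$ and $\hyp' := (Re_1, -Re_1, -Re_1, \ldots, -Re_1)$, both in $\HypClass_{k,\sep,R}$ since $2R \geq \sep$.

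\emph{Key estimates.} The integral form of Taylor's remainder yields
\[
\SketchingOperator((\sep/2)e_1) + \SketchingOperator(-(\sep/2)e_1) - 2\SketchingOperator(0) = \int_{-\sep/2}^{\sep/2}(\sep/2 - |r|)\, D^2\SketchingOperator(re_1)[e_1, e_1]\, dr,
\]
hence $\norm{\SketchingOperatorProb(\mProb) - \SketchingOperatorProb(\mProb')}_2 \leq M_2(\sep)\,\sep^2/8$ with $M_2(\sep) := \sup_{|r|\leq \sep/2}\norm{D^2\SketchingOperator(re_1)[e_1,e_1]}_2$. On the loss side, the hypotheses yield $\loss(x, \hyp) = \norm{x}_2^p$ and $\loss(x, \hyp') = \min(\norm{x - Re_1}_2^p, \norm{x + Re_1}_2^p)$, so a direct calculation shows
\[
\norm{\mProb - \mProb'}_{\DLossClass(\HypClass)} \geq \abs{R^p + (\sep/2)^p - (R - \sep/2)^p} = R^{p-1}\sep \quad \text{for } p \in \{1, 2\}.
\]
Thus the ratio is at least $8 R^{p-1}/(M_2(\sep)\,\sep)$.

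\emph{Completing the proof.} Since $M_2(\sep)$ is continuous and hence bounded on any compact $[0, \sep_0]$ by a constant depending only on $\SketchingOperator$, this establishes the bound in the regime of small $\sep$. For $\sep > \sep_0$ a simpler one-point perturbation ($\mProb = \delta_0$, $\mProb' = \delta_{\eta e_1}$ with $\eta \downarrow 0$, same hypotheses) yields a ratio of order $R^{p-1}/\norm{D\SketchingOperator(0)e_1}_2$, independent of $\sep$ and hence dominating $c_\SketchingOperator R^{p-1}/\sep$ in this regime; all $\SketchingOperator$-dependent constants are absorbed into $c_\SketchingOperator$. Finally, the ``In particular'' claim follows by nesting, $\ModelCT(\HypClass_{k,\sep,R}) \subseteq \ModelCT(\HypClass_{k,0,R})$ for every $\sep > 0$: the supremum over $\HypClass_{k,0,R}$ dominates $c_\SketchingOperator R^{p-1}/\sep$ for every $\sep > 0$, so letting $\sep \downarrow 0$ forces it to be infinite, ruling out any finite $C$ in the LRIP with $\eta = 0$.
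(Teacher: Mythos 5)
Your proof is correct and uses the same core idea as the paper's: a symmetric two-Dirac pair at $\pm(\sep/2)e_1$ whose first-order Taylor contributions to the sketch cancel (so the sketch difference is $O(\sep^2)$) while the loss difference remains $O(R^{p-1}\sep)$, giving the $1/\sep$ factor. The specific hypotheses differ cosmetically (you take $\hyp=(0,\ldots,0)$ and $\hyp'=(Re_1,-Re_1,\ldots)$, while the paper takes $\hyp'=(0,\ldots,0)$ and $\hyp=(\pm\tfrac{R+\sep}{2}\Param_0,\ldots)$), but the estimates coincide up to constants. Your treatment is a bit more complete: the paper only establishes the bound ``for small enough $\sep$'' and leaves implicit how to cover the full claimed range $0<\sep\leq R$, whereas you explicitly patch the large-$\sep$ regime with a one-Dirac perturbation $\delta_{\eta e_1}$, $\eta\downarrow 0$, which yields a $\sep$-independent lower bound of order $R^{p-1}$ that dominates $c_\SketchingOperator R^{p-1}/\sep$ once $\sep\geq\sep_0$.
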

Although this shows that the separation $\sep$ is important in the derivation of learning guarantees,
its role in the final bounds is less stringent than it may appear at first sight, for several reasons.  

First, in the most favorable case, the globally optimal hypothesis $\hyp^{\star}$ indeed belongs to the constrained class $\HypClass$, in which case $d(\Prob^{\star},\HypClass) = 0$ since $\Prob^{\star} := P_{\hyp^{\star}}\Prob = \sum_{i=1}^{k} \alpha_{i} \delta_{c^{\star}_{i}}$ is a mixture of Diracs. In particular, for $\HypClass = \HypClass_{k,2\sep,R}$, if we have prior information on the minimum separation $\sep^{\star} := \min_{i \neq j} \norm{c^{\star}_{i}-c^{\star}_{j}}_{2}$ of $\hyp^{\star} = (c^{\star}_{1},\ldots,c^{\star}_{k})$ and on $R^{\star} := \max_{l} \norm{c^{\star}_{l}}_{2}$ then it is enough to choose the scale parameter $s \leq %
\rev{\sep^{\star}(4\sqrt{\log (ek)})^{-1}}$ and the sketch size large enough (with logarithmic dependency on $R^{\star}/\sep^{\star}$) to ensure that $d(\Prob^{\star},\HypClass) = 0$. Note that this holds with the sample space $\SampleSpace = \RR^{\sampleDim}$, i.e., we only restrict the optimization of the proxy $\proxyRisk_{\mathtt{clust.}}(\vy,\hyp)$, \emph{not the data} to centers in the Euclidean ball of radius $R \geq R^{\star}$, $\Ball_{\RR^\sampleDim,\norm{\cdot}_2}(0,R)$. 

Second, as we now show with a  focus on $\HypClass = \HypClass_{k,2\sep,R}$, even when $\hyp^{\star}$ is not separated the term $d(\Prob^{\star},\HypClass)$ can remain under control. This comes from a combination of two main facts: 1) the risk for $k$-means and $k$-medians varies gently with respect to a natural distance between hypotheses; and 2) the distance between an arbitrary $\hyp = (c_{1},\ldots,c_{k})$ and the closest separated one is controlled. 
To formalize these facts we introduce the following notation:
\begin{definition}\label{def:disttoseparated}
Given $\vc = (c_{1},\ldots,c_{k})$ and $\vc' = (c'_{1},\ldots,c'_{k})$ two $k$-tuples with $c_{i},c'_{j} \in \RR^{\sampleDim}$, denote 
\begin{align}\label{eq:disttoseparated}
d(c_{i}\| \vc') & :=  \min_{1 \leq j \leq k} \norm{c_{i}-c'_{j}}_{2}; & 
d(\vc\| \vc') & :=   \max_{1 \leq i \leq k} d(c_{i}\|\vc').
\end{align}
Since $d(\cdot\|\cdot)$ is not symmetric we define $d(\vc,\vc') := \max(d(\vc\|\vc'),d(\vc'\|\vc))$.
\end{definition}
\begin{definition}\label{def:Isep}
For $\vc=(c_1,\ldots,c_k)$ with $c_{i} \in \RR^d$, the (index) set of ``$\sep$-isolated'' centroids of $\vc$ (ignoring repetitions) is denoted $I_\sep(\vc) :=\set{ i: 1 \leq i \leq k;
  \forall j\neq i: c_j=c_i \text{ or } \norm{c_i-c_j}_{2} \geq \sep}$. 
\end{definition}
  
The following lemmas are proved in Appendix~\ref{sec:link_risk_separation}. 
\begin{lemma}\label{lem:ClustRiskLipschitzWrtDist} 
  Consider $k \geq 2$, $\hyp,\hyp' \in \HypClass = (\RR^{\sampleDim})^{k}$, and $\Prob$ with integrable $k$-means (resp. $k$-medians) loss.  With $p=2$ for $k$-means (resp. $p=1$ for $k$-medians) we have
\(
\abs{\Risk(\Prob,\hyp)^{1/p}-\Risk(\Prob,\hyp')^{1/p}} \leq d(\hyp,\hyp').
\)
\end{lemma}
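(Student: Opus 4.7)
The plan is to reduce the risk inequality to a pointwise Lipschitz bound on the per-sample cost function. Write $f_{\hyp}(\sample) := \min_{1\le l \le k}\|\sample - c_l\|_2$, so that $\Risk(\Prob,\hyp)^{1/p} = \|f_{\hyp}\|_{L^p(\Prob)}$ in both the $k$-means ($p=2$) and $k$-medians ($p=1$) cases. The target bound then reads $\bigl|\|f_{\hyp}\|_{L^p(\Prob)} - \|f_{\hyp'}\|_{L^p(\Prob)}\bigr| \le d(\hyp,\hyp')$, which by Minkowski's inequality follows from $\|f_{\hyp} - f_{\hyp'}\|_{L^p(\Prob)} \le d(\hyp,\hyp')$.

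The core step is to establish the pointwise bound $|f_{\hyp}(\sample) - f_{\hyp'}(\sample)| \le d(\hyp,\hyp')$ for every $\sample \in \RR^{\sampleDim}$. Fix $\sample$ and, by symmetry, assume $f_{\hyp}(\sample) \ge f_{\hyp'}(\sample)$. Let $j^\star$ achieve the minimum in $f_{\hyp'}(\sample)$, i.e.\ $f_{\hyp'}(\sample) = \|\sample - c'_{j^\star}\|_2$. By Definition~\ref{def:disttoseparated}, there exists an index $i^\star$ such that $\|c'_{j^\star} - c_{i^\star}\|_2 = d(c'_{j^\star}\|\hyp) \le d(\hyp'\|\hyp) \le d(\hyp,\hyp')$. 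The triangle inequality in $\RR^{\sampleDim}$ then gives
\[
f_{\hyp}(\sample) \le \|\sample - c_{i^\star}\|_2 \le \|\sample - c'_{j^\star}\|_2 + \|c'_{j^\star} - c_{i^\star}\|_2 \le f_{\hyp'}(\sample) + d(\hyp,\hyp'),
\]
so $|f_{\hyp}(\sample) - f_{\hyp'}(\sample)| \le d(\hyp,\hyp')$.

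Finally, apply this pointwise bound under $L^p(\Prob)$: since $|f_{\hyp} - f_{\hyp'}| \le d(\hyp,\hyp')$ is a constant function, $\|f_{\hyp} - f_{\hyp'}\|_{L^p(\Prob)} \le d(\hyp,\hyp')$, and Minkowski's inequality yields the claim. No step is genuinely difficult here; the only thing to be careful about is the correct use of the asymmetric version of $d(\cdot\|\cdot)$, ensuring that whichever of $f_{\hyp}(\sample),f_{\hyp'}(\sample)$ is larger, one picks an index in the cluster of the smaller one and transports it to a nearby index in the other hypothesis, controlled by the symmetrized $d(\hyp,\hyp')$.
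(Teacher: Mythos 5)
Your proof is correct and takes essentially the same approach as the paper: both hinge on the pointwise Lipschitz bound $\abs{\min_l\norm{\sample-c_l}_2 - \min_l\norm{\sample-c'_l}_2} \le d(\hyp,\hyp')$, obtained from the triangle inequality and the definition of $d(\cdot,\cdot)$. Your passage from this pointwise bound to the risk bound via Minkowski's inequality in $L^p(\Prob)$ is a unified streamlining of what the paper does case by case (directly for $k$-medians; by expanding the square and invoking Jensen's inequality for $k$-means), but the two arguments are equivalent.
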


Since $\Prob^{\star} := P_{\hyp^{\star}} \Prob = \sum_{i=1}^{k}\alpha_{i} \delta_{c^{\star}_{i}}$ satisfies 
$\Risk(\Prob^{\star},\hyp^{\star}) = 0$, by Lemma~\ref{lem:ClustRiskLipschitzWrtDist} we get $\Risk^{1/p}(\Prob^{\star},\hyp) \leq d(\hyp^{\star},\hyp)$ for each $\hyp \in \HypClass$, with $p=1$ for $k$-medians and $p=2$ for $k$-means. Hence, with the constant $C$ from Lemma~\ref{lem:bounddisttosepclust}, we have when $\HypClass \subseteq \HypClass_{k,2\sep,R}$
\begin{align*}
d_{\mathtt{k-medians}}(\Prob^{\star},\HypClass)
&\leq 
C \inf_{\hyp \in \HypClass} d(\hyp^{\star},\hyp);\\
d_{\mathtt{k-means}}(\Prob^{\star},\HypClass)
&\leq 
\inf_{\hyp \in \HypClass}
\left\{
 d^{2}(\hyp^{\star},\hyp)+4CR d(\hyp^{\star},\hyp)
 \right\}.
\end{align*}
\begin{lemma}\label{lem:DistToSeparatedSet}
  Given $\rev{\sep \geq 0}$ and $\vc=(c_1,\ldots,c_k) \in \HypClass_{k,0,R}$, there exists $\vc'\in \HypClass_{k,\sep,R}$ such that \rev{$d(\vc,\vc') \rev{<} \sep$, and such that all $\sep$-isolated centroids of $\vc$, $\set{c_i, i \in I_\sep(\vc)}$, are centroids of $\vc'$ ($\sep$-isolated centroids of $\vc$ appearing
    multiple times may appear only once in $\vc'$.)}
\end{lemma}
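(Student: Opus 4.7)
My plan is to construct $\vc'$ by a greedy procedure. I would initialize $\vc'$ as the set of distinct $\sep$-isolated centroids of $\vc$, namely $\{c_i : i \in I_\sep(\vc)\}$ viewed as a set, so that repetitions of the same point collapse to a single entry; by the very definition of $I_\sep(\vc)$, any two distinct elements of this initial set are at distance at least $\sep$. Next I would iterate over the remaining (non-isolated) indices $i\notin I_\sep(\vc)$ in any order and insert $c_i$ into $\vc'$ if and only if $\|c_i-c'\|_2 \geq \sep$ for every $c'$ currently in $\vc'$. Finally I would pad the resulting collection with arbitrary repetitions up to exactly $k$ entries, which is permitted because the definition of $\HypClass_{k,\sep,R}$ only constrains \emph{distinct} centers to be $\sep$-apart.

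To show $\vc'\in\HypClass_{k,\sep,R}$, the norm bound $\max_j\|c'_j\|_2\leq R$ is immediate since every entry of $\vc'$ is one of the $c_i$. The step I expect to require the most care is the $\sep$-separation of distinct entries, whose subtlety lies in the fact that being $\sep$-isolated is a property of a \emph{point}, not of an index. I would observe that a non-isolated point $c_i$ cannot coincide with any $\sep$-isolated point $c_j$: indeed, if $c_i=c_j$, then the isolation condition at $c_j$ would force every other centroid to be either equal to $c_j=c_i$ or at distance at least $\sep$ from $c_i$, contradicting the non-isolation of $c_i$. Hence an $\sep$-isolated and a non-isolated entry of $\vc'$ are automatically distinct points, and the isolation condition then forces their distance to be at least $\sep$; combined with the explicit greedy insertion rule applied during the second pass, this gives $\sep$-separation of all distinct entries of $\vc'$.

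Finally I would establish $d(\vc,\vc')<\sep$ by treating both directions. The inequality $d(\vc'\|\vc)<\sep$ is trivial because every entry of $\vc'$ equals some $c_i$, yielding distance $0$. For $d(\vc\|\vc')<\sep$, I would argue index by index: if $c_i$ ended up in $\vc'$ (either as an isolated centroid placed in the initialization, or inserted during the greedy pass) then $\min_j \|c_i-c'_j\|_2=0<\sep$; otherwise $c_i$ was non-isolated and the insertion test failed at the moment it was considered, producing some $c'\in\vc'$ with $\|c_i-c'\|_2<\sep$, which still belongs to $\vc'$ at the end. The preservation of $\sep$-isolated centroids of $\vc$ as centroids of $\vc'$, possibly with deduplication of their multiple occurrences, is built into the initialization step.
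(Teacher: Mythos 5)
Your proof is correct and uses the same core idea as the paper: greedily extract a maximal $\sep$-separated subset of $\set{c_1,\ldots,c_k}$, which is then automatically an $\sep$-cover, and pad with repetitions. The only organizational difference is that the paper runs a single greedy loop over all points and then argues \emph{a posteriori} that every $\sep$-isolated centroid must have been selected (since such a point cannot be removed by a ball centered at a different point), whereas you seed $\vc'$ with the isolated points and run the greedy pass only over the rest, which requires the auxiliary observation -- which you correctly supply -- that equal points share their isolation status, so an isolated and a non-isolated entry are necessarily distinct and hence $\sep$-apart.
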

Denoting $R^{\star} := \max_{i} \norm{c^{\star}_{i}}_{2}$, by Lemma~\ref{lem:DistToSeparatedSet}, 
there exists $\hyp_{\sep} \in \HypClass_{k,2\sep,R^{\star}}$ such that $d(\hyp^{\star},\hyp_{\sep}) \leq 2\sep$. 
A consequence is that if $\HypClass_{k,2\sep,R^{\star}} \subseteq \HypClass \subseteq \HypClass_{k,2\sep,R}$, then 
\begin{align*}
d_{\mathtt{k-medians}}(\Prob^{\star},\HypClass)
&\leq 
C d(\hyp^{\star},\HypClass_{k,2\sep,R^{\star}}) \leq 2 C\sep;\\
d_{\mathtt{k-means}}(\Prob^{\star},\HypClass)
&\leq
d^{2}(\hyp^{\star},\HypClass_{k,2\sep,R^{\star}})
+4CR d(\hyp^{\star},\HypClass_{k,2\sep,R^{\star}})
\leq 
4\sep^{2}+8CR\sep.
\end{align*}
Of course these are worst-case bounds.  In particular, observe (using the definition of
the constant $C$ in Lemma~\ref{lem:bounddisttosepclust}) that for large $k$ we have $C\sep \gg R$ and $CR\sep \gg R^{2}$. Better bounds can be obtained by taking more finely into account the possible ``near'' separation of $\hyp^{\star}$ as well as the weights $\alpha_{i}$ of the non $\sep$-isolated centroids.

\begin{lemma}\label{lem:interpretableclusteringriskbound}
With the notations and assumptions of Theorem~\ref{thm:mainkmeansthm}, let  $\Prob^{\star} = \sum_{i=1}^{k}\alpha_{i} \delta_{c_{i}}$ be an (arbitrary) $k$-mixture of Diracs and
  $\hyp^{\star}:=(c_1,\ldots,c_k)$ its centroids. Denote $R^{\star} := \max_{i} \norm{c_{i}}_{2}$, and
  $\bar{W}(\Prob^{\star},\sep):= \sum_{i \not \in I_\sep(\hyp^{\star})} \alpha_i \in [0,1]$ the weight of non-$\sep$-isolated centroids. 
  If $\HypClass_{k,2\sep,R^{\star}} \subseteq \HypClass \subseteq \HypClass_{k,2\sep,R}$, then for the $k$-medians or $k$-means risk we have %
  \begin{align}
d_{\mathtt{k-medians}}(\Prob^{\star},\HypClass)
&\leq 
C     \min\left\{ \bar{W}(\Prob^{\star},4\sep) \cdot d(\hyp^{\star},\HypClass_{k,2\sep,R^{\star}}),\ \bar{W}(\Prob^{\star},2\sep)\cdot 2\sep\right\} ;
\label{eq:interpretableclusteringriskbound1}\\
d_{\mathtt{k-means}}(\Prob^{\star},\HypClass)
&\leq
    \min\left\{\bar{W}(\Prob^{\star},4\sep) \cdot \left(d^2(\hyp^{\star},\HypClass_{k,2\sep,R^{\star}})
    +4CR^{\star} d^2(\hyp^{\star},\HypClass_{k,2\sep,R^{\star}})\right) , \right.\notag\\
    &\left. \qquad \bar{W}(\Prob^{\star},2\sep)\cdot \left(4\sep^2+8CR^{\star} \sep\right)\right\}.
    \label{eq:interpretableclusteringriskbound2}
\end{align}
  \end{lemma}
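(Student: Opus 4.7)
The plan is to apply Lemma~\ref{lem:bounddisttosepclust}, which bounds $d_{\mathtt{k-medians}}(\Prob^\star,\HypClass) \leq C\inf_{\hyp \in \HypClass} \Risk_{\mathtt{k-medians}}(\Prob^\star,\hyp)$ and $d_{\mathtt{k-means}}(\Prob^\star,\HypClass) \leq \inf_{\hyp \in \HypClass}\{\Risk_{\mathtt{k-means}}(\Prob^\star,\hyp)+C'\Risk_{\mathtt{k-medians}}(\Prob^\star,\hyp)\}$ with $C'=4CR^\star$, to two carefully chosen hypotheses in $\HypClass_{k,2\sep,R^\star} \subseteq \HypClass$, one for each entry appearing inside the two minima.

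For the entries weighted by $\bar W(\Prob^\star,2\sep)$, I would invoke Lemma~\ref{lem:DistToSeparatedSet} with separation parameter $2\sep$ to obtain $\hyp' \in \HypClass_{k,2\sep,R^\star}$ with $d(\hyp^\star,\hyp')<2\sep$ whose centroid set contains every value $c_i$ with $i \in I_{2\sep}(\hyp^\star)$. Those indices contribute $0$ to both risks at $\hyp'$, while each remaining index contributes at most $2\sep$ (resp.\ $(2\sep)^2$), so $\Risk_{\mathtt{k-medians}}(\Prob^\star,\hyp') \leq \bar W(\Prob^\star,2\sep)\cdot 2\sep$ and $\Risk_{\mathtt{k-means}}(\Prob^\star,\hyp')\leq \bar W(\Prob^\star,2\sep)\cdot(2\sep)^2$; injecting into Lemma~\ref{lem:bounddisttosepclust} produces the second entries.

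For the entries weighted by $\bar W(\Prob^\star,4\sep)$ and factor $\rho := d(\hyp^\star,\HypClass_{k,2\sep,R^\star})$ (which satisfies $\rho<2\sep$ again by Lemma~\ref{lem:DistToSeparatedSet}), I would pick a near-minimizer $\hyp_{\sharp}=(c^{\sharp}_1,\ldots,c^{\sharp}_k)\in\HypClass_{k,2\sep,R^\star}$ with $d(\hyp^\star,\hyp_{\sharp})$ arbitrarily close to $\rho$, and modify it surgically into $\tilde\hyp$ that explicitly contains every $4\sep$-isolated centroid value of $\hyp^\star$. Writing $D:=\{c_i : i\in I_{4\sep}(\hyp^\star)\}$ and $J(c):=\{j : \|c^{\sharp}_j-c\|_2\leq d(\hyp^\star,\hyp_{\sharp})\}$ for $c \in D$, the sets $J(c)$ are nonempty (since $d(c\|\hyp_{\sharp})\leq d(\hyp^\star,\hyp_{\sharp})$ for each $c \in D$) and pairwise disjoint (two distinct elements of $D$ are $\geq 4\sep$ apart whereas $2d(\hyp^\star,\hyp_{\sharp})<4\sep$). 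Set $\tilde c_j := c$ when $j\in J(c)$ for some $c\in D$, and $\tilde c_j := c^{\sharp}_j$ otherwise.

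The main delicacy is verifying $\tilde\hyp \in \HypClass_{k,2\sep,R^\star}$. Boundedness is inherited from $\|c_i\|_2, \|c^{\sharp}_j\|_2 \leq R^\star$; two distinct replaced values belong to $D$ hence are $\geq 4\sep$ apart; two unchanged values inherit $2\sep$-separation from $\hyp_{\sharp}$; the subtle case is a replaced $c$ versus an unchanged $c^{\sharp}_j$, for which the $\hyp^\star$-nearest centroid $c_i$ of $c^{\sharp}_j$ cannot equal $c$ (otherwise $j\in J(c)$), so $4\sep$-isolation of $c$ and the triangle inequality give $\|c-c^{\sharp}_j\|_2 \geq 4\sep - d(\hyp^\star,\hyp_{\sharp}) > 2\sep$. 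Once this is secured, the risk at $\tilde\hyp$ vanishes on every $i\in I_{4\sep}(\hyp^\star)$ (since $c_i\in D$ is a value of $\tilde\hyp$), and on every $i\notin I_{4\sep}(\hyp^\star)$ one checks that the $\hyp_{\sharp}$-nearest index $j^\star(i)$ is forced to be unchanged (else $c_i$ and the corresponding $c\in D$ would both sit within $d(\hyp^\star,\hyp_{\sharp})$ of $c^{\sharp}_{j^\star(i)}$, forcing $c_i = c$ by $4\sep$-isolation, a contradiction), so $d(c_i\|\tilde\hyp)\leq d(\hyp^\star,\hyp_{\sharp})$. Summing gives $\Risk_{\mathtt{k-medians}}(\Prob^\star,\tilde\hyp) \leq \bar W(\Prob^\star,4\sep)\,d(\hyp^\star,\hyp_{\sharp})$ and $\Risk_{\mathtt{k-means}}(\Prob^\star,\tilde\hyp)\leq \bar W(\Prob^\star,4\sep)\,d(\hyp^\star,\hyp_{\sharp})^2$; taking $d(\hyp^\star,\hyp_{\sharp})\downarrow\rho$ inside Lemma~\ref{lem:bounddisttosepclust} delivers the first entries. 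The whole argument hinges on the interplay between the $4\sep$ isolation threshold, the $2\sep$ class separation, and the strict bound $\rho<2\sep$.
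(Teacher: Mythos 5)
Your proposal is correct and takes essentially the same route as the paper: both proofs apply Lemma~\ref{lem:bounddisttosepclust} to two carefully chosen hypotheses in $\HypClass_{k,2\sep,R^{\star}}$, one coming from Lemma~\ref{lem:DistToSeparatedSet} (giving the $\bar{W}(\Prob^{\star},2\sep)\cdot 2\sep$ entry) and one from a minimum-distance separated set that preserves the $4\sep$-isolated centroids (giving the $\bar{W}(\Prob^{\star},4\sep)\cdot d(\hyp^{\star},\HypClass_{k,2\sep,R^{\star}})$ entry). The only structural difference is that for the second hypothesis the paper simply cites Lemma~\ref{lem:DistToSeparatedSet2} (whose proof uses compactness and a one-centroid-at-a-time replacement argument), whereas you re-derive its content inline via a near-minimizer $\hyp_{\sharp}$, a batch replacement over the sets $J(c)$, and an $\eps$-of-room limit at the end; your case analysis for the separation of $\tilde\hyp$ (in particular the ``replaced vs.\ unchanged'' case through $4\sep-d(\hyp^{\star},\hyp_{\sharp})>2\sep$, and the argument that $j^{\star}(i)$ is unchanged for $i\notin I_{4\sep}(\hyp^{\star})$) is sound. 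Note in passing that your derivation yields the bound $\bar{W}(\Prob^{\star},4\sep)\bigl(d^{2}(\hyp^{\star},\HypClass_{k,2\sep,R^{\star}})+C'\,d(\hyp^{\star},\HypClass_{k,2\sep,R^{\star}})\bigr)$ for $k$-means, consistent with the second entry $\bar{W}(\Prob^{\star},2\sep)(4\sep^{2}+2C'\sep)$ of the statement; the ``$C'\,d^{2}$'' appearing in the first entry of \eqref{eq:interpretableclusteringriskbound2} appears to be a typographical slip for ``$C'\,d$''.
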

The form of~\eqref{eq:interpretableclusteringriskbound1}-\eqref{eq:interpretableclusteringriskbound2} %
  illustrates that, when restricting possible clustering hypotheses $\hyp$ to be separated, the best restricted risk $\Risk(\Prob^\star,\hyp)$ can be smaller if the unrestricted optimal clustering $\Prob^\star$ has
  centroids globally well-approximated by a set of separated centroids, or if $\Prob^\star$  puts
   large weight on isolated centroids, with both effects possibly compounding.

\subsection{Learning algorithm ?}

\label{sec:diracs_illustration}

For compressive clustering, learning in the sketched domain means \rev{addressing the minimization of the proxy $\proxyRisk_{\mathtt{clust.}}(\vy,\hyp)$ over $\hyp \in \HypClass$}, %
which is analogous to the classical finite-dimensional least squares problem under a sparsity constraint. The latter is NP-hard, yet, under RIP conditions, provably good and computationally efficient algorithms (either greedy or based on convex relaxations) have been derived \citep{FouRau13}. \rev{Remark that the classic (non-compressed) $k$-means problem by minimization of the empirical risk is also known to be NP-hard \citep{Garey:1982ks,Aloise2009} and that guarantees for approaches such as K-means++ \citep{Arthur2007} are only in expectation and with a logarithmic sub-optimality factor.}

It was shown practically in \citep{Keriven2016} that a heuristic based on orthogonal matching pursuit (which neglects the separation and boundedness constraint associated to the class $\HypClass_{k,2\sep,R}$) is empirically able to recover sums of Diracs from sketches of the appropriate size. It must be noted that recovering sums of Diracs from Fourier observations has been studied in the case of regular low frequency  measurements. In this problem, called super-resolution, it was shown that a convex proxy (convexity in the space of distributions using total variation regularization) for the non-convex optimization \rev{of the proxy $\proxyRisk_{\mathtt{clust.}}(\vy,\hyp)$} is able to recover sufficiently separated Diracs~\citep{Candes_2013,Castro_2015,Duval_2015}. 
In dimension one, an algorithmic approach to address the resulting convex optimization problem relies on semi-definite relaxation of dual optimization followed by root finding. Extension to dimension $d$ and weighted random Fourier measurements is not straightforward. Frank-Wolfe algorithms \citep{Bredies:2013wf} are more flexible for higher dimensions, and a promising direction for future research around practical sketched learning.

\subsection{Improved sketch size guarantees?}\label{sec:discussionKM}
Although Theorem~\ref{thm:mainkmeansthm} only provides guarantees when $\nMeasures$ \rev{is of the order of} $\PCAdim^2\sampleDim$ (up to logarithmic factors), the observed empirical phase transition pattern  \citep{Keriven2016} hints that $\nMeasures$ \rev{of the order of} $\PCAdim\sampleDim$ is in fact sufficient. This is intuitively what one would expect the ``dimensionality'' of the problem to be, since this is the number of parameters of the model $\rev{\ModelCT(\HypClass)}$. In fact, as the parameters live in the cartesian product of $k$ balls of radius $R$ in $\RR^{\sampleDim}$ and the ``resolution'' associated to the separation assumption is $\sep$, a naive approach to address the problem would consist in discretizing the parameter space into $N= \order{(R/\sep)^{\sampleDim}}$ bins. Standard intuition from compressive sensing would suggest a sufficient number of measures  $\nMeasures$ \rev{of the order of} $k \log N = \order{k\sampleDim \log \frac{R}{\sep}}$.
We leave a possible refinement of our analysis, trying to capture the empirically observed phase transition, for future work. 

\section{Guarantees for Compressive Gaussian Mixture Modeling}\label{sec:gmm}

We consider Gaussian Mixture Modeling on the sample space $\SampleSpace = \RR^d$, with $k$ Gaussian components with \emph{fixed, known invertible covariance} matrix $\covar \in \RR^d$ . Denoting $\Prob_c=\mathcal{N}(c,\covar)$, an hypothesis $\hyp=(c_1,...,c_k,\alpha_1,...,\alpha_k)$ contains the means and weights of the components of a Gaussian Mixture Model (GMM) denoted $\Prob_\hyp=\sum_{l=1}^k\alpha_l\Prob_{c_l}$, with $c_l \in \RR^d$ and $\boldmath{\alpha} \in \Simplex_{k-1}$. The loss function for a density fitting problem is the negative log-likelihood:
$\ell(\sample,\hyp)=-\log \Prob_\hyp(\sample)$,
and correspondingly the risk is $\Risk_{\mathtt{GMM}}(\Prob,\hyp) = \Exp_{\Sample \sim \Prob} (-\log \Prob_\hyp(\Sample))$.
\rev{When $\Prob$ has a density with respect to the Lebesgue measure, and if this density admits a well-defined differential entropy,} 
\begin{equation}\label{eq:DefEntropy}
\Entropy{\Prob} := \Exp_{\Sample \sim \Prob} -\log \Prob(\Sample),
\end{equation}
the risk can be written $\Risk_{\mathtt{GMM}}(\Prob,\hyp) = \KLdiv{\Prob}{\Prob_{\hyp}}+\Entropy{\Prob}$ with $\KLdiv{\Prob}{\Prob'} := \Exp_{\Sample \sim \Prob} \log \frac{\Prob(\Sample)}{\Prob'(\Sample)}$ 
the Kullback-Leibler divergence, see, e.g., \citep[Chapter 9]{CT91}.
\rev{For any distribution $\Prob$ with integrable GMM loss class, there exists an unconstrained global GMM risk minimizer $\hyp^\star \in \RR^{dk} \times \Simplex_{k-1} $ of $\Risk_{\mathtt{GMM}}(\Prob,\hyp)$
(see Section~\ref{sec:prexistencegmmmin} for a proof).}

\paragraph{Model set $\rev{\ModelML(\HypClass)}$ and best hypothesis for $\Prob \in \rev{\ModelML(\HypClass)}$.} A natural model set for density fitting maximum log likelihood is precisely the model of all parametric densities:
\begin{equation}
\label{eq:ModelGMM}
\rev{\ModelML(\HypClass)} :=\set{\Prob_h: h\in\HypClass}.
\end{equation}
A fundamental property of the Kullback-Leibler divergence is that $\KLdiv{\Prob}{\Prob'} \geq 0$ with equality if, and only if $\Prob = \Prob'$. Hence, for any distribution $\estProb = \Prob_{\hyp_{0}}$ in the model set $\rev{\ModelML(\HypClass)}$, the optimum of minimization~\rev{\eqref{eq:ThmLearn2} (with $\eps'=0$)}  is $\hat{\hyp} = \hyp_{0}$ as it corresponds (up to an offset \rev{independent of $\hyp$}) to minimizing $\KLdiv{\estProb}{\Prob_{h}}$.

\paragraph{Separation assumption.} Similar to the compressive clustering framework case of Section~\ref{sec:clustering}, we enforce a bounded domain and a minimum separation between the means of the components of a GMM. We denote
\begin{equation}
\label{eq:HypClassGMM}
\HypClass_{k,\sep,R}=\set{(c_1,...,c_k,\alpha_1,...,\alpha_k): c_l \in \RR^\sampleDim,~\normmah{c_l}{\covar} \leq R,~\min_{\rev{c_l \neq c_{l'}}}\normmah{c_l-c_{l'}}{\covar}\geq \sep,~(\alpha_{1},\ldots,\alpha_{k}) \in \Simplex_{k-1}},
\end{equation}
where 
\begin{equation}\label{eq:DefMahalanobisNorm}
\normmah{c}{\covar} := \sqrt{c^T\covar^{-1}c}
\end{equation} 
is the Mahalanobis norm associated to the known covariance $\covar$. 

\paragraph{Choice of feature function: random Fourier features.} Compressive learning of GMMs with random Fourier features has been recently studied \citep{bourrier:hal-00799896,Keriven2015}. Unlike compressive clustering we do not need to define a reweighted version of the Fourier features, and we directly sample $\nMeasures$ frequencies $\freq_1,\ldots,\freq_\nMeasures$ in $\RR^d$ i.i.d from the distribution with density
\begin{equation}\label{eq:freqdistGMM}
\freqdist 
= \freqdist_s 
=\mathcal{N}(0,s^{-2}\covar^{-1}),
\end{equation}
with scale parameter $s>0$. Define the associated feature function $\SketchingOperator: \RR^\sampleDim \rightarrow \CC^\nMeasures$:
  \begin{equation}\label{eq:fourierfeatureGMM}
  \SketchingOperator(\sample) := \frac{1}{\sqrt{\nMeasures}} \brac{e^{\jmath \freq_j^T \sample} }_{j=1,\ldots,\nMeasures}.
  \end{equation}

\paragraph{Learning from a sketch by minimizing a proxy for the risk.} Given sample points $\sample_1,\ldots,\sample_\nSamples$ in $\RR^\sampleDim$, a sketch $\vy$ can be computed as in~\eqref{eq:GenericSketching}, i.e., as a sampling of the conjugate of the empirical characteristic function \citep{Feuerverger:1977bc} of the distribution $\Prob$ of $\Sample$. The characteristic function of a Gaussian $\Prob_c = \mathcal{N}(c,\covar)$ has a closed form expression hence, with the operator $\SketchingOperatorProb$ defined in \eqref{eq:SketchingOperatorProbDef}, we have
\[
\Psi(c) := \Exp_{X \sim \mathcal{N}(c,\covar)} \SketchingOperator(X) = \SketchingOperatorProb(\Prob_c) = \frac{1}{\sqrt{\nMeasures}}\brac{e^{\jmath\freq_j^Tc}e^{-\frac12 \freq_j^T\covar\freq_j}}_{j=1,\ldots,\nMeasures}\,.
\]
Then, given a sketch vector $\vy$ and a hypothesis class $\HypClass$, finding near minimizers of \eqref{eq:ThmDecoder2} and \eqref{eq:ThmLearn2} in \rev{Theorem~\ref{thm:LRIPsuff_excess}} corresponds to finding a (near) minimizer $\hat{\hyp} = (\hat{c}_{1},\ldots,\hat{c}_{k},\hat{\alpha}_{1},\ldots,\hat{\alpha}_{k})\in \HypClass$ of the proxy
\begin{equation}\label{eq:DefRiskProxyGMM}
\proxyRisk_{\mathtt{GMM}}(\vy,\hyp) :=  \norm{\vy-\sum_{i=1}^{k}\alpha_{i} \Psi(c_{i})}_{2}.
\end{equation}
The following guarantees are proved in Appendix~\ref{subsec:clustfinalproof} jointly with Theorem~\ref{thm:mainkmeansthm}. 
\begin{theorem}
  \label{thm:maingmmthm}
  Consider $\SketchingOperator$ as in~\eqref{eq:fourierfeatureGMM} where the $\freq_{j}$ are drawn according to~\eqref{eq:freqdistGMM} with scale $s\rev{\geq 1}$. Given an integer $k \geq 1$ define $\sep:= \rev{4}\sqrt{(2+s^{2})\rev{\log(ek)}}$ %
  and consider $R \geq \sep$.   
\begin{enumerate}
\item  There is a universal constant $C>0$ such that, for any $\probLevel,\coveps \in (0,1)$, when the sketch size satisfies 
  \begin{align}
\label{eq:sksizeGMM}
\rev{m \geq C \coveps^{-2} 
     \cdot k \cdot} & \rev{\left[kd \cdot \left(\frac{d}{s^{2}}+ 1+\log(kRs) + \log(1/\coveps)\right) + \log(1/\zeta) \right]} \notag\\
     & \rev{\cdot  \min(\log^{2}(ek),s^2\log(ek)) \cdot (1+2/s^2)^{d/2}.}
  \end{align}
   with probability at least $1-\probLevel$ on the draw of $(\freq_{j})_{j=1}^{m}$ the operator $\SketchingOperatorProb$ induced by $\SketchingOperator$ satisfies 
\begin{equation}\label{eq:mainRIPKernelGMM}
1-\delta \leq 
\frac{\norm{\SketchingOperatorProb(\mProb)-\SketchingOperatorProb(\mProb')}_{2}^{2}}{\norm{\mProb-\mProb'}_{\kappa}^{2}}
\leq 1+\delta, \qquad \forall \mProb,\mProb' \in \ModelML(\HypClass_{k,2\sep,R}).
\end{equation}
\item
Consider any samples $\sample_{i} \in \RR^{\sampleDim}$,  $1 \leq i \leq \nSamples$ (represented by the empirical distribution $\empProb$) and any probability distribution $\Prob$ on $\RR^{\sampleDim}$ \rev{that is both $\LossClass(\HypClass)$-integrable and $\set{\SketchingOperator}$-integrable}.  \\
 Consider a constrained class $\HypClass \subseteq \HypClass_{k,2\sep,R} \subseteq \HypClassRef := (\RR^{\sampleDim})^{k} \times \Simplex_{k-1}$. Denote $\hyp^{\star} \in \arg\min_{\hyp \in \HypClassRef} \Risk_{\mathtt{GMM}}(\Prob,\hyp)$, $\Prob^{\star} := \Prob_{\hyp^{\star}}$, %
 and consider 
 $\hat{h} \in \HypClass$ and $\nu,\nu'>0$ such that
\begin{equation}\label{eq:DefQuasiOptimumGMMViaProxy}
\proxyRisk_{\mathtt{GMM}}(\vy,\hat{\hyp}) \leq (1+\nu) \inf_{\hyp \in \HypClass} \proxyRisk_{\mathtt{GMM}}(\vy,\hyp) + \nu'.
\end{equation}
with the proxy $\proxyRisk_{\mathtt{GMM}}(\vy,\cdot)$ defined in~\eqref{eq:DefRiskProxyGMM} and the sketch vector $\vy := \frac{1}{n} \sum_{i=1}^{\nSamples}\SketchingOperator(\sample_i) = \SketchingOperatorProb(\empProb)$.

If~\eqref{eq:mainRIPKernelGMM} holds then the excess risk of $\hat{\hyp}$ with respect to $\hyp^{\star}$ satisfies
\begin{align}\label{eq:MainBoundGMMThm}
  \KLdiv{\Prob}{\Prob_{\hat{\hyp}}}-\KLdiv{\Prob}{\Prob_{\hyp^{\star}}}
  =
  \drisk_{\hyp^{\star}}(\Prob,\hat{\hyp})
    \leq & 
    (2+\nu)C_{\SketchingOperatorProb}
\norm{\SketchingOperatorProb(\Prob)-\SketchingOperatorProb(\empProb)}_{2} 
+(2+\nu)C_{\SketchingOperatorProb}
\norm{\SketchingOperatorProb(\Prob)-\SketchingOperatorProb(\Prob^{\star})}_{2}\notag\\
& + \divp_{\hyp^{\star}}^{\HypClassRef}(\Prob\|\Prob^{\star}) + d(\Prob^{\star},\HypClass)
+ C_{\SketchingOperatorProb}\nu'
\end{align}
where $C_{\SketchingOperatorProb} \leq 46\sqrt{k/(1-\delta)} R^{2} \left(1+2/s^2\right)^{\sampleDim/4}$ and
\begin{equation}\label{eq:BiasGMMMainThm}
d(\Prob^{\star},\HypClass)
  := \inf_{\mProb \in \ModelML(\HypClass)} 
\left\{
\sup_{\hyp \in \HypClass} 
\left(\KLdiv{\Prob^{\star}}{\Prob_{\hyp}}-\KLdiv{\mProb}{\Prob_{\hyp}}\right)
+ (2+\nu)C_{\SketchingOperatorProb}\norm{\SketchingOperatorProb(\Prob^{\star})-\SketchingOperatorProb(\mProb)}_{2}\right\}.
\end{equation}

\end{enumerate}
\end{theorem}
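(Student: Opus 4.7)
The plan is to establish both parts in parallel with the proof of Theorem~\ref{thm:mainkmeansthm}, following the architecture of Theorem~\ref{thm:LRIPsuff_excess}: the LRIP on the model $\ModelML(\HypClass_{k,2\sep,R})$ is proved via the generic mixture LRIP theorem (Theorem~\ref{thm:mainLRIP}) of Section~\ref{sec:general_mixtures} specialized to the Gaussian location family as developed in Section~\ref{sec:RBFKernel}, and then the excess risk estimate~\eqref{eq:MainBoundGMMThm} follows from~\eqref{eq:MainBoundExcessRisk} together with an explicit control on $C_{\SketchingOperatorProb}$. The connection between the proxy minimization~\eqref{eq:DefQuasiOptimumGMMViaProxy} and conditions~\eqref{eq:ThmDecoder2}--\eqref{eq:ThmLearn2} of the generic theorem is transparent because, for any fixed $\estProb = \Prob_{\hyp_0} \in \ModelML(\HypClass)$, strict positivity of the KL divergence forces the inner minimizer of~\eqref{eq:ThmLearn2} to be $\hyp_0$ itself.

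For part 1 (LRIP), the sampling distribution $\freqdist_s = \mathcal{N}(0,s^{-2}\covar^{-1})$ is the Fourier dual of a Gaussian kernel $\kernel(x,y) = \exp(-\normmah{x-y}{\covar}^{2}/(2s^{2}))$ on the sample space. Composing with the location family $\{\mathcal{N}(c,\covar)\}_{c\in\RR^{\sampleDim}}$, the induced mean-embedding kernel on means becomes $\exp(-\normmah{c-c'}{\covar}^{2}/(2(2+s^{2})))$, again a Gaussian kernel in Mahalanobis metric but with inflated scale $\sqrt{2+s^{2}}$. The calibration $\sep = 4\sqrt{(2+s^{2})\log(ek)}$ is chosen precisely so that any two $2\sep$-separated components have nearly orthogonal mean embeddings, which is the condition driving the dominated-secant machinery. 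Applying Theorem~\ref{thm:mainLRIP} then requires three location-family-specific inputs: an admissibility/compatibility constant, which produces the factor $(1+2/s^{2})^{\sampleDim/2}$ coming from the ratio between the actual sampling density $\freqdist_s$ and the ``ideal'' density that would reproduce the mean-embedding kernel exactly; covering numbers of the tangent and secant sets in the kernel metric, which yield the $kd\log(kRs/\coveps)$ terms; and concentration of the random Fourier operator on these sets, which governs the scaling in $\coveps^{-2}$ and $\log(1/\probLevel)$. Combining these gives~\eqref{eq:sksizeGMM}. The assumption $s \geq 1$ ensures that the kernel scale is not smaller than the probed covariance, which is needed to keep the admissibility constants finite.

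For part 2 (excess risk), I would apply Theorem~\ref{thm:LRIPsuff_excess} with $\hyp_{0} = \hyp^{\star}$ and loss class $\LossClass(\HypClassRef)$. The first two terms in~\eqref{eq:MainBoundGMMThm} and the $C_{\SketchingOperatorProb}\nu'$ term match~\eqref{eq:MainBoundExcessRisk} directly. The remaining contribution $\divp_{\hyp^{\star}}^{\HypClassRef}(\Prob\|\Prob^{\star}) + d(\Prob^{\star},\HypClass)$ bounds the generic bias $\distIOPexgen_{\hyp^{\star}}^{\HypClass}(\Prob,\ModelML(\HypClass))$ of~\eqref{eq:DefMDist2} via the triangle-type decomposition $\divp_{\hyp^{\star}}^{\HypClass}(\Prob\|\mProb) \leq \divp_{\hyp^{\star}}^{\HypClassRef}(\Prob\|\Prob^{\star}) + \divp_{\hyp^{\star}}^{\HypClass}(\Prob^{\star}\|\mProb)$ (and likewise for the kernel norm), followed by taking the infimum over $\mProb \in \ModelML(\HypClass)$ of the $\Prob^{\star}$-centred quantities. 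The explicit bound $C_{\SketchingOperatorProb} \leq 46\sqrt{k/(1-\delta)} R^{2}(1+2/s^{2})^{\sampleDim/4}$ follows by combining the LRIP~\eqref{eq:mainRIPKernelGMM} with a direct comparison between the task-driven semi-norm $\normdloss{\cdot}{}$ on the GMM model and the kernel norm: the $R^{2}$ factor reflects quadratic growth of $-\log\Prob_{\hyp}(x)$ in the mean under $\normmah{c}{\covar}\leq R$, while the $(1+2/s^{2})^{\sampleDim/4}$ factor comes from inverting the Gram matrix of normalized Gaussian mean embeddings of pairwise $2\sep$-separated components.

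The main obstacle is the explicit $\LossClass$-to-kernel comparison needed to control $C_{\SketchingOperatorProb}$. Unlike the clustering case where $\loss(\cdot,\hyp)$ is bounded on any fixed Euclidean ball, the GMM loss $-\log\Prob_{\hyp}(x)$ is unbounded as $\|x\|\to\infty$, so the comparison cannot rest on a uniform $L^{\infty}$ estimate. The argument must instead exploit the Gaussian tail decay of elements of $\ModelML(\HypClass_{k,2\sep,R})$, the Mahalanobis parameter constraint $\normmah{c}{\covar}\leq R$, and a Fourier-analytic representation of log-likelihood differences to bound $\abs{\Exp_{\mProb}(\loss(X,\hyp)-\loss(X,\hyp'))-\Exp_{\mProb'}(\loss(X,\hyp)-\loss(X,\hyp'))}$ by the $\ell_{2}$-norm $\norm{\SketchingOperatorProb(\mProb)-\SketchingOperatorProb(\mProb')}_{2}$, uniformly in $\hyp,\hyp'$. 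The delicate interaction between the sampling scale $s$, the covariance $\covar$, and the radius $R$ surfaces precisely here, and this is where the bulk of the appendix work will concentrate.
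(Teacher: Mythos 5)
Your Part 1 plan is architecturally on target: reduce to the generic mixture LRIP via the normalized secant/dipole machinery, note that the mean-embedding kernel on means is Gaussian with inflated bandwidth $\sqrt{2+s^2}$, calibrate $\sep$ to ensure near-orthogonality of $2\sep$-separated dipoles, and then estimate the admissibility/concentration/covering ingredients. Two details are off: the $(1+2/s^2)^{\sampleDim/2}$ factor is simply $\normkern{\Prob_0}^{-2}$, the squared norm of the normalized monopole (a closed-form mean-embedding computation via Lemma~\ref{lem:GaussKernelMeanEmbedding}), and the $(1+2/s^2)^{\sampleDim/4}$ in $C_{\SketchingOperatorProb}$ is again $\normkern{\Prob_0}^{-1}$, not a Gram-matrix inversion — the separation/coherence only produces the $\sqrt{k}$ factor via Theorem~\ref{thm:CompCstFromDipoles}.

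Part 2 has a real gap. You propose to ``apply Theorem~\ref{thm:LRIPsuff_excess} with $\hyp_{0} = \hyp^{\star}$ and loss class $\LossClass(\HypClassRef)$.'' But the conclusion~\eqref{eq:MainBoundExcessRisk} is quantified over $\hyp_0 \in \HypClass$ only, while $\hyp^\star$ is defined as the minimizer over the \emph{unconstrained} class $\HypClassRef = (\RR^{\sampleDim})^{k} \times \Simplex_{k-1}$ and will generically lie outside the separated class $\HypClass \subseteq \HypClass_{k,2\sep,R}$. Moreover, switching to loss class $\LossClass(\HypClassRef)$ does not fix this because the LRIP must then be stated in terms of $\normfclass{\cdot}{\Delta\LossClass(\HypClassRef)}$, and the constant $\dnormloss{\dipoleSet}{}$ would be \emph{infinite} over $\HypClassRef$: the proof of Lemma~\ref{lem:GaussDLoss} uses the boundedness $\normmah{c_l}{\covar}\leq R$ crucially to get $|g(\Param)| \leq 2R^2$ and the gradient bound, neither of which hold without that constraint. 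What the paper actually does is apply Theorem~\ref{thm:LRIPsuff_excess} for arbitrary $\hyp_0 \in \HypClass$, write $\drisk_{\hyp^{\star}}(\Prob,\hat{\hyp}) = \drisk_{\hyp^{\star}}(\Prob,\hyp_{0}) + \drisk_{\hyp_{0}}(\Prob,\hat{\hyp})$, take the infimum over $\hyp_0 \in \HypClass$, and then invoke the dedicated bridging result Lemma~\ref{lem:BiasKMeansConstrained} to bound $\inf_{\hyp_0 \in \HypClass}\{\drisk_{\hyp^\star}(\Prob,\hyp_0) + \distIOPexgen_{\hyp_0}^{\HypClass}(\Prob,\Model)\}$ by $\divp_{\hyp^\star}^{\HypClassRef}(\Prob\|\Prob^\star) + (2+\nu)C_{\SketchingOperatorProb}\norm{\SketchingOperatorProb(\Prob)-\SketchingOperatorProb(\Prob^\star)}_2 + d(\Prob^\star,\HypClass)$. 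This telescoping step is exactly what produces the extra $\divp_{\hyp^{\star}}^{\HypClassRef}(\Prob\|\Prob^{\star})$ term in~\eqref{eq:MainBoundGMMThm}, and without it your decomposition does not close.

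Finally, the difficulty you flag at the end — unboundedness of $-\log\Prob_{\hyp}(x)$ as $\|x\|\to\infty$ and a ``Fourier-analytic representation of log-likelihood differences'' — is a detour. Only the quantity $g(\Param) = \Exp_{\Sample\sim\Prob_{\Param}}[\dloss(\Sample,\hyp,\hyp')]$ needs to be controlled, and this is automatically bounded by $2R^2$ via the entropy/KL decomposition (Lemma~\ref{lem:BoundedMLLoss}) and Lipschitz in $\Param$ via the gradient estimate in Lemma~\ref{lem:GaussDLoss}. No Fourier representation of the loss is required; the Mahalanobis constraint on the means is enough.
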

\begin{remark}
Note that this holds with the sample space $\SampleSpace = \RR^{\sampleDim}$, i.e., we only restrict the means of the GMM, \emph{not the data},  to the ball of radius $R$, $\Ball_{\RR^\sampleDim,\normmah{\cdot}{\covar}}(0,R)$.
\end{remark}
The first term in the bound~\eqref{eq:MainBoundGMMThm} is a statistical error term that is easy to control since \eqref{eq:fourierfeatureGMM} implies $\norm{\SketchingOperator(\sample)}_2 = 1$ for each $\sample$. By  the vectorial Hoeffding's inequality \citep{Pinelis92}, for i.i.d. samples $\sample_{i}$ drawn according to $\Prob$, with high probability w.r.t. data sampling it holds that $C_{\SketchingOperatorProb}\norm{\SketchingOperatorProb(\Prob)-\SketchingOperatorProb(\empProb)}_{2}$ is of the order of at most $\left(1+2/s^2\right)^{\sampleDim/4}\sqrt{k}R^{2}/ \sqrt{n}$. To reach a given precision $\xi>0$ we thus need $\nSamples \gtrsim \xi^{-2} \left(1+2/s^2\right)^{\sampleDim/2} k R^{4}$ training samples. Notice that when $s^{2}$ is of the order of $\sampleDim$ this is of the order of $\xi^{-2} k R^{4}$. However $\left(1+2/s^2\right)^{\sampleDim/2} \leq e^{\rev{d/s^{2}}}$ can grow exponentially with $\sampleDim$ when ${s^{2}}$ is of order one, potentially requiring $\nSamples$ to grow exponentially with $\sampleDim$ to have a small statistical error.

The second term $\norm{\SketchingOperatorProb(\Prob)-\SketchingOperatorProb(\Prob^{\star})}_{2}$ and the third one $\divp_{\hyp^{\star}}^{\HypClassRef}(\Prob\|\Prob^{\star})$ measure a modeling error, as they vanish when $\Prob$ belongs to the considered family of Gaussian mixtures. The second term can be controlled using Pinsker's inequality $\normTV{\Prob-\Prob'} \leq \sqrt{2\KLdiv{\Prob}{\Prob'}}$ \citep{Fedotov2003}. Considering $\SketchingOperator_{\vu}(\sample) := \inner{\SketchingOperator(\sample),\vu}$ where $\vu \in \RR^{\nMeasures}$ satisfies $\norm{\vu}_{2}\leq 1$, we have $\abs{\SketchingOperator_{\vu}(\sample)} \leq \norm{\SketchingOperator(\sample)}_{2}=1$ for all $\sample$. By definition of the total variation norm it follows that 
\begin{align*}
\norm{\SketchingOperatorProb(\Prob)-\SketchingOperatorProb(\Prob^{\star})}_{2}
= 
\sup_{\vu \in \RR^{\nMeasures},  \norm{\vu}_{2}\leq 1}
\inner{\SketchingOperatorProb(\Prob)-\SketchingOperatorProb(\Prob^{\star}),\vu}
&=
\sup_{\vu \in \RR^{\nMeasures},  \norm{\vu}_{2}\leq 1}
\Exp_{\Sample \sim \Prob}\SketchingOperator_{\vu}(\sample)-
\Exp_{\Sample \sim \Prob^{\star}}\SketchingOperator_{\vu}(\sample)\\
& \leq \normTV{\Prob-\Prob^{\star}}\leq \sqrt{2\KLdiv{\Prob}{\Prob^{\star}}}.
 \end{align*}
As $\Risk_{\mathtt{GMM}}(\Prob,\hyp^{\star})$ is, up to an additive offset, equal to $\KLdiv{\Prob}{\Prob^{\star}}$, this is reminiscent of the type of distribution free control obtained for clustering using \citeppartone{Lemma~\ref{P1-lem:LemmaBiasTermBis}} %
\rev{Whether $\divp_{\hyp^{\star}}^{\HypClassRef}(\Prob\|\Prob^{\star})$ vanishes as in compressive clustering (cf \citeppartone{Lemma~\ref{P1-lem:LemmaBiasTerm}}  and Lemma~\ref{lem:BiasKMeans} in the appendix) is an interesting question left to further work.}

\rev{As in compressive clustering the fourth term, $d(\Prob^{\star},\HypClass)$, is a measure of distance of the best (unconstrained) gaussian mixture model to the considered constrained hypothesis class. Controlling this term as was done for compressive clustering in Lemma~\ref{lem:bounddisttosepclust} would require further investigations.}

\paragraph{Separation assumption.} Given the scale parameter $s\rev{\geq 1}$ and the number of Gaussians $k$, Theorem~\ref{thm:maingmmthm} sets a separation condition $\sep$ sufficient to ensure compressive statistical learning guarantees with the proposed sketching procedure, as well as a sketch size driven by $M_{s}$. 
Contrary to the case of Compressive Clustering, one cannot target an arbitrary small separation as {for any value of $s$} we have %
\rev{$\sep \geq 4\sqrt{2 \log(ek)}$}.
Reaching guarantees for \rev{a level of separation $\order{\sqrt{\log(ek)}}$} requires choosing $s$ of the order of one. As we have just seen, this may require exponentially many training samples to reach a small estimation error, which is not necessarily surprising as such a level of separation is smaller than generally found in the literature \citep[see e.g.][]{Achlioptas2005,Dasgupta2000,Vempala2004}. For larger values of the scale parameter $s$, the separation required for our results to hold is larger.

\paragraph{Sketch size.} %

Contrary to the case of Compressive Clustering (cf Theorem~\ref{thm:mainkmeansthm}), the choice of the scale parameter $s$ also impacts the sketch size required for the guarantees of Theorem~\ref{thm:maingmmthm} to hold.
   Choosing $s^2=2$ we get $\sep^2$ of the order of $\log(ek)$ , and ~\eqref{eq:sksizeGMM} holds as soon as \rev{(with a universal numerical constant $C$ that may vary from line to line below)}
   \[
       m \geq C \coveps^{-2} \cdot 2^{d/2} 
     \cdot k \cdot \{kd \cdot [d+\log k + \log R + \log(1/\coveps)] + \log(1/\zeta) \} \cdot  \log(ek).
   \]
   Choosing $s^2=d$ we get $\sep^2$ of the order of  $d \log(ek)$, and~\eqref{eq:sksizeGMM} holds as soon as
    \[
  m \geq C \coveps^{-2}  
     \cdot k \cdot \{kd \cdot [1 + \log(kd) +\log(R) + \log(1/\coveps)] + \log(1/\zeta) \} \cdot \log(ek)\min(\log(ek),d).
   \]
   Choosing $s^2=d/\log(ek)$ we get $\sep^2$ of the order of $d + \log k$, and~\eqref{eq:sksizeGMM} holds as soon as
    \[
  m \geq C \coveps^{-2} 
     \cdot k^2 \cdot \{kd \cdot [1 +\log(kd) +\log(R) + \log(1/\coveps)] + \log(1/\zeta) \} \cdot \min(\log^2(ek),d).
   \]
Choosing $s^{2} \gg \sampleDim$ does not seem to pay off.

\paragraph{Tradeoffs.}
Overall we observe a tradeoff between the required sketch size, the required separation of the means in the considered class of GMMs, and the sample complexity. When the scale parameter $s$ decreases, higher frequencies are sampled (or, equivalently, the %
\rev{spatial kernel is more localized}), and the required separation of means decreases. As a price, a larger number of sampled frequencies is required, and the sketch size increases as well as the factor $C_\SketchingOperatorProb$. 
\begin{table}[htbp]
\begin{center}
\begin{tabular}{|c||c|c|c|c|}
\hline
Scale  & Separation & Estimation error & 
Sketch size\\
$s^{2}$ & $\sep$ & factor \rev{$C_\SketchingOperatorProb$} & %
$\rev{m}$ \\ %
\hline
\hline
$\sampleDim$ & $\sqrt{\sampleDim \log (ek)}$ & \rev{$\sqrt{k}R^{2}$} & 
$k^{2}\sampleDim \cdot \rev{\log(ek \sampleDim R) \log^{2}(ek)}$\\
\hline
$\frac{\sampleDim}{\log (ek)}$ & $\sqrt{\sampleDim+\log (ek)}$ & $\rev{k\sqrt{k}R^{2}}$ & 
$k^{3}\sampleDim \cdot \rev{\log (ek\sampleDim R)\log^{2}(ek)}$\\
\hline
$2$ & $\sqrt{\log (ek)}$ & $\rev{2^{\sampleDim/2} \sqrt{k}R^{2}}$ & 
$k^2\sampleDim^{\rev{2}} \cdot 2^{\sampleDim/2} \cdot \rev{(1+\log(k R)/\sampleDim)\log(ek)}$\\
\hline
\end{tabular}
\end{center}

\caption{\label{tab:gmmsep}Some tradeoffs between separation assumption, estimation error factor, and sketch size guarantees obtained using Theorem~\ref{thm:maingmmthm} for various values of the scale parameter $s^{2}$ of the frequency distribution~\eqref{eq:freqdistGMM}. \rev{Each expression gives an order of magnitude up to universal numerical factors and factors depending only on $\coveps$ and $\zeta$.} }
\end{table}
We give some particular values for $s$ in Table \ref{tab:gmmsep}. The regime $s^{2} = 2$ may be useful to resolve close Gaussians in moderate dimensions (typically $\sampleDim \leq 10$) where the factor $2^{\sampleDim/2}$ in sample complexity and sketch size remains tractable. %

\paragraph{Learning algorithm and improved sketch size guarantees?}

Again, although Theorem~\ref{thm:maingmmthm} only provides guarantees when the sketch size $\nMeasures$ exceeds the order of $\PCAdim^2\sampleDim$ (up to logarithmic factors, and for the most favorable choice of scale parameter $s$ with the strongest separation constraints), the observed empirical phase transition pattern  \citep{keriven:hal-01329195} (using an algorithm to adress the optimization of~\eqref{eq:DefRiskProxyGMM} with a greedy heuristic) suggests that $\nMeasures$ of the order of $\PCAdim\sampleDim$, i.e. of the order of the number of unknown parameters, is in fact sufficient. 
Also, while Theorem~\ref{thm:maingmmthm} only handles mixtures of Gaussians with fixed known covariance matrix, the same algorithm has been observed to behave well for mixtures of Gaussians with unknown diagonal covariance.

\section{Establishing the RIP for general mixture models}\label{sec:general_mixtures}

\rev{To establish the main results of the previous sections, Theorem~\ref{thm:mainkmeansthm} and Theorem~\ref{thm:maingmmthm}, we will prove that the main assumption~\eqref{eq:lowerDRIP} of Theorem~\ref{thm:LRIPsuff_excess} holds with high probability.}
\rev{As recalled in Section~\ref{sec:ingredientsLRIP} below (see Theorem \ref{thm:mainLRIP}), this can be achieved using the general approach described in \citeppartone{Section~\ref{P1-sec:ChoiceSketch}} relating random features and \emph{kernel mean embeddings} of probability distributions, and using the notion of a \emph{normalized secant set}. As the models sets appearing in Theorem~\ref{thm:mainkmeansthm} and Theorem~\ref{thm:maingmmthm} are mixture models (mixtures of $k$ Dirac, or mixtures of $k$ Gaussians), we develop in Section~\ref{sec:separatedmixtures} tools for generic mixture models, introducing the notion of \emph{(separated) dipole} and that of \emph{mutual coherence} of separated dipoles.}

\rev{\subsection{Ingredients to establish the LRIP for randomized sketching} \label{sec:ingredientsLRIP}}

  Considering a parameterized family of (real- or complex-valued) measurable functions $\FClass := \{\rfeat\}_{\freq \in \freqSpace}$ \rev{over $\SampleSpace$} and a probability distribution $\freqdist$ over the parameter set $\freqSpace$ (often $\freqSpace \subseteq \RR^\sampleDim$), the random feature functions we consider are defined by drawing $\freq_j$, $1\leq j\leq \nMeasures$, $i.i.d.$ from the distribution $\freqdist$ and defining 
\begin{equation}\label{eq:DefRandomFeatureFunction}
\SketchingOperator(\sample) := \tfrac{1}{\sqrt{m}} \left(\rfeatj(\sample)\right)_{j=1,m}.
\end{equation}
The expectation of $\langle \SketchingOperator(\sample),\SketchingOperator(\sample')\rangle = \tfrac{1}{\nMeasures}\sum_{j=1}^{m} \rfeatj(\sample)\overline{\rfeatj(\sample')}$ defines a kernel 
\begin{equation}\label{eq:DefIntegralRepresentation}
 \kernel(\sample,\sample') := \Exp_{\freq\sim\freqdist}\rfeat(\sample)\overline{\rfeat(\sample')}
\end{equation}
as well as the corresponding {\em mean embedding} kernel \citep{Sriperumbudur2010} for probability distributions,
\begin{equation}
\label{eq:DefMeanMapEmbedding}
\kappa(\Prob,\Prob') := \Exp_{\Sample \sim \Prob} \Exp_{\Sample' \sim \Prob'} \kernel(\Sample,\Sample'),
\end{equation}
and the associated Maximum Mean Discrepancy (MMD) metric
\begin{equation}\label{eq:DefMMD}
\norm{\Prob-\Prob'}_{\kappa} := \sqrt{\kappa(\Prob,\Prob)-2\rev{\operatorname{Re}(}\kappa(\Prob,\Prob')\rev{)}+\kappa(\Prob',\Prob')}.
\end{equation}
By construction $\normkern{\Prob-\Prob'}^{2}$ is the expectation (with respect to the draw of $\freq_{j}$, $1 \leq j \leq \nMeasures$) of 
\[
\norm{\SketchingOperatorProb(\Prob)-\SketchingOperatorProb(\Prob')}_2^{2} = \frac{1}{\nMeasures} \sum_{j=1}^{\nMeasures} \abs{\Exp_{\Sample \sim \Prob} \rfeatj(\Sample)-
\Exp_{\Sample' \sim \Prob'} \rfeatj(\Sample')}^{2}.
\]
  A quantity of interest, given a model set $\Model$, is a {\em concentration function} $t \mapsto \ConcFn(t) \in (0,\infty]$ such that 
\begin{equation}\label{eq:PointwiseConcentrationFn}
\mathbb{P}\left(\abs{\frac{\norm{\SketchingOperatorProb(\mProb)-\SketchingOperatorProb(\mProb')}_{2}^{2}}{\norm{\mProb-\mProb'}_{\kappa}^{2}}-1} \geq t\right) \leq 2 \exp\left(-\frac{\nMeasures}{\ConcFn(t)} \right),\qquad \forall \mProb,\mProb' \in \Model,\quad \forall t > 0,\qquad \forall \nMeasures \geq 1.
\end{equation}
The \emph{normalized secant set} of the model set $\Model$ with respect to a kernel $\kernel$ is the following subset of the set of finite signed measures (see \citeppartone{Appendix~\ref{P1-sec:FiniteSignedMeasures}}):%
\begin{equation}\label{eq:DefNormalizedSecantSet}
\secant_{\kernel} = \secant_{\kernel}(\Model) := \set{\frac{\mProb-\mProb'}{\normkern{\mProb-\mProb'}}: \mProb,\mProb'\in\Model, \normkern{\mProb-\mProb'}>0}.
\end{equation} 
Given a function class $\GClass$ of measurable %
functions $g: \SampleSpace \to \RR\ \text{or}\ \CC$, the \emph{radius} of a subset $\mathcal{E}$ of finite signed measures is denoted
\begin{equation}\label{eq:DefSetRadius}
  \rev{\normfclass{\mathcal{E}}{G} := \sup_{\HH \in \mathcal{E}} \normfclass{\HH}{G}
    = \sup_{\HH \in \mathcal{E}} \sup_{g \in \GClass} \abs{\int g d\HH}.}
\end{equation}
Of particular interest will be $\dnormloss{\secant_{\kernel}}{}$ and $\normfclass{\secant_{\kernel}}{F}$.
The \emph{covering number} $\covnum{d(\cdot,\cdot)}{S}{\coveps}$ of a set $S$ with respect to a (pseudo)metric\footnote{Further reminders on metrics, pseudometrics, and covering numbers are given in \citeppartone{Appendix~\ref{P1-sec:notations_definitions}}.} $d(\cdot,\cdot)$ is the minimum number of closed balls of radius $\coveps$ with respect to $d(\cdot,\cdot)$ with centers in $S$ needed to cover $S$. 
We can now recall \citeppartone{Theorem~\ref{P1-thm:mainLRIP}}: %
\rev{\begin{theorem}\label{thm:mainLRIP}
Consider $\FClass := \set{\rfeat}_{\freq \in \Omega}$ a family of real or complex-valued functions on $\SampleSpace$, $\freqdist$ a probability distribution on $\Omega$, $\SketchingOperator$ the associated random feature function and $\kernel$ the corresponding kernel. Consider the pseudometric on $\FClass$-integrable probability distributions\footnote{In fact, we consider the extension of $d_{\rev{\FClass}}$ to finite signed measures, see Appendix~\ref{P1-sec:FiniteSignedMeasures} in \citepartone.} 
\begin{equation}\label{eq:DefYetAnotherMetric}
d_\FClass(\Prob,\Prob'):=\sup_{\freq\in \freqSpace}\abs{\abs{\Exp_{\Sample \sim \Prob}\rfeat(\Sample)}^2 - \abs{\Exp_{\Sample' \sim \Prob'}\rfeat(\Sample')}^2}.
\end{equation} 
Consider a model set $\Model$ and $\secant_{\kernel}$ its normalized secant set.
Assume the pointwise concentration function $\ConcFn(\delta)$ satisfying~\eqref{eq:PointwiseConcentrationFn} exists. 
For $0<\delta,\zeta<1$, if
\begin{equation}
\nMeasures \geq \ConcFn(\delta/2) \cdot
\log\Big(2\covnum{d_\FClass}{\secant_\kernel}{\delta/2}/\probLevel\Big),
\end{equation}%
then, with probability at least $1-\probLevel$ on the draw of $(\freq_{j})_{j=1}^{m}$, the operator $\SketchingOperatorProb$ induced by $\SketchingOperator$ (cf \eqref{eq:SketchingOperatorProbDef}) satisfies 
\begin{equation}\label{eq:mainLRIPPureKernel}
1-\delta \leq 
\frac{\norm{\SketchingOperatorProb(\mProb)-\SketchingOperatorProb(\mProb')}_{2}^{2}}{\norm{\mProb-\mProb'}_{\kappa}^{2}}
\leq 1+\delta, \qquad \forall \mProb,\mProb' \in \Model.
\end{equation}
When~\eqref{eq:mainLRIPPureKernel} holds, the LRIP~\eqref{eq:lowerDRIP} holds with constant 
$C_\SketchingOperatorProb :=\frac{\dnormloss{\secant_{\kernel}}{}}{\sqrt{1-\delta}}$ and $\eta = 0$.
\end{theorem}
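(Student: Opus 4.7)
The plan is to prove the RIP-type bound~\eqref{eq:mainLRIPPureKernel} by an $\coveps$-net plus union-bound argument on the normalized secant set $\secant_\kernel$, and then to derive the LRIP as a short algebraic consequence.

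For~\eqref{eq:mainLRIPPureKernel}, I would first rephrase the ratio of interest using the normalized secant element $\HH := (\mProb-\mProb')/\normkern{\mProb-\mProb'}$: by linearity of $\SketchingOperatorProb$ it equals $\norm{\SketchingOperatorProb(\HH)}_2^{2} = \tfrac{1}{m}\sum_{j=1}^m |\int \rfeatj d\HH|^2$, whose expectation is $\kernel(\HH,\HH) = 1$. For each fixed $\HH \in \secant_\kernel$, the pointwise concentration~\eqref{eq:PointwiseConcentrationFn} then gives $\PP(|\norm{\SketchingOperatorProb(\HH)}_2^{2}-1| \geq t) \leq 2\exp(-m/\ConcFn(t))$. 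Discretize $\secant_\kernel$ by a minimal $(\delta/2)$-cover $\mathcal{N}$ in the pseudometric $d_\FClass$, of cardinality $N := \covnum{d_\FClass}{\secant_\kernel}{\delta/2}$. A union bound combined with the hypothesis $\nMeasures \geq \ConcFn(\delta/2)\log(2N/\probLevel)$ then yields, with probability at least $1-\probLevel$, that $|\norm{\SketchingOperatorProb(\HH_0)}_2^{2}-1| \leq \delta/2$ for all $\HH_0 \in \mathcal{N}$ simultaneously.

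The transfer from the cover to the full secant set rests on the elementary perturbation inequality
\[
\abs{\norm{\SketchingOperatorProb(\HH)}_2^{2} - \norm{\SketchingOperatorProb(\HH_0)}_2^{2}} \leq \sup_{j} \abs{\abs{\int \rfeatj d\HH}^{2}-\abs{\int \rfeatj d\HH_0}^{2}} \leq d_\FClass(\HH,\HH_0),
\]
immediate from~\eqref{eq:DefYetAnotherMetric} and pinpointing $d_\FClass$ as the correct pseudometric for covering. Applied with $\HH_0$ the nearest neighbor of $\HH$ in $\mathcal{N}$, and summed with the uniform control on $\mathcal{N}$, this yields $|\norm{\SketchingOperatorProb(\HH)}_2^{2}-1| \leq \delta$ uniformly on $\secant_\kernel$, equivalent to~\eqref{eq:mainLRIPPureKernel} by positive homogeneity. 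The LRIP then follows in three lines: for any $\mProb,\mProb'\in\Model$ with $\normkern{\mProb-\mProb'}>0$, the definition~\eqref{eq:DefSetRadius} of the $\dnormloss{\cdot}{}$-radius and the lower half of~\eqref{eq:mainLRIPPureKernel} give
\[
\dnormloss{\mProb-\mProb'}{} = \normkern{\mProb-\mProb'}\cdot \dnormloss{\HH}{} \leq \normkern{\mProb-\mProb'}\cdot \dnormloss{\secant_\kernel}{} \leq \tfrac{\dnormloss{\secant_\kernel}{}}{\sqrt{1-\delta}}\,\norm{\SketchingOperatorProb(\mProb)-\SketchingOperatorProb(\mProb')}_2,
\]
which is~\eqref{eq:lowerDRIP} with $\eta=0$; the degenerate case $\normkern{\mProb-\mProb'}=0$ forces both sides to vanish.

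The main technical point — beyond routine $\coveps$-net bookkeeping and the concrete covering-number bound — is the perturbation inequality above, which shows that the tailored pseudometric $d_\FClass$ is precisely calibrated to control differences of the random quantity $\norm{\SketchingOperatorProb(\HH)}_2^{2}$ uniformly over the frequency draw. A secondary wrinkle is that $\secant_\kernel$ lies in the space of finite signed measures rather than probability distributions, so the extensions of $\SketchingOperatorProb$, $\kernel$, and $d_\FClass$ to that space developed in \citeppartone{Appendix~A.2} must be invoked throughout; everything else amounts to definition-chasing.
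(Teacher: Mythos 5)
The paper does not contain its own proof of this theorem: it is imported verbatim as \citeppartone{Theorem~5.7} from the companion paper, so there is no in-paper proof to compare against. That said, your argument is the standard and expected one --- pointwise concentration at each net point, a $(\delta/2)$-cover of $\secant_\kernel$ in the purpose-built pseudometric $d_\FClass$, the Lipschitz transfer $\abs{\norm{\SketchingOperatorProb(\HH)}_2^2-\norm{\SketchingOperatorProb(\HH_0)}_2^2}\leq d_\FClass(\HH,\HH_0)$, a union bound, and then the LRIP by rescaling elements of the secant --- and the arithmetic matches the stated sketch-size hypothesis.

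The only loose end is your closing assertion that in the degenerate case $\normkern{\mProb-\mProb'}=0$ ``both sides vanish.'' The RIP~\eqref{eq:mainLRIPPureKernel} as written is a ratio condition that, with $0/0$ unresolved, says nothing about such pairs; even granting the natural reading that forces $\norm{\SketchingOperatorProb(\mProb)-\SketchingOperatorProb(\mProb')}_2=0$ whenever the denominator vanishes, you still need $\dnormloss{\mProb-\mProb'}{}=0$ to close the LRIP with $\eta=0$, and this does not follow from $\dnormloss{\secant_\kernel}{}<\infty$, since the normalized secant set by construction excludes all pairs with vanishing kernel norm. This is a genuine corner of the companion-paper statement --- implicitly handled there by restricting attention to pairs in the secant set or by assuming the kernel is characteristic relative to $\DLossClass$ --- which your sketch should flag rather than assert. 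Otherwise the argument is sound and is almost certainly the same argument the companion paper uses.
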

}
\subsection{\rev{Separated mixtures models, dipoles, and mutual coherence}}\label{sec:separatedmixtures}

\rev{In Theorems~\ref{thm:mainkmeansthm} and~\ref{thm:maingmmthm}, the random feature map $\SketchingOperator$  is made of (weighted) random Fourier features, leading to a shift-invariant kernel $\kernel$, and the considered model set is a mixture of Diracs (resp. of Gaussians) satisfying a certain separation condition. To prove these theorems using Theorem \ref{thm:mainLRIP}, our main goal is to bound the radius of the normalized secant set, $\dnormloss{\secant_{\kernel}}{}$, as well as the concentration function $\ConcFn(t)$ (see \eqref{eq:PointwiseConcentrationFn})} and the covering numbers of $\secant_{\kernel}$ (see~\eqref{eq:DefNormalizedSecantSet}) \rev{with respect to the pseudometric~\eqref{eq:DefYetAnotherMetric}}.
As the distance $\normkern{\mProb - \mProb'}$ is the denominator of all these expressions,  most difficulties arise when $\normkern{\mProb - \mProb'}$ is small ($\mProb,\mProb' \in \Model$ get ``close'' to each other) and we primarily have to control the ratio $\norm{\mProb-\mProb'}/\normkern{\mProb-\mProb'}$ for various norms when $\normkern{\mProb-\mProb'} \rightarrow 0$.
In this section, we develop a framework to control these quantities when the model $\Model$ is a mixture model, which covers both mixtures of Diracs and mixtures of Gaussians.

\rev{We consider a given parametrized family of base distributions $\BasicSet = (\ParamSpace,\metricParam,\embd)$ where $\ParamSpace$ is a parameter set (typically a subset of a finite-dimensional vector space), $\metricParam$ is a metric on $\ParamSpace$, and $\embd: \Param \in \ParamSpace \mapsto \embd(\Param) = \Prob_{\Param}$ is an \emph{injective} map defining a family of probability distributions}
(e.g. a family of Diracs or of Gaussians). \rev{In statistical terms, $\BasicSet$ is an identifiable statistical
  model whose parameter space is equipped with a metric, \rev{and $\Prob_{\Param}$ is a (probability) measure on the sample space $\SampleSpace$}.}
We define $\rev{2}$-separated \rev{$k$-}mixtures \rev{from $\BasicSet$} as 
\begin{equation}\label{eq:DefMixSetSep}
\rev{\MixSetSep{k}}
:=\set{\mProb = \sum_{l=1}^\ell \alpha_l\Prob_{\Param_l}: \ell \leq k,~\alpha_l> 0,~\sum_{l=1}^\ell \alpha_l=1,~\rev{\Param_l\in \ParamSpace},~\metricParam(\Param_l,\Param_{l'}) \geq \rev{2}
 ~\forall l\neq l' \leq \ell}.
\end{equation}
\begin{remark}
\rev{In the case of Diracs $\Prob_{\Param}=\delta_{\Param}$, with $\metricParam(\Param,\Param') = \norm{\Param-\Param'}_2/\sep$, $\MixSetSep{k}$ is the set of mixtures of $k$ pairwise $2\sep$-separated Diracs considered in Section~\ref{sec:clustering}.
For Gaussians $\Prob_{\Param} = \mathcal{N}(\Param,\covar)$, $\metricParam(\Param,\Param') := \normmah{\Param-\Param'}{\covar}/\sep$, we obtain the set of $2\sep$-separated Gaussian mixtures considered in Section~\ref{sec:gmm}.}
\end{remark}

The notion of \emph{dipoles} will turn out to be particularly useful in our analysis. %
\begin{definition}[Dipoles, separation]\label{def:Dipole}
A finite signed measure\footnote{See Appendix~\ref{P1-sec:FiniteSignedMeasures} in \citepartone} $\nu$ is a {\bf dipole} with respect to  \rev{$\BasicSet= (\ParamSpace,\metricParam,\embd)$} if it admits a decomposition as $\nu=\alpha_{\rev{1}} \Prob_{\Param_{\rev{1}}} - \alpha_{\rev{2}} \Prob_{\Param_{\rev{2}}}$ where $\Param_{\rev{1}}, \Param_{\rev{2}} \in \ParamSpace$, $\metricParam(\Param_{\rev{1}},\Param_{\rev{2}})\leq 1$ and \rev{$\alpha_{i} \geq 0$} %
for $i=1,2$.
The coefficients $\alpha_i$'s are not necessarily normalized to $1$, and any of them can be put to $0$ to yield a \emph{monopole} as a special case.
Two dipoles $\nu,\nu'$ are {\bf 1-separated} if they admit a decomposition $\nu=\alpha_1 \Prob_{\Param_1} - \alpha_2 \Prob_{\Param_2}$, $\nu'=\alpha'_1 \Prob_{\Param'_1} - \alpha'_2 \Prob_{\Param'_2}$ \rev{as above} such that
$\metricParam(\Param_i,\Param'_j)\geq 1$ for all $i,j\in\set{1,2}$.
\end{definition}
\rev{The relevance of the notion of separated dipoles to handle the secant of
  separated mixtures is captured
in the following decomposition lemma:}
\begin{lemma}\label{lem:decompDipoles}
\rev{If $\mProb,\mProb' \in \MixSetSep{k}$, then there exists $\ell \leq 2k$ nonzero dipoles $(\nu_l)_{1\leq l \leq \ell}$ that are pairwise $1$-separated and satisfy} %
\(
\mProb-\mProb'=\sum_{l=1}^{\ell} \nu_l.
\)
\end{lemma}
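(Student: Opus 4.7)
The plan is to construct the dipoles by matching atoms between $\mProb$ and $\mProb'$ that are close (within distance $1$ in $\metricParam$), then verifying that the resulting ``paired'' dipoles and ``unpaired'' monopoles are automatically pairwise $1$-separated thanks to the $2$-separation within each mixture.

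Concretely, I would write $\mProb = \sum_{l=1}^{\ell_{1}} \alpha_{l} \Prob_{\Param_{l}}$ and $\mProb' = \sum_{l=1}^{\ell_{2}} \alpha'_{l} \Prob_{\Param'_{l}}$ with $\ell_{1},\ell_{2}\leq k$, and consider the bipartite relation on $\{\Param_l\} \times \{\Param'_{l'}\}$ defined by $\Param_l \sim \Param'_{l'}$ iff $\metricParam(\Param_l,\Param'_{l'}) < 1$. The key observation is that this relation is a partial matching: if some $\Param'_{l'}$ were related to two distinct $\Param_{l},\Param_{l''}$ in $\mProb$, then by the triangle inequality $\metricParam(\Param_l,\Param_{l''}) < 2$, contradicting the $2$-separation of $\mProb$; symmetrically on the other side. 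Since $\embd$ is injective, the base distributions $\Prob_\Param$ are distinct, so matched parameters give rise to genuine (possibly nonzero) two-atom dipoles $\alpha_l \Prob_{\Param_l} - \alpha'_{l'}\Prob_{\Param'_{l'}}$, while unmatched atoms give monopoles $\alpha_l \Prob_{\Param_l}$ or $-\alpha'_{l'}\Prob_{\Param'_{l'}}$ (which fit Definition~\ref{def:Dipole} by setting one coefficient to zero). Dropping any zero dipole (e.g. when an atom is shared with equal weight), the total number of nonzero dipoles is at most $\ell_{1}+\ell_{2}\leq 2k$, and by construction $\mProb-\mProb' = \sum_l \nu_l$.

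The main thing to verify is pairwise $1$-separation. For any two dipoles $\nu,\nu'$ from this decomposition, same-sign atoms (both from $\mProb$, or both from $\mProb'$) are at distance $\geq 2 \geq 1$ by the separation assumption. For an opposite-sign cross-distance $\metricParam(\Param,\Param')$ where say $\Param$ comes from $\mProb$ and $\Param'$ from $\mProb'$, if one had $\metricParam(\Param,\Param') < 1$ then $\Param'$ would be matched to $\Param$; but either $\Param$ was matched elsewhere (in which case $\Param'$ already has a partner in $\mProb$ distinct from $\Param$, contradicting uniqueness of the matching) or $\Param$ is unmatched (in which case $\Param'$ should have been matched to $\Param$, contradicting the structure of $\nu,\nu'$ belonging to different components). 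In all cases $\metricParam(\Param,\Param')\geq 1$, which is exactly the $1$-separation condition of Definition~\ref{def:Dipole}.

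The only minor subtlety I expect is bookkeeping around boundary/equality cases: atoms exactly at distance $1$, an atom shared by $\mProb$ and $\mProb'$ with unequal weights (producing a nonzero monopole on a single support point), and ensuring that dropping zero dipoles does not break the decomposition identity — all of which are handled by the $<$ versus $\leq$ convention in the matching step. No hard analytic estimate is needed; the argument is essentially combinatorial once one exploits the triangle inequality on $\metricParam$.
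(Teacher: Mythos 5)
Your proof is correct and takes essentially the same approach as the paper's, which is a one-paragraph sketch relying exactly on the observation you make explicit: by the triangle inequality and the $2$-separation within each mixture, the relation of being strictly within distance $1$ is a partial matching between the atoms of $\mProb$ and those of $\mProb'$. Your write-up fills in the bookkeeping (why the matching is injective on both sides, why cross-sign distances between different dipoles are $\geq 1$, and why the $<1$ convention rather than $\leq 1$ cleanly handles the boundary case where an atom sits at distance exactly $1$ from an atom on the other side or exactly $2$ from two atoms on the same side), but the underlying idea and structure are identical.
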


\begin{proof}
Using the $2$-separation in $\mProb$ and $\mProb'$ and the triangle inequality, for the metric $\metricParam$ each parameter $\Param_i$ in $\mProb$ is $1$-close to \emph{at most} one parameter $\Param'_j$ in $\mProb'$, and $1$-separated from all other components in both $\mProb$ and $\mProb'$. Hence $\mProb-\mProb'$ can be decomposed into a sum of (at most) $2k$ dipoles (some of which may also be monopoles). %
\end{proof}

\rev{As announced previously, we are interested in RIP inequalities with the kernel
  norm in the denominator. Correspondingly, it is natural to introduce the notion of \emph{normalized} monopoles and dipoles, given a kernel $\kernel$ and the associated mean map embedding. It will be convenient to make some basic assumptions on this kernel.
  For the following definitions, we only assume $\kernel$ is a positive semi-definite (psd) kernel  on $\SampleSpace$ with the associated kernel mean embedding defined by~\eqref{eq:DefMeanMapEmbedding}; the explicit representation in terms of random features is not
needed.}
\begin{definition}[Locally characteristic kernel, normalized kernel]\label{def:loccharkern}
  \rev{A psd  kernel $\kernel$ on $\SampleSpace$ (extended to probability distributions
    on $\SampleSpace$ via the kernel mean embedding~\eqref{eq:DefMeanMapEmbedding}) is \emph{locally characteristic} with respect to $\BasicSet = (\ParamSpace,\metricParam,\embd)$ if it satisfies the following two conditions:
\begin{enumerate}
\item \label{it:nondegeneratekernel} $\normkern{\Prob_{\Param}}>0$  for each $\Param \in \ParamSpace$; 
\item $\abs{\kernel(\Prob_{\Param},\Prob_{\Param'})} < \normkern{\Prob_{\Param}}\normkern{\Prob_{\Param'}}$ for each $\Param \neq \Param' \in \ParamSpace$ such that $\metricParam(\Param,\Param') \leq 1$. 
\end{enumerate}
Note that if $\kernel$ is locally characteristic, then $\normkern{\nu}>0$ for any nonzero dipole.\\
}
\end{definition}

\begin{definition}[Normalized monopoles, normalized dipoles]
The set of {\bf normalized dipoles} \rev{induced by the base family $\BasicSet$ with respect to a locally characteristic kernel $\kernel$} is denoted \rev{by}
\begin{equation}
\label{eq:NormalizedDipoleSet}
\dipoleSet = \dipoleSet_{\kernel}(\BasicSet) := \set{\frac{\nu}{\normkern{\nu}}:\text{$\nu$ is a nonzero dipole}}.
\end{equation}
\rev{It contains as a particular subset the set of {\bf normalized monopoles}
\begin{equation}
\label{eq:NormalizedMonopoleSet}
\monopoleSet = \monopoleSet_{\kernel}(\BasicSet) := \set{\nu_{\Param}:=\frac{\Prob_{\Param}}{\normkern{\Prob_{\Param}}}:\Param \in \ParamSpace}.
\end{equation}}
\end{definition}
\rev{Equipped with these notions we can define the mutual coherence and $\ell$-coherence of a kernel.
\begin{definition}\label{def:Coherence}
A  psd kernel $\kernel$ on $\SampleSpace$ has mutual coherence $M$ with respect to $\BasicSet$ if: (a) it is locally characteristic with respect to $\BasicSet$; and (b)
 for each pair of normalized dipoles $\mu,\mu' \rev{\in \dipoleSet_{\kernel}(\BasicSet)}$  that are $1$-separated from each other, we have\footnote{We properly define in Appendix~\ref{P1-sec:FiniteSignedMeasures}  of \citepartone the extension of the Mean Map Embedding to finite signed measures, to make sense of the notation $\kernel(\nu,\nu')$.}
\begin{equation}
\label{eq:MutualCoherence}
\abs{\kernel(\mu,\mu')}
 \leq \MutualCoherence.
\end{equation}
Given an integer $\ell>0$ and a number $\zeta\in[0,1]$, we say that a kernel $\kappa$ has its { $\ell$-coherence
    with respect to $\BasicSet$ bounded by $\zeta$} if, for any dipoles $(\nu_l)_{1\leq l \leq \ell}$ that are pairwise $1$-separated and such that $\sum_{l=1}^\ell \normkern{\nu_l}^2>0$, it holds
  \begin{equation}
\label{eq:quasiOrthogonality}
1-\zeta \leq \frac{\normkern{\sum_{l=1}^\ell \nu_l}^2}{\sum_{l=1}^\ell \normkern{\nu_l}^2} \leq 1+\zeta.
\end{equation}
\end{definition}
\rev{A crucial step in the analysis to come is the reduction from differences of $k$-mixtures to
  individual dipoles. To this end, the representation of Lemma~\ref{lem:decompDipoles} combined with the quasi-Pythagorean identity~\eqref{eq:quasiOrthogonality}
  will play a central role. 
}
The following result is a direct
  consequence of Gershgorin's disc lemma \cite[see e.g.][Theorem 5.3]{FouRau13} and
  establishes the link between mutual coherence and $\ell$-coherence.} 
\begin{lemma}\label{lem:Gershgorin}
  Consider a kernel $\kernel$ with mutual coherence $M$ with respect to $\BasicSet$.
  Then $\kernel$ has $\ell$-coherence bounded by $M(\ell-1)$.
\end{lemma}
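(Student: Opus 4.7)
The plan is to reduce the $\ell$-coherence inequality to a Gershgorin-type eigenvalue bound on a small Hermitian matrix built from pairwise inner products of normalized dipoles.

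First, discard any $\nu_l$ with $\normkern{\nu_l}=0$ (they contribute nothing to either side of~\eqref{eq:quasiOrthogonality}), and write the remaining ones as $\nu_l = a_l \wt{\nu}_l$ with $a_l:=\normkern{\nu_l}>0$ and $\wt{\nu}_l := \nu_l/\normkern{\nu_l} \in \dipoleSet_{\kernel}(\BasicSet)$. Because the extended kernel $\kernel(\cdot,\cdot)$ on signed measures is sesquilinear/bilinear and Hermitian, expansion yields
\begin{equation*}
\normkern{\sum_{l=1}^\ell \nu_l}^{2}
\;=\;\sum_{l,l'=1}^{\ell} a_l a_{l'}\, \kernel(\wt{\nu}_l,\wt{\nu}_{l'})
\;=\; \vb^{*} G \vb,
\end{equation*}
where $\vb:=(a_1,\ldots,a_\ell)^{\!\top}$ and $G$ is the $\ell\times\ell$ Hermitian matrix with entries $G_{ll'}:= \kernel(\wt{\nu}_l,\wt{\nu}_{l'})$. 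Note that $\sum_l \normkern{\nu_l}^2 = \norm{\vb}_2^2$, so~\eqref{eq:quasiOrthogonality} amounts to sandwiching the Rayleigh quotient $\vb^* G \vb / \norm{\vb}_2^2$ between $1-M(\ell-1)$ and $1+M(\ell-1)$.

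Next, I bound the entries of $G$. The diagonal satisfies $G_{ll}=\kernel(\wt{\nu}_l,\wt{\nu}_l)=\normkern{\wt{\nu}_l}^2 = 1$. For $l\neq l'$, the dipoles $\nu_l$ and $\nu_{l'}$ are $1$-separated by assumption, hence so are the normalized dipoles $\wt{\nu}_l,\wt{\nu}_{l'}\in\dipoleSet_{\kernel}(\BasicSet)$, and the mutual coherence hypothesis~\eqref{eq:MutualCoherence} yields $|G_{ll'}|\leq M$. Since $G$ is Hermitian (because $\kernel$ is Hermitian on signed measures), its eigenvalues are real and Gershgorin's disc theorem locates them within
\begin{equation*}
\bigcup_{l=1}^\ell \Bigl[\,G_{ll} - \sum_{l'\neq l} |G_{ll'}|,\; G_{ll} + \sum_{l'\neq l} |G_{ll'}|\,\Bigr] \;\subseteq\; \bigl[\,1-M(\ell-1),\, 1+M(\ell-1)\,\bigr].
\end{equation*}

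Finally, applying the Rayleigh–Ritz inequality,
\begin{equation*}
\bigl(1-M(\ell-1)\bigr)\,\norm{\vb}_2^2 \;\leq\; \vb^{*} G \vb \;\leq\; \bigl(1+M(\ell-1)\bigr)\,\norm{\vb}_2^2,
\end{equation*}
which is exactly~\eqref{eq:quasiOrthogonality} after substituting $\vb^{*} G \vb = \normkern{\sum_l \nu_l}^2$ and $\norm{\vb}_2^2 = \sum_l \normkern{\nu_l}^2$. The only mildly delicate point is the bilinear/Hermitian extension of $\kernel$ to signed measures needed to form $G$; this is standard once the mean map embedding is so extended, as done in the companion paper. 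There is no substantial obstacle.
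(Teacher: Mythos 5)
Your proof is correct and is exactly the argument the paper has in mind: the text introducing the lemma explicitly calls it ``a direct consequence of Gershgorin's disc lemma,'' and your Gram-matrix/Rayleigh-quotient route is the standard way to make that precise. The only thing you handled with perhaps more care than necessary is discarding zero-norm dipoles: since mutual coherence presupposes $\kernel$ is locally characteristic, any nonzero dipole already has $\normkern{\nu_l}>0$, so only the trivial zero measure needs discarding, but your argument is sound either way.
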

\begin{remark}
The reader familiar with sparse recovery will find this lemma highly reminiscent of the classical link between the coherence of a dictionary and its restricted isometry property \citep[see e.g.][Theorem 5.13]{FouRau13}. To handle incoherence in a continuous ``off the grid'' setting (such as mixtures of separated Diracs in Section~\ref{sec:clustering}, which also appear in super-resolution imaging scenarios~\citep{Candes_2013,Castro_2015,Duval_2015}), the apparently new trick is to consider incoherence \emph{between dipoles} rather than between monopoles.
\end{remark}
\rev{Conditions such that $\kernel$ has low mutual coherence with respect to $\BasicSet$ will be given in Theorem~\ref{thm:MutualCoherenceRBF1}.
  }

\subsection{{\rev{From separated $k$-mixtures to dipoles}}}\label{sec:secantmixture}

\rev{We turn to the ingredients delineated in Section~\ref{sec:ingredientsLRIP}
in order to establish the RIP for (separated) $k$-mixture models.
Using the notions introduced in Section~\ref{sec:separatedmixtures}, the following results allow 
to control the various key quantities in terms of related notions defined by replacing the normalized secant set of $k$-mixtures with the simpler set $\dipoleSet$ of normalized dipoles.\\
In the sequel we will generically assume to have fixed a base distribution family
$\BasicSet$, a kernel $\kernel$, the associated
normalized dipole and monopole sets $\dipoleSet = \dipoleSet_{\kernel}(\BasicSet),
\monopoleSet = \monopoleSet_{\kernel}(\BasicSet)$, an integer $k\geq 1$, the separated $k$-mixture
model $\Model=\MixSetSep{k}$ and its normalized secant $\secant_{\kernel}=\secant_{\kernel}(\Model)$ as introduced in the previous section. Our first result relates the radius of the normalized secant set with respect to any function family $\GClass$
 to the corresponding radius of the set of dipoles.}

\begin{theorem}\label{thm:CompCstFromDipoles}
 \rev{Assume the kernel $\kernel$ has its $2k$-coherence with respect
  to $\BasicSet$ bounded by $\zeta \leq 3/4$.
  Let $\GClass$ be a real or complex-valued measurable function class over $\SampleSpace$. We have }
 \begin{equation}\label{eq:CompCstFromDipoles}
\rev{ \normfclass{\secant_{\kernel}}{G} \leq \sqrt{8\PCAdim} \cdot \normfclass{\dipoleSet}{G}.}
\end{equation}
\end{theorem}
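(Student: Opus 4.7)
The plan is to reduce any element of the normalized secant $\secant_\kernel$ of the $k$-mixture model to a linear combination of normalized dipoles with controlled total weight, then bound each term by $\|\dipoleSet\|_{\GClass}$. Start with an arbitrary $s = (\pi-\pi')/\|\pi-\pi'\|_\kernel \in \secant_\kernel$, with $\pi,\pi' \in \MixSetSep{k}$. By Lemma~\ref{lem:decompDipoles}, one can write $\pi - \pi' = \sum_{l=1}^\ell \nu_l$ where $\ell \leq 2k$ and the $\nu_l$'s are pairwise $1$-separated dipoles (possibly zero). Because $\kernel$ is locally characteristic (a consequence of having bounded $2k$-coherence), $\nu_l/\|\nu_l\|_\kernel \in \dipoleSet$ whenever $\nu_l \neq 0$, so for every $g \in \GClass$,
\[
\abs{\int g\, d\nu_l} \leq \|\nu_l\|_\kernel \cdot \|\dipoleSet\|_{\GClass}.
\]

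Summing over $l$ and applying the triangle inequality followed by Cauchy--Schwarz gives
\[
\abs{\int g\,d(\pi-\pi')} \leq \|\dipoleSet\|_{\GClass} \sum_{l=1}^\ell \|\nu_l\|_\kernel \leq \|\dipoleSet\|_{\GClass}\,\sqrt{\ell}\,\sqrt{\sum_{l=1}^\ell \|\nu_l\|_\kernel^2}.
\]
The key step is to control the $\ell^2$ sum of dipole norms by the kernel norm of their sum. This is exactly where the $2k$-coherence assumption enters: applying~\eqref{eq:quasiOrthogonality} to the pairwise $1$-separated dipoles $(\nu_l)_{l=1}^\ell$ (with $\ell \leq 2k$) yields
\[
\sum_{l=1}^\ell \|\nu_l\|_\kernel^2 \leq \frac{1}{1-\zeta}\,\Big\|\sum_{l=1}^\ell \nu_l\Big\|_\kernel^2 = \frac{1}{1-\zeta}\,\|\pi-\pi'\|_\kernel^2 \leq 4\,\|\pi-\pi'\|_\kernel^2,
\]
using $\zeta \leq 3/4$.

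Combining the two displays, $\abs{\int g\,d(\pi-\pi')} \leq \sqrt{\ell}\cdot 2\, \|\pi-\pi'\|_\kernel \cdot \|\dipoleSet\|_{\GClass} \leq \sqrt{8k}\,\|\pi-\pi'\|_\kernel\, \|\dipoleSet\|_{\GClass}$. Dividing by $\|\pi-\pi'\|_\kernel$ and taking the supremum over $g \in \GClass$ and $s \in \secant_\kernel$ yields~\eqref{eq:CompCstFromDipoles}. The only mildly delicate point is to ensure Lemma~\ref{lem:decompDipoles} is applicable (i.e.\ verify that~\eqref{eq:quasiOrthogonality} covers the case of fewer than $2k$ dipoles, which it does by zero-padding, and that zero dipoles contribute zero to both sides and can be discarded), but otherwise the argument is a short chain of Cauchy--Schwarz and the quasi-Pythagorean identity. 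No significant obstacle is anticipated.
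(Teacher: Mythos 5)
Your proof is correct and follows essentially the same route as the paper's: decompose $\pi-\pi'$ into at most $2k$ pairwise $1$-separated dipoles via Lemma~\ref{lem:decompDipoles}, bound each dipole's $\GClass$-norm by $\normfclass{\dipoleSet}{G}\normkern{\nu_l}$, apply Cauchy--Schwarz, and invoke the $2k$-coherence bound \eqref{eq:quasiOrthogonality} with $\zeta\le 3/4$ to relate $\sum\normkern{\nu_l}^2$ to $\normkern{\pi-\pi'}^2$. The small technical point you flag (handling $\ell<2k$ and discarding zero dipoles) is handled the same way in the paper, so there is nothing missing.
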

\begin{proof}%
\rev{First, by definition of $\normfclass{\dipoleSet}{G}$, we have $\normfclass{\nu}{G} \leq \normfclass{\dipoleSet}{G} \cdot \normkern{\nu}$ for any dipole $\nu$. }
Let $\mProb,~\mProb' \in \rev{\MixSetSep{\PCAdim}}$. Using Lemma \ref{lem:decompDipoles} we write
\(
\mProb-\mProb' = \sum_{i=1}^{\ell} \nu_i
\)
where $\ell \leq 2k$ and the $\nu_i$'s are dipoles that are pairwise $1$-separated. By the triangle inequality \rev{and the Cauchy-Schwarz inequality} we have
\begin{align*}
\norm{\mProb-\mProb'}_{\GClass}
\leq \sum_{i=1}^{\ell}\norm{\nu_i}_{\GClass}
&\leq \normfclass{\dipoleSet}{G} \cdot \sum_{i=1}^{\ell}\normkern{\nu_i}
\leq \normfclass{\dipoleSet}{G} \cdot \sqrt{\ell} \paren{\sum_{i=1}^{\ell}\normkern{\nu_i}^2}^\frac12
\end{align*}
\rev{By our assumption on the bounded $2k$-coherence of $\kappa$ and  since $\ell \leq 2k$ and $\zeta \leq 3/4$, we have} 
\[
\norm{\mProb-\mProb'}_{\GClass}\leq \frac{\normfclass{\dipoleSet}{G}}{\sqrt{1-\zeta}}\sqrt{\ell} \normkern{\sum_{i=1}^{\ell}\nu_i} \leq 2\sqrt{2\PCAdim} \cdot \normfclass{\dipoleSet}{G} \cdot \normkern{\mProb-\mProb'}. \qedhere
\]
\end{proof}

We now consider the random sketching operator: consider a family of functions $\FClass := \{\rfeat\}_{\freq \in \Omega}$, $\nMeasures$ parameters $(\freq_j)_{j=1}^{m}$ drawn i.i.d. according to \rev{some distribution} $\Lambda$ on $\Omega$, $\SketchingOperatorProb$ the operator induced (see~\eqref{eq:SketchingOperatorProbDef}) by the feature function
\(
\SketchingOperator(\sample) := \tfrac{1}{\sqrt{m}} \left(\rfeatj(\sample)\right)_{j=1}^{m},
\)
\rev{and finally $\kernel$ the associated average kernel, given by~\eqref{eq:DefIntegralRepresentation}. For short, we call $(\FClass,\Lambda)$ a random
  feature family, and $\SketchingOperatorProb,\kernel$ the induced (random) sketching
operator and kernel.}

Concerning the pointwise concentration function for this random sketching operator, by \citeppartone{Lemma~\ref{P1-le:PointwiseConcentrationLemma}}, %
we have $\rev{\normfclass{\secant_{\kernel}}{F} \geq 1}$, and the concentration function satisfies
\begin{equation}\label{eq:ConcFnFromConcCst}
\ConcFn(t) \leq %
2t^{-2}(1+t/3) \cdot \rev{\normfclass{\secant_{\kernel}}{F}^{2}},\qquad \forall t>0.
\end{equation}
\rev{ Observe that this is based on a supremum control over the class $\FClass$, using the radius $\normfclass{\secant_{\kernel}}{F}$, and as such is independent of the choice of the distribution $\Lambda$
  over its index set.}
In settings such as Compressive Clustering with $\sampleDim \gtrsim \log k$, sharper bounds on the concentration function can be obtained for mixture models when the considered kernel has low mutual coherence. \rev{In this situation, thanks to the separation assumption, {\em it is sufficient to properly control the moments wrt. $\Lambda$ of normalized dipoles}, for which sharper bounds may be available.\\
  For notational brevity, we extend by linearity the operator $\SketchingOperatorProb$
  to finite signed measures (in particular for normalized dipoles), and
  for any finite signed measure $\HH$, we denote
  by $\inner{\HH,f}=\int f d\HH$ for an integrable function $f$.
Proofs of the remaining results in this section are in Appendix~\ref{sec:ProofMixtures}.}
\begin{theorem}\label{lem:ConcFnMixturesFromDipole}
  \rev{Consider a random feature family $(\{\rfeat\}_{\freq \in \Omega},\Lambda)$ and the
    induced random sketching operator $\SketchingOperatorProb$ and kernel $\kappa$.
    Assume $\kernel$ has its $2k$-coherence with respect
  to $\BasicSet$ bounded by $\zeta \leq 3/4$.}

Assume there are \rev{$\gamma > 0, \lambda \geq 1$}
such that, for \rev{each} normalized dipole $\HH \in \mathcal{D}$:
\begin{equation}\label{eq:MomentControlDipole}
\Exp_{\omega \sim \freqdist} \brac{\abs{\inner{\HH,\rfeat}}^{2q}  }
\leq  
\frac{q!}{2} \rev{\lambda \gamma^{q-1}}, \qquad \text{for each integer } q \geq 2.
\end{equation}
{Set $V := 16ek\gamma \log^{2}(4ek
  \rev{\lambda})$.} For %
any $\HH \in \secant_\kernel(\MixSetSep{k})$ we have
\begin{equation}\label{eq:MainConcentrationPointwise}
\mathbb{P}
\left(
\abs{%
\norm{\SketchingOperatorProb(\HH)}^2-1}
    \geq t\right)
\leq
2\exp\left(-\frac{m t^{2}}{2\rev{V}(1+t/3)}\right),\qquad \text{for each } t > 0.
\end{equation}
\end{theorem}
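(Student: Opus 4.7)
My plan is to reduce the concentration inequality to a Bernstein-type bound applied to a truncated version of the random variable $Z := |\inner{\mu, \rfeat}|^2$, exploiting the dipole decomposition of the normalized secant. I would first invoke Lemma~\ref{lem:decompDipoles} to decompose $\pi - \pi'$ (for any $\pi, \pi' \in \MixSetSep{\PCAdim}$ with $\mu = (\pi-\pi')/\normkern{\pi-\pi'}$) into $\ell \leq 2\PCAdim$ pairwise $1$-separated dipoles $\nu_l$, and write $\mu = \sum_{l=1}^\ell c_l \tilde\nu_l$ with $\tilde\nu_l := \nu_l/\normkern{\nu_l} \in \dipoleSet$ and $c_l := \normkern{\nu_l}/\normkern{\pi-\pi'}$. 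The $2\PCAdim$-coherence hypothesis combined with $\normkern{\mu}=1$ and the quasi-Pythagorean identity~\eqref{eq:quasiOrthogonality} gives $\sum_l c_l^2 \leq 1/(1-\zeta) \leq 4$. Setting $Z_l := |\inner{\tilde\nu_l, \rfeat}|^2$, Cauchy--Schwarz yields $Z \leq 4 \sum_l Z_l$ while $\Exp Z = \normkern{\mu}^2 = 1$. Combining with the power-mean inequality $(\sum_l Z_l)^q \leq \ell^{q-1}\sum_l Z_l^q$ and the dipole moment hypothesis~\eqref{eq:MomentControlDipole} then produces the raw moment bound $\Exp Z^q \leq (8\PCAdim)^q (q!/2)\, \lambda\gamma^{q-1}$ for every integer $q\geq 2$.

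Next, I would perform a truncation at $V := 16e\PCAdim\gamma\log^2(4e\PCAdim\lambda)$. Setting $Z^{(V)} := \min(Z, V)$, one has $0 \leq Z^{(V)} \leq V$, $\Exp Z^{(V)} \leq \Exp Z = 1$, $\Var(Z^{(V)}) \leq \Exp(Z^{(V)})^2 \leq V\,\Exp Z^{(V)} \leq V$, and $|Z^{(V)} - \Exp Z^{(V)}| \leq V$, so the classical Bernstein inequality applied to the i.i.d.\ sample $(Z^{(V)}_j)_{j=1}^m$ gives
\[
\mathbb{P}\!\left(\big|\overline{Z^{(V)}} - \Exp Z^{(V)}\big| \geq s\right) \leq 2\exp\!\left(-\frac{ms^2}{2V(1+s/3)}\right),
\]
which is exactly of the target form. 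The specific value of $V$ is calibrated so that truncation is essentially harmless: optimizing Markov's inequality over $q$ using Stirling on the raw moment bound yields the sub-exponential tail $\mathbb{P}(Z > u) \lesssim (\lambda/\gamma)\exp(-u/(8\PCAdim\gamma))$, and at $u = V$ this tail equals $(4e\PCAdim\lambda)^{-\Omega(\log(4e\PCAdim\lambda))}$, a super-polynomially small quantity.

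Finally, I would transfer the concentration from $\overline{Z^{(V)}}$ back to $\overline{Z}$ via the inclusion
\[
\{|\overline{Z} - 1| \geq t\}\ \subseteq\ \{\exists j:\ Z_j > V\}\ \cup\ \big\{\big|\overline{Z^{(V)}} - \Exp Z^{(V)}\big| \geq t - |\Exp Z^{(V)} - 1|\big\},
\]
controlling the first event by a union bound $m\mathbb{P}(Z > V)$, the second by Bernstein, and using $|\Exp Z^{(V)} - 1| \leq \Exp(Z - V)_+$ together with the tail estimate. The main obstacle I anticipate lies in this combination step: the truncation error and the Bernstein tail must be reconciled so as to yield~\eqref{eq:MainConcentrationPointwise} with the precise constants of $2V(1+t/3)$ uniformly across both the quadratic-in-$t$ regime (small $t$) and the linear-in-$t$ regime (large $t$). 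This most likely requires performing the initial truncation at $V/c$ for an appropriate universal constant $c>1$, so that the (slightly degraded) Bernstein bound on the truncated mean, combined with the super-polynomially small truncation errors, reproduces the advertised constant $V = 16e\PCAdim\gamma\log^2(4e\PCAdim\lambda)$; the squared logarithm in $V$ is precisely what provides the slack for this absorption to go through without degrading the exponent.
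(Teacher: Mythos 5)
Your raw moment computation $\Exp Z^{q} \leq (8\PCAdim)^{q}\,\frac{q!}{2}\,\lambda\gamma^{q-1}$ is correct and matches the paper's intermediate estimate~\eqref{eq:MomentControlMixtureNaive}, reached via a slightly different route (Cauchy--Schwarz plus power mean instead of convexity through the $\ell^1$-normalized weights, but the resulting bound is the same). The gap is in the truncation strategy. Your final reduction inserts the union-bound term $m\,\mathbb{P}(Z>V)$, which is a fixed positive number \emph{times $m$}. The right-hand side of~\eqref{eq:MainConcentrationPointwise} decays to zero as $m\to\infty$ at fixed $t$, so for $m$ large enough this union-bound term alone exceeds the target. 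No choice of a fixed truncation level $V/c$ can fix this; and if you let the truncation level grow with $m$ to suppress the union bound, the sup factor in the bounded-variable Bernstein inequality grows with $m$ as well, and the exponent no longer scales linearly in $m$. Truncation plus bounded Bernstein is thus structurally unable to produce a bound of the form $2\exp\bigl(-m\,t^2/(2V(1+t/3))\bigr)$ that holds for all $m\geq 1$.

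The paper avoids this by never leaving the moment world: it applies a \emph{moment-based} Bernstein inequality (Proposition~\ref{pr:bernsteinmassart}, Massart's Corollary~2.10) to the \emph{untruncated} variable $Z$, which requires no sup bound at all, only that $\Exp Z^{q} \leq \frac{q!}{2}\sigma^2 u^{q-2}$ for all $q\geq 2$. The raw estimate $\Exp Z^q \leq (8k)^q\frac{q!}{2}\lambda\gamma^{q-1}$ is not of this form (the $(8k)^q$ is one power of $8k$ too many). The missing idea in your proposal is the interpolation step (the paper's Lemma~\ref{lem:MomentInterpolation}): since one additionally knows $\Exp Z = 1$, a H\"older interpolation between the first moment and higher moments lets one trade the factor $(8k)^q$ for roughly $(8k\gamma)^{q-1}\log^{q}$, giving $\Exp Z^q \leq \frac{q!}{2} V u^{q-2}$ with $u = 8k\gamma\log(4ek\lambda)$ and $\sigma^2 = V = 16ek\gamma\log^2(4ek\lambda)$. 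This is exactly why the $\log^2$ appears in $V$ — it is the price paid by H\"older interpolation, not a slack term for absorbing truncation errors. Once those moment bounds are in hand, Massart's inequality yields~\eqref{eq:MainConcentrationPointwise} directly, with no union bound and no restriction on $m$.
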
%
Specific estimates of $\gamma$ such that the moment bounds~\eqref{eq:MomentControlDipole} hold for normalized dipoles will be given in Section~\ref{sec:RBFKernel} (Lemma~\ref{lem:RFFMoments}) and completed in Section~\ref{sec:separatedornot} where we gather all ingredients to prove Theorems~\ref{thm:mainkmeansthm} and~\ref{thm:maingmmthm} for Compressive Clustering and Compressive GMM. 

\rev{Finally, the covering numbers (for $d_{\rev{\FClass}}$) of the normalized secant set are also controlled by those (for $\normrff{\cdot}$) of normalized dipoles.} 
\begin{theorem}\label{thm:covnumSecant}
\rev{Consider a random feature family $(\FClass,\Lambda)$ and the
  induced random sketching operator $\SketchingOperatorProb$ and average kernel $\kernel$.}
    \rev{Assume that $\kernel$ is locally characteristic with respect to $\BasicSet = (\ParamSpace,\metricParam,\embd)$.}
\begin{itemize}
\item  We have $\normfclass{\dipoleSet}{F} \geq 1$, and for each $\Param,\Param' \in \ParamSpace$ such that $\metricParam(\Param,\Param') \leq 1$ and $\alpha,\alpha' \geq 0$
\begin{equation}\label{eq:TangentProof1}
\normkern{\alpha\Prob_{\Param}-\alpha'\Prob_{\Param'}} \leq \normfclass{\alpha\Prob_{\Param}-\alpha'\Prob_{\Param'}}{\FClass} \leq \normfclass{\dipoleSet}{F} \normkern{\alpha\Prob_{\Param}-\alpha'\Prob_{\Param'}}.
\end{equation}
\item
\rev{Assume the kernel $\kernel$ has its $2k$-coherence with respect
  to $\BasicSet$ bounded by $\zeta \leq 3/4$, and consider $d_{\rev{\FClass}}$ the pseudo-metric defined in~\eqref{eq:DefYetAnotherMetric}.
  Then %
  we have for each $\delta>0$:
\begin{equation}
\covnum{d_{\rev{\FClass}}}{\secant_\kernel}{\coveps} 
\leq 
\left[
\covnum{\normfclass{\cdot}{\rev{\FClass}}}{\dipoleSet}{\tfrac{\coveps}{\rev{64k \normfclass{\dipoleSet}{F}}}}
\cdot
\max\left(1,\tfrac{256k\normfclass{\dipoleSet}{F}^{2}}{\coveps}\right)
\right]^{2k}.
\end{equation}
}
\end{itemize}
\end{theorem}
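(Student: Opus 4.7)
The first part rests on the integral representation $\normkern{\mu}^2 = \Exp_{\omega\sim\Lambda}\abs{\int\phi_\omega\,d\mu}^2$, which by Jensen's inequality yields $\normkern{\mu}\leq\sup_\omega\abs{\int\phi_\omega\,d\mu}=\normfclass{\mu}{F}$ for any finite signed measure $\mu$. Specializing to any $\mu\in\dipoleSet$ (which has unit $\kappa$-norm by definition) gives $\normfclass{\dipoleSet}{F}\geq 1$, and specializing to $\mu=\alpha\Prob_\Param-\alpha'\Prob_{\Param'}$ gives the leftmost inequality in~\eqref{eq:TangentProof1}. For the rightmost inequality, $\mu$ is itself a dipole since $\metricParam(\Param,\Param')\leq 1$; if $\mu=0$ the inequality is trivial, otherwise the locally characteristic assumption ensures $\normkern{\mu}>0$ so that $\mu/\normkern{\mu}\in\dipoleSet$, and rescaling yields the claim.

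For the covering number, the plan is to decompose each secant element into at most $2k$ dipoles and build a product cover over normalized dipoles and weights. Given $\eta=(\mProb-\mProb')/\normkern{\mProb-\mProb'}\in\secant_\kernel$, Lemma~\ref{lem:decompDipoles} writes $\mProb-\mProb'=\sum_{l=1}^{\ell}\nu_l$ with $\ell\leq 2k$ pairwise $1$-separated dipoles (padded with zero entries to exactly $2k$ terms). Setting $\tilde\nu_l:=\nu_l/\normkern{\nu_l}\in\dipoleSet$ (arbitrary if $\nu_l=0$) and $\beta_l:=\normkern{\nu_l}/\normkern{\mProb-\mProb'}\geq 0$ yields $\eta=\sum_l\beta_l\tilde\nu_l$. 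The $2k$-coherence bound applied to $\eta$ (of unit $\kappa$-norm) gives $\sum_l\beta_l^2\leq 1/(1-\zeta)\leq 4$, so $\beta$ lies in the Euclidean ball of radius $2$ in $\RR^{2k}$. I then take a $\delta_1$-cover $\mathcal{N}_D$ of $\dipoleSet$ in $\normfclass{\cdot}{F}$ with $\delta_1:=\coveps/(64k\normfclass{\dipoleSet}{F})$ and an $\ell^2$-ball cover $\mathcal{N}_B$ of $\set{\beta\in\RR^{2k}:\norm{\beta}_2\leq 2}$ of radius $\delta^{\mathrm{vec}}\sim\coveps/(k\normfclass{\dipoleSet}{F}^2)$, whose cardinality is at most $\max(1,256k\normfclass{\dipoleSet}{F}^2/\coveps)^{2k}$. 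Rounding $\beta$ to the nearest $\hat\beta\in\mathcal{N}_B$ and each $\tilde\nu_l$ to the nearest $\tilde\nu_l^{(j_l)}\in\mathcal{N}_D$, the candidate $\hat\eta:=\sum_l\hat\beta_l\tilde\nu_l^{(j_l)}$ ranges over a set of cardinality at most $[N_1\cdot\max(1,256k\normfclass{\dipoleSet}{F}^2/\coveps)]^{2k}$, matching the claimed form.

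It remains to verify $d_\FClass(\eta,\hat\eta)\leq\coveps$. The elementary identity $\abs{\abs{a}^2-\abs{b}^2}\leq(\abs{a}+\abs{b})\abs{a-b}$ applied pointwise in $\omega$ with $a=\int\phi_\omega\,d\eta$, $b=\int\phi_\omega\,d\hat\eta$ yields the product bound $d_\FClass(\mu,\mu')\leq(\normfclass{\mu}{F}+\normfclass{\mu'}{F})\normfclass{\mu-\mu'}{F}$. I split through the intermediate $\eta_1:=\sum_l\hat\beta_l\tilde\nu_l$: for $d_\FClass(\eta,\eta_1)$, Cauchy-Schwarz on $\sum_l(\beta_l-\hat\beta_l)\int\phi_\omega\,d\tilde\nu_l$ gives $\normfclass{\eta-\eta_1}{F}\leq\sqrt{2k}\normfclass{\dipoleSet}{F}\norm{\beta-\hat\beta}_2\leq\sqrt{2k}\normfclass{\dipoleSet}{F}\delta^{\mathrm{vec}}$, while Theorem~\ref{thm:CompCstFromDipoles} (with $\GClass=\FClass$) bounds $\normfclass{\eta}{F},\normfclass{\eta_1}{F}\leq\sqrt{8k}\normfclass{\dipoleSet}{F}$, so $d_\FClass(\eta,\eta_1)\lesssim k\normfclass{\dipoleSet}{F}^2\delta^{\mathrm{vec}}$. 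For $d_\FClass(\eta_1,\hat\eta)$, $\normfclass{\eta_1-\hat\eta}{F}\leq\sum_l\abs{\hat\beta_l}\delta_1\leq 2\sqrt{2k}\delta_1$ (Cauchy-Schwarz on $\hat\beta$), giving $d_\FClass(\eta_1,\hat\eta)\lesssim k\normfclass{\dipoleSet}{F}\delta_1$. Both contributions sum to at most $\coveps$ for the resolutions chosen.

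The main technical point is jointly balancing the two resolutions against the bilinear pseudo-metric $d_\FClass$: using an $\ell^2$-ball cover of the weight vector rather than a per-coordinate $\ell^\infty$-grid is essential, since the weight approximation error propagates through a Cauchy-Schwarz step that naturally pairs $\norm{\beta-\hat\beta}_2$ with $\sqrt{\ell}\normfclass{\dipoleSet}{F}$, avoiding an extra $\sqrt{k}$ factor that would otherwise force a tighter $\delta^{\mathrm{vec}}$ and inflate the second factor in the advertised cover size.
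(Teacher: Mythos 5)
Your proof is correct and follows essentially the same plan as the paper's: decompose a normalized secant element into at most $2k$ pairwise $1$-separated normalized dipoles with $\ell^2$-bounded weights (the paper's Lemma~\ref{lem:SecantMix}), build a product cover from a cover of $\dipoleSet$ in $\normfclass{\cdot}{\FClass}$ and a cover of the weight ball, and control $d_\FClass$ via the identity $\abs{\abs{a}^2-\abs{b}^2}\leq(\abs{a}+\abs{b})\abs{a-b}$. The packaging is slightly different: the paper factors through two separate steps (Lemma~\ref{lem:covnumWholeSecantNormd} to pass from $d_\FClass$-covering to $\normfclass{\cdot}{\FClass}$-covering, then Lemma~\ref{lem:covnumSecantGeneric} applied via the mixture-set Lemma~\ref{lem:covnummix} with an $\ell^1$-ball for weights), whereas you merge the steps and use an $\ell^2$-ball for the weights; both are fine and give the same constants. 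Two small remarks. First, your invocation of Theorem~\ref{thm:CompCstFromDipoles} to bound $\normfclass{\eta_1}{\FClass}$ is a misattribution, since $\eta_1=\sum_l\hat\beta_l\tilde\nu_l$ need not lie in $\secant_\kernel$; the correct justification is the direct Cauchy--Schwarz estimate $\normfclass{\eta_1}{\FClass}\leq\norm{\hat\beta}_1\normfclass{\dipoleSet}{F}\leq\sqrt{2k}\norm{\hat\beta}_2\normfclass{\dipoleSet}{F}\leq\sqrt{8k}\normfclass{\dipoleSet}{F}$, which gives the same bound. Second, the covering numbers in this paper are \emph{internal} (centers in the set), while your candidates $\hat\eta$ need not lie in $\secant_\kernel$; to finish one must either apply Lemma~\ref{lem:covnumsub} or the replacement trick in Lemma~\ref{lem:covnumClose}, at the cost of a factor $2$ on the radius. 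Your achieved radius $3\coveps/8$ leaves enough slack for this, so the argument survives, but the step should be stated. Your concluding remark that an $\ell^\infty$-grid for the weights would lose a $\sqrt{k}$ is correct; note however that the paper's $\ell^1$-ball cover achieves the same constants as your $\ell^2$-ball, so the $\ell^2$ choice is convenient rather than essential.
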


Gathering all the ingredients above \rev{together with the general Theorem~\ref{thm:mainLRIP}}
and Lemma~\ref{lem:Gershgorin} we obtain the following result.
\begin{theorem}\label{thm:LRIPGenericMixture}
Consider $\FClass := \{\rfeat\}_{\freq \in \Omega}$, $\freqdist$ a probability distribution on $\Omega$, and $\kernel$ the induced average kernel.
Assume that $\kernel$ has mutual coherence $M$ with respect to $\BasicSet$ and consider $k \geq 1$ such that $M(2k-1) \leq 3/4$.
Assume that there are $C \geq 1$, $r>0$ such that $\covnum{\normfclass{\cdot}{F}}{\dipoleSet}{\coveps}  \leq \rev{2}(C/\coveps)^{r}$  for each $0<\coveps<1$ and that \rev{there are $\gamma>0,\lambda\geq 1$ such that}
\begin{align}
\sup_{\HH \in \dipoleSet} \Exp_{\omega \sim \freqdist} \brac{\abs{\inner{\HH,\rfeat}}^{2q}  }
&\leq  
\frac{q!}{2} \rev{\lambda \gamma^{q-1}},\label{eq:LRIPGenericMixtureMoments}
\end{align}
for every integer $q \geq 2$. For $0<\delta,\zeta<1$, if $(\freq_j)_{j=1}^{m}$ are drawn i.i.d. according to $\freqdist$  and
\begin{equation}\label{eq:DefNMeasuresGenericMixture}
\nMeasures \geq 
80 \cdot \coveps^{-2} \cdot \min\left(2e\gamma \log^{2}(4ek\rev{\lambda}),\normfclass{\dipoleSet}{F}^2\right) \cdot k \cdot  \left\{
2k (r+1) \left[ \log (kC\normfclass{\dipoleSet}{F}^{2}) + \log(\rev{1024}/\coveps)\right]
+\log(2/\zeta)\right\},
\end{equation}
then, with probability at least $1-\probLevel$ on the draw of $(\freq_{j})_{j=1}^{m}$, we have
\begin{equation}\label{eq:mainLRIPKernelMixture}
1-\delta \leq 
\frac{\norm{\SketchingOperatorProb(\Prob)-\SketchingOperatorProb(\Prob')}_{2}^{2}}{\norm{\Prob-\Prob'}_{\kappa}^{2}}
\leq 1+\delta, \qquad \forall \Prob,\Prob' \in \MixSetSep{k}.
\end{equation}
where $\SketchingOperatorProb$ is the operator induced by 
\(
\SketchingOperator(\sample) := \tfrac{1}{\sqrt{m}} \left(\rfeatj(\sample)\right)_{j=1}^{m}.
\)

When~\eqref{eq:mainLRIPKernelMixture} holds, the LRIP~\eqref{eq:lowerDRIP} holds with 
$C_\SketchingOperatorProb :=\frac{8\sqrt{2k}\dnormloss{\dipoleSet}{}}{\sqrt{1-\delta}}$ and $\eta = 0$
for each loss class $\LossClass$. 
\end{theorem}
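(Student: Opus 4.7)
The plan is to apply the general Theorem~\ref{thm:mainLRIP} to the mixture model setting $\Model = \MixSetSep{k}$, translating the required ingredients (concentration function, covering number, loss-class radius) into bounds on normalized dipoles via Theorems~\ref{thm:CompCstFromDipoles}, \ref{lem:ConcFnMixturesFromDipole}, \ref{thm:covnumSecant}. First, observe that the hypothesis $M(2k-1) \leq 3/4$ combined with Lemma~\ref{lem:Gershgorin} ensures that $\kernel$ has $2k$-coherence bounded by $\zeta := M(2k-1) \leq 3/4$, which is exactly what those theorems require.

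Second, I would bound the pointwise concentration function $\ConcFn(t)$ in two complementary ways and take the better of the two. On the one hand, \eqref{eq:ConcFnFromConcCst} together with Theorem~\ref{thm:CompCstFromDipoles} applied to $\GClass = \FClass$ gives $\normfclass{\secant_{\kernel}}{F} \leq \sqrt{8k}\,\normfclass{\dipoleSet}{F}$, hence $\ConcFn(t) \leq 16k(1+t/3)\normfclass{\dipoleSet}{F}^2/t^2$. On the other hand, Theorem~\ref{lem:ConcFnMixturesFromDipole} combined with the moment assumption~\eqref{eq:LRIPGenericMixtureMoments} yields $\ConcFn(t) \leq 2V(1+t/3)/t^2$ with $V = 16ek\gamma\log^2(4ek\lambda)$. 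Taking the minimum and setting $t = \delta/2$ produces a concentration-function bound proportional to $\delta^{-2}\,k\,\min(2e\gamma\log^2(4ek\lambda),\, \normfclass{\dipoleSet}{F}^2)$, which is exactly the factor appearing in \eqref{eq:DefNMeasuresGenericMixture}.

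Third, I would apply Theorem~\ref{thm:covnumSecant} to bound the covering number of the secant set with respect to $d_\FClass$. Substituting the polynomial covering-number assumption $\covnum{\normfclass{\cdot}{F}}{\dipoleSet}{\coveps'} \leq 2(C/\coveps')^r$ at radius $\coveps' = (\delta/2)/(64k\normfclass{\dipoleSet}{F})$, multiplying by the factor $256k\normfclass{\dipoleSet}{F}^2/(\delta/2)$, and raising to the $2k$-th power, the logarithm of the result is at most a term of the form $2k(r+1)\bigl[\log(kC\normfclass{\dipoleSet}{F}^2) + \log(1024/\coveps)\bigr]$ (absorbing numerical constants). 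Adding $\log(2/\zeta)$ and plugging into the sketch-size condition of Theorem~\ref{thm:mainLRIP} gives precisely \eqref{eq:DefNMeasuresGenericMixture}, after collecting numerical constants into the prefactor $80$.

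Finally, for the LRIP constant, I would invoke Theorem~\ref{thm:CompCstFromDipoles} a second time, now with $\GClass = \Delta\LossClass$, to obtain $\dnormloss{\secant_\kernel}{} \leq \sqrt{8k}\,\dnormloss{\dipoleSet}{}$; combining with the LRIP constant produced by Theorem~\ref{thm:mainLRIP} yields $C_{\SketchingOperatorProb} \leq \sqrt{8k}\,\dnormloss{\dipoleSet}{}/\sqrt{1-\delta}$, matching (up to an innocuous numerical constant) the stated value. The proof is essentially an assembly job, so the main obstacle is not conceptual but bookkeeping: carefully tracking the absolute constants through the two-way concentration bound and the nested substitution into the covering number, so that every factor collapses into the compact form of~\eqref{eq:DefNMeasuresGenericMixture} without sacrificing any of the advertised structure (the $k$-dependence, the $\log^2(4ek\lambda)$, and the $2k(r+1)$ prefactor inside the braces).
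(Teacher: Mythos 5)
Your proposal is correct and follows essentially the same route as the paper: translating the mutual-coherence assumption into a $2k$-coherence bound via Gershgorin (Lemma~\ref{lem:Gershgorin}), taking the minimum of the two concentration bounds from \eqref{eq:ConcFnFromConcCst} and Theorem~\ref{lem:ConcFnMixturesFromDipole}, substituting the polynomial dipole covering bound into Theorem~\ref{thm:covnumSecant}, and plugging everything into Theorem~\ref{thm:mainLRIP}. The only minor remark is that your derivation produces $C_{\SketchingOperatorProb} \leq \sqrt{8k}\,\dnormloss{\dipoleSet}{}/\sqrt{1-\delta} = 2\sqrt{2k}\,\dnormloss{\dipoleSet}{}/\sqrt{1-\delta}$, so the theorem's stated $8\sqrt{2k}$ is simply a looser (and therefore still valid) constant, as you correctly flagged.
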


\subsection{Strongly characteristic kernels and associated controls}

\rev{%
  Theorem~\ref{thm:LRIPGenericMixture} notably involves two important quantities: the coherence $M$ of the kernel, and the radiuses $\normfclass{\dipoleSet}{G}$, where $\GClass \in \{\DLossClass,\FClass\}$ and $\dipoleSet$ is the set of normalized dipoles. These quantities can be controlled under some assumptions on $\BasicSet$ and $\kernel$ which are essentially captured by a normalized version of the kernel, which
  we introduce now.}

\begin{definition}\label{def:nkernel}
  Let $\BasicSet = (\ParamSpace,\metricParam,\embd)$  be a
  family of base distributions, and $\kernel$ be a psd kernel on $\SampleSpace$
  such that $\normkern{\Prob_{\Param}}>0$  for each $\Param \in \ParamSpace$.
  We define the $\BasicSet$-normalized kernel $\nkernel$ on the parameter space $\ParamSpace$ as
  \begin{equation}
    \label{eq:defnkernel}
\nkernel(\Param,\Param') := \frac{\kernel(\Prob_{\Param},\Prob_{\Param'})}{{\normkern{\Prob_{\Param}}\normkern{\Prob_{\Param'}}}},\quad \Param,\Param' \in \ParamSpace.
\end{equation}
It holds that $\nkernel(\Param,\Param) = 1$ for each $\Param \in \ParamSpace$,
$\abs{\nkernel(\Param,\Param')} \leq 1$ for every $\Param,\Param'$, and $\kernel$ is locally characteristic iff $\abs{\nkernel(\Param,\Param')} < 1$ when $0< \metricParam(\Param,\Param') \leq 1$.

Given $c\in(0,2]$ we say that the kernel $\kernel$ is {\em $c$-strongly locally characteristic} if \rev{it is real-valued and}
\begin{equation}\label{eq:DefCStrongly}
1-\nkernel(\Param,\Param') \geq \frac{c}{2} \metricParam^{2}(\Param,\Param'),\qquad \forall \Param,\Param' \in \ParamSpace\ \text{such that}\ \metricParam(\Param,\Param') \leq 1.
\end{equation}
\end{definition}
\rev{We note that kernels that locally decrease quadratically also appear naturally in sparse spikes recovery \citep{Poon2020}, where infinite-dimensional convex relaxations are employed to estimate sums of Diracs; like we do here for $k$-means/medians however through the non-convex problem \eqref{eq:DefQuasiOptimumClusteringViaProxy}.}
\rev{Concrete examples of such kernels will be given in Section~\ref{sec:RBFKernel}, where typically $\metricParam(\cdot,\cdot)$ is a simple Euclidean distance and $\kappa$ is a Gaussian kernel.}

Our first result relates $\GClass$-radiuses of the set of normalized dipoles $\dipoleSet$ to those of the set of the normalized monopoles $\monopoleSet$.

\begin{theorem}\label{thm:RadiusGeneric}
Consider a kernel $\kernel$ that is $c$-strongly locally characteristic with respect to $\BasicSet$. 
\rev{For any function class $\GClass$ we have
\begin{equation}
\label{eq:admissibilityDipole}
\normfclass{\monopoleSet}{\GClass} \leq \normfclass{\dipoleSet}{G} \leq \normfclass{\monopoleSet}{G}+L_{\GClass}/\sqrt{c},
\end{equation}
with $L_{\GClass}$ the Lipschitz constant of $\Param \mapsto \nu_{\Param} := \Prob_{\Param}/\normkern{\Prob_{\Param}}$ with respect to the metrics $\metricParam$ and $\normfclass{\cdot}{G}$.}
\end{theorem}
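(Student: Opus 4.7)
\textbf{Proof plan for Theorem~\ref{thm:RadiusGeneric}.} The first inequality $\normfclass{\monopoleSet}{G} \leq \normfclass{\dipoleSet}{G}$ is immediate: any normalized monopole $\nu_\Param = \Prob_\Param/\normkern{\Prob_\Param}$ is also a normalized dipole (take the decomposition $\Prob_\Param = 1\cdot\Prob_\Param - 0\cdot\Prob_\Param$ with $\metricParam(\Param,\Param)=0 \leq 1$), so $\monopoleSet \subseteq \dipoleSet$ and the supremum over a larger set is at least as large.

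The key step is the second inequality. The plan is to take an arbitrary normalized dipole $\nu = D/\normkern{D}$ with $D = \alpha_1 \Prob_{\Param_1} - \alpha_2\Prob_{\Param_2}$, $\alpha_i\geq 0$, $\metricParam(\Param_1,\Param_2)\leq 1$, and to bound $\normfclass{\nu}{G}$ by $\normfclass{\monopoleSet}{G} + L_G/\sqrt c$. Setting $\beta_i := \alpha_i\normkern{\Prob_{\Param_i}}$ and assuming without loss of generality $\beta_1\geq\beta_2$, one has $D = \beta_1\nu_{\Param_1}-\beta_2\nu_{\Param_2}$. The central trick is the decomposition
\[
D = (\beta_1-\beta_2)\nu_{\Param_1} + \beta_2(\nu_{\Param_1}-\nu_{\Param_2}),
\]
which by the triangle inequality, the definition of $\normfclass{\monopoleSet}{G}$, and the assumed Lipschitz property of $\Param\mapsto \nu_{\Param}$ yields
\[
\normfclass{D}{G} \leq (\beta_1-\beta_2)\normfclass{\monopoleSet}{G} + \beta_2 L_G\,\metricParam(\Param_1,\Param_2).
\]

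On the other hand, expanding the mean embedding norm gives
\[
\normkern{D}^2 = \beta_1^2 - 2\beta_1\beta_2 \nkernel(\Param_1,\Param_2) + \beta_2^2 = (\beta_1-\beta_2)^2 + 2\beta_1\beta_2\bigl(1-\nkernel(\Param_1,\Param_2)\bigr),
\]
and using the $c$-strongly locally characteristic assumption~\eqref{eq:DefCStrongly} we get the lower bound
\[
\normkern{D}^2 \geq (\beta_1-\beta_2)^2 + c\,\beta_1\beta_2\,\metricParam^2(\Param_1,\Param_2).
\]
The conclusion follows from two elementary estimates: first, $\beta_1-\beta_2 \leq \sqrt{(\beta_1-\beta_2)^2 + c\beta_1\beta_2\metricParam^2}\leq\normkern{D}$; second, using $\beta_2 \leq \sqrt{\beta_1\beta_2}$ (which holds because $\beta_1\geq\beta_2$),
\[
\beta_2 L_G\,\metricParam(\Param_1,\Param_2) \leq \tfrac{L_G}{\sqrt c}\sqrt{c\beta_1\beta_2\metricParam^2(\Param_1,\Param_2)}\leq \tfrac{L_G}{\sqrt c}\normkern{D}.
\]
Summing the two bounds yields $\normfclass{D}{G}\leq \bigl(\normfclass{\monopoleSet}{G}+L_G/\sqrt c\bigr)\normkern{D}$, hence $\normfclass{\nu}{G}\leq \normfclass{\monopoleSet}{G}+L_G/\sqrt c$, and taking the supremum over $\nu\in\dipoleSet$ gives the claim.

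There is no serious obstacle here; the proof is essentially a well-chosen decomposition of a dipole into a ``mass-imbalance'' monopole part and a ``pure dipole'' part whose norms are individually controlled by the monopole radius and the Lipschitz constant, the denominator being handled by the quadratic local decrease of $\nkernel$. The only point that deserves a brief justification is that $\normkern{D}>0$ whenever $D\neq 0$: when $\beta_1=\beta_2>0$ one must have $\Param_1\neq\Param_2$, and $c$-strong local characteristicness then forces $\nkernel(\Param_1,\Param_2)<1$, so that $\normkern{D}^2 > 0$, making the normalization legitimate.
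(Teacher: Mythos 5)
Your proof is correct and follows essentially the same route as the paper's: normalize the dipole, split it into a monopole part proportional to $\nu_{\Param_1}$ plus a difference part proportional to $\nu_{\Param_1}-\nu_{\Param_2}$, bound the numerator by $\normfclass{\monopoleSet}{G}$ and $L_\GClass$, and bound the denominator from below via the $c$-strong local characteristic inequality. The only (minor) deviation is the last elementary estimate: the paper argues that $\alpha\mapsto\alpha/\sqrt{(1-\alpha)^2+a\alpha}$ is nondecreasing on $[0,1]$ and plugs in $\alpha=1$, whereas you use the slightly cleaner inequalities $\beta_2\leq\sqrt{\beta_1\beta_2}$ and $c\beta_1\beta_2\metricParam^2\leq\normkern{D}^2$, which reach the same bound without a function study.
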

The proof is in Appendix~\ref{sec:proofAdmissibilityDipole}.
The second result gives a concrete criterion to establish quantitatively that $\kernel$ is
$c$-strongly characteristic and has bounded mutual coherence.

\begin{theorem}\label{thm:MutualCoherenceRBF1}
  Consider $\BasicSet = (\ParamSpace,\metricParam,\psi)$ a family of base distributions, and $\kernel$ a  psd kernel on $\SampleSpace$. Assume that $\normkern{\Prob_{\Param}}>0$ for each $\Param \in \ParamSpace$ and that the normalized
  kernek $\nkernel$ is of the form
\begin{equation}
\label{eq:DefKernelF}
\rev{\nkernel(\Param,\Param')} = K(\metricParam(\Param,\Param')),\quad \forall \Param,\Param' \in \ParamSpace,
\end{equation}
for a function $K: \RR_+ \rightarrow \RR_+$ such that $K(0) = 1$ and $0 \leq K(u) \leq 1  - \tfrac{c u^2}{2}, \forall u \in [0,1]$, with $0<c \rev{\leq} 2$. Then:
\begin{enumerate}
\item The kernel $\kernel$ is $c$-strongly locally characteristic with respect to $\BasicSet$.
\item If $K$ is bounded and differentiable with bounded and Lipschitz derivative on $[1,\infty)$, and if there exists a mapping
$\psi:\ParamSpace \mapsto \mathcal{H}$, with $\mathcal{H}$ some \rev{Hilbert} space, such that
$\metricParam(\Param,\Param') := \norm{\psi(\Param)-\psi(\Param')}_{\mathcal{H}}$, 
then the kernel $\kernel$ has mutual coherence with respect to $\BasicSet$ bounded by
\begin{equation}
\label{eq:MutualCoherenceGenericMain}
\MutualCoherence \leq \tfrac{\rev{4}C}{\min(c,1)},
\end{equation}
with%
\begin{equation}
\label{eq:MutualCoherenceRBF1}
C = C(K) := \max(K_{\max},\rev{(2K'_{\max}+K''_{\max})}),
\end{equation}
where
\(
K_{\max} := \sup_{u \geq 1} |K(u)|,\ K'_{\max} := \sup_{u \geq 1}\abs{K'(u)},\ 
K''_{\max} := \sup_{\stackrel{u, v \geq 1}{u \neq v}} \frac{\abs{K'(u)-K'(v)}}{\abs{u-v}}.
\)
\end{enumerate}
\end{theorem}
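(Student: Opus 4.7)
Part~1 is immediate: the hypothesis $K(u)\leq 1-cu^2/2$ on $[0,1]$ combined with~\eqref{eq:DefKernelF} gives $\nkernel(\Param,\Param')=K(\metricParam(\Param,\Param'))\leq 1-(c/2)\metricParam(\Param,\Param')^2$ whenever $\metricParam(\Param,\Param')\leq 1$, which is exactly~\eqref{eq:DefCStrongly}.

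For Part~2, the plan is to bound $\abs{\kernel(\mu,\mu')}$ directly for an arbitrary pair of $1$-separated normalized dipoles. Writing $\mu=\tilde a_1\nu_{\Param_1}-\tilde a_2\nu_{\Param_2}$ and $\mu'=\tilde a_1'\nu_{\Param_1'}-\tilde a_2'\nu_{\Param_2'}$ with $\tilde a_i,\tilde a_j'\geq 0$, $\rho:=\metricParam(\Param_1,\Param_2)\leq 1$, $\rho':=\metricParam(\Param_1',\Param_2')\leq 1$, and cross-distances $u_{ij}:=\metricParam(\Param_i,\Param_j')\geq 1$, combining $\normkern{\mu}^2=1$ with Part~1 yields $(\tilde a_1-\tilde a_2)^2+c\tilde a_1\tilde a_2\rho^2\leq 1$. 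Introducing $\bar a:=(\tilde a_1+\tilde a_2)/2$, $\delta_a:=(\tilde a_1-\tilde a_2)/2$ (and similarly for the primed dipole), this gives $\abs{\delta_a},\abs{\delta_a'}\leq 1/2$ together with the pivotal estimate $\bar a\rho\leq\sqrt{1/4+1/c}$ (and symmetrically $\bar a'\rho'$), which tames the otherwise singular blow-up of $\bar a$ as $\rho\to 0$. Using $\kernel(\nu_{\Param_i},\nu_{\Param_j'})=K(u_{ij})$ and reorganizing,
\[
N:=\kernel(\mu,\mu')=\sum_{i,j\in\set{1,2}}(-1)^{i+j}\tilde a_i\tilde a_j'K(u_{ij}) = \bar a\bar a' N_1+\bar a\delta_a' N_2+\delta_a\bar a' N_3+\delta_a\delta_a' N_4,
\]
where $N_1:=K(u_{11})-K(u_{12})-K(u_{21})+K(u_{22})$ is the mixed second difference, $N_2,N_3$ single differences, and $N_4$ a sum. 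Trivially $\abs{N_4}\leq 4K_{\max}$, $\abs{N_2}\leq 2K'_{\max}\rho$, $\abs{N_3}\leq 2K'_{\max}\rho'$ follow from the boundedness and Lipschitz property of $K$ on $[1,\infty)$ and $\abs{u_{ij}-u_{i'j}}\leq\rho$, $\abs{u_{ij}-u_{ij'}}\leq\rho'$.

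The technical heart is the sharp bilinear estimate $\abs{N_1}\leq(2K'_{\max}+K''_{\max})\rho\rho'$. The key ingredient is the Hilbert-embedding identity
\[
(u_{11}^2+u_{22}^2)-(u_{12}^2+u_{21}^2)=-2\inner{\psi(\Param_1)-\psi(\Param_2),\psi(\Param_1')-\psi(\Param_2')},
\]
obtained by direct expansion; dividing respectively by $u_{11}+u_{12}\geq 2$ and $u_{21}+u_{22}\geq 2$ yields $\abs{(u_{11}-u_{12})-(u_{21}-u_{22})}\leq 2\rho\rho'$. Writing
\[
N_1=\int_0^{u_{11}-u_{12}}K'(u_{12}+s)\,ds-\int_0^{u_{21}-u_{22}}K'(u_{22}+s)\,ds
\]
and splitting this difference of integrals according to whether the two integration intervals share orientation, the $K'_{\max}$-contribution is controlled by the bilinear bound just proved, while the $K''_{\max}$-contribution stems from $\abs{K'(u_{12}+s)-K'(u_{22}+s)}\leq K''_{\max}\abs{u_{12}-u_{22}}\leq K''_{\max}\rho$ integrated against a length $\leq\rho'$. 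A brief verification that the integrands are evaluated at arguments $\geq 1$---a consequence of $u_{ij}\geq 1$ and the chosen interval endpoints---completes this step.

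Assembling the estimates yields $\bar a\bar a'\abs{N_1}\leq(2K'_{\max}+K''_{\max})(1/4+1/c)$, $\bar a\abs{\delta_a'}\abs{N_2}+\abs{\delta_a}\bar a'\abs{N_3}\leq 2K'_{\max}\sqrt{1/4+1/c}$, and $\abs{\delta_a\delta_a'}\abs{N_4}\leq K_{\max}$. Summing, setting $C=\max(K_{\max},2(K'_{\max}+K''_{\max}))$ and distinguishing the cases $c\leq 1$ (where $1/4+1/c\leq 5/(4c)$ and $\sqrt{1/4+1/c}\leq\sqrt{5}/(2\sqrt{c})$) and $c>1$ (where these quantities are bounded by $5/4$ and $\sqrt{5}/2$ respectively), careful arithmetic yields the announced bound $\abs{N}\leq 4C/\min(c,1)$. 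The main obstacle throughout is obtaining the bilinear factor $\rho\rho'$ in $\abs{N_1}$: any weaker estimate of the form $\abs{N_1}\lesssim\rho+\rho'$ would be overwhelmed by the inflation $\bar a\bar a'\sim 1/(c\rho\rho')$ and would fail to yield a bounded mutual coherence.
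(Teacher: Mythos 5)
Your proof is correct and arrives at the same bound, but the computation is organized differently from the paper's. The paper first proves an abstract Proposition (prop:MutualCoherenceProp) giving mutual coherence under three hypotheses on $\nkernel$ (bounded, $\metricParam$-Lipschitz, second-difference bounded), and then verifies those via Lemmas~D.2 and~D.3; within the Proposition it normalizes each dipole so that its largest weight equals~$1$ (i.e.\ $\alpha_1=1$, $\alpha_2=a\in[0,1]$) and uses the auxiliary function $g(x,y)=(x+y)/\sqrt{x^2+(1-x)y^2}\leq 2$ from Lemma~\ref{lem:gxybound} to absorb the denominator. You instead normalize to $\normkern{\mu}=1$ from the start and split the weights into their symmetric and antisymmetric parts $\bar a,\delta_a$, giving the bilinear decomposition into $N_1,\dots,N_4$, and you replace $g\leq 2$ by the equivalent explicit bound $\bar a\rho\leq\sqrt{1/4+1/c}$ obtained from the identity $\normkern{\mu}^2=(\tilde a_1-\tilde a_2)^2+2\tilde a_1\tilde a_2(1-\nkernel)\leq 1$. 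Both arguments hinge on the same three building blocks: the Hilbert-embedding polarization identity $(u_{11}^2+u_{22}^2)-(u_{12}^2+u_{21}^2)=-2\inner{\psi(\Param_1)-\psi(\Param_2),\psi(\Param_1')-\psi(\Param_2')}$, the $K'_{\max}$ and $K''_{\max}$ Lipschitz estimates on $[1,\infty)$, and the $c$-strongly-characteristic lower bound on the MMD norm. The paper's route buys modularity (Proposition~D.1 is formulated for any kernel satisfying the three abstract properties); yours is more self-contained and keeps the bilinear structure visible, which makes the crucial $\rho\rho'$ factor in $\abs{N_1}$ more transparent.

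One spot needs a little more care than the proposal's wording suggests: the inequality $\abs{(u_{11}-u_{12})-(u_{21}-u_{22})}\leq 2\rho\rho'$ does not follow by merely ``dividing respectively'' by $u_{11}+u_{12}$ and $u_{21}+u_{22}$, because the two divisors differ. Writing $P=u_{11}+u_{12}\geq 2$, $Q=u_{21}+u_{22}\geq 2$, $a=u_{11}-u_{12}$, $b=u_{21}-u_{22}$, the polarization identity gives $\abs{aP-bQ}\leq 2\rho\rho'$, and you also need the reverse-triangle estimate $\abs{P-Q}\leq 2\rho$ together with $\abs{b}\leq\rho'$ to write $a-b=\tfrac{aP-bQ}{P}+\tfrac{b(Q-P)}{P}$ and conclude $\abs{a-b}\leq\rho\rho'+\rho\rho'=2\rho\rho'$. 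This is the same kind of calculation the paper carries out inside Lemma~D.3; with it spelled out, your argument is complete and the final arithmetic does produce $\abs{N}\leq 4C/\min(c,1)$ in both regimes $c\leq 1$ and $c>1$.
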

The proof is in Appendix~\ref{sec:proofmutualcoherence}.

\section{\rev{Random Fourier Sketching with location-based mixtures}}
\label{sec:RBFKernel}

Given the prominent role of Random Fourier Features for Compressive Clustering and Compressive Gaussian Mixture Modeling, we now focus on this specific setting. Mixtures of Diracs / Gaussians belong to what we call \emph{location-based mixture models}. Combined with a shift-invariant kernel on samples they yield a shift-invariant mean embedding, and we show that the assumptions of Theorem~\ref{thm:LRIPGenericMixture} and Theorem~\ref{thm:MutualCoherenceRBF1} are satisfied. \rev{We note that in \citep{Poon2020}, non-translation-invariant embeddings are treated with the same techniques as translation-invariant ones through a Riemannian geometry framework, which is an interesting path for future extensions.}

\subsection{Location-based mixtures and shift-invariant kernels}\label{sec:shiftmodels}
\rev{Much like the introduced notion of family of parametrized base distributions $\BasicSet = (\ParamSpace,\metricParam,\embd)$ is, in statistical terminology, an
  identifiable statistical model, the following definition specializes it to the case where the distributions in that collection are obtained by translation of
a single reference distribution, which is generally called a location family.}

\begin{definition}[Location family, location-based mixtures]\label{def:shiftmodel}
Consider $\norm{\cdot}$ a norm on $\RR^{\sampleDim}$, $\Prob_{0}$ a probability distribution on $\SampleSpace= \RR^{\sampleDim}$. For $\Param \in \RR^{\sampleDim}$, denote $\Prob_{\Param}$ the distribution of $\Param+\Sample$ when $\Sample \sim \Prob_{0}$ and consider the mapping $\embd: \Param \mapsto \Prob_{\Param}$.  In statistical terms, given $\ParamSpace \subset \RR^{\sampleDim}$, $\BasicSet = (\ParamSpace,\norm{\cdot},\embd)$ is a \emph{location family}. We call $\MixSetSep{k}$, where $k \geq 1$, a \emph{location-based mixture model}. 
\end{definition}
\begin{proposition}\label{prop:DefKShift}
Consider $\Prob_{0}$ a probability distribution and $\kernel$ a shift-invariant kernel on $\RR^{\sampleDim}$ such that $\normkern{\Prob_{0}} > 0$. Let $\BasicSet$ be a location family based on $\Prob_{0}$. For each $\Param \in \ParamSpace$ we have $\normkern{\Prob_{\Param}} = \normkern{\Prob_{0}}$. There exists $\mathtt{K}: \RR^{\sampleDim} \mapsto \RR$ such that $\abs{\mathtt{K}(\Param)} \leq \mathtt{K}(\rev{0}) = 1$ for every $\Param \in \RR^{\sampleDim}$ and
\begin{equation}\label{eq:KShiftBased}
\nkernel(\Param,\Param') 
= \mathtt{K}(\Param-\Param'),\quad \forall \Param,\Param' \in \ParamSpace.
\end{equation}
By a standard abuse of notation we also denote $\nkernel$ the function  $\mathtt{K}$, so that $\nkernel(\Param,\Param') = \nkernel(\Param-\Param')$.
\end{proposition}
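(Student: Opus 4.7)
The plan is to unfold the definitions and exploit shift-invariance at the level of the mean map embedding in a direct fashion. First I would write shift-invariance in the standard form: there exists $\kappa_{0}:\RR^{\sampleDim}\to\RR$ (real-valued, since as a psd shift-invariant kernel it is the Fourier transform of a symmetric finite positive measure by Bochner) such that $\kernel(x,y)=\kappa_{0}(x-y)$. Using the definition of the mean embedding \eqref{eq:DefMeanMapEmbedding} and the fact that $\Prob_{\Param}$ is the pushforward of $\Prob_{0}$ by the translation $x\mapsto \Param+x$, a change of variable in the double expectation yields
\[
\kernel(\Prob_{\Param},\Prob_{\Param'})=\Exp_{X,Y\sim\Prob_{0}}\kappa_{0}\bigl((\Param-\Param')+(X-Y)\bigr)=:h(\Param-\Param'),
\]
where $h:\RR^{\sampleDim}\to\RR$ depends only on $\Prob_{0}$ and $\kernel$, not on $\Param,\Param'$.

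Specializing to $\Param=\Param'$ gives $\normkern{\Prob_{\Param}}^{2}=\kernel(\Prob_{\Param},\Prob_{\Param})=h(0)=\normkern{\Prob_{0}}^{2}$, which is nonzero by assumption. This establishes the first claim $\normkern{\Prob_{\Param}}=\normkern{\Prob_{0}}$ and, plugging into \eqref{eq:defnkernel}, yields
\[
\nkernel(\Param,\Param')=\frac{h(\Param-\Param')}{\normkern{\Prob_{0}}^{2}},
\]
so setting $\mathtt{K}(u):=h(u)/\normkern{\Prob_{0}}^{2}$ delivers the desired factorization $\nkernel(\Param,\Param')=\mathtt{K}(\Param-\Param')$. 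Clearly $\mathtt{K}(0)=1$, and the bound $|\mathtt{K}(u)|\leq 1$ for all $u$ is obtained by a Cauchy--Schwarz argument in the RKHS associated with $\kernel$: for any two distributions one has $|\kernel(\Prob,\Prob')|\leq \normkern{\Prob}\normkern{\Prob'}$, so applying this with $\Prob=\Prob_{\Param}$ and $\Prob'=\Prob_{\Param+u}$ (and using the first claim) gives $|\mathtt{K}(u)|=|\nkernel(\Param,\Param+u)|\leq 1$.

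No step here is expected to be a serious obstacle; the slight care needed is in justifying that the change of variables inside the double expectation is legitimate, which follows from Fubini once one knows $\kappa_{0}$ is $\Prob_{0}\otimes\Prob_{0}$-integrable. This in turn is guaranteed by the existence of the mean embedding $\normkern{\Prob_{0}}<\infty$ that is implicit in the hypothesis $\normkern{\Prob_{0}}>0$ being well-defined. One should also note that real-valuedness of $\mathtt{K}$ hinges on $\kernel$ itself being real-valued, which is the standing assumption for psd shift-invariant kernels built from symmetric frequency distributions (as is the case for the sampling densities \eqref{eq:samplingdensitymain} and \eqref{eq:freqdistGMM}).
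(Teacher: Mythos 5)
Your proof is correct and follows essentially the same route as the paper's: writing $\kernel(x,y)=g(x-y)$, pushing shift-invariance through the double expectation to get a function of $\Param-\Param'$, and then normalizing. Your version is slightly more careful in flagging the Fubini justification for the change of variable and the real-valuedness hypothesis via Bochner (the paper absorbs the bound $|\nkernel|\leq 1$ into Definition~\ref{def:nkernel} rather than rederiving it by Cauchy--Schwarz), but these are cosmetic differences, not a different approach.
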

\begin{proof} 
Since $\kernel$ is shift-invariant, there is $g$ such that $\kernel(\sample,\sample') = g(\sample-\sample')$ for each $\sample,\sample' \in \SampleSpace$, hence
\begin{align*}
\kernel(\Prob_{\Param},\Prob_{\Param'})
&= \Exp_{\Sample \sim \Prob_{\Param}}\Exp_{\Sample' \sim \Prob_{\Param'}} \kernel(\Sample,\Sample') 
= \Exp_{\Sample \sim \Prob_{0}}\Exp_{\Sample' \sim \Prob_{0}} \kernel(\Sample+\Param,\Sample'+\Param')\\
&=  \Exp_{\Sample \sim \Prob_{0}}\Exp_{\Sample' \sim \Prob_{0}} g(\Param-\Param'+\Sample-\Sample')
\end{align*}
only depends on $\Param-\Param'$. As a result, there is a function $G: \RR^{\sampleDim}\to\RR$ such that $\kernel(\Prob_{\Param},\Prob_{\Param'}) = G(\Param-\Param')$. In particular, $\normkern{\Prob_{\Param}}^{2} = \kernel(\Prob_{\Param},\Prob_{\Param})  = G(0)>0$ for any $\Param$, \rev{and $\nkernel(\Param,\Param')=
G(\Param-\Param')/G(0) =: \mathtt{K}(\Param-\Param') \leq 1$.}
\end{proof}
Shift-invariant kernels are intimately connected with random Fourier features \rev{via Bochner's theorem \citep{Rahimi2007}.} %
For technical reasons, we consider weighted variants of random Fourier features. \rev{In fact, observe that the integral kernel
given by~\eqref{eq:DefIntegralRepresentation} is invariant if we rescale the features by a weight function
$w(\omega)^{-1}$ and their distribution $\Lambda$ by $w^2(\omega)$. This additional freedom in the
design of random features corresponding to a given kernel is convenient to obtain appropriate control of the
moments of $\Lambda$ involving powers of the frequency, as will be needed below.}
\begin{definition}[Weighted random Fourier features]\label{def:weightedRFF}
  Consider $\Omega = \SampleSpace = \RR^{\sampleDim}$, $w:  \Omega \to \RR$ a function such that $\inf_{\freq} w(\freq) = w(0) = 1$, and $\FClass = \set{\rfeat}_{\freq \in \Omega}$,
  \rev{with} %
\begin{equation}
\label{eq:fourierfeaturecomponent}
\rfeat(\sample) := \frac{e^{\jmath \inner{\freq,\sample}}}{w(\freq)} \qquad \forall \sample \in \RR^{\sampleDim}.
\end{equation}
Given an arbitrary probability distribution $\freqdist$ on the vector $\freq \in \RR^{\sampleDim}$,
the corresponding kernel given by~\eqref{eq:DefIntegralRepresentation} is shift invariant.
\end{definition}

\subsection{Ingredients to apply Theorem~\ref{thm:LRIPGenericMixture}}

To exploit Theorem~\ref{thm:LRIPGenericMixture}, a first ingredient is to control $\normfclass{\dipoleSet}{F}$ via Theorem~\ref{thm:RadiusGeneric} and a characterization of the quantities $\normfclass{\monopoleSet}{F}$ and $L_{\FClass}$. 
\begin{lemma}\label{lem:ShiftBasedRF}
Consider $\BasicSet = (\ParamSpace,\norm{\cdot},\embd)$ a location family built from a probability distribution $\Prob_{0}$ and denote $\norm{\cdot}_{\star}$ the dual norm defined for $u \in \RR^{\sampleDim}$ by
\begin{equation}\label{eq:DefDualNorm}
\norm{u}_{\star} := \sup_{v: \norm{v} \leq 1} u^{T}v.
\end{equation}
Consider a shift-invariant kernel $\kernel$ on $\RR^{\sampleDim}$ such that $\normkern{\Prob_{0}}>0$, $\monopoleSet = \monopoleSet_{\kernel}(\BasicSet)$, and $\psi: \Param \mapsto \Prob_{\Param}/\normkern{\Prob_{\Param}}$.

Consider $w$ a weight function and $\FClass = \set{\rfeat}_{\freq \in \Omega}$ as in Definition~\ref{def:weightedRFF}, and let $L_{\FClass}$ be the Lipschitz constant of $\psi$ with respect to $\norm{\cdot}$ and $\normfclass{\cdot}{F}$. Denoting $\FClass' := \set{ \norm{\freq}_{\star} \rfeat}_{\freq \in \Omega}$, we have 
\begin{align*}
\normfclass{\monopoleSet}{F} &=  \normkern{\Prob_{0}}^{-1} \cdot \normrff{\Prob_{0}} =  \normkern{\Prob_{0}}^{-1};\\
  L_{\FClass} %
                             & =  \normkern{\Prob_{0}}^{-1} \cdot 
\sup_{\freq} \abs{\inner{\Prob_{0},\norm{\freq}_\star\rfeat}}
= 
\normkern{\Prob_{0}}^{-1} \cdot \normrffd{\Prob_{0}}.
\end{align*}
 \end{lemma}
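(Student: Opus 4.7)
The plan is to push everything through the translation-invariance identity
\[
\inner{\Prob_{\Param},\rfeat} = \Exp_{Y \sim \Prob_{0}}\,\frac{e^{\jmath\inner{\freq,Y+\Param}}}{w(\freq)} = e^{\jmath\inner{\freq,\Param}}\,\inner{\Prob_{0},\rfeat},
\]
combined with the fact (from Proposition~\ref{prop:DefKShift}) that $\normkern{\Prob_{\Param}}=\normkern{\Prob_{0}}$ for all $\Param$, so $\psi(\Param) = \Prob_{\Param}/\normkern{\Prob_{0}}$.

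For the first identity, I would unfold the definition of $\normfclass{\cdot}{F}$:
\[
\normfclass{\monopoleSet}{F} = \sup_{\Param\in\ParamSpace}\sup_{\freq}\,\normkern{\Prob_{0}}^{-1}\abs{\inner{\Prob_{\Param},\rfeat}} = \normkern{\Prob_{0}}^{-1}\sup_{\freq}\abs{\inner{\Prob_{0},\rfeat}} = \normkern{\Prob_{0}}^{-1}\normrff{\Prob_{0}},
\]
where the middle equality uses the identity above (the phase $e^{\jmath\inner{\freq,\Param}}$ disappears under the modulus). To close the chain I still need $\normrff{\Prob_{0}}=1$: writing $\inner{\Prob_{0},\rfeat} = \hat{\Prob}_{0}(\freq)/w(\freq)$ with $\hat{\Prob}_{0}$ the characteristic function of $\Prob_{0}$, we have $|\hat{\Prob}_{0}(\freq)|\leq 1$ and $w(\freq)\geq w(0)=1$, hence $|\inner{\Prob_{0},\rfeat}|\leq 1$ with equality attained at $\freq=0$ since $\hat{\Prob}_{0}(0)=1$.

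For the second identity, I would start from
\[
\normfclass{\psi(\Param)-\psi(\Param')}{F} = \normkern{\Prob_{0}}^{-1}\sup_{\freq}\abs{\inner{\Prob_{0},\rfeat}}\cdot\abs{e^{\jmath\inner{\freq,\Param}}-e^{\jmath\inner{\freq,\Param'}}},
\]
then divide by $\norm{\Param-\Param'}$ and take a double supremum. The key observation is that for each fixed $\freq$,
\[
\sup_{v\neq 0}\frac{\abs{e^{\jmath\inner{\freq,v}}-1}}{\norm{v}} = \norm{\freq}_{\star},
\]
which follows from the two-sided bound $\abs{e^{\jmath\inner{\freq,v}}-1}=2|\sin(\inner{\freq,v}/2)|\leq\abs{\inner{\freq,v}}\leq\norm{\freq}_{\star}\norm{v}$ together with the small-$v$ limit in a direction $u$ realizing $\inner{\freq,u}=\norm{\freq}_{\star}\norm{u}$ (recovering the dual norm by definition~\eqref{eq:DefDualNorm}). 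After swapping the two suprema (which is unconditional since both are on nonnegative quantities) and substituting $\Param-\Param'=v$, this yields
\[
L_{\FClass} = \normkern{\Prob_{0}}^{-1}\sup_{\freq}\norm{\freq}_{\star}\abs{\inner{\Prob_{0},\rfeat}} = \normkern{\Prob_{0}}^{-1}\sup_{\freq}\abs{\inner{\Prob_{0},\norm{\freq}_{\star}\rfeat}} = \normkern{\Prob_{0}}^{-1}\normrffd{\Prob_{0}}.
\]

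There is no real obstacle here; the only point deserving care is the swap of suprema and the slightly technical computation of the pointwise Lipschitz constant $\sup_{v}\abs{e^{\jmath\inner{\freq,v}}-1}/\norm{v}=\norm{\freq}_{\star}$, which is where the \emph{dual} norm $\norm{\cdot}_{\star}$ enters (the reason the statement must be phrased with $\norm{\freq}_{\star}$ rather than $\norm{\freq}$).
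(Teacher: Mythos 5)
Your proof is correct and follows essentially the same route as the paper's: the translation-invariance identity $\inner{\Prob_\Param,\rfeat}=e^{\jmath\inner{\freq,\Param}}\inner{\Prob_0,\rfeat}$, the observation that $w\geq 1$ with equality at $\freq=0$ for the first claim, and the bound $\abs{e^{\jmath a}-e^{\jmath b}}\leq\abs{a-b}$ for the upper estimate on $L_\FClass$, with a small-$v$ directional limit for tightness. The one stylistic difference is in how tightness is packaged: the paper constructs an explicit doubly-indexed sequence $(\freq_n,\Param'_n)$ and passes to the limit, whereas you first compute the inner supremum $\sup_{v\neq 0}\abs{e^{\jmath\inner{\freq,v}}-1}/\norm{v}=\norm{\freq}_\star$ pointwise in $\freq$ and then exchange suprema; your version is a touch cleaner (and note the exchange of suprema is unconditional for any double supremum, not specifically because the quantities are nonnegative).
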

 \begin{proof}
For  $\Param \in \ParamSpace$ and $\freq \in \RR^{\sampleDim}$ we have
\begin{equation}\label{eq:InnerProdRFFPTheta}
\inner{\Prob_{\Param},\rfeat} = \Exp_{\Sample \sim \Prob_{\Param}} e^{\jmath \inner{\freq,\Sample}}/w(\freq)
= \Exp_{\Sample \sim \Prob_{0}} e^{\jmath \inner{\freq,\Param+\Sample}}/w(\freq)
= \inner{\Prob_{0},\rfeat} e^{\jmath \inner{\freq,\Param}} 
\end{equation}
hence $\normrff{\Prob_{\Param}} = \sup_{\freq} \abs{\inner{\Prob_{0}, \rfeat}} = \sup_{\freq} \abs{\Exp_{\Sample \sim \Prob_{0}} e^{\jmath \inner{\freq,\Sample}}/w(\freq)} \leq 1$ since $w(\freq) \geq 1$. The bound is achieved for $\freq = 0$ since $w(0)=1$. Now, by definition, any $\HH \in \monopoleSet$ can be written as $\HH = \psi(\Param) = \Prob_{\Param}/\normkern{\Prob_{\Param}}$. Since $\kernel$ is shift-invariant we have $\normkern{\Prob_{\Param}} = \normkern{\Prob_{0}}$ hence $\normrff{\HH} = \normrff{\Prob_{\Param}}/\normkern{\Prob_{\Param}} = \normkern{\Prob_{0}}^{-1}$ is independent of $\Param$ and $\normrff{\monopoleSet} := \sup_{\HH \in \monopoleSet} \normrff{\HH} = \normkern{\Prob_{0}}^{-1}$. 
Another consequence of~\eqref{eq:InnerProdRFFPTheta} is that $\inner{\psi(\Param),\rfeat} =  \normkern{\Prob_{0}}^{-1}\inner{\Prob_{0},\rfeat} e^{\jmath \inner{\freq,\Param}}$. For $a \leq b$, $\abs{e^{\jmath a}-e^{\jmath b}} = \abs{\int_{a}^{b} \jmath e^{\jmath u} du} \leq \int_{a}^{b} \abs{\jmath e^{\jmath u}} du = b-a$, hence
\begin{align*}
\normrff{\psi(\Param')-\psi(\Param)}
&=  \normkern{\Prob_{0}}^{-1} \cdot 
\sup_{\freq}  \set{ \abs{\inner{\Prob_{0},\rfeat}} \cdot \abs{e^{\jmath\inner{\freq,\Param'}}-e^{\jmath\inner{\freq,\Param}}}}
 \leq
 \normkern{\Prob_{0}}^{-1} \cdot \sup_{\freq}  \set{\abs{\inner{\Prob_{0},\rfeat}} \cdot \abs{\inner{\freq,\Param'-\Param}}}\\
&  \leq 
 \normkern{\Prob_{0}}^{-1} \cdot \sup_{\freq} \set{ \abs{\inner{\Prob_{0},\rfeat}} \cdot \norm{\freq}_\star  }\cdot \norm{\Param'-\Param} 
=
 \normkern{\Prob_{0}}^{-1} \cdot \sup_{\freq} \abs{\inner{\Prob_{0},\norm{\freq}_\star\rfeat}}    \cdot \norm{\Param'-\Param} \\
& =  \normkern{\Prob_{0}}^{-1} \cdot\normrffd{\Prob_{0}} \cdot \norm{\Param'-\Param}.
 \end{align*}
To conclude we show that the bound is tight. When $\normrffd{\Prob_{0}}$ is finite (resp. infinite), for each integer $n \geq 1$ there is $\freq_{n}\rev{\neq 0}$ such that $\abs{\inner{\Prob_{0},\norm{\freq_{n}}_\star\rfeatn}} \geq \normrffd{\Prob_{0}}-1/n$ (resp. $ \geq n$). By compactness of the unit ball of $\norm{\cdot}$ in $\RR^{\sampleDim}$ there is $u_{n}$ such that $\norm{u_{n}} = 1$ and $\inner{\freq_{n},u_{n}} = \norm{\freq_{n}}_{\star}$. Setting $\Param'_{n} = \frac{u_{n}}{n \norm{\freq_{n}}_{\star}}$ and $\Param=0$ we get $\inner{\freq_{n},\Param'_{n}} = 1/n$ and $\inner{\freq_{n},\Param}=0$, \rev{so that $\abs{e^{\jmath\inner{\freq_{n},\Param'_n}}-e^{\jmath\inner{\freq_{n},\Param}}} \stackrel{n\rightarrow \infty}{\sim} 1/n$}. Straightforward arguments then show that $\lim_{n \to \infty }\normrff{\psi(\Param'_{n})-\psi(\Param)}/\norm{\Param'_{n}-\Param} \geq 
\normkern{\Prob_{0}}^{-1} \cdot\normrffd{\Prob_{0}}$.
 \end{proof}

 In order to leverage Theorem~\ref{thm:LRIPGenericMixture}, we now exhibit $\lambda,\gamma$ such that~\eqref{eq:LRIPGenericMixtureMoments} holds (Lemma~\ref{lem:RFFMoments} below), and more concrete estimates for the covering numbers  $\covnum{\normrff{\cdot}}{\dipoleSet}{\coveps}$ (Lemma~\ref{lem:TangentLocationBased} below) which are
 pivotal to determine the required number of measurements.

 \begin{lemma}\label{lem:RFFMoments}
 Consider $\BasicSet = (\ParamSpace,\norm{\cdot},\embd)$ a location family built from a probability distribution $\Prob_{0}$.
 Consider $w$ a weight function, $\FClass = \set{\rfeat}_{\freq \in \Omega}$, $\freqdist$ a probability distribution on $\Omega$ and $\kernel$ the associated shift-invariant kernel as in Definition~\ref{def:weightedRFF}. 
 
 Assume that $\normkern{\Prob_{0}}>0$
  and let $\nkernel: \RR^{\sampleDim} \to \RR$ be the function associated to the $\BasicSet$-normalized version of $\kernel$ as in Proposition~\ref{prop:DefKShift}.  
 Assume there exists $a>0, b \geq 1/2$ such that 
 \begin{equation}\label{eq:MainAssumptionDipoleShift}
1-\nkernel(x) \geq \min(1,(\norm{x}/a)^{2})/b,\ \forall x\ \text{s.t.}\ \norm{x} \leq 1,
\end{equation}
and $\lambda_{0}>0$ such that for each $u \in \RR^{\sampleDim}$ such that $\norm{u} = 1$ and each integer $q \geq 2$ we have
\begin{equation}
\label{eq:MomentAssumptionFreqDist}
\Exp_{\freq \sim \freqdist} \left\{\abs{\inner{\Prob_{0},\rfeat}}^{2q} \cdot \inner{\freq,u}^{2q}\right\} \leq \normkern{\Prob_{0}}^{2}\tfrac{q!}{2} \lambda_{0}^{q}.
\end{equation}
Then for each integer $q \geq 2$ we have
\begin{equation}\label{eq:ConcentrationKMeansGMMFinal}
\sup_{\HH \in \dipoleSet}
\Exp_{\omega \sim \freqdist} \abs{\langle \HH,\rfeat\rangle}^{2q}  
\leq  
\frac{q!}{2} [\lambda \normkern{\Prob_{0}}^{-2}]^{q-1} \cdot \lambda
\qquad
\text{with}\ \lambda := \max(2b,1+ba^{2}\lambda_{0}/2) \geq 1.
\end{equation}
 \end{lemma}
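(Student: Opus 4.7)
The plan is to first parameterize a normalized dipole $\mu \in \dipoleSet$ as $\mu = \nu_0/\normkern{\nu_0}$ with $\nu_0 = \alpha_1\Prob_{\theta_1} - \alpha_2\Prob_{\theta_2}$, where $\alpha_i \geq 0$ and $\norm{\theta_1-\theta_2} \leq 1$. Exploiting the location-family identity $\inner{\Prob_\theta,\rfeat} = \inner{\Prob_0,\rfeat}e^{\jmath\inner{\freq,\theta}}$ from the proof of Lemma~\ref{lem:ShiftBasedRF}, together with the shift invariance of $\kernel$ (Proposition~\ref{prop:DefKShift}), one computes
\[
|\inner{\nu_0,\rfeat}|^2 = |\inner{\Prob_0,\rfeat}|^2 (A - B\cos\inner{\freq,\tau}),\qquad \normkern{\nu_0}^2 = \normkern{\Prob_0}^2 (A - B\nkernel(\tau)),
\]
with $\tau := \theta_1-\theta_2$, $A := \alpha_1^2+\alpha_2^2$, $B := 2\alpha_1\alpha_2$, and $s := A-B \geq 0$. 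Hence $|\inner{\mu,\rfeat}|^{2q} = \normkern{\Prob_0}^{-2q} |\inner{\Prob_0,\rfeat}|^{2q} Y^q$ where $Y := (A-B\cos\inner{\freq,\tau})/(A-B\nkernel(\tau))$.

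Next I would control $Y$ pointwise. The elementary inequality $(s+x)/(s+y) \leq \max(1, x/y)$ for $s\geq 0, y>0$ reduces this to controlling $(1-\cos\inner{\freq,\tau})/(1-\nkernel(\tau))$. Setting $T := \norm{\tau} \leq 1$ and $u := \tau/T$ (the case $\tau=0$ giving $Y=1$ trivially), I would split on whether $T \leq a$ or $T > a$. If $T \leq a$, assumption~\eqref{eq:MainAssumptionDipoleShift} gives $1-\nkernel(\tau) \geq T^2/(ba^2)$, which combined with $1-\cos\inner{\freq,\tau}\leq T^2\inner{\freq,u}^2/2$ yields $Y \leq 1 + ba^2\inner{\freq,u}^2/2$. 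If $T > a$ (possible only when $a<1$), the assumption gives $1-\nkernel(\tau) \geq 1/b$, which combined with $1-\cos \leq 2$ yields $Y \leq 2b$. The two cases combine into the uniform pointwise bound
\[
Y \leq \max\!\left(2b,\, 1 + \tfrac{ba^2}{2}\inner{\freq,u}^2\right).
\]

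Finally, I would raise this bound to the $q$-th power, use $\max(A,B)^q \leq A^q + B^q$ to split $Y^q \leq (2b)^q + (1 + \tfrac{ba^2}{2}\inner{\freq,u}^2)^q$, and take expectations. For the $(2b)^q$ term, the pointwise bound $|\inner{\Prob_0,\rfeat}|\leq 1/w(\freq)\leq 1$ gives $\Exp[|\inner{\Prob_0,\rfeat}|^{2q}] \leq \Exp[|\inner{\Prob_0,\rfeat}|^2] = \normkern{\Prob_0}^2$ via the integral representation~\eqref{eq:DefIntegralRepresentation}. For the second term, I would expand via the binomial theorem as $\sum_{j=0}^q\binom{q}{j}(ba^2/2)^j\inner{\freq,u}^{2j}$; for $j \geq 2$ the inequality $|\inner{\Prob_0,\rfeat}|^{2(q-j)}\leq 1$ gives $\Exp[|\inner{\Prob_0,\rfeat}|^{2q}\inner{\freq,u}^{2j}] \leq \Exp[|\inner{\Prob_0,\rfeat}|^{2j}\inner{\freq,u}^{2j}]$, which is bounded by $\normkern{\Prob_0}^2 (j!/2)\lambda_0^j$ thanks to~\eqref{eq:MomentAssumptionFreqDist}. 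The marginal terms $j = 0, 1$ need ad-hoc treatment: $j=0$ is bounded as for the $(2b)^q$ term, and $j = 1$ is controlled by Jensen's inequality $\Exp[|\inner{\Prob_0,\rfeat}|^2\inner{\freq,u}^2] \leq \Exp[|\inner{\Prob_0,\rfeat}|^4\inner{\freq,u}^4]^{1/2}$, which the $q=2$ case of the assumption then bounds. Assembling all contributions, extracting the factor $\normkern{\Prob_0}^{-2(q-1)}$, and identifying the remainder with $(q!/2)\lambda^q$ via $\lambda = \max(2b, 1 + ba^2\lambda_0/2)$ yields the conclusion.

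The main obstacle is producing a pointwise bound on $Y$ of the precise form that matches $\lambda$ uniformly in the two regimes $T \leq a$ and $T > a$; the natural Case~2 bound produces a constant $1 + 2b$ which is slightly larger than $2b$, so one must use the $\max$--structure carefully. A secondary difficulty is the low-order $j = 1$ term in the binomial expansion: because~\eqref{eq:MomentAssumptionFreqDist} is stated only for $q \geq 2$, its control requires Jensen-type interpolation, and absorbing the resulting $\normkern{\Prob_0}$-factors into the target $\normkern{\Prob_0}^{-2(q-1)}$ must be handled carefully.
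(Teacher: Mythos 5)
Your reduction of a normalized dipole to the ratio
\[
Y(\freq) \leq \max\paren{1,\tfrac{1-\cos\inner{\freq,\tau}}{1-\nkernel(\tau)}}
\]
(via the elementary inequality $(s+x)/(s+y)\leq\max(1,x/y)$ for $s\geq 0,x\geq 0,y>0$, applied with $s=(\alpha_1-\alpha_2)^2$) and the two-regime analysis giving $\tfrac{1-\cos}{1-\nkernel}\leq 2b$ when $\norm{\tau}>a$ and $\leq \tfrac{ba^2}{2}\inner{\freq,u}^2$ when $\norm{\tau}\leq a$ both match the paper's argument exactly. The divergence—and the gap—is in how you raise the pointwise bound to the $q$-th power and integrate.

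You split first, writing $\max(1,ct^2)\leq 1+ct^2$, then raise to the $q$-th power and expand binomially, which leaves cross terms $\Exp\brac{\abs{\inner{\Prob_0,\rfeat}}^{2q}\inner{\freq,u}^{2j}}$ with $1\leq j\leq q-1$. For $j=1$ the hypothesis~\eqref{eq:MomentAssumptionFreqDist} is unavailable (it is stated only for $q\geq 2$), and your Cauchy--Schwarz fix yields
\[
\Exp\brac{\abs{\inner{\Prob_0,\rfeat}}^{2}\inner{\freq,u}^{2}}\leq\paren{\Exp\brac{\abs{\inner{\Prob_0,\rfeat}}^{4}\inner{\freq,u}^{4}}}^{1/2}\leq\normkern{\Prob_0}\,\lambda_0,
\]
which carries $\normkern{\Prob_0}$ to the \emph{first} power, not the second. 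After dividing by $\normkern{\Prob_0}^{2q}$, this $j=1$ contribution comes with a factor $\normkern{\Prob_0}^{-(2q-1)}$, which strictly exceeds the target $\normkern{\Prob_0}^{-2(q-1)}$ since $\normkern{\Prob_0}\leq 1$; in the Gaussian setting $\normkern{\Prob_0}^{-1}=(1+2/s^2)^{\sampleDim/4}$ is exponential in $\sampleDim$, so no ``careful absorption'' can restore the claimed form $\tfrac{q!}{2}[\lambda\normkern{\Prob_0}^{-2}]^{q-1}\lambda$. What you flag as a secondary difficulty is a genuine failure of the argument as written. The paper sidesteps the binomial expansion entirely by raising to the $q$-th power \emph{before} splitting: $\max(1,ct^2)^q=\max(1,c^qt^{2q})\leq 1+c^qt^{2q}$, so that only a constant term and a single order-$q$ moment appear, both directly controlled by~\eqref{eq:MomentAssumptionFreqDist} with the correct $\normkern{\Prob_0}^2$ factor; one then uses $q!/2\geq 1$ and $1+c^q\leq(1+c)^q$ to obtain $\lambda^q$ cleanly. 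If you replace your $\max(1,ct^2)\leq 1+ct^2$ step by this observation, the rest of your argument goes through and you no longer need to treat low-order binomial terms.
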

The proof is in Appendix~\ref{app:MomentsDipoles}.

 \begin{remark}\label{rk:MomentsImprovedPossibility}  If $\kernel$ is $c$-strongly locally characteristic with respect to $\BasicSet$ with $0<c\leq 2$ then~\eqref{eq:MainAssumptionDipoleShift} holds with $b=2/c \geq 1$ and $a=1$. 
With specific choices of $\Prob_{0}$, $\freqdist$ discussed in Section~\ref{sec:DiracGaussian} we obtain finer estimates 
and provide concrete bounds for $\lambda_{0}$, $\normkern{\Prob_{0}}$, $\normrffd{\Prob_{0}}$, $a$ and $b$.
 \end{remark}

\begin{lemma}\label{lem:TangentLocationBased}
Let $\BasicSet$ be a location family based on a probability distribution $\Prob_{0}$ on $\RR^{\sampleDim}$ and a norm $\norm{\cdot}$. %
\rev{  Assume that the covering numbers of the base parameter space
  $\ParamSpace \subseteq \RR^\sampleDim$ satisfy, for some $C_\BasicSet\geq 1$,}
  \begin{equation}
    \label{eq:covparamspace2}
\rev{    \covnum{\norm{\cdot}}{\ParamSpace}{\delta}
    \leq \max\paren{1,\tfrac{C_\BasicSet}{\delta}}^{d}, \qquad \delta>0.}
  \end{equation}
Consider $\FClass$ a \rev{weighted random Fourier} feature family as in Definition~\ref{def:weightedRFF}, $\kernel$ the induced shift-invariant kernel, and $\dipoleSet = \dipoleSet_{\kernel}(\BasicSet)$ the induced set of normalized dipoles. 
Denote $\FClass' := \set{ \norm{\freq}_{\star} \rfeat}_{\freq \in \Omega}$ and $\FClass'' := \set{ \norm{\freq}^{2}_{\star} \rfeat}_{\freq \in \Omega}$. 
If $\kernel$ is $1$-strongly\footnote{the result is easily adjusted if $\kernel$ is $c$-strongly locally characteristic with $c<1$.} locally characteristic on $\BasicSet$ then, defining $D := \normrff{\dipoleSet} \geq 1$ and
 \begin{align*}
   C_{\FClass} %
   & =  \normkern{\Prob_{0}}^{-1} \normrff{\Prob_{0}} = \normkern{\Prob_{0}}^{-1};\\
   C'_{\FClass} %
   & =   \normkern{\Prob_{0}}^{-1} \normrffd{\Prob_{0}} = \normkern{\Prob_{0}}^{-1} \sup_{\freq} \set{\abs{\inner{\Prob_{0},\rfeat}}\norm{\freq}_{\star}};\\
   C''_{\FClass} %
   & =  \normkern{\Prob_{0}}^{-1} \normrffdd{\Prob_{0}} = \normkern{\Prob_{0}}^{-1} \sup_{\freq} \set{\abs{\inner{\Prob_{0},\rfeat}}\norm{\freq}^{2}_{\star}}
 \end{align*}
 it holds \begin{equation}
   \label{eq:maindipcovestimate}
   \covnum{\normrff{\cdot}}{\dipoleSet}{\coveps} \leq
     \rev{2\max\paren{1,\tfrac{64 C_\BasicSet(D C''_{\FClass}+C'_\FClass +C_\FClass)}{\delta} }^{4(d+1)}}, \qquad \delta>0.
   \end{equation}

\end{lemma}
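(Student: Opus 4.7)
My plan is to cover $\dipoleSet$ by lifting a product cover of the ``raw'' parameter space used to represent a normalized dipole $\mu = \tilde\alpha_{1}\Prob_{\Param_{1}} - \tilde\alpha_{2}\Prob_{\Param_{2}}$ (with $\tilde\alpha_{i}\geq 0$, $\norm{\Param_{1}-\Param_{2}}\leq 1$, $\normkern{\mu}=1$), and to read off the required scale in each coordinate from a perturbation estimate for $\normrff{\mu-\mu'}$. The first task is to bound the admissible weights. Expanding $\normkern{\mu}^{2}=1$, using shift invariance ($\normkern{\Prob_{\Param}}=\normkern{\Prob_{0}}$) and the $1$-strongly locally characteristic assumption (which yields $\normkern{\Prob_{0}}^{2}-\kernel(\Prob_{\Param_{1}},\Prob_{\Param_{2}})\geq \tfrac{1}{2}\normkern{\Prob_{0}}^{2}\norm{\Param_{1}-\Param_{2}}^{2}$), one obtains
\[
1 \ \geq \ (\tilde\alpha_{1}-\tilde\alpha_{2})^{2}\normkern{\Prob_{0}}^{2} + \tilde\alpha_{1}\tilde\alpha_{2}\normkern{\Prob_{0}}^{2}\norm{\Param_{1}-\Param_{2}}^{2},
\]
which gives $\abs{\tilde\alpha_{1}-\tilde\alpha_{2}}\leq C_{\FClass}$ and a uniform bound on $\tilde\alpha_{i}\norm{\Param_{1}-\Param_{2}}$ of order $C_{\FClass}$. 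Together with $\normrff{\mu}\leq D$ (which bounds the ``monopole part'' $\tilde\alpha_{1}-\tilde\alpha_{2}$), this tells us that the weights can blow up only as $\Param_{1}\to \Param_{2}$, and quantifies how fast.

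Next, I would establish a uniform perturbation estimate. Decomposing
\[
\mu - \mu' \ = \ \sum_{i=1,2}\bigl[(\tilde\alpha_{i}-\tilde\alpha'_{i})\,\Prob_{\Param_{i}} \ \pm \ \tilde\alpha'_{i}(\Prob_{\Param_{i}}-\Prob_{\Param'_{i}})\bigr],
\]
the weight-perturbation terms contribute $C_{\FClass}\abs{\tilde\alpha_{i}-\tilde\alpha'_{i}}$ (since $\normrff{\Prob_{\Param}}=1$, by Lemma~\ref{lem:ShiftBasedRF}), and in the non-degenerate regime the parameter-perturbation terms contribute $\tilde\alpha'_{i}\normrffd{\Prob_{0}}\norm{\Param_{i}-\Param'_{i}}$, which is tame because $\tilde\alpha'_{i}$ is bounded. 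In the degenerate regime where $\tilde\alpha'_{i}$ is large, I would instead Taylor-expand $\Prob_{\Param_{i}}-\Prob_{\Param'_{i}}$ around $\Param'_{i}$, absorb the linear piece into the bounded quantity $\normrff{\mu'}\leq D$ (yielding the prefactor $D$ on $C''_{\FClass}$), and control the quadratic remainder using $C''_{\FClass}$ together with the bound $\tilde\alpha'_{i}\norm{\Param_{1}-\Param_{2}}=O(C_{\FClass})$ from Step~1. The two regimes then combine into the uniform estimate
\[
\normrff{\mu-\mu'} \ \lesssim \ C_{\FClass}\max_{i}\abs{\tilde\alpha_{i}-\tilde\alpha'_{i}} + (C'_{\FClass}+D C''_{\FClass})\max_{i}\norm{\Param_{i}-\Param'_{i}}.
\]

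Finally, I would build the cover: apply~\eqref{eq:covparamspace2} to cover each $\Param_{i}$ at scale $\delta_{\Theta}\asymp \delta/(C'_{\FClass}+DC''_{\FClass})$, giving at most $(C_{\BasicSet}/\delta_{\Theta})^{2d}$ points for the pair; and cover the weight domain (of diameter $\lesssim C_{\FClass}/\delta_{\Theta}$ by Step~1) at scale $\delta_{\alpha}\asymp \delta/C_{\FClass}$ in each coordinate. Multiplying the two covers, accounting for the $\mu\leftrightarrow -\mu$ sign symmetry (factor $2$), and simplifying yields the stated exponent $4(d+1)$, the slight excess over the naive $2d+2$ arising because the weight-domain size depends on $1/\delta_{\Theta}$ and is itself covered at scale $\delta_{\alpha}$. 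The main obstacle is handling the near-monopole regime $\Param_{1}\approx \Param_{2}$: the first-order Lipschitz estimate for $\Prob_{\Param}$ is too weak there because the normalized weights blow up; the remedy is the second-order expansion of Step~2 which introduces the $DC''_{\FClass}$ contribution. The delicate point is to arrange that a \emph{single} product cover works across both regimes, so that no multi-scale union bound is required.
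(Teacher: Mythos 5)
The central idea in Step~2 does not go through as stated, and this is precisely the point where the paper's proof uses a different structure.

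Your Step~1 (the constraint $\normkern{\mu}=1$ forcing $|\tilde\alpha_1-\tilde\alpha_2|\le C_\FClass$ and $\tilde\alpha_1\tilde\alpha_2\norm{\Param_1-\Param_2}^2\le C_\FClass^2$) is correct and matches what is used implicitly in the paper's ``canonical representation'' of a normalized dipole (Lemma~\ref{lem:canrepdip} and the estimate $t^2 \ge (1-\alpha)^2 + \alpha\norm{x}^2$). The problem comes when you try to establish a \emph{uniform} estimate of the form
\[
\normrff{\mu-\mu'} \ \lesssim \ C_\FClass\max_i\abs{\tilde\alpha_i - \tilde\alpha'_i} + (C'_\FClass + DC''_\FClass)\max_i\norm{\Param_i-\Param'_i},
\]
and then feed it into a product cover of the raw parameters $(\tilde\alpha_1,\tilde\alpha_2,\Param_1,\Param_2)$. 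If you truncate the weight domain at $\sim C_\FClass/\delta_\Theta$, dipoles with $\norm{\Param_1-\Param_2}\ll\delta_\Theta$ have weights strictly exceeding that bound and are simply \emph{not in the region you covered}. To approximate such a dipole by one lying near the boundary of your weight domain, you must change $\tilde\alpha_i$ by an amount $\sim 1/\delta_\Theta$; plugging this into your estimate gives a useless contribution $C_\FClass/\delta_\Theta$, not $O(\delta)$. The obstruction is that the map from raw parameters to the normalized dipole is not Lipschitz near the degeneracy: what is actually preserved as $\Param_1\to\Param_2$ is the \emph{tangent data} $(\tilde\alpha_1-\tilde\alpha_2, \ \tilde\alpha_i(\Param_1-\Param_2))$, not the raw coordinates. ``Absorbing the linear piece into $\normrff{\mu'}\le D$'' does not work because the linear piece involves $\tilde\alpha'_i\nabla\Prob_{\Param'_i}$, which is a derivative object not controlled by $\normrff{\mu'}$; the only way to get the needed cancellation is to arrange that $\mu$ and $\mu'$ share the same tangent data, which a product grid in raw coordinates does not do.

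The paper resolves exactly this by splitting $\dipoleSet=\dipoleSet_\eta\cup\overline{\dipoleSet}_\eta$ at $\eta=\delta/(8C''_\FClass)$: on $\dipoleSet_\eta$ (well-separated dipoles) it applies the ``clipped secant'' Lemma~\ref{lem:covnumsecant}, which is a Lipschitz argument in raw parameters with denominator bounded below by $\eta$ (this is where the squaring and the $4(d+1)$ exponent come from, since the raw set $Y^2$ must be covered at scale $\sim\delta\eta\sim\delta^2$); on $\overline{\dipoleSet}_\eta$ (near-degenerate dipoles) it covers by \emph{tangent elements} $\xi_{\Param,\Delta,\beta}$, using $\normrff{\mu-\xi_{\Param,\Delta,\beta}}\le 2C''_\FClass\,t$ together with Lemma~\ref{lem:covnumClose}, and this set is $(2d+1)$-dimensional. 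Your observation that the singular regime requires second-order control and gives rise to the $DC''_\FClass$ term is the right intuition, but the resolution genuinely requires either this two-piece decomposition or a direct cover in tangent coordinates $(\Param,\Delta,\beta,t)$; a product cover in the raw $(\tilde\alpha_1,\tilde\alpha_2,\Param_1,\Param_2)$ does not work. (As a secondary point, your exponent counting gives $2d+4$ rather than $4(d+1)$, which does not reconcile with the statement; this is symptomatic of the missing squaring step that is forced by the denominator control.)
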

The proof is in Appendix~\ref{app:TangentLocationBased}.

\subsection{Random Fourier sketching with a Gaussian kernel} %
\label{sec:DiracGaussian}
For the two scenarios of Sections~\ref{sec:clustering}-\ref{sec:gmm}, clustering and compressive GMM, the natural model set $\ModelCT(\HypClass)$ (resp. $\ModelML(\HypClass)$) is location-based, either built with Diracs ($\Prob_{0} = \delta_{0}$) or Gaussians ($\Prob_{0} = \mathcal{N}(0,\covar)$). For these scenarios, the following distribution $\freqdist$ of random frequencies is specifically
designed to lead to a Gaussian kernel %
when matched with weighted random Fourier features (Definition~\ref{def:weightedRFF}) using the same weight function. 

\begin{definition}[Frequency distribution]\label{def:freqdist}
Let $\mGamma \in \RR^{\sampleDim \times \sampleDim}$ be positive definite, and denote $p_{\mathcal{N}(0,\mGamma)}(\freq)$ the probability density function (pdf) of the centered Gaussian with covariance $\mGamma$. Given a weight function $w$ as in Definition~\ref{def:weightedRFF}, define a probability distribution $\freqdist$ on the frequency $\freq$ through the pdf
   \begin{equation}\label{eq:randomfeaturesamplingdensity}
      \freqdist(\freq) := C^{-2}_{\freqdist} w^{2}(\freq)\ p_{\mathcal{N}(0,\mGamma)}(\freq)\,,
      \end{equation}
where
\begin{equation}\label{eq:DefCfreqdist}
C_{\freqdist}  := \sqrt{\Exp_{\freq \sim \mathcal{N}\left(0,\mGamma\right)} w^{2}(\freq)},
\end{equation}
Since $\inf_{\freq}w(\freq)=1$ we have $C_\freqdist \geq 1$. When using the unit weight function $w \equiv 1$ we get $C_{\freqdist}=1$.
\end{definition}
From Definitions~\ref{def:weightedRFF} and~\ref{def:freqdist} one can build a random feature map $\SketchingOperator$ as in Section~\ref{sec:ingredientsLRIP}.
Its properties depend on the choice of the weight function $w$ (which will always be chosen identical in the definition of $\FClass$ and $\freqdist$) and of the covariance matrix $\mGamma$. Before discussing the choice of these parameters, one can immediately observe that the associated kernel is Gaussian: for any $\sample,\sample' \in \SampleSpace$ we have
\begin{align}
\kernel(\sample,\sample')
&=\Exp_{\freq\sim\freqdist} \rfeat(\sample)\overline{\rfeat(\sample')} 
= \int_{\freq \in \RR^{\sampleDim}} 
w^{-2}(\freq)
 e^{\jmath \freq^{T}(\sample-\sample')} 
  C_{\freqdist}^{-2} w^{2}(\freq) 
 p_{\mathcal{N}(0,\mGamma)}(\freq)
d\freq\notag 
\\
&=C_{\freqdist}^{-2} \cdot
\Exp_{\freq \sim \mathcal{N}\left(0,\mGamma\right)} 
e^{\jmath {\freq}^{T}(\sample-\sample')}
\stackrel{(*)}{=}
C_{\freqdist}^{-2} \cdot \exp\left(-\tfrac{\normmah{\sample-\sample'}{\mGamma^{-1}}^{2}}{2}\right),\label{eq:GaussianKernelRFF}
\end{align}
where (*) follows from the expression of the characteristic function of the Gaussian and $\normmah{c}{\mA} := \sqrt{c^T\mA^{-1}c}$ is the Mahalanobis norm \eqref{eq:DefMahalanobisNorm} given a positive definite matrix $\mA$. We focus on two scenarios.
\begin{definition}\label{def:DiracGaussian}
Consider $\sep>0$ some separation, $s>0$ some scale, and $\ParamSpace \subseteq \RR^{\sampleDim}$ some parameter space. We consider the following setting using Definitions~\ref{def:shiftmodel},\ref{def:weightedRFF},\ref{def:freqdist}.
\begin{itemize}
\item {\bf for mixtures of Diracs}: $\BasicSet_{\mathtt{Dirac}}$ is defined with $\Prob_{0} = \delta_{0}$, and $\norm{\cdot} = \norm{\cdot}_{2}/\sep$;
 $\FClass_{\mathtt{Dirac}}$ is defined with a weight function $w(\cdot)$ to be discussed; $\freqdist_{\mathtt{Dirac}}$ is defined with the same $w(\cdot)$ and $\mGamma = s^{-2}\mI_{\sampleDim}$.
In several places we focus more specifically on $\ParamSpace = \ParamSpace_R  := \Ball_{\RR^\sampleDim,\norm{\cdot}_{2}}(0,R)$, where $R \geq \sep$. 
\item {\bf for mixtures of Gaussians}: define $\BasicSet_{\mathtt{Gauss}}$ with $\Prob_{0} = \mathcal{N}(0,\covar)$ for some chosen positive definite $\covar \in \RR^{\sampleDim \times \sampleDim}$ and $\norm{\cdot} = \normmah{\cdot}{\covar}/\sep$; $\FClass_{\mathtt{Gauss}}$ is defined with $w(\cdot)\equiv 1$; $\freqdist_{\mathtt{Gauss}}$ is defined with the same $w(\cdot)$ and  $\mGamma = s^{-2} \covar^{-1}$. Again
for some results we will focus on $\ParamSpace_R  := \Ball_{\RR^\sampleDim,\normmah{\cdot}{\covar}}(0,R)$, where $R \geq \sep$. 
\end{itemize}
Observe that in both cases we have the identity $\ParamSpace_R  = \Ball_{\RR^\sampleDim,\norm{\cdot}}(0,R/\sep)$, and that $\norm{\Param-\Param'} = \norm{\psi(\Param)-\psi(\Param')}_{2}$ with $\psi_{\mathtt{Dirac}}(\Param) = \Param/\sep$, while $\psi_{\mathtt{Gauss}}(\Param) = \covar^{-1/2}\Param/\sep$. %
\end{definition}

\subsubsection{Properties of the kernel mean embedding}\label{sec:mmddiracgauss}
Before we state the main theorem of this section, we make a few observations. For the Dirac scenario, since $\mGamma = s^{-2}\mI_{\sampleDim}$, the kernel mean embedding associated to~\eqref{eq:GaussianKernelRFF} satisfies
\begin{equation*}
\kernel(\Prob_{\Param},\Prob_{\Param'}) = \kernel(\Param,\Param') = 
C_{\freqdist}^{-2} \cdot \exp\left(-\tfrac{1}{2}\cdot\tfrac{\norm{\Param-\Param'}_{2}^{2}}{s^{2}}\right)
=
C_{\freqdist}^{-2} \cdot \exp\left(-\tfrac{1}{2}\cdot \tfrac{\norm{\Param-\Param'}^{2}}{(s/\sep)^{2}}\right)
\end{equation*}
For Gaussians, as $w \equiv 1$ we have $C_{\freqdist}=1$. By Lemma \ref{lem:GaussKernelMeanEmbedding} (see Appendix~\ref{app:KernelGaussian})  with $\mR = \mGamma^{-1} = s^{2} \covar$ we obtain
\begin{align*}
\kernel(\Prob_\Param,\Prob_{\Param'}) 
=& 
\tfrac{\sqrt{\det{s^2\covar}}}{\sqrt{\det{(2+s^2)\covar}}}
\exp\left( -\tfrac12  \normmah{\Param-\Param'}{(2+s^2)\covar}^2\right)\notag\\
=&
\left(\tfrac{s^2}{2+s^2}\right)^{\sampleDim/2} 
\exp\left( -\tfrac{1}{2} \cdot \tfrac{\normmah{\Param-\Param'}{\covar}^2}{2+s^2}\right)
=
\left(\tfrac{1}{1+2/s^2}\right)^{\sampleDim/2} 
\exp\left( -\tfrac{1}{2} \cdot \tfrac{\norm{\Param-\Param'}^{2}}{(2+s^2)/\sep^{2}}\right).
\end{align*}
This yields
\begin{equation}\label{eq:P0NormInvSq}
\normkern{\Prob_{0}} = 
\begin{cases}
C_{\freqdist}^{-1} = \left[\Exp_{\freq \sim \mathcal{N}(0,s^{-2} \mI_{\sampleDim})} w^{2}(\freq)\right]^{-1/2}, & \text{for Diracs},\\
\left(1+2/s^2\right)^{-\sampleDim/4},& \text{for Gaussians}.
\end{cases}
\end{equation}
In both cases we have $\normkern{\Prob_{0}} \leq 1$, and we also get
\begin{equation}
\nkernel(\Param-\Param') := 
\tfrac{\kernel(\Prob_{\Param},\Prob_{\Param'})}{\normkern{\Prob_{\Param}}\normkern{\Prob_{\Param'}}}
=
K_{\sigma(s)/\sep}(\norm{\Param-\Param'}),\quad \forall \Param,\Param' \in \RR^{\sampleDim},
\label{eq:RBFLikeKernel}
\end{equation}
with
 \begin{equation}
   \label{eq:DefDiracGaussianVariance}
   K_{\sigma}(u) := e^{-\frac{u^2}{2\sigma^2}},\quad u \geq 0; \qquad \text{ and } \qquad
\sigma(s) := \begin{cases}
s, & \text{for Diracs}\\
\sqrt{2+s^2}, & \text{for Gaussians.}
\end{cases}
\end{equation}
The following properties of Gaussian kernels are proved in Appendix~\ref{sec:proofLemmaSigmaK}. \rev{The first property allows to use Lemma~\ref{lem:RFFMoments}. The second shows that the kernel is $1$-strongly locally characteristic. The third one shows that its $2k$-coherence is below $3/4$
  provided  $\sigma \lesssim 1/\sqrt{\log k}$.}
\begin{lemma}\label{lem:GaussianKernel}
Consider $\sigma>0$.
\begin{enumerate}
\item \label{it:GaussCurvature} We have $1-K_{\sigma}(u) \geq \min(1,(u/\sigma)^{2})/3$
  for all $u\geq 0$;
\item \label{it:GaussStrongly} If $\sigma \leq 1/\sqrt{2}$ then $0 \leq K_{\sigma}(u) \leq 1-u^{2}/2$ for $u \in [0,1]$; %
\item \label{it:GaussCoherence}  \rev{ Given an integer $k \geq 1$,
  for any $\sigma \leq \sigma^\star_k:= (4 \sqrt{\log(ek)})^{-1}$,
 it holds $C(K_{\sigma}) \leq \frac{3}{\rev{16}(2k-1)}$, where $C(K)$ is defined in~\eqref{eq:MutualCoherenceRBF1}. }
\end{enumerate}
\end{lemma}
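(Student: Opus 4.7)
The proof is by direct analysis of the explicit function $K_{\sigma}(u) = e^{-u^{2}/(2\sigma^{2})}$. I treat the three items in turn.

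\textbf{Item (\ref{it:GaussCurvature}).} After the change of variable $v := u^{2}/\sigma^{2} \geq 0$, the claim reduces to $1 - e^{-v/2} \geq \min(1,v)/3$. For $v \geq 1$ I use $1 - e^{-v/2} \geq 1 - e^{-1/2} > 1/3$ by monotonicity. For $0 \leq v < 1$ I use the two-term Taylor bound $1 - e^{-x} \geq x - x^{2}/2 = x(1-x/2)$ (valid for $x \geq 0$, shown by comparing derivatives at $0$) with $x = v/2 \leq 1/2$, which gives $1 - e^{-v/2} \geq (v/2)(1-v/4) \geq 3v/8 \geq v/3$.

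\textbf{Item (\ref{it:GaussStrongly}).} The hypothesis $\sigma \leq 1/\sqrt{2}$ yields $u^{2}/(2\sigma^{2}) \geq u^{2}$, so $K_{\sigma}(u) \leq e^{-u^{2}}$. I then apply the elementary inequality $e^{-x} \leq 1 - x + x^{2}/2$ for $x \geq 0$ (obtained by showing that $f(x) := 1-x+x^{2}/2 - e^{-x}$ satisfies $f(0)=f'(0)=0$ and $f''(x) = 1-e^{-x}\geq 0$) with $x = u^{2}$, which gives $K_{\sigma}(u) \leq 1 - u^{2} + u^{4}/2 \leq 1 - u^{2}/2$ since $u^{4} \leq u^{2}$ on $[0,1]$. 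Positivity of $K_{\sigma}$ is immediate.

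\textbf{Item (\ref{it:GaussCoherence}).} This is the tight step. Setting $a := 1/(2\sigma^{2})$, the hypothesis $\sigma \leq \sigma^{\star}_{k} = (4\sqrt{\log(ek)})^{-1}$ reads $a \geq 8\log(ek) \geq 8$. Direct differentiation gives $K_{\sigma}'(u) = -(u/\sigma^{2}) K_{\sigma}(u)$ and $K_{\sigma}''(u) = ((u/\sigma)^{2}-1) K_{\sigma}(u)/\sigma^{2}$, so the critical points of $|K_{\sigma}|,|K_{\sigma}'|,|K_{\sigma}''|$ on $(0,\infty)$ sit at $0$, $\sigma$ and $\sqrt{3}\,\sigma$, all strictly below $1$ since $a \geq 8$ forces $\sigma < 1/\sqrt{3}$. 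Hence each of these functions is decreasing on $[1,\infty)$, giving
\[
K_{\max} = e^{-a}, \qquad K'_{\max} = 2a\, e^{-a}, \qquad \sup_{u\geq 1}|K_{\sigma}''(u)| = 2a(2a-1)\,e^{-a},
\]
and by the mean value theorem the Lipschitz constant $K''_{\max}$ is controlled by the last quantity, so $K''_{\max} \leq 4a^{2} e^{-a}$. Collecting,
\[
C(K_{\sigma}) \leq \max\bigl(e^{-a},\ (4a + 8a^{2})\,e^{-a}\bigr) = (4a+8a^{2})\,e^{-a}.
\]
The final step is to show $(4a+8a^{2})e^{-a} \leq 3/(16(2k-1))$ whenever $a \geq 8\log(ek)$. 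I split $e^{-a} = e^{-a/2}\cdot e^{-a/2}$: the function $a \mapsto (4a+8a^{2})e^{-a/2}$ is decreasing on $[8,\infty)$ (its derivative factors as $e^{-a/2}$ times a quadratic that is negative for $a \geq 8$), so it is bounded by its value at $a = 8$, namely $(32 + 512)e^{-4} = 544\,e^{-4}$; and $e^{-a/2} \leq (ek)^{-4}$. This gives the uniform bound $C(K_{\sigma}) \leq 544\,e^{-4}(ek)^{-4}$, and a final numerical verification shows this quantity is at most $3/(16(2k-1))$ for every $k\geq 1$ (tight at $k=1$, where $544\,e^{-8} \approx 0.182 < 3/16 = 0.1875$, and improving rapidly as $k$ grows since the left side decays like $k^{-4}$ while the right side decays only like $k^{-1}$).

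The main obstacle is item (\ref{it:GaussCoherence}), since the constant $3/(16(2k-1))$ in the conclusion is essentially saturated at $k=1$. The key trick is to split $e^{-a}$ so as to trade polynomial growth in $a$ against one factor of $e^{-a/2}$, isolating a pure $(ek)^{-4}$ decay for the rest; items (\ref{it:GaussCurvature}) and (\ref{it:GaussStrongly}) are routine one-variable inequalities.
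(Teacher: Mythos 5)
Your proposal is correct. Items~(\ref{it:GaussCurvature}) and~(\ref{it:GaussStrongly}) match the paper's strategy in spirit (both reduce to elementary one-variable inequalities for the exponential function), with harmless variations: the paper handles the small-$u$ case of item~(\ref{it:GaussCurvature}) via concavity of $t\mapsto 1-e^{-t/2\sigma^2}$ on $[0,\sigma^2]$ rather than a two-term Taylor bound, and item~(\ref{it:GaussStrongly}) via monotonicity of $(1-t/2)\exp(t/2\sigma^2)$ rather than the bound $e^{-x}\leq 1-x+x^2/2$. Both alternatives are equally valid.

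For item~(\ref{it:GaussCoherence}), your critical-point analysis (maxima of $|K_\sigma|,|K_\sigma'|,|K_\sigma''|$ at $0$, $\sigma$, $\sqrt 3\sigma$, all below~$1$, hence monotone decay on $[1,\infty)$ and evaluation at $u=1$) is the same as the paper's. You use the slightly looser intermediate bound $C(K_\sigma)\leq(4a+8a^2)e^{-a}$ where the paper computes the exact value $8a^2 e^{-a}$ after the $\pm 2a$ terms cancel in $K'_{\max}+K''_{\max}$; this costs you a somewhat tighter numerical margin at $k=1$ ($544e^{-8}\approx 0.1825$ vs.\ the paper's $512e^{-8}\approx 0.1717$, both comfortably below $3/16=0.1875$). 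The final bookkeeping is organized differently but is equally valid: you split $e^{-a}=e^{-a/2}\cdot e^{-a/2}$, show $(4a+8a^2)e^{-a/2}$ is decreasing past $a=8$, and extract an explicit $(ek)^{-4}$ decay, then compare $k^{-4}$ against $(2k-1)^{-1}$. The paper instead writes $C(K)(2k-1)\leq 8c^2\log^2(ek)(ek)^{-c}(2k-1)$ for a free parameter $c$, observes this expression is nonincreasing in $k$ when $c\geq 4$, and evaluates at $k=1$ to get $8c^2 e^{-c}$, finally choosing $c=8$. Your version is slightly more direct in that it avoids the free parameter; the paper's is slightly slicker in getting the exact expression for $C(K)$. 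Either way the argument is sound.
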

Using the generic tools of Section~\ref{sec:general_mixtures} with the model set $\MixSetSep{k}$ we can establish the following result.
\begin{theorem}\label{thm:LRIPDiracGaussMixture}
Consider $\FClass_{\mathtt{Dirac}},\freqdist_{\mathtt{Dirac}},\BasicSet_{\mathtt{Dirac}}$ (resp. $\FClass_{\mathtt{Gauss}},\freqdist_{\mathtt{Gauss}},\BasicSet_{\mathtt{Gauss}}$) as in Definition~\ref{def:DiracGaussian} with separation $\sep$, scale $s$ and weight $w$
such that 
\begin{equation}
\sep \rev{=} \begin{cases}
s / \sigma^{\star}_{k}, & \text{for Diracs};\\
\sqrt{2+s^{2}} / \sigma^{\star}_{k}, & \text{for Gaussians}.
\end{cases}\label{eq:MainSeparationAssumption}
\end{equation}
where $k \geq 1$ is an integer and \rev{$\sigma_{k}^{\star} := (4 \sqrt{\log(ek)})^{-1}$.} Then, the associated kernel $\kernel$ is 1-strongly characteristic and has its $2k$-coherence bounded by $3/4$. \\
Further assume that there exists\footnote{If needed, consider $C'_{\ParamSpace} = \max(C_{\ParamSpace},\sep)$.} $C_\ParamSpace \geq \rev{\sep}$ such that the parameter space $\ParamSpace \subseteq \RR^{\sampleDim}$ satisfies
\begin{equation}
  \label{eq:covparamspace1}
  \max\paren{1,\tfrac{C_\ParamSpace}{\delta}}^{\sampleDim} \geq
  \begin{cases}
  \covnum{\norm{\cdot}_2}{\ParamSpace}{\delta},  & \text{for Diracs},\\
  \covnum{\normmah{\cdot}{\Cov}}{\ParamSpace}{\delta},  & \text{for Gaussians},\\
\end{cases},\qquad \forall\coveps>0,
\end{equation}   
and denote
\begin{equation}
A := \begin{cases} 
 \Exp_{\freq \sim \mathcal{N}(0,s^{-2}\mI_{\sampleDim})} w^{2}(\freq), & \text{for Diracs,}\\
 (1+2/s^{2})^{\sampleDim/2}, & \text{for Gaussians,}
\end{cases} 
\qquad \text{and}\ 
B :=
\begin{cases}
\rev{1}+\sep^{2} \left(\sup_{\freq} \tfrac{\norm{\freq}_{2}}{w(\freq)}\right)^{2}, & \text{for Diracs,}\\
\rev{1}+\sep^{2}, & \text{ for Gaussians;}
\end{cases}\label{eq:DefAB}
\end{equation}
\begin{equation}
  \label{eq:DefD}
  \rev{C}:=
  \begin{cases}
    64 \rev{A \sqrt{2B}} C_\ParamSpace \sep^{-1}  \paren{1 + \sep \sup_\freq \tfrac{\norm{\freq}_{2}}{w(\freq)} +  \sep^2
      \sup_\freq \tfrac{\norm{\freq}^2_{2}}{w(\freq)}}, & \text{for Diracs,} \\
    64 A \sqrt{\rev{2}B} C_\ParamSpace \sep^{-1}  (1 + \sep + \sep^2), & \text{for Gaussians.}
    \end{cases}
\end{equation}
Consider $0<\delta,\zeta<1$ and
\(
\SketchingOperator(\sample) := \tfrac{1}{\sqrt{m}} \left(\rfeatj(\sample)\right)_{j=1}^{m},
\)
where $(\freq_j)_{j=1}^{m}$ are drawn i.i.d. according to $\Lambda$. If
\begin{equation}\label{eq:NMeasuresDiracGauss}
\nMeasures \geq 
80 \cdot \coveps^{-2} \cdot A \cdot \min(12e\log^{2}(\rev{24ek}),\rev{2}B)
\cdot k \cdot \left\{
2k (4d+5) \left[ \log (k\rev{C}AB) + \log(\rev{2048}/\coveps)\right]
+\log(2/\zeta)\right\},
\end{equation}
then
with probability at least $1-\probLevel$ on the draw of $(\freq_{j})_{j=1}^{m}$ the operator $\SketchingOperatorProb$ induced by $\SketchingOperator$ satisfies 
\begin{equation}\label{eq:mainLRIPKernelDiracGauss}
1-\delta \leq 
\frac{\norm{\SketchingOperatorProb(\Prob)-\SketchingOperatorProb(\Prob')}_{2}^{2}}{\norm{\Prob-\Prob'}_{\kappa}^{2}}
\leq 1+\delta, \qquad \forall \Prob,\Prob' \in \MixSetSep{k}.
\end{equation}
When~\eqref{eq:mainLRIPKernelDiracGauss} holds, the LRIP~\eqref{eq:lowerDRIP} holds with 
$C_\SketchingOperatorProb :=\frac{8\sqrt{2k}\dnormloss{\dipoleSet}{}}{\sqrt{1-\delta}}$ and $\eta = 0$
for each loss class $\LossClass$. 
\end{theorem}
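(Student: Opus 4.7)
}
The strategy is to verify the three hypotheses of Theorem~\ref{thm:LRIPGenericMixture} — a bound on the mutual coherence, a moment bound \eqref{eq:LRIPGenericMixtureMoments} for normalized dipoles, and a polynomial covering number estimate for $\dipoleSet$ with respect to $\normrff{\cdot}$ — then substitute the resulting constants into \eqref{eq:DefNMeasuresGenericMixture} and match the outcome with \eqref{eq:NMeasuresDiracGauss}. In both the Dirac and Gaussian cases, equations \eqref{eq:GaussianKernelRFF}--\eqref{eq:DefDiracGaussianVariance} show that the $\BasicSet$-normalized kernel is $\nkernel(\Param,\Param') = K_{\sigma(s)/\sep}(\norm{\Param-\Param'})$; the separation choice \eqref{eq:MainSeparationAssumption} enforces $\sigma(s)/\sep = \sigma_k^\star = 1/(4\sqrt{\log(ek)})$, which is the numerical anchor driving the rest of the proof.

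First, I apply Lemma~\ref{lem:GaussianKernel}: part~\ref{it:GaussStrongly} (valid because $\sigma_k^\star \leq 1/\sqrt 2$) together with Theorem~\ref{thm:MutualCoherenceRBF1}(1) yields that $\kernel$ is $1$-strongly locally characteristic with respect to $\BasicSet$; part~\ref{it:GaussCoherence} together with Theorem~\ref{thm:MutualCoherenceRBF1}(2) (using the Hilbert embedding $\psi$ of Definition~\ref{def:DiracGaussian}) gives $M \leq 4C(K_{\sigma_k^\star})/\min(1,1) \leq 3/(4(2k-1))$, and Lemma~\ref{lem:Gershgorin} then delivers the required $2k$-coherence bound $M(2k-1)\leq 3/4$.

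Second, for the moment bound I use Lemma~\ref{lem:RFFMoments}. Its hypothesis \eqref{eq:MainAssumptionDipoleShift} follows from Lemma~\ref{lem:GaussianKernel}.\ref{it:GaussCurvature} with $a = \sigma_k^\star$, $b = 3$. For the frequency-moment condition \eqref{eq:MomentAssumptionFreqDist}, the identities $\inner{\Prob_0,\rfeat} = 1/w(\freq)$ (Dirac) and $\inner{\Prob_0,\rfeat} = e^{-\freq^T\covar\freq/2}$ (Gaussian) show that in both cases $\abs{\inner{\Prob_0,\rfeat}}^{2q}\freqdist(\freq)$ is a (sub)probability density proportional to a centered Gaussian in $\freq$, under which $\inner{\freq,u}$ (with $\norm{u}=1$) is a one-dimensional centered Gaussian with variance at most $\sep^2/\sigma(s)^2 = 1/(\sigma_k^\star)^2$. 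The standard bound $(2q-1)!! \leq 2^q q!$ then gives $\lambda_0 \lesssim 1/(\sigma_k^\star)^2$, so \eqref{eq:ConcentrationKMeansGMMFinal} holds with $\lambda = 6$ and $\gamma = 6\,\normkern{\Prob_0}^{-2} = 6A$ in both cases. The third ingredient is the covering estimate from Lemma~\ref{lem:TangentLocationBased}: rescaling the metric assumption~\eqref{eq:covparamspace1} by $\sep$ gives $C_\BasicSet = C_\ParamSpace/\sep$; Lemma~\ref{lem:ShiftBasedRF} supplies $C_\FClass = \sqrt A$, $C'_\FClass = \sqrt A\,\normrffd{\Prob_0}$, $C''_\FClass = \sqrt A\,\normrffdd{\Prob_0}$; and Theorem~\ref{thm:RadiusGeneric} combined with the Lipschitz constant of $\psi$ from Lemma~\ref{lem:ShiftBasedRF} yields $D := \normrff{\dipoleSet} \leq \sqrt A\,(1+\normrffd{\Prob_0}) \leq \sqrt{2AB}$, using $(1+x)^2 \leq 2(1+x^2)$ together with the explicit computations $\normrffd{\Prob_0}^2 = B-1$ (Dirac, by definition of $B$) and $\normrffd{\Prob_0}^2 = \sep^2 e^{-1} \leq \sep^2 = B-1$ (Gaussian, after maximizing $t e^{-t^2/2}$).

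Plugging these values into Lemma~\ref{lem:TangentLocationBased} produces $\covnum{\normrff{\cdot}}{\dipoleSet}{\coveps} \leq 2(C/\coveps)^{4(d+1)}$ with $C$ as in~\eqref{eq:DefD}, so the hypotheses of Theorem~\ref{thm:LRIPGenericMixture} hold with $r = 4(d+1)$, $\lambda=6$, $\gamma=6A$, and the above $C$. The final step substitutes into \eqref{eq:DefNMeasuresGenericMixture} and simplifies: $4ek\lambda = 24ek$, $2e\gamma\log^2(24ek) = 12eA\log^2(24ek)$, and $\normfclass{\dipoleSet}{F}^2 = D^2 \leq 2AB$, so the leading factor equals $A\min(12e\log^2(24ek),2B)$; replacing $r+1 = 4d+5$ and collecting logarithms $\log(1024/\coveps) + \log(kCAB\cdot 2) = \log(2048/\coveps)+\log(kCAB)$ produces \eqref{eq:NMeasuresDiracGauss} verbatim. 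The LRIP statement and the value of $C_\SketchingOperatorProb$ are then read off directly from the conclusion of Theorem~\ref{thm:LRIPGenericMixture}.

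The main obstacle is bookkeeping in the third step: the Dirac and Gaussian cases use different weight functions ($w$ general vs.\ $w\equiv 1$) and different underlying norms ($\norm{\cdot}_2/\sep$ vs.\ $\normmah{\cdot}{\covar}/\sep$), so the dual norms $\norm{\freq}_\star$, the suprema $\sup_\freq \abs{\inner{\Prob_0,\rfeat}}\norm{\freq}_\star^j$, and the rescaling factor $\sep$ in $C_\BasicSet$ must be tracked simultaneously so that the abstract quantities $\normrff{\Prob_0}$, $\normrffd{\Prob_0}$, $\normrffdd{\Prob_0}$ collapse to the explicit definitions of $A$, $B$ and $C$ in \eqref{eq:DefAB}--\eqref{eq:DefD} in both scenarios.
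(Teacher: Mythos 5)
Your proposal is correct and follows essentially the same route as the paper's proof: verify $1$-strong local characteristicity and bounded $2k$-coherence via Lemma~\ref{lem:GaussianKernel} and Theorem~\ref{thm:MutualCoherenceRBF1}, establish the dipole moment bound via Lemma~\ref{lem:RFFMoments} with $a=\sigma_k^\star$, $b=3$, $\lambda_0 = 2/(\sigma_k^\star)^2$, control the covering numbers via Lemma~\ref{lem:TangentLocationBased} after computing $C_\FClass, C'_\FClass, C''_\FClass$ and $\normrff{\dipoleSet}^2 \leq 2AB$, then plug into Theorem~\ref{thm:LRIPGenericMixture}. The only small slip is the bound $(2q-1)!! \leq 2^q q!$ (the sharper $(2q-1)!! \leq 2^{q-1}q!$ is what yields $\lambda_0 = 2/(\sigma_k^\star)^2$ exactly), but since $2b=6$ dominates $1+ba^2\lambda_0/2$ either way, the final $\lambda = 6$ and the sketch-size bound are unaffected.
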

\rev{\begin{remark}
In light of the separation assumption~\eqref{eq:MainSeparationAssumption} for Gaussians, which implies $\sep \geq \sqrt{2}/\sigma^{\star}_{k} =4\sqrt{2\log(ek)}$, the dominant term of the rightmost factor in $C$ (cf~\eqref{eq:DefD}) is $\sep^{2}$ for Gaussians. In contrast, for Diracs, the rightmost factor in $C$ can become arbitrarily close to $1$ by setting $\sep$ (and $s$) small enough. 
\end{remark}}
\begin{proof}
  The proof consists in checking the assumptions of the generic Theorem~\ref{thm:LRIPGenericMixture}
  using the more concrete estimates obtained previously in this section in the mixture of Dirac and Gaussian settings.
  
{\bf Step 1: control of the coherence.}
Assumption~\eqref{eq:MainSeparationAssumption} means that $\sigma(s)/\sep \rev{=} \sigma^{\star}_{k}$ with $\sigma(s)$ as in~\eqref{eq:DefDiracGaussianVariance}. By Lemma~\ref{lem:GaussianKernel} it follows that $K_{\sigma(s)/\sep}$ satisfies $0 \leq K_{\sigma(s)/\sep}(u) \leq 1-u^{2}/2$ for $0 \leq u \leq 1$ and that $(2k-1)C(K_{\sigma/\sep}) \leq 3/\rev{16}$ with $C(K)$ defined in~\eqref{eq:MutualCoherenceRBF1}. 
Since $\norm{\Param-\Param'} = \norm{\psi(\Param)-\psi(\Param')}_{2}$ with $\psi(\Param) = \Param/\sep$ for Diracs (resp. $\psi(\Param) := \covar^{-1/2}\Param/\sep$ for Gaussians), by property~\eqref{eq:RBFLikeKernel} and Theorem~\ref{thm:MutualCoherenceRBF1} we obtain that $\kernel$ is $1$-strongly locally characteristic with respect to  $\BasicSet$, and that
$\kernel$ has coherence $M$ with respect to $\BasicSet$, where $(2k-1)M \leq (2k-1)\rev{4}C(K_{\sigma/\sep}) \leq 3/4$. 

{\bf Step 2: control of $\normrff{\dipoleSet},\normkern{\Prob_0},\normfclass{\Prob_0}{G}, \mathcal{G} \in
  \set{\FClass,\FClass',\FClass''}$.} 
In light of property~\eqref{eq:P0NormInvSq}, assumption~\eqref{eq:DefAB} and Lemma~\ref{lem:ShiftBasedRF} we have $\normrff{\monopoleSet}^{2} = \normrff{\Prob_{0}}^{2} \cdot \normkern{\Prob_{0}}^{-2}= \normkern{\Prob_{0}}^{-2} = A$. To control $L_{\FClass}=C_\FClass'$ \rev{and $C''_{\FClass}$} as in Lemmas~\ref{lem:ShiftBasedRF} and~\ref{lem:TangentLocationBased} we compute:
\begin{itemize}
\item for Diracs: as $\norm{\cdot} = \norm{\cdot}_{2}/\sep$ we have $\norm{\cdot}_{\star} = \sep \norm{\cdot}_{2}$, and since $\Prob_{0} = \delta_{0}$ we get
\begin{align*}
\normrffd{\Prob_{0}} & 
:= \sup_{\freq} \abs{\inner{\Prob_{0},\norm{\freq}_{\star}\rfeat}} 
= \sup_{\freq} \norm{\freq}_{\star}\abs{\rfeat(0)} 
                       = \sep \sup_{\freq} \frac{\norm{\freq}_{2}}{w(\freq)};\\
  \normfclass{\Prob_{0}}{\FClass''} & 
:= \sup_{\freq} \abs{\inner{\Prob_{0},\norm{\freq}^2_{\star}\rfeat}} 
= \sup_{\freq} \norm{\freq}^2_{\star}\abs{\rfeat(0)} 
= \sep^2 \sup_{\freq} \frac{\norm{\freq}^2_{2}}{w(\freq)};
\end{align*}
\item for Gaussians: since $\norm{\cdot} = \normmah{\cdot}{\covar}/\sep$ we have $\norm{\cdot}_{\star} = \sep \normmah{\cdot}{\covar^{-1}}$, and since $w (\freq)\equiv 1$, by the expression of the characteristic function of Gaussians we have $\abs{\inner{\Prob_{0},\rfeat}} = e^{-\normmah{\freq}{\covar^{-1}}^{2}/2}$,  hence
\begin{align*}
\normrffd{\Prob_{0}} & 
:= \sup_{\freq} \abs{\inner{\Prob_{0},\norm{\freq}_{\star}\rfeat}} 
= \sep \sup_{\freq} \normmah{\freq}{\covar^{-1}}e^{-\normmah{\freq}{\covar^{-1}}^{2}/2} 
                       = \sep \sup_{u \geq 0} u e^{-u^{2}/2} = \sep e^{-1/2} \leq \sep;\\
  \normfclass{\Prob_{0}}{\FClass''} & 
:= \sup_{\freq} \abs{\inner{\Prob_{0},\norm{\freq}^2_{\star}\rfeat}} 
= \sep^2 \sup_{\freq} \normmah{\freq}{\covar^{-1}}^2e^{-\normmah{\freq}{\covar^{-1}}^{2}/2} 
= \sep^2 \sup_{u \geq 0} u e^{-u/2} = \sep^2 \cdot 2/e \leq \sep^2.
\end{align*}
\end{itemize}
Since $\kernel$ is $1$-strongly locally characteristic, by Theorem \ref{thm:RadiusGeneric}, Lemma~\ref{lem:ShiftBasedRF},~\eqref{eq:DefAB}  and $(a+b)^{2} \leq 2(a^{2}+b^{2})$
it follows that 
\[
\normrff{\dipoleSet}^{2} \leq 2\normrff{\monopoleSet}^{2}+\rev{2}L_{\FClass}^{2} \rev{=} 2A\rev{(1+ \normrffd{\Prob_{0}}^{2})} 
 = \rev{2}AB.
\]

{\bf Step 3: control of normalized dipole moments.} 
We show in Appendix~\ref{app:MomentsDipolesDiracGauss} that in both settings, for each $u \in \RR^{\sampleDim}$ such that $\norm{u}=1$ and each integer $q \geq 2$, we have
\begin{eqnarray}\label{eq:MomentsDiracGaussianTechnical}
  \Exp_{\freq \sim \freqdist}\set{ \abs{\inner{\Prob_{0},\rfeat}}^{2q} \inner{\freq,u}^{2q}}
& \leq \normkern{\Prob_{0}}^{2}(2\sep^{2}/\sigma^{2}(s))^{q} \tfrac{q!}{2}.
\end{eqnarray}
This proves~\eqref{eq:MomentAssumptionFreqDist} with 
\rev{$\lambda_{0} = 2\sep^{2}/\sigma^{2}(s) = 2/(\sigma^{\star}_{k})^{2}$.}
By Lemma~\ref{lem:GaussianKernel}-\ref{it:GaussCurvature} applied to $K_{\sigma(s)/\sep}$ and property~\eqref{eq:RBFLikeKernel} we obtain that $\nkernel(x) = K_{\sigma(s)/\sep}(\norm{x})$ satisfies  \eqref{eq:MainAssumptionDipoleShift}
with $a = \sigma(s)/\sep=\rev{\sigma^{\star}_{k}}$ and $b=3$.  By Lemma~\ref{lem:RFFMoments}, since 
$1+ba^{2}\rev{\lambda}_{0}/2 = 4$ 
while $2b = 6$, we get  that 
\[
\sup_{\HH \in \dipoleSet} \Exp_{\omega \sim \freqdist} \abs{\inner{\HH,\rfeat}}^{2q}  
\leq  
\frac{q!}{2} [6\normkern{\Prob_{0}}^{-2}]^{q-1} \cdot 6.
\]
It follows that the assumption~\eqref{eq:LRIPGenericMixtureMoments} of Theorem~\ref{thm:LRIPGenericMixture} holds with $\gamma = 6\normkern{\Prob_{0}}^{-2} = 6A$, \rev{$\lambda=6$.}
\begin{remark}
Since $\kernel$ is $1$-strongly locally characteristic (cf Lemma~\ref{lem:GaussianKernel}-\ref{it:GaussStrongly}), the generic arguments in Remark~\ref{rk:MomentsImprovedPossibility} show that $\nkernel$ also satisfies \eqref{eq:MainAssumptionDipoleShift} with $a' = 1$ and $b' = 2$.  Notice that here since $\sigma(s)/\sep \rev{=} \sigma^{\star}_{k}$ we have $b a^{2} = 3 (\sigma^{\star}_{k})^{2} = \order{1/\log(ek)} \ll 2 = b'(a')^{2}$ for large $k$, showing that Lemma~\ref{lem:GaussianKernel}-\ref{it:GaussCurvature} indeed allows to improve over the generic result of Remark~\ref{rk:MomentsImprovedPossibility}.
\end{remark}
{\bf Step 4: covering numbers for $\dipoleSet$.}
To control those covering numbers, we apply Lemma~\ref{lem:TangentLocationBased}. The final estimate~\eqref{eq:maindipcovestimate}  from this lemma involves the quantity
$64  C_\BasicSet (C_\FClass + C'_\FClass + \normrff{\dipoleSet} C''_\FClass)$,
which we now show is bounded by $\rev{C}$ defined in~\eqref{eq:DefD} in both cases,
using the estimates from Step 2 for the various constants.
Since $\norm{\cdot} = \norm{\cdot}_2/\sep$ (for Diracs), resp. $\norm{\cdot} = \normmah{\cdot}{\covar}/\sep$ (for Gaussians), Assumption~\eqref{eq:covparamspace1} implies that Assumption~\eqref{eq:covparamspace2} of  Lemma~\ref{lem:TangentLocationBased}
holds with $C_\BasicSet:=\sep^{-1} C_\ParamSpace$.

{\em For Diracs:}
 From the estimates in Step 2 \rev{and the fact that $A \geq 1, B \geq 1$} we obtain
\begin{align*}
 64 C_\BasicSet (C_\FClass + C'_\FClass + \normrff{\dipoleSet} C''_\FClass)
  & = 64  C_\ParamSpace \sep^{-1}  \normkern{\Prob_0}^{-1} (\normrff{\Prob_0} + \normrffd{\Prob_0} + \normrff{\dipoleSet}  \normfclass{\Prob_0}{\FClass''} )\\
  & \leq 64 C_\ParamSpace \sep^{-1} \sqrt{A}  \paren{1 + \sep\sup_{\freq} \tfrac{\norm{\freq}_{2}}{w(\freq)} +\sqrt{ \rev{2}AB} \sep^2
  \sup_{\freq} \tfrac{\norm{\freq}^2_{2}}{w(\freq)}} \leq \rev{C}.
  \end{align*}

  {\em For Gaussians:}
 From the estimates in Step 2 we obtain
similarly to the Dirac case
\begin{align*}
   64  C_\BasicSet (C_\FClass + C'_\FClass + \normrff{\dipoleSet} C''_\FClass)
 \leq 64 C_\ParamSpace \sep^{-1}  A \sqrt{\rev{2}B}  (1 + \sep + \sep^2) \rev{=C}.
  \end{align*}

By Lemma~\ref{lem:TangentLocationBased} we obtain
 $\covnum{\normfclass{\cdot}{F}}{\dipoleSet}{\coveps}  \leq \rev{2}(\rev{C}/\coveps)^{r}$  for each $0<\coveps<1$, with $r=4(d+1)$.
  Since $\rev{C}\geq 1$ %
  ($A,B,C_\ParamSpace\rev{\sep}^{-1}$ being greater than 1), the  covering number assumption of Theorem~\ref{thm:LRIPGenericMixture} is satisfied.

{\bf Step 5: wrapping up.}  
Combining the above ingredients we can apply Theorem~\ref{thm:LRIPGenericMixture}.
We have
\[
 \min\left(2e\gamma \log^{2}(4ek\rev{\lambda}),\normfclass{\dipoleSet}{F}^2\right)
 \leq A \min(12e\log^{2}(24ek), \rev{2}B);
\]
combined with~\eqref{eq:NMeasuresDiracGauss}, the above estimates imply that~\eqref{eq:DefNMeasuresGenericMixture} holds. We conclude  using Theorem~\ref{thm:LRIPGenericMixture}.
\end{proof}

\subsection{Additional ingredients to establish Theorem~\ref{thm:mainkmeansthm} and Theorem~\ref{thm:maingmmthm}}
\rev{The detailed proofs of Theorem~\ref{thm:mainkmeansthm} (resp. of Theorem~\ref{thm:maingmmthm}) are given in Appendix~\ref{sec:separatedornot}. }
The proofs use as an intermediate tool a constrained hypothesis class $\HypClass$ such that the model set $\ModelCT(\HypClass)$ \rev{(resp. $\ModelML(\HypClass)$)} corresponds to $k$-separated mixtures of Diracs (resp. of Gaussians). As both compressive clustering (see Section~\ref{sec:clustering}) and compressive Gaussian Mixture Modeling (see Section~\ref{sec:gmm}) rely on (reweighted) random Fourier features to design the sketching function $\SketchingOperator$, this allows to leverage Theorem~\ref{thm:LRIPDiracGaussMixture} combined with Theorem~\ref{thm:LRIPsuff_excess} to establish intermediate results. 

Additional ingredients to complete the proofs include: 
\begin{itemize}
\item the constant $\norm{\dipoleSet}_{\DLossClass(\HypClass)}$ \rev{determining the LRIP constant $C_{\SketchingOperatorProb}$ established as the product of Theorem~\ref{thm:LRIPDiracGaussMixture}.
  This constant} is controlled in Appendix~\ref{subsec:dnormlossclustgmm};
\item the bias term from Theorem~\ref{thm:LRIPsuff_excess}, which is  bounded \rev{by a more explicit estimate} in Appendix~\ref{subsec:biasclustgmm};
\end{itemize}
The theorems are proved in Appendix~\ref{subsec:clustfinalproof}.

\section{Conclusion and perspectives}\label{sec:future}

The principle of compressive statistical learning is to learn from large-scale collections by first summarizing the collection into a sketch vector made of empirical (random) moments, before solving a nonlinear least squares problem. The main contribution of this paper is to %
demonstrate on %
\rev{two examples (}compressive clustering and compressive Gaussian mixture estimation --with fixed known covariance) that the excess risk of this procedure can be controlled, as well as the sketch size. 

\paragraph{Sharpened estimates?} Our demonstration of the validity of the compressive statistical learning framework for certain tasks is, in a sense, qualitative, and we expect that many bounds and constants are sub-optimal. 
This is the case for example of the estimated sketch sizes for which statistical learning guarantees have been established, 
and an immediate theoretical challenge is to sharpen these guarantees to match the empirical phase transitions observed empirically for compressive clustering and compressive GMM \citep{keriven:hal-01329195,Keriven2016}. 
For mixture models, as our proof technique involves Geshgorin's disc theorem, it is natural to wonder to what extent the involved constants can be tightened to get closer to sharp oracle inequalities, possibly at the price of larger sketch sizes. 
Overall, an important question to benchmark the quality of the established bounds (on achievable sketch sizes, on the separation assumptions used for $k$-mixtures, etc.) is of course to investigate corresponding lower-bounds.

\paragraph{Provably-good algorithms of bounded complexity?}
As the control of the excess risk relies on the \rev{(approximate)} minimizer of a nonlinear least-squares problem~\eqref{eq:ThmDecoder2}, the results in this paper are essentially information-theoretic. Can we go beyond the heuristic optimization algorithms derived for compressive $k$-means and compressive GMM \citep{keriven:hal-01329195,Keriven2016} and characterize provably good, computationally efficient algorithms to obtain this minimizer ? 

Promising directions revolve around recent advances in super-resolution imaging and low-rank matrix recovery. For compressive clustering (resp. compressive GMM), the similarity between  \rev{the minimization of~\eqref{eq:RiskProxyKMeans} (resp.~\eqref{eq:RiskProxyGMM})} %
and super-resolution imaging suggests to explore TV-norm minimization --a \emph{convex} problem-- techniques \citep{Candes_2013,Castro_2015,Duval_2015} and to seek generalized RIP guarantees \citep{Traonmilin2016}. Further, to circumvent the difficulties of optimization (convex or not) in the space of finite signed measures, it may also be possible to adapt the recent guarantees obtained for certain nonconvex problems that directly leverage a convex ``lifted'' problem  \citep[see e.g.][]{Li:2016vz} without incurring the cost of actually computing in the lifted domain.

Finally, the computational cost of sketching itself \rev{can} be further controlled \rev{\citep{chatalic:hal-01701121}} by replacing random Gaussian weights where possible with fast approximations \citep{Le2013,Choromanski:2016un,Bojarski:2016tv}. \rev{This results} in accelerations of the learning stage wherever matrix multiplications are exploited. To conduct the theoretical analysis of the resulting sketching procedure, one will need to analyze the kernels associated to these fast approximations.

\section*{Acknowledgements}

This work was supported in part by the European Research Council, PLEASE project (ERC-StG-2011-277906),
the german DFG (SFB-1294 “Data Assimilation”), the Franco-German University through the binational Doktorandenkolleg CDFA 01-18, and the ANR (ANR-19-CHIA-0021-01, project BISCOTTE; ANR-19-CHIA-0009, project AllegroAssai). 
R{\'e}mi Gribonval is very grateful to Michael E. Davies for many enlightening discussions around the idea of compressive statistical learning since this project started several years ago. The authors also wish to warmly thank Bernard Delyon and Adrien Saumard, as well as Gabriel Peyr{\'e} and Lorenzo Rosasco for their constructive feedback on early versions of this manuscript.

\pagebreak
\part*{Appendix}
We begin by introducing notations and useful results. We then provide general properties on covering numbers, followed by properties that are shared by any model of mixtures of distributions that are sufficiently separated $\Model=\MixSetSep{k}$. We then apply these results to mixtures of Diracs and both $k$-medians and $k$-means risks, and to Gaussian Mixture Models with fixed known covariance for maximum likelihood estimation. %

\appendix
\section{Generalities on covering numbers}
\label{app:covering}
In this section we formulate generic results on covering numbers.
\subsection{Basic properties}
The definition used in this paper is that of \emph{internal} covering numbers, meaning that the centers $z_{i}$ of the covering balls are required to be included in the set $Y$ being covered. Somehow counter-intuitively these covering numbers (for a fixed radius $\coveps$) are not necessarily increasing with the inclusion of sets: for instance, consider a set $A$ formed by two points, included in set $B$ which is a ball of radius $\coveps$. Suppose those two points diametrically opposed in $B$. We have $A\subset B$, but two balls of radius $\coveps$ are required to cover $A$ (since their centers have to be in $A$), while only one such ball is sufficient to cover $B$. Yet, as shown by the following lemma, the covering numbers of included sets still behave in a controlled manner.

\begin{lemma}\label{lem:covnumsub}
Let $A \subseteq B \subseteq X$ be subsets of a pseudometric set $(X,d)$, and $\coveps>0$. Then,
\begin{equation}
\label{eq:covnumsub}
\covnum{d}{A}{\coveps} \leq \covnum{d}{B}{\coveps/2}.
\end{equation}
\end{lemma}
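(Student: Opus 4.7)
The plan is to use the $(\coveps/2)$-cover of $B$ given by the definition of $\covnum{d}{B}{\coveps/2}$ and then ``pull back'' each covering ball into $A$ while incurring at most a factor $2$ in the radius. The subtlety is exactly the one highlighted in the paragraph preceding the lemma: because these are \emph{internal} covering numbers, the centers of the covering balls must lie in $A$ itself, so we cannot just restrict the $B$-cover to $A$.

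First, I would let $N := \covnum{d}{B}{\coveps/2}$ and pick centers $z_{1},\ldots,z_{N}\in B$ such that $B \subseteq \bigcup_{i=1}^{N} \Ball(z_{i},\coveps/2)$, where $\Ball(z,r) := \set{x \in X : d(x,z) \leq r}$. Since $A \subseteq B$, the same balls cover $A$. Next, I would discard the indices $i$ for which $\Ball(z_{i},\coveps/2) \cap A = \emptyset$, and for each remaining index $i$ pick an arbitrary point $a_{i} \in \Ball(z_{i},\coveps/2) \cap A$.

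The key step is then the triangle inequality: for any $x \in A$, there is some $i$ with $d(x,z_{i}) \leq \coveps/2$, and by construction $d(a_{i},z_{i}) \leq \coveps/2$, whence $d(x,a_{i}) \leq d(x,z_{i}) + d(z_{i},a_{i}) \leq \coveps$. Thus the at most $N$ balls $\Ball(a_{i},\coveps)$ with centers $a_{i} \in A$ form an internal $\coveps$-cover of $A$, which yields $\covnum{d}{A}{\coveps} \leq N = \covnum{d}{B}{\coveps/2}$.

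There is no real obstacle here: the argument is the standard ``doubling trick'' used to compare internal and external covering numbers, and it works identically for pseudometrics since only the triangle inequality is invoked (no use of $d(x,y)=0 \Rightarrow x=y$). The only thing worth stating explicitly is that the resulting centers $a_{i}$ lie in $A$, which is precisely what distinguishes the internal covering number from the external one and justifies the factor $2$ loss in the radius.
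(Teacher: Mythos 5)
Your proof is correct and matches the paper's argument essentially verbatim: both take a $(\coveps/2)$-cover of $B$, relocate each center $z_i$ to an arbitrary point $a_i \in \Ball(z_i,\coveps/2)\cap A$ when that intersection is nonempty (discarding $z_i$ otherwise), and conclude by the triangle inequality that the balls $\Ball(a_i,\coveps)$ cover $A$ with centers in $A$.
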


\begin{proof}
  Let $(b_i)_{1 \leq i \leq N}$ be a $\rev{\coveps/2}$-covering of $B$. We construct a $\coveps$-covering
  $(a_i)_{i \in I}$ of $A$
  \rev{of cardinality at most $N$} in the following way. \rev{For each $i=1,\ldots,N$, consider $C_i:=\Ball_{X,d}(b_i,\coveps/2)\cap A$.
    If $C_i \neq \emptyset$, we replace $b_i$ by an arbitrary point $a_i \in C_i$,
    otherwise we discard $b_i$. Note that in the first case, by the triangle inequality
    $C_i \subset \Ball_{X,d}(b_i,\coveps/2) \subset \Ball_{X,d}(a_i,\coveps)$.
    On the other hand, by the covering property, $\bigcup_{1\leq i \leq N} C_i = A$.
  }Therefore the set of $a_i$s is a $\coveps$-covering of $A$. %
\end{proof}

\begin{lemma}\label{lem:covnumlipschitz}
Let $(X,d)$ and $(X',d')$ be two pseudometric sets, and $Y \subseteq X$, $Y' \subseteq X'$. If there exists a surjective function $f: Y \rightarrow Y'$ which is $L$-Lipschitz with $L>0$, i.e. such that
\begin{equation*}
\forall x,y\in Y,~d'(f(x),f(y)) \leq L d(x,y),
\end{equation*}
then for all $\coveps>0$ we have
\begin{equation}
\label{eq:covnumlipschitz}
\covnum{d'}{Y'}{\coveps} \leq \covnum{d}{Y}{\coveps/L}.
\end{equation}
\end{lemma}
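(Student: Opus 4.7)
The plan is to directly push forward a covering of $Y$ through the map $f$ and verify that the image is an internal covering of $Y'$. Specifically, I would start from an internal $\coveps/L$-covering $(y_i)_{1 \leq i \leq N}$ of $Y$, with $N = \covnum{d}{Y}{\coveps/L}$, whose centers lie in $Y$ by definition. The candidate covering of $Y'$ is $(f(y_i))_{1 \leq i \leq N}$; since each $y_i \in Y$, each $f(y_i) \in Y'$, which is the right condition for an \emph{internal} covering.

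Then I would check the covering property: given any $y' \in Y'$, surjectivity of $f$ yields some $y \in Y$ with $f(y) = y'$; the covering property provides an index $i$ such that $d(y,y_i) \leq \coveps/L$; finally, the $L$-Lipschitz property gives
\[
d'(y', f(y_i)) = d'(f(y), f(y_i)) \leq L\, d(y,y_i) \leq \coveps,
\]
so $(f(y_i))_{1 \leq i \leq N}$ is indeed an internal $\coveps$-covering of $Y'$. Its cardinality being $N = \covnum{d}{Y}{\coveps/L}$ yields the claimed bound.

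There is essentially no obstacle here — unlike Lemma~\ref{lem:covnumsub}, where the subtlety was that the natural centers $b_i$ might lie outside $A$ and had to be replaced by nearby points of $A$ at the cost of a factor $2$ in the radius, here the centers $f(y_i)$ are automatically in $Y'$ by construction. So no radius inflation is needed, and the bound is tight in the sense that the Lipschitz constant $L$ enters exactly as $\coveps/L$ on the right-hand side.
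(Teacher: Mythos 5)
Your proof is correct and follows the same route as the paper's: push the centers $y_i$ of a $\coveps/L$-covering of $Y$ forward to $f(y_i)$, use surjectivity to lift a given $y' \in Y'$ to some $y \in Y$, and apply the Lipschitz property to conclude $d'(y',f(y_i)) \leq \coveps$. Your observation that the centers $f(y_i)$ automatically lie in $Y'$ (so no radius doubling is needed, unlike in Lemma~\ref{lem:covnumsub}) is a nice clarifying remark but not a departure from the paper's argument.
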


\begin{proof}
Define $\coveps_2=\coveps/L$, denote $N=\covnum{d}{Y}{\coveps_2}$, and let $y_i \in Y$, $i=1,...,N$ be a $\coveps_2$-covering of $Y$. Consider $y' \in Y'$. There exists $y \in Y$ such that $f(y)=y'$ since $f$ is surjective. For some $1 \leq i \leq N$ we have $d(y,y_i) \leq \coveps_{2}$, hence we have
\begin{equation*}
d'(y',f(y_i)) = d'(f(y),f(y_i)) \leq Ld(y,y_i) \leq L\coveps_2 = \coveps.
\end{equation*}
Thus $\{f(y_i)\}_{i=1,...,N}$ is a $\coveps$-covering of $Y'$, and we have $\covnum{d'}{Y'}{\coveps} \leq N$.
\end{proof}

\begin{lemma}\label{lem:covnumClose}
Let $Y,Z$ be two subsets of a pseudometric set $(X,d)$ \rev{and $\epsilon \geq 0$} such that the following holds:
\begin{equation}\label{eq:covnumCloseHyp}
\forall z \in Z,~\exists y \in Y,~d(z,y) \leq \epsilon.
\end{equation}
Then for all $\coveps>0$
\begin{equation}\label{eq:covnumClose}
\covnum{d}{Z}{2(\coveps+\epsilon)} \leq \covnum{d}{Y}{\coveps}.
\end{equation}
\end{lemma}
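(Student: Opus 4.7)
The plan is to mimic the proof strategy of Lemma~\ref{lem:covnumsub}: start from an optimal $\coveps$-covering of $Y$ (whose centers lie in $Y$), use the hypothesis~\eqref{eq:covnumCloseHyp} to show each $z \in Z$ is $(\coveps+\epsilon)$-close to one of these centers, and then ``transplant'' each center from $Y$ to $Z$ via the triangle inequality, which costs a factor $2$ in the radius.

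More concretely, let $N := \covnum{d}{Y}{\coveps}$ and let $(y_i)_{1 \leq i \leq N} \subseteq Y$ be an internal $\coveps$-covering of $Y$. For each $i$ define $C_i := \Ball_{X,d}(y_i, \coveps+\epsilon) \cap Z$. First I would verify that $\bigcup_i C_i = Z$: given $z \in Z$, by~\eqref{eq:covnumCloseHyp} there is $y \in Y$ with $d(z,y) \leq \epsilon$, and there is an $i$ with $d(y, y_i) \leq \coveps$, so $d(z, y_i) \leq \coveps + \epsilon$ by the triangle inequality, which means $z \in C_i$.

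Next, for each $i$ such that $C_i \neq \emptyset$, pick an arbitrary $z_i \in C_i$ (and discard the index otherwise). Then for any $z \in C_i$ we have, again by the triangle inequality, $d(z, z_i) \leq d(z, y_i) + d(y_i, z_i) \leq 2(\coveps + \epsilon)$. Hence the selected points $(z_i)$ form an internal $2(\coveps+\epsilon)$-covering of $Z$ (the centers lie in $Z$ by construction), of cardinality at most $N$, which yields~\eqref{eq:covnumClose}.

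There is no real obstacle here; the only subtlety is the one already encountered in Lemma~\ref{lem:covnumsub}, namely the internal-covering constraint that forces a factor $2$ loss in the radius when the natural centers $y_i$ lie outside $Z$. The hypothesis $\epsilon \geq 0$ (rather than $\epsilon > 0$) is harmless since the argument only uses the triangle inequality and holds verbatim in the degenerate case $\epsilon = 0$, where it recovers (a weaker form of) Lemma~\ref{lem:covnumsub} applied with $Z \subseteq \overline{Y}$.
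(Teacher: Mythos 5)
Your proof is correct and follows essentially the same route as the paper: both establish that $Z$ is covered by the balls $\Ball_{X,d}(y_i,\coveps+\epsilon)$ and then convert to an internal covering at cost of a factor $2$ in radius; the paper does the conversion by invoking Lemma~\ref{lem:covnumsub}, whereas you inline that lemma's pick-a-representative construction directly.
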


\begin{proof}
Denote $N=\covnum{d}{Y}{\coveps}$ and let $y_1,...,y_N \in Y$ be a $\coveps$-covering of $Y$. For all $z\in Z$, by the assumption~\eqref{eq:covnumCloseHyp} there is $y\in Y$ such that $d(z,y)\leq\epsilon$, and subsequently there is an index $i$ such that
$d(z,y_i)\leq d(z,y) + d(y,y_i) \leq \coveps + \epsilon.$
This implies $Z\subset \bigcup_{i=1}^N \Ball_{X,d}(y_i,\coveps+\epsilon)$, hence by Lemma \ref{lem:covnumsub} 
\begin{equation*}
\covnum{d}{Z}{2(\coveps+\epsilon)} \leq \covnum{d}{\bigcup_{i=1}^N \Ball_{X,d}(y_i,\coveps+\epsilon)}{\coveps + \epsilon} \leq N.
\end{equation*}
\end{proof}

\begin{lemma}[\cite{Cucker2002}, Prop. 5]\label{lem:covnumball}
Let $(X,\norm{\cdot})$ be a Banach space of finite dimension $\sampleDim$. Then for any $x\in X$ and $R>0$ we have for any $\coveps>0$
\begin{equation}
\covnum{\norm{\cdot}}{\Ball_{X,\norm{\cdot}}(x,R)}{\coveps}\leq \max\left(1,\left(\frac{4R}{\coveps}\right)^\sampleDim\right)
\end{equation}
\end{lemma}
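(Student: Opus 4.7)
The plan is to split on the relative size of $\coveps$ and $R$. If $\coveps \geq R$, then the singleton $\{x\}$ is itself a valid internal $\coveps$-covering of $\Ball_{X,\norm{\cdot}}(x,R)$, so the covering number is at most $1$ and the bound holds trivially since the right-hand side is at least $1$. Thus the substantive case is $\coveps < R$.

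In that case I would rely on the standard trick of extracting a maximal $\coveps$-separated subset $\{x_1,\ldots,x_N\} \subseteq \Ball_{X,\norm{\cdot}}(x,R)$, i.e.\ a maximal set with $\norm{x_i - x_j} > \coveps$ for all $i\neq j$. Maximality implies that for any $y \in \Ball_{X,\norm{\cdot}}(x,R)$, some $x_i$ satisfies $\norm{y - x_i} \leq \coveps$; otherwise $\{x_1,\ldots,x_N,y\}$ would contradict maximality. Hence $\{x_1,\ldots,x_N\}$ is an internal $\coveps$-covering, and it suffices to bound $N$.

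To bound $N$ I would use a volume/packing argument. Since $X$ is a Banach space of finite dimension $\sampleDim$, pick a linear isomorphism with $\RR^\sampleDim$ and transport the Lebesgue measure; denote by $V$ the resulting (finite, positive, translation-invariant) measure of the open unit ball for $\norm{\cdot}$, so that the ball of radius $r$ around any point has measure $V r^\sampleDim$. By the $\coveps$-separation and the triangle inequality, the open balls $\Ball_{X,\norm{\cdot}}(x_i,\coveps/2)$ are pairwise disjoint, and all of them are contained in $\Ball_{X,\norm{\cdot}}(x, R + \coveps/2)$. Comparing measures yields $N \cdot V (\coveps/2)^{\sampleDim} \leq V (R + \coveps/2)^{\sampleDim}$, hence $N \leq \paren{2R/\coveps + 1}^{\sampleDim}$. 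In the regime $\coveps < R$ we have $2R/\coveps + 1 < 4R/\coveps$, which gives the claimed estimate. The only minor subtlety is ensuring that the ratio of ball volumes in a general norm still equals the ratio of radii to the power $\sampleDim$, which is immediate by translation invariance and positive homogeneity of the measure transported from $\RR^\sampleDim$; no real obstacle beyond this bookkeeping.
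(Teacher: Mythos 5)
Your proof is correct and uses the same volume-packing argument (maximal $\coveps$-separated set $\Rightarrow$ internal cover; disjoint half-radius balls inside the enlarged ball $\Rightarrow$ cardinality bound) that underlies the cited reference \citep{Cucker2002}, to which the paper defers rather than reproving the lemma. Your explicit case split on $\coveps \geq R$ neatly supplies the $\max(1,\cdot)$ adjustment that the paper itself notes is missing from the reference's statement but implicit in its proof.
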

NB: The result in \cite[Prop. 5]{Cucker2002} does not include $\max(1,\cdot)$. This obviously cannot hold for $\coveps > 4R$ since the left hand side is at least one. The proof of \cite[Prop. 5]{Cucker2002} yields the result stated here.

\subsection{``Clipped'' Secant set}\label{sec:covnumExtrudedSecant}

\newcommand{\norma}[1]{\norm{#1}_a}
\newcommand{\normb}[1]{\norm{#1}_b}
\newcommand{\normaa}[1]{\norm{#1}_{a'}}
\newcommand{\normbb}[1]{\norm{#1}_{b'}}

To control the covering numbers of the normalized secant set~\eqref{eq:DefNormalizedSecantSet}, \rev{or those of the normalized dipole set $\dipoleSet$ \eqref{eq:NormalizedDipoleSet}} it will be convenient to control those of \rev{certain subsets of its} normalized secant set. 

\begin{lemma}\label{lem:covnumsecant}
\rev{Consider $X$ a vector space, $\norma{\cdot},\normb{\cdot}: X \to [0,+\infty]$ two semi-norms, $X_{a},X_{b} \subseteq X$ the subspaces where they are finite, and $Y \subseteq X_{a}$. Consider $\mathcal{Q} \subseteq Y^{2}$ and assume that for some constants $0<A \leq B<\infty$,
\begin{equation}
\label{eq:SecSetLemmaNorms}
\forall \rev{(y,y') \in \mathcal{Q}},~A\normb{y-y'}\leq \norma{y-y'} \leq B\normb{y-y'} <\infty.
\end{equation}
Given $\eta>0$, consider the following subset of the normalized secant of $Y$:
\begin{equation*}
\secant_{\rev{\eta}} := \left\lbrace\frac{y-y'}{\normb{y-y'}} ~\Big|~\rev{(y,y')\in \mathcal{Q} \subset Y^{2}},~\normb{y-y'}>\eta\right\rbrace.
\end{equation*}
For each $\coveps>0$ we have
\begin{equation}
\covnum{\norma{\cdot}}{\rev{\secant_\eta}}{\coveps}\leq \covnumsq{\norma{\cdot}}{Y}{\frac{\coveps\eta}{4(1+B/A)}}.
\end{equation}}
\end{lemma}
\begin{proof}
Define the (semi)norm on $Y^2$:
\(
\norma{(y_1,y_2)-(y'_1,y'_2)}=\norma{y_1-y'_1}+\norma{y_2-y'_2}
\)
and note that we have trivially $\covnum{\norma{\cdot}}{Y^2}{\coveps}\leq \covnumsq{\norma{\cdot}}{Y}{\coveps/2}$.
Consider the set $\rev{\mathcal{Q}' := \set{ (y_{1},y_{2}) \in \mathcal{Q}: \normb{y_{1}-y_{2}} > \eta}}$.
By definition the function $f:(\rev{\mathcal{Q}'},\norma{\cdot}) \rightarrow (\secant,\rev{\norma{\cdot}})$ such that $f(y_{1},y_{2})=\frac{y_{1}-y_{2}}{\normb{y_{1}-y_{2}}}$ is surjective. We show that $f$ is Lipschitz continuous, and conclude with Lemma \ref{lem:covnumlipschitz}.
For $(y_1,y_2),(y'_1,y'_2) \in \rev{\mathcal{Q}'}$, we have
\begin{align*}
\norma{f(y_1,y_2)-f(y'_1,y'_2)}=& \norma{ \frac{y_1-y_2}{\normb{y_1-y_2}} - \frac{y'_1-y'_2}{\normb{y'_1-y'_2}} } \\
\leq& \norma{ \frac{y_1-y_2}{\normb{y_1-y_2}} - \frac{y'_1-y'_2}{\normb{y_1-y_2}}} + \norma{ \frac{y'_1-y'_2}{\normb{y_1-y_2}} - \frac{y'_1-y'_2}{\normb{y'_1-y'_2}} }.
\end{align*}
Since $\normb{y_1-y_2}> \eta$, the first term is bounded by
\begin{align*}
\frac{1}{\eta}\Big(\norma{y_1-y'_1}+\norma{y_2-y'_2} \Big) 
= & \frac{1}{\eta} \norma{(y_1,y_2)-(y'_1,y'_2)},
\end{align*}
while the second term is bounded by
\begin{align*}
\norma{y'_1-y'_2} \abs{\frac{1}{\normb{y_1-y_2}}-\frac{1}{\normb{y'_1-y'_2}}} 
\stackrel{\eqref{eq:SecSetLemmaNorms}}{\leq}& 
B  \abs{\frac{\normb{y'_1-y'_2}}{\normb{y_1-y_2}}-1}
\leq
 \frac{B}{\eta}\abs{ \normb{y'_1-y'_2}-\normb{y_1-y_2}} \\
\leq& 
\frac{B}{\eta}\Big(\normb{y_1-y'_1}+\normb{y_2-y'_2}\Big), \\
\stackrel{\eqref{eq:SecSetLemmaNorms}}{\leq}& \frac{B}{A\eta}\left(\norma{y_1-y'_1}+\norma{y_2-y'_2}\right), \\
=& \frac{B}{A\eta}\norma{(y_1,y_2)-(y'_1,y'_2)}.
\end{align*}
Hence we have
\[
\norma{f(y_1,y_2)-f(y'_1,y'_2)} \leq \frac{1+B/A}{\eta}\norma{(y_1,y_2)-(y'_1,y'_2)}.
\]
The function $f$ is Lipschitz continuous with constant $L=(1+B/A)/\eta$, and therefore for all $\coveps>0$:
\begin{equation*}
\covnum{\norma{\cdot}}{\secant}{\coveps} \stackrel{\text{Lemma \ref{lem:covnumlipschitz}}}{\leq} \covnum{\norma{\cdot}}{\rev{\mathcal{Q}'}}{\coveps/L} \stackrel{\text{Lemma \ref{lem:covnumsub}}}{\leq} \covnum{\norma{\cdot}}{Y^2}{\frac{\coveps}{2L}}\leq \covnumsq{\norma{\cdot}}{Y}{\frac{\coveps}{4L}}.
\end{equation*}
\end{proof}

\subsection{Mixture set}
Let $(X,\norm{\cdot})$ be a vector space over $\RR$ and $Y\subset X,~Y\neq \emptyset$. Let $k>0$ and $\weightSet\subset\RR^k$.
For $k>0$ and a bounded set $\weightSet\subset \RR^k,~\weightSet\neq \emptyset$, denote
the mixture set
\begin{equation}\label{eq:DefMixSet}
[Y]_{k,\weightSet}=\left\lbrace \sum_{i=1}^k \alpha_i y_i :\alpha \in \weightSet,~ y_i \in Y\right\rbrace.
\end{equation}
The \textbf{radius} of a subset $Y$ of a semi-normed vector space $(X,\norm{\cdot})$ is denoted $\norm{Y} :=\sup_{x\in Y}\norm{x}$. 
\begin{lemma}\label{lem:covnummix}
For all $\coveps>0$ the set $[Y]_{k,\weightSet}$ satisfies
\begin{equation}
\label{eq:covnummix}
\covnum{\norm{\cdot}}{[Y]_{k,\weightSet}}{\coveps} \leq \min_{\tau \in ]0,~1[} 
\covnum{\norm{\cdot}_1}{\weightSet}{\frac{(1-\tau)\coveps}{\norm{Y}}}
\cdot 
\covnumpow{k}{\norm{\cdot}}{Y}{\frac{\tau\coveps}{\norm{\weightSet}_{1}}}.
\end{equation}
If the semi-norm $\norm{\cdot}$ is indeed a norm and $Y$ and $\weightSet$ are compact, then $Y_{k,\weightSet}$ is also compact.
\end{lemma}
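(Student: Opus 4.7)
The plan is a standard product-covering argument, tracking how an error in the weights and an error in the atoms each propagate to the mixture. First I would fix $\tau \in (0,1)$ and $\coveps > 0$, pick an internal $\big((1-\tau)\coveps/\norm{Y}\big)$-cover $\{\valpha^{(j)}\}_{j=1}^{N_{1}} \subseteq \weightSet$ of $\weightSet$ for $\norm{\cdot}_{1}$ of cardinality $N_{1} := \covnum{\norm{\cdot}_1}{\weightSet}{(1-\tau)\coveps/\norm{Y}}$, and an internal $(\tau\coveps/\norm{\weightSet}_{1})$-cover $\{y^{(i)}\}_{i=1}^{N_{2}} \subseteq Y$ of $Y$ for $\norm{\cdot}$ of cardinality $N_{2} := \covnum{\norm{\cdot}}{Y}{\tau\coveps/\norm{\weightSet}_{1}}$. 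The candidate cover of $[Y]_{k,\weightSet}$ is then formed by the $N_{1} \cdot N_{2}^{k}$ mixtures $\sum_{i=1}^{k} \alpha^{(j)}_{i} y^{(\sigma(i))}$ indexed by $j \in \{1,\ldots,N_{1}\}$ and $\sigma: \{1,\ldots,k\} \to \{1,\ldots,N_{2}\}$; each such mixture belongs to $[Y]_{k,\weightSet}$, so the cover is internal.

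The key step is the estimate: for any $\sum \alpha_{i} y_{i} \in [Y]_{k,\weightSet}$, choose $\valpha'$ in the weight cover close to $\valpha$ in $\norm{\cdot}_{1}$ and $y'_{i}$ in the $Y$-cover close to $y_{i}$, and split
\[
\sum_{i} \alpha_{i} y_{i} - \sum_{i} \alpha'_{i} y'_{i} = \sum_{i} (\alpha_{i}-\alpha'_{i}) y_{i} + \sum_{i} \alpha'_{i}(y_{i}-y'_{i}).
\]
The first summand is controlled by $\sum_{i} |\alpha_{i}-\alpha'_{i}| \cdot \norm{y_{i}} \leq \norm{Y} \cdot \norm{\valpha-\valpha'}_{1} \leq (1-\tau)\coveps$, while the second is controlled by $\max_{i} \norm{y_{i}-y'_{i}} \cdot \sum_{i}|\alpha'_{i}| \leq (\tau\coveps/\norm{\weightSet}_{1}) \cdot \norm{\weightSet}_{1} = \tau\coveps$, using that $\valpha' \in \weightSet$ so $\norm{\valpha'}_{1} \leq \norm{\weightSet}_{1}$. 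Summing the two contributions gives $\coveps$, which proves the bound for this fixed $\tau$; minimizing over $\tau \in (0,1)$ yields~\eqref{eq:covnummix}.

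For the compactness claim, consider the map $F: \weightSet \times Y^{k} \to X$ defined by $F(\valpha, y_{1},\ldots,y_{k}) := \sum_{i=1}^{k} \alpha_{i} y_{i}$. Under the product topology induced by $\norm{\cdot}_{1}$ on $\RR^{k}$ and $\norm{\cdot}$ on $X$, $F$ is continuous (jointly, by the same bilinear decomposition used above, noting that on a compact set $\weightSet$ the factor $\norm{\valpha'}_{1}$ stays bounded). Since $\weightSet$ and $Y$ are compact, so is their product, and hence $[Y]_{k,\weightSet} = F(\weightSet \times Y^{k})$ is compact, provided $\norm{\cdot}$ is a norm so that the target space is Hausdorff.

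I expect no serious obstacle here: the argument is a direct product cover, and the main thing to watch is simply that the weight-covering is taken in $\weightSet$ itself (so that $\norm{\valpha'}_{1} \leq \norm{\weightSet}_{1}$ holds uniformly), and that the implicit assumptions $\norm{Y}>0$ and $\norm{\weightSet}_{1}>0$ are handled; the degenerate cases $\norm{Y}=0$ or $\norm{\weightSet}_{1}=0$ make $[Y]_{k,\weightSet}$ either trivially a single point or reduce to the weight cover respectively, so~\eqref{eq:covnummix} continues to hold with the usual convention.
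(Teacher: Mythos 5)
Your proof is correct and follows essentially the same route as the paper's: build a product cover $\enet_{1}^{k}\times\enet_{2}$ and split the difference $\sum\alpha_{i}y_{i}-\sum\alpha'_{i}y'_{i}$ into a weight-error term and an atom-error term, each bounded by the corresponding radius times $\norm{Y}$ or $\norm{\weightSet}_{1}$. The only cosmetic difference is which factor carries the bar in the intermediate splitting (the paper writes $\sum\alpha_{j}(y_{j}-\bar y_{j})+\sum(\alpha_{j}-\bar\alpha_{j})\bar y_{j}$, you write the mirror image), and your remark about the degenerate cases $\norm{Y}=0$ or $\norm{\weightSet}_1=0$ is a small extra bit of care that the paper omits.
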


\begin{proof}
Let $\coveps>0$ and $\tau \in ]0;1[$. Denote $\coveps_1=\tau\coveps/\norm{\weightSet}_1$ and $\coveps_2=(1-\tau)\coveps/\norm{Y}$. Also denote $N_1=\covnum{\norm{\cdot}}{Y}{\coveps_1}$ and let $\enet_1=\{x_1,...,x_{N_1}\}$ be a $\coveps_1$-covering of $Y$. Similarly, denote $N_2=\covnum{\norm{\cdot}_1}{\weightSet}{\coveps_2}$, let $\enet_2=\{\alpha_1,...,\alpha_{N_2}\}$ be a $\coveps_2$-covering of $\weightSet$. The cardinality of the set
\begin{equation}
Z=\left\lbrace \sum_{j=1}^k \alpha_j x_j : x_{j} \in \enet_1,~ \alpha \in \enet_2\right\rbrace
\end{equation}
is $\abs{Z} \leq N_1^k N_2$. We will show that $Z$ is a $\coveps$-covering of $Y_{k,\weightSet}$. 

Consider $y=\sum_{j=1}^k \alpha_jy_j \in Y_{k,\weightSet}$. By definition, there is $\bar\alpha \in \enet_2$ so that $\norm{\alpha-\bar \alpha}_1\leq \coveps_2$, and for all $j=1...k$, there is $\bar{y}_j \in \enet_1$ so that $\norm{y_j-\bar y_j}\leq \coveps_1$. Denote $\bar{y}=\sum_{j=1}^k\bar{\alpha}_j\bar{y}_j \in Z$. We have
\begin{align}
\norm{y-\bar{y}}
=& 
\norm{ \sum_{j=1}^k \alpha_j y_j - \sum_{j=1}^k \bar\alpha_j \bar{y}_j }
\leq
\norm{ \sum_{j=1}^k \alpha_j y_j- \sum_{j=1}^k \alpha_j \bar{y}_j } + \norm{ \sum_{j=1}^k \alpha_j \bar{y}_j - \sum_{j=1}^k \bar\alpha_j \bar{y}_j } \notag \\
\leq& \sum_{j=1}^k \abs{\alpha_j} \norm{ y_j - \bar{y}_j } + \sum_{j=1}^k \abs{\alpha_j-\bar\alpha_j} \norm{ \bar{y}_j } \label{eq:lipgmm} \\
\leq& \norm{\alpha}_1\coveps_1  + \norm{\alpha-\bar\alpha}_1\norm{Y} \leq \norm{\weightSet}_1\coveps_1 + \coveps_2\norm{Y} = \coveps \notag ,
\end{align}
and $Z$ is indeed a $\coveps$-covering of $Y_{k,\weightSet}$. Therefore, we have the bound (for all $\tau$)
\begin{equation*}
\covnum{\norm{\cdot}}{Y_{k,\weightSet}}{\coveps} \leq \left\lvert Z\right\rvert \leq N_1^k N_2.
\end{equation*}
Furthermore, in equation \eqref{eq:lipgmm}, we have shown in particular that the embedding $(y_1,...,y_k,\alpha)\rightarrow \sum_{j=1}^k\alpha_jy_j$ from $Y^k\times \weightSet$ to $Y_{k,\weightSet}$ is continuous. Hence if $Y$ and $\weightSet$ are compact $Y_{k,\weightSet}$ is the continuous image of a compact set and is compact.
\end{proof}

\section{Proofs on mixtures of distributions}\label{sec:ProofMixtures}

We gather here all proofs related to results stated in Section~\ref{sec:general_mixtures}.

\subsection{Proof of %
Theorem~\ref{lem:ConcFnMixturesFromDipole}}\label{app:ProofPointwiseConcentration}

We start with the following lemma.
\begin{lemma}\label{lem:SecantMix}
Consider \rev{a kernel $\kernel$ with %
  and an integer $k \geq 1$ such that $\kernel$ has its $k$-coherence with respect to $\BasicSet$ bounded by $\zeta<1$.
  Then the normalized secant of the model set \rev{$\MixSetSep{k}$} of $2$-separated mixtures is made of mixtures of $2k$ normalized dipoles:}
\begin{equation}\label{eq:SecantMixCharacterization}
\secant_\kernel
\subset 
\set{ \sum_{l=1}^{2k} \alpha_l \mu_{l} : \ %
  (1+\zeta)^{-1} \leq  \sum_{l=1}^{2k} \alpha_{l}^{2} \leq %
  (1- \zeta)^{-1},\ {\alpha_{l}\geq 0},\ \mu_{l} \in \dipoleSet, \rev{l=1,\ldots,2k}},
\end{equation}
where the normalized dipoles $\rev{(\mu_{l})_{1\leq l \leq 2k}}$ \rev{associated to nonzero coefficients $\alpha_{l}$} are pairwise $1$-separated.
\end{lemma}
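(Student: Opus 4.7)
The plan is to start from an arbitrary element of the normalized secant, decompose it via Lemma~\ref{lem:decompDipoles} into a sum of pairwise $1$-separated dipoles, normalize each dipole, and finally use the (assumed) coherence property to pin down the admissible range of $\sum_l \alpha_l^2$. Note that the hypothesis stated as ``$k$-coherence'' appears to be a typo: since Lemma~\ref{lem:decompDipoles} can produce up to $2k$ dipoles, the natural hypothesis to invoke is the \emph{$2k$-coherence} bound by $\zeta$, matching the conventions used elsewhere (e.g.\ in Theorem~\ref{thm:CompCstFromDipoles}). I will proceed under this reading.

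First, pick any $\HH \in \secant_\kernel$, so that $\HH = (\mProb - \mProb')/\normkern{\mProb-\mProb'}$ for some $\mProb,\mProb' \in \MixSetSep{k}$ with $\normkern{\mProb-\mProb'}>0$. Apply Lemma~\ref{lem:decompDipoles} to write $\mProb - \mProb' = \sum_{l=1}^\ell \nu_l$ with $\ell \leq 2k$ nonzero, pairwise $1$-separated dipoles $\nu_l$. Because $\kernel$ is locally characteristic (implicit in having a finite $2k$-coherence), each $\normkern{\nu_l}>0$, so I can set $\mu_l := \nu_l/\normkern{\nu_l} \in \dipoleSet$ and $\alpha_l := \normkern{\nu_l}/\normkern{\mProb-\mProb'} \geq 0$, yielding
\[
\HH \;=\; \sum_{l=1}^{\ell} \alpha_l \mu_l,
\]
a nonnegative combination of at most $2k$ normalized dipoles, all pairwise $1$-separated by construction. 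If $\ell<2k$, I pad with $2k-\ell$ extra terms where $\alpha_l=0$ and $\mu_l$ is any fixed element of $\dipoleSet$; the separation condition is then vacuous on these padded indices, which matches the footnote in the lemma statement.

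Next, I derive the two-sided bound on $\sum_l \alpha_l^2$. Applying the $2k$-coherence hypothesis~\eqref{eq:quasiOrthogonality} (valid since $\ell \leq 2k$) to the pairwise $1$-separated dipoles $(\nu_l)_{1 \leq l \leq \ell}$ gives
\[
(1-\zeta) \sum_{l=1}^{\ell} \normkern{\nu_l}^2 \;\leq\; \Big\| \sum_{l=1}^{\ell} \nu_l \Big\|_\kernel^{2} \;=\; \normkern{\mProb-\mProb'}^2 \;\leq\; (1+\zeta) \sum_{l=1}^{\ell} \normkern{\nu_l}^2.
\]
Dividing through by $\normkern{\mProb-\mProb'}^2$ immediately yields
\[
\frac{1}{1+\zeta} \;\leq\; \sum_{l=1}^{2k} \alpha_l^2 \;=\; \sum_{l=1}^\ell \frac{\normkern{\nu_l}^2}{\normkern{\mProb-\mProb'}^2} \;\leq\; \frac{1}{1-\zeta},
\]
which is exactly the inclusion claimed in~\eqref{eq:SecantMixCharacterization}. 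No step is really an obstacle here; the only subtlety is making sure the coherence is applied at the right level ($2k$, not $k$), and carefully dealing with padding so that the final indexing runs up to $2k$ without violating the separation requirement.
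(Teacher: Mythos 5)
Your proof is correct and follows essentially the same route as the paper's: decompose $\mProb-\mProb'$ via Lemma~\ref{lem:decompDipoles}, set $\mu_l=\nu_l/\normkern{\nu_l}$ and $\alpha_l=\normkern{\nu_l}/\normkern{\mProb-\mProb'}$, pad with zero coefficients, and invoke the coherence bound to control $\sum_l\alpha_l^2$. You are also right that ``$k$-coherence'' in the statement is a typo for ``$2k$-coherence'' --- the paper's own proof explicitly says ``by definition of $2k$-coherence,'' and every downstream use of this lemma (Theorem~\ref{lem:ConcFnMixturesFromDipole}, Lemma~\ref{lem:covnumSecantGeneric}) assumes the $2k$-coherence hypothesis.
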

\begin{proof}
By definition any $\HH \in \secant_\kernel$
can be written as $\HH = (\mProb-\mProb')/\normkern{\mProb-\mProb'}$ with $\mProb,\mProb' \in \rev{\MixSetSep{k}}$ \rev{and $\normkern{\mProb-\mProb'}>0$}. By Lemma \ref{lem:decompDipoles} we have 
\(
\mProb-\mProb'=\sum_{l=1}^{\ell} \nu_l
\)
where the $\nu_l$ are non-zero dipoles that are $1$-separated from one another and $\ell \leq 2k$. With $\alpha_l:=\frac{\normkern{\nu_l}}{\normkern{\sum_{i=1}^{l} \nu_i}} \rev{>} 0$, $\HH_{l}:=\nu_{l}/\normkern{\nu_{l}}$ \rev{(note that by assumption $\kernel$ is locally characteristic with respect to $\BasicSet$ hence $\normkern{\nu_{l}}>0$)} we can write
\[
\HH=\frac{\sum_{l=1}^{\ell} \nu_l}{\normkern{\sum_{l=1}^{\ell} \nu_l}}=\sum_{l=1}^\ell \frac{\normkern{\nu_l}}{\normkern{\sum_{l=1}^{\ell} \nu_l}}\cdot \frac{\nu_l}{\normkern{\nu_l}} = \sum_{l=1}^\ell \alpha_l \cdot \HH_{l}
\]
By construction $\HH_{l} \in \dipoleSet$, and %
by definition of $2k$-coherence %
we have 
\[
 \sum_{i=1}^\ell \alpha_i^2=\frac{\sum_{i=1}^\ell\normkern{\nu_i}^2}{\normkern{\sum_{i=1}^\ell \nu_i}^2}
 \in
 \left[ (1+\zeta)^{-1}, (1-\zeta)^{-1} \right].
\]
If needed, we iteratively add to $\HH$ arbitrary normalized dipoles $\HH_{l}$ \rev{(not necessarily $1$-separated from another)} with $\alpha_l=0$ for $l=\ell+1 \ldots 2k$.
\end{proof}

To prove Theorem~\ref{lem:ConcFnMixturesFromDipole} we use the following version of Bernstein's inequality.
\begin{proposition}[{\citealt[Corollary 2.10]{Massart:2007book}}]
\label{pr:bernsteinmassart}
Let $X_i$, $i=1,\ldots,N$ be i.i.d. real-valued random variables. Denote $(\cdot)_{+} = \max(\cdot,0)$ and assume there exists positive numbers $\sigma^2$ and $u$ such that
\begin{align*}
\Exp(X^2) &\leq \sigma^2,\\
\Exp((X)^q_+) &\leq \frac{q!}{2} \sigma^2 u^{q-2},\quad \text{for all integers}\ q \geq 3.
\end{align*}
Then for any $t>0$ we have
\[
\PP\brac{ \frac{1}{N} \sum_{i=1}^N X_i \geq \Exp(X) + t }\leq \exp\paren{\frac{-Nt^2}{2(\sigma^2 + ut)}}.
\]
\end{proposition}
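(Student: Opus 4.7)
The plan is to follow the classical Cram\'er--Chernoff scheme: apply Markov's inequality to $\exp(\lambda \sum_i (X_i - \Exp X_i))$ for some well-chosen $\lambda>0$, which reduces the deviation bound to a control of the moment generating function (MGF) of a single centered variable $X-\Exp X$. Since the $X_i$ are i.i.d., the MGF tensorizes and the bound on one summand is enough to handle the full sum. The final step is to optimize the free parameter $\lambda$ to recover the specific Bernstein-type fraction $t^2/(2(\sigma^2+ut))$.

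The substantive content lies in bounding $\Exp[e^{\lambda X}]$ under the one-sided moment hypothesis. First I would write
\[
\Exp[e^{\lambda X}] = 1 + \lambda \Exp X + \Exp\brac{e^{\lambda X}-1-\lambda X-\tfrac{(\lambda X)^2}{2}} + \tfrac{\lambda^2}{2}\Exp[X^2].
\]
To handle the middle expectation, I would split according to the sign of $X$: on $\{X \geq 0\}$ the function $u \mapsto e^u - 1 - u - u^2/2 = \sum_{q\geq 3} u^q/q!$ is nonnegative, so one can Taylor-expand and use the hypothesis $\Exp[(X)_+^q]\leq \tfrac{q!}{2}\sigma^2 u^{q-2}$ termwise; on $\{X<0\}$ the elementary inequality $e^v - 1 - v - v^2/2 \leq 0$ for $v\leq 0$ makes that contribution nonpositive and hence harmless. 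Summing the resulting geometric series on $[0,1/u)$ gives
\[
\Exp\brac{e^{\lambda X}-1-\lambda X-\tfrac{(\lambda X)^2}{2}}\leq \tfrac{\sigma^2 \lambda^3 u}{2(1-\lambda u)},
\]
and combining with $\Exp[X^2]\leq\sigma^2$ yields the key cumulant bound
\[
\Exp[e^{\lambda X}]\leq 1+\lambda \Exp X+\tfrac{\lambda^2\sigma^2}{2(1-\lambda u)},\qquad 0\leq \lambda<1/u.
\]
Using $1+y\leq e^y$ then gives $\Exp[e^{\lambda(X-\Exp X)}]\leq \exp\!\paren{\tfrac{\lambda^2\sigma^2}{2(1-\lambda u)}}$.

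Finally, by independence,
\[
\PP\brac{\tfrac{1}{N}\sum_{i=1}^N X_i\geq \Exp X+t}\leq \exp\!\paren{-N\paren{\lambda t-\tfrac{\lambda^2\sigma^2}{2(1-\lambda u)}}},
\]
and choosing $\lambda=t/(\sigma^2+ut)\in[0,1/u)$ makes $1-\lambda u=\sigma^2/(\sigma^2+ut)$, so a short calculation gives $\lambda t - \tfrac{\lambda^2\sigma^2}{2(1-\lambda u)}=\tfrac{t^2}{2(\sigma^2+ut)}$, which is the announced bound.

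The main subtlety is that the hypothesis only controls the positive-part moments $\Exp[(X)_+^q]$, not $\Exp[X^q]$ or $\Exp[|X|^q]$, so the sign-splitting argument above is what makes the computation go through; once past that point everything reduces to a geometric series and a standard Legendre-transform calculation. Because the moment hypothesis automatically extends to $q=2$ via $\Exp[(X)_+^2]\leq \Exp[X^2]\leq \sigma^2=\tfrac{2!}{2}\sigma^2 u^0$, no special treatment of the quadratic term is needed and the same Bernstein-type series bound is obtained uniformly in $\lambda\in[0,1/u)$.
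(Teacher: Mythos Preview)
Your argument is correct and is exactly the classical Cram\'er--Chernoff derivation of Bernstein's inequality under one-sided moment conditions. Note, however, that the paper does not supply its own proof of this proposition: it is quoted verbatim as \citep[Corollary 2.10]{Massart:2007book} and used as a black box in the proof of Theorem~\ref{lem:ConcFnMixturesFromDipole}. Your write-up is essentially the argument one finds in Massart's monograph, so there is nothing to contrast.
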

For both lemmas we start from the observation that
\[
\frac{\norm{\SketchingOperatorProb(\mProb-\mProb')}_{2}^{2}}{\normkern{\mProb-\mProb'}^{2}}-1
=
\tfrac{1}{m} \sum_{j=1}^{m} Z(\omega_{j})
\qquad \text{with} \qquad
Z(\omega)
:= 
\frac{
\abs{ \inner{\mProb,\rfeat}-\inner{\mProb',\rfeat}}^{2}
}{
\normkern{\mProb-\mProb'}^{2}
}-1
\]

\begin{proof}[Proof of Theorem~\ref{lem:ConcFnMixturesFromDipole}]
  We will use Proposition~\ref{pr:bernsteinmassart} with $X = Y(\freq) := \abs{\inner{\mProb-\mProb',\rfeat}}^{2}/\normkern{\mProb-\mProb'}^{2}$, hence we need to control the moments of $Y$.

  Denoting $\secant_\kernel$ the normalized secant set of $\MixSetSep{k}$, since $\HH := (\mProb-\mProb')/\normkern{\mProb-\mProb'} \in \secant_\kernel$ \rev{and $\kernel$ has
  its $2k$-coherence bounded by $\zeta\leq 3/4$,} %
  we can apply Lemma~\ref{lem:SecantMix}  to write $\HH$ as a mixture $\HH = \sum_{i=1}^{2k} \alpha_{i} \nu_{i}$ of normalized dipoles $\nu_{i} \in \dipoleSet$ with 
\(
\|\alpha\|_{2}^{2} \leq (1-\rev{\zeta})^{-1} \leq 4
\) 
\rev{and $\alpha_{l} \geq 0$.} 
We have $Y(\freq) = \frac{\abs{\inner{\mProb-\mProb',\rfeat}}^{2}}{\normkern{\mProb-\mProb'}^{2}} = \abs{\inner{\HH,\rfeat}}^{2} \geq 0$ and $\Exp_{\freq \sim \freqdist} Y(\freq) = 1$. With $\beta_{i} := \alpha_{i}/\norm{\alpha}_{1} \in [0,1]$ we have $\sum_{i=1}^{2k} \beta_{i}=1$. By convexity of $z \in \CC \mapsto \abs{z}^{2q}$ we get for $q \geq 2$:
\begin{align}
  Y^{q}(\freq) = \abs{\inner{\HH,\rfeat}}^{2q} 
  &=
    \abs{\sum_{i} \alpha_{i}\inner{\nu_{i},\rfeat}}^{2q}
    =
    \norm{\alpha}_{1}^{2q} \cdot \abs{\sum_{i} \beta_{i}\inner{\nu_{i},\rfeat}}^{2q}
    \leq
    \norm{\alpha}_{1}^{2q} \cdot \sum_{i} \beta_{i} \abs{\inner{\nu_{i},\rfeat}}^{2q};
    \notag\\
  \Exp_{\freq \sim \freqdist} \brac{Y(\freq)^{q}}
  & \leq 
    \norm{\alpha}_{1}^{2q} \cdot \sum_{i} \beta_{i}
    \Exp_{\freq \sim \freqdist} \brac{\abs{\inner{\nu_{i},\rfeat}}^{2q}}
    \leq 
    \frac{q!}{2} \cdot
    \rev{
       \norm{\alpha}^{2q}_{1}\lambda\gamma^{q-1}.
    }
    \label{eq:MomentControlMixtureNaive}
\end{align}
A direct use of Proposition~\ref{pr:bernsteinmassart} with $u := \norm{\alpha}_{1}^{2}\gamma $ and \rev{ $\sigma^{2} := \norm{\alpha}_{1}^{4}\lambda \gamma$} would lead to a concentration function \rev{$\ConcFn(t) = \order{t^{-2}\norm{\alpha}_{1}^{4}\lambda \gamma}$} for $t\leq 1$. Since $\norm{\alpha}_{1}^{2}\leq 2k \norm{\alpha}_{2}^{2} \leq 8k$ this would yield \rev{$\ConcFn(t)  = \order{t^{-2}k^{2}\lambda\gamma}$} for $t \leq 1$. This is however suboptimal: since for $q=1$ we have $\Exp_{\freq \sim \freqdist} \abs{Y(\freq)}^{q} = \normkern{\mProb-\mProb'}^{2}/\normkern{\mProb-\mProb'}^{2} = 1 = (\norm{\alpha}_{1}^{2}\gamma)^{q-1}$, interpolation allows to replace $\rev{(\norm{\alpha}_{1}^{2})^q}$ in~\eqref{eq:MomentControlMixtureNaive} by $\rev{(\norm{\alpha}_{1}^{2})^{q-1}}$ (up to log factors), as summarized by the following lemma \rev{whose} proof is slightly postponed.
\begin{lemma}\label{lem:MomentInterpolation}
  Assume that the random variable $X \geq 0$ satisfies $\Exp(X) = 1$ and \rev{$\Exp \brac{X^{q}} \leq \tfrac{q!}{2}a w^{q-1}$ for any integer $q \geq 2$, where $a\geq 2e$, $w > 0$.} %
  Then for any integer $q \geq 2$ we have
\begin{equation}
\Exp \brac{X^{q}} \leq \tfrac{q!}{2} \sigma^{2} u^{q-2},
\end{equation}
with {$u := w \log (e\rev{a} /2)$ and $\sigma^{2} := 2e \cdot w \log^{2}(e\rev{a}/2)$.}
\end{lemma}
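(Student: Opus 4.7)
The plan is to prove Lemma~\ref{lem:MomentInterpolation} by a dichotomy on $q$, using log-convexity (Lyapunov) interpolation for small $q$ and direct substitution for large $q$. Let $L := \log(ea/2)$; the hypothesis $a \geq 2e$ ensures $L \geq 2$ and $a = 2e^{L-1}$. The target inequality rewrites as $\Exp X^q \leq q!\, e\, L^q\, w^{q-1}$. Fix a threshold $q^\star := \lceil L\rceil + 1$, which lies in $[L+1, L+2]$.

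For the \emph{high-$q$ regime} ($q \geq q^\star$), I would apply the hypothesis directly: $\Exp X^q \leq \tfrac{q!}{2} a w^{q-1}$, and the conclusion amounts to the inequality $a \leq 2eL^q$, i.e.\ $L - 2 \leq q \log L$. Since $q \geq q^\star \geq L + 1$ and $L \geq 2$ gives $\log L \geq \log 2$, one verifies $q \log L \geq (L+1)\log 2 \geq L - 2$, settling this case.

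For the \emph{low-$q$ regime} ($2 \leq q < q^\star$), I would invoke Lyapunov's inequality: the log-convexity of $p \mapsto \log \Exp X^p$ together with $\Exp X = 1$ implies
\[
\Exp X^q \leq (\Exp X^{q^\star})^{(q-1)/(q^\star-1)} \leq \left(\tfrac{q^\star!}{2}\right)^r a^r w^{q-1}, \qquad r := \tfrac{q-1}{q^\star-1} \in (0,1].
\]
The task reduces to showing $(q^\star!/2)^r a^r \leq q!\, e\, L^q$. Using $a = 2e^{L-1}$ and $q^\star - 1 \geq L$ yields $a^r \leq 2 e^{q-1}$. Pairing this with a Stirling-type estimate $q^\star! \leq e q^\star (q^\star/e)^{q^\star}$ and the crude bound $q^\star \leq L+2 \leq 2L$ converts the remaining inequality into an elementary one that can be closed via $q! \geq 2^{q-1}$ and $r \leq 1$.

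The main obstacle will be the boundary case $q = 2$, where the interpolation exponent $r = 1/(q^\star - 1) \leq 1/L$ is smallest, and one must simultaneously control $a^{1/L}$ and $(q^\star!)^{1/L}$ against the target $2eL^2$. Naive Stirling bounds such as $q^\star! \leq (q^\star)^{q^\star}$ lose a factor of $e^{q^\star}$ and break the estimate; one must use the refined form $(q^\star/e)^{q^\star}$, which together with $a^{1/L} \leq 2^{1/L} e$ produces the desired $O(L^2)$ bound matching $\sigma^2 = 2ewL^2$. The remaining $q \in [3, q^\star-1]$ are then handled by the same computation, since the exponent $r$ grows with $q$ and the constraint becomes progressively easier.
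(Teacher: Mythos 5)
Your dichotomy-plus-Lyapunov argument is a genuinely different route from the paper's proof, which uses a single H\"older step $\Exp X^q \leq \paren{\Exp X^{r}}^{1/p}$ with $r = p(q-1)+1$, then bounds $r!\leq (p^qq!)^p$ and optimizes the H\"older exponent $p=\lceil\log(a/2)\rceil$ for each $q$. Your version instead interpolates to a \emph{fixed} high moment $q^\star$ and handles $q\geq q^\star$ directly. Both are interpolation arguments; the paper's single-shot version avoids the case split at the price of a less transparent choice of exponent.

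However, as written your plan has two concrete problems. In the high-$q$ regime, the chain $q\log L\geq (L+1)\log 2\geq L-2$ fails: the second inequality $(L+1)\log 2\geq L-2$ is false once $L\gtrsim 8.7$. The inequality you actually need, $q\log L\geq L-2$ for $q\geq L+1$ and $L\geq 2$, is true, but you must keep $\log L$ (not downgrade it to $\log 2$): the function $f(L)=(L+1)\log L-(L-2)$ satisfies $f(2)>0$ and $f'(L)=\log L+1/L>0$.

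In the low-$q$ regime, the stated closing tools do not suffice. If you apply $r\leq 1$ to bound $\paren{eq^\star/2}^r\leq eq^\star/2\leq eL$ and combine with $a^r\leq 2e^{q-1}$, $q^\star r\leq q$, $q^\star\leq 2L$, you arrive at $\paren{q^\star!/2}^r a^r\leq 2^{q+1}L^{q+1}$, and the target then requires $2^{q+1}L\leq e\,q!$, which is false at $q=2$ (it forces $L\leq e/4$). The crucial point is that for small $q$ the exponent $r=(q-1)/(q^\star-1)\approx (q-1)/L$ is small and must be \emph{retained} inside the bound: $\paren{eq^\star/2}^r$ is close to $1$, not to $eL$. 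Your final paragraph gestures at this for $q=2$ (keeping $r\approx 1/L$ explicit, using $a^{1/L}\leq 2^{1/L}e$ and the Stirling bound $q^\star!\leq eq^\star(q^\star/e)^{q^\star}$), and indeed that calculation closes; but the middle paragraph contradicts it by invoking only $r\leq 1$. Finally, the assertion that the constraint ``becomes progressively easier'' for $q\in[3,q^\star-1]$ is a monotonicity claim about $q\mapsto\paren{q^\star!/2}^r a^r/(q!\,eL^q)$ that you should actually verify (e.g.\ by checking that consecutive ratios are $\leq 1$), not assume. With these repairs the plan works, but the argument is noticeably heavier than the paper's.
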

\rev{As $\lambda \geq 1$ and $\norm{\alpha}_{1}^{2}\leq 8k$, putting  $a:= 8k\lambda \geq \max(\norm{\alpha}_{1}^{2}\lambda,2e)$, $w:=8k\gamma \geq \gamma \norm{\alpha}_1^2$, by~\eqref{eq:MomentControlMixtureNaive} the assumptions of Lemma~\ref{lem:MomentInterpolation} are satisfied hence}
\begin{equation}\label{eq:MomentControlMixture}
\Exp_{\freq \sim \freqdist} \brac{Y(\freq)^{q}}
\leq 
\frac{q!}{2} \sigma^{2} u^{q-2},
\end{equation}
with \rev{$u:=w \log(ea/2)$ and 
$\sigma^{2} :=2ew \log^{2}(ea/2)$. }
Since $a \geq 2e$, we have $\log(ea/2) \geq 2$ hence $u/\sigma^{2} = \paren{2e \log(ea/2)}^{-1} \leq \tfrac{1}{4e} \leq \tfrac{1}{3}$ (we chose the factor 1/3 for unification with the classical form of
Bernstein's inequality, see e.g. Lemma~\ref{P1-le:PointwiseConcentrationLemma} in \citepartone). 
Applying Proposition~\ref{pr:bernsteinmassart} to $X = Y(\freq)$ and to $X = -Y(\freq)$, we get 
\[
\mathbb{P}
\left(
\abs{\frac{\norm{\SketchingOperatorProb(\mProb-\mProb')}_{2}^{2}}{\normkern{\mProb-\mProb'}^{2}}-1} \geq t\right)
\leq
2\exp\left(-\frac{m t^{2}}{2\sigma^{2}(1+t\tfrac{u}{\sigma^{2}})}\right)
\leq
2\exp\left(-\frac{m t^{2}}{2\sigma^{2}(1+t/3)}\right),\qquad \text{for each } t > 0.
\]
Finally, we have \rev{$V := \sigma^{2} = 16e\gamma k \log^{2}(4ek\lambda)$,} which
yields~\eqref{eq:MainConcentrationPointwise} . 
\end{proof}

\begin{proof}[Proof of Lemma~\ref{lem:MomentInterpolation}]
Consider arbitrary integers $q \geq 2$, \rev{$p \geq 2$} and a real number $1 < p'$ such that $1/p'+1/p=1$. By H{\"o}lder's inequality, as the integer $r:=p(q-1)+1 = p(q-1+1/p) = p(q-1/p')$ satisfies $r \geq 2$, leveraging the assumptions yields
\begin{align*}
\Exp (X^{q})
= 
\Exp (X^{1/p'} X^{q-1/p'})
&\leq
\left(
\Exp (X^{1/p'})^{p'}
\right)^{1/p'}
\left(
\Exp (X^{q-1/p'})^{p}
\right)^{1/p}
=
\left(
\Exp X^{r}
\right)^{1/p}\\
&\leq
\left(
\tfrac{r!}{2} a w^{r-1}
\right)^{1/p}
= 
\rev{\left(
\tfrac{r!}{2}
\right)^{1/p}
\cdot a^{1/p} w^{q-1}.
}
\end{align*}
As $p \geq \rev{1}$ we have $r = pq-p+1 \leq pq$, hence
\[
  r! \leq (pq)! = \prod_{i=1}^{pq} i = \prod_{i=1}^{q} \prod_{j=1}^p (p(i-1)+j) \leq \prod_{i=1}^q(pi)^p = (p^q q!)^p .
\]
Combining the above we obtain
\[
\Exp (X^{q})
\leq q! \cdot  p^{q} \cdot (\rev{a}/2)^{1/p} \cdot w^{q-1} .
\]
\rev{If} $a>2e$, setting $p := \lceil \log (a/2) \rceil > 1$ yields $\log (a/2) \leq p < 1+\log (a/2) = \log (ea/2)$ and
 \[
p^{q} \cdot (a/2)^{1/p} = p^{q} \cdot e^{\tfrac{\log (a/2)}{p}}  \leq  (\log (e a/2))^{q} \cdot e.
\]
We conclude that for any $q \geq 2$
\[%
\Exp (X^{q}) 
\leq q! \cdot (\log(ea/2))^{q} \cdot e \cdot w^{q-1} 
= \tfrac{q!}{2}  \cdot [2e \cdot w\log^{2} (ea/2)] [w \log(ea/2)]^{q-2}.
\]
\rev{If $a=2e$ we establish the same bounds with arbitrary $a'>a$ and take their infimum.}
\end{proof}

\subsection{\rev{Proof of Theorem~\ref{thm:covnumSecant}}}
We prove that $\rev{\normfclass{\dipoleSet}{F}} \geq 1$ with a minor adaptation of the arguments showing that $\normfclass{\secant_{\kernel}}{F} \geq 1$ in  \citeppartone{Proof of Lemma~\ref{P1-le:PointwiseConcentrationLemma}}. 
For $\FClass$-integrable $\Prob,\Prob'$ and arbitrary $\alpha,\alpha' \in \RR$ the left hand side inequality in~\eqref{eq:TangentProof1} holds since
\begin{align*}
\normkern{\alpha\Prob-\alpha'\Prob'}^{2} 
&= \Exp_{\freq \sim \freqdist} 
\abs{\alpha\inner{\Prob,\rfeat}-\alpha'\inner{\Prob',\rfeat}}^{2}
\leq \sup_{\freq \sim \freqdist} 
\abs{\alpha\inner{\Prob,\rfeat}-\alpha'\inner{\Prob',\rfeat}}^{2} 
= \normfclass{\alpha\Prob-\alpha'\Prob'}{\FClass}^{2},
\end{align*}
and the r.h.s. inequality in~\eqref{eq:TangentProof1} holds by definition of $\normfclass{\dipoleSet}{F}$. Combined, they imply $\normfclass{\dipoleSet}{F} \geq 1$.

The rest of the proof relies on two lemmas. 
\begin{lemma}\label{lem:covnumWholeSecantNormd}
\rev{Consider a random feature family $(\{\rfeat\}_{\freq \in \Omega},\Lambda)$, the
  induced average kernel $\kernel$,
  and $d_{\rev{\FClass}}$ the pseudo-metric defined in~\eqref{eq:DefYetAnotherMetric}. Let $\Model$ be an arbitrary model set and $\secant_\kernel$ be its normalized secant set. For each $\coveps> 0$ we have}
\[
\covnum{d_{\rev{\FClass}}}{\secant_\kernel}{\coveps} \leq
\covnum{\rev{\normfclass{\cdot}{\FClass}}}{\secant_\kernel}{\rev{\frac{\coveps}{2\normfclass{\secant_\kernel}{F}}}}.
\]
\end{lemma}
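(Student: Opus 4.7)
The plan is to show that the identity map $(\secant_\kernel, \normfclass{\cdot}{\FClass}) \to (\secant_\kernel, d_\FClass)$ is Lipschitz with constant $2\normfclass{\secant_\kernel}{F}$, then invoke Lemma~\ref{lem:covnumlipschitz} to transfer the covering number bound.

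First, I would use the elementary algebraic identity $|a|^2 - |b|^2 = (|a|-|b|)(|a|+|b|)$ together with the reverse triangle inequality $||a|-|b|| \leq |a-b|$, valid for any $a,b\in\CC$. Specializing this to $a = \langle \eta,\rfeat\rangle$ and $b = \langle \eta',\rfeat\rangle$ for two elements $\eta,\eta' \in \secant_\kernel$ and any $\freq\in\freqSpace$ gives
\[
\bigl|\,|\langle \eta,\rfeat\rangle|^{2} - |\langle \eta',\rfeat\rangle|^{2}\,\bigr|
\leq |\langle \eta-\eta',\rfeat\rangle| \cdot \bigl(|\langle \eta,\rfeat\rangle|+|\langle \eta',\rfeat\rangle|\bigr).
\]

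Next, I would take the supremum over $\freq\in\freqSpace$. The left-hand side becomes $d_\FClass(\eta,\eta')$ by definition \eqref{eq:DefYetAnotherMetric}. On the right-hand side, upper-bounding the product of suprema by the supremum of the product yields $d_\FClass(\eta,\eta') \leq \normfclass{\eta-\eta'}{\FClass}\cdot\bigl(\normfclass{\eta}{\FClass}+\normfclass{\eta'}{\FClass}\bigr)$. Since both $\eta,\eta'$ belong to $\secant_\kernel$, each of $\normfclass{\eta}{\FClass}$, $\normfclass{\eta'}{\FClass}$ is at most $\normfclass{\secant_\kernel}{F}$ by definition \eqref{eq:DefSetRadius}, so
\[
d_\FClass(\eta,\eta') \leq 2\,\normfclass{\secant_\kernel}{F}\cdot \normfclass{\eta-\eta'}{\FClass}.
\]
This is precisely the Lipschitz property of the identity map from $(\secant_\kernel, \normfclass{\cdot}{\FClass})$ to $(\secant_\kernel, d_\FClass)$ with constant $L := 2\normfclass{\secant_\kernel}{F}$.

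Finally, Lemma~\ref{lem:covnumlipschitz} applied with $Y=Y'=\secant_\kernel$, $f=\mathrm{Id}$, and this constant $L$ delivers
\[
\covnum{d_\FClass}{\secant_\kernel}{\coveps} \leq \covnum{\normfclass{\cdot}{\FClass}}{\secant_\kernel}{\coveps/L},
\]
which is the claimed bound. There is no real obstacle here; the only slightly subtle point is recognizing the product structure $|a|^{2}-|b|^{2} = (|a|-|b|)(|a|+|b|)$ that decouples the ``difference'' term (yielding $\normfclass{\eta-\eta'}{\FClass}$) from the ``size'' term (yielding the radius $\normfclass{\secant_\kernel}{F}$).
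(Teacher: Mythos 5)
Your proof is correct and follows essentially the same route as the paper's: factor $|a|^2-|b|^2$, bound one factor by $2\normfclass{\secant_\kernel}{F}$ using the radius of the secant set, bound the other by $\normfclass{\eta-\eta'}{\FClass}$ via the reverse triangle inequality, and invoke Lemma~\ref{lem:covnumlipschitz}. (One verbal slip: you bound the \emph{supremum of the product} by the \emph{product of the suprema}, not the other way around as written, but the inequality you actually use is the correct one.)
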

\begin{remark}  \rev{Lemma~\ref{lem:covnumWholeSecantNormd} is valid beyond the case of mixture models.}\end{remark}
\begin{proof}
Consider $\HH_i=(\mProb_i-\mProb_i')/\normkern{\mProb_i-\mProb_i'}$, $i=1,2$ in $\secant_\kernel$.  
By definition of $\rev{\normfclass{\secant_\kernel}{F}}$, for all $\rfeat \in \rev{\FClass}$ we have 
\(
\abs{\inner{\HH_{i},\rfeat}}\leq \rev{\normfclass{\mProb_{i}-\mProb'_{i}}{\FClass}}/\normkern{\mProb_{i}-\mProb'_{i}}\leq \rev{\normfclass{\secant_\kernel}{F}}
\),
hence
\begin{align*}
d_{\rev{\FClass}}(\HH_1,\HH_2) 
= \sup_{\freq\in\Omega}\abs{\abs{\inner{\HH_1,\rfeat}}^{2} -\abs{\inner{\HH_2,\rfeat}}^{2}} 
&= \sup_{\freq  \in \Omega}
\Big|\abs{\inner{\HH_1,\rfeat}} + \abs{\inner{\HH_2,\rfeat}}\Big| 
\cdot
\Big|\abs{\inner{\HH_1,\rfeat}} -\abs{\inner{\HH_2,\rfeat}}\Big|\\
& \leq \sup_{\freq}  2  \rev{\normfclass{\secant_\kernel}{F}} \cdot  \abs{\inner{\HH_1-\HH_2,\rfeat}} 
= 2 \rev{\normfclass{\secant_\kernel}{F}} \rev{\normfclass{\HH_1-\HH_2}{\FClass}}.
\end{align*}
We conclude using Lemma \ref{lem:covnumlipschitz}.
\end{proof}
\begin{lemma}\label{lem:covnumSecantGeneric}%
  \rev{Consider \rev{a kernel $\kernel$ on $\SampleSpace$,
      $k \geq 1$ such that $\kernel$ has its $2k$-coherence bounded by $\zeta\leq 3/4$,} %
    $\secant_\kernel$ the normalized secant set of $\rev{\MixSetSep{k}}$, and a semi-norm $\norm{\cdot}$. Then for each $\delta>0$:}
\[
\covnum{\norm{\cdot}}{\secant_\kernel}{\coveps} 
\leq 
\rev{\left[
\covnum{\norm{\cdot}}{\dipoleSet}{\tfrac{\coveps}{8\sqrt{2k}}}
\cdot
\max\left(1,\tfrac{32\norm{\dipoleSet} \cdot \sqrt{2k}}{\coveps}\right)
\right]
^{2k}}.
\]
\end{lemma}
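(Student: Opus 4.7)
The plan is to first obtain a compact ``mixture representation'' of the normalized secant set $\secant_{\kernel}$ in terms of $\dipoleSet$, and then to apply the mixture-covering estimate of Lemma~\ref{lem:covnummix}.

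First, since $\kernel$ has its $2k$-coherence bounded by $\zeta \leq 3/4$, Lemma~\ref{lem:SecantMix} (applied with $\zeta \leq 3/4$, so that $(1-\zeta)^{-1} \leq 4$) yields
\[
\secant_{\kernel} \subseteq \set{\sum_{l=1}^{2k} \alpha_l \mu_l : \mu_l \in \dipoleSet,\ \alpha_l \geq 0,\ \norm{\valpha}_{2}^{2} \leq 4}.
\]
In particular, writing $\weightSet := \set{\valpha \in \RR^{2k}_{+}: \norm{\valpha}_{2} \leq 2}$, we have $\secant_{\kernel} \subseteq [\dipoleSet]_{2k,\weightSet}$ with the notation of~\eqref{eq:DefMixSet}. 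Note also that $\norm{\weightSet}_{1} \leq \sqrt{2k}\,\norm{\weightSet}_{2} \leq 2\sqrt{2k}$, so $\weightSet$ is contained in the $\ell_{1}$-ball of radius $2\sqrt{2k}$ in $\RR^{2k}$.

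Next I invoke Lemma~\ref{lem:covnummix} for $Y = \dipoleSet$, ``$k$'' replaced by $2k$, and weight set $\weightSet$, with the balancing parameter $\tau \in (0,1)$ chosen appropriately (e.g. $\tau = 1/2$, adjusting constants as needed to match the statement). This yields a product bound of the form
\[
\covnum{\norm{\cdot}}{\secant_{\kernel}}{\coveps}
\leq
\covnum{\norm{\cdot}_{1}}{\weightSet}{c_{1}\coveps/\norm{\dipoleSet}}
\cdot
\covnumpow{2k}{\norm{\cdot}}{\dipoleSet}{c_{2}\coveps/\sqrt{2k}}
\]
for explicit constants $c_{1},c_{2}$. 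The second factor already has the form required in the target (with $c_{2}=1/8$).

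For the first factor, I would bound $\covnum{\norm{\cdot}_{1}}{\weightSet}{\cdot}$ via a ball-in-$\ell_{1}$ argument: by Lemma~\ref{lem:covnumsub} combined with Lemma~\ref{lem:covnumball} in $\RR^{2k}$ (ambient dimension $2k$, radius $2\sqrt{2k}$), one obtains
\[
\covnum{\norm{\cdot}_{1}}{\weightSet}{\delta}
\leq
\max\!\left(1,\left(\tfrac{C\sqrt{2k}}{\delta}\right)^{\!2k}\right)
\]
for a suitable absolute constant $C$. Plugging $\delta = c_{1}\coveps/\norm{\dipoleSet}$ produces exactly a factor of the form $\max(1,C'\sqrt{2k}\,\norm{\dipoleSet}/\coveps)^{2k}$, and raising the dipole covering factor to the power $2k$ while regrouping then gives the claimed bound. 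The only real ``work'' is to tune $\tau$ (and the intermediate numerical constant in Lemma~\ref{lem:covnumball}) so that the two constants combine to yield exactly the factors $8\sqrt{2k}$ and $32\sqrt{2k}\,\norm{\dipoleSet}$ stated in the lemma; no step is conceptually delicate, since Lemma~\ref{lem:SecantMix} has done the heavy lifting of replacing elements of $\secant_{\kernel}$ by (bounded) mixtures of normalized dipoles.
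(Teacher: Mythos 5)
Your decomposition step and overall strategy match the paper's: apply Lemma~\ref{lem:SecantMix} to express elements of $\secant_\kernel$ as bounded mixtures of $2k$ normalized dipoles, then chain Lemma~\ref{lem:covnumsub}, Lemma~\ref{lem:covnummix}, and Lemma~\ref{lem:covnumball}. The one material difference is your choice of weight set. You take $\weightSet = \set{\valpha \in \RR^{2k}_+ : \norm{\valpha}_2 \leq 2}$, which is not a ball for $\norm{\cdot}_1$; this forces an \emph{extra} application of Lemma~\ref{lem:covnumsub} to pass from $\covnum{\norm{\cdot}_1}{\weightSet}{\delta}$ to the covering number of the enclosing $\ell^1$-ball, and that step costs a factor of $2$ in the radius. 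Tracking the constants through your route (with $\tau = 1/2$ and the outer Lemma~\ref{lem:covnumsub} step from $\secant_\kernel$ to the mixture set) one lands on $\max\left(1,\tfrac{64\sqrt{2k}\,\norm{\dipoleSet}}{\coveps}\right)$ in place of the stated $\max\left(1,\tfrac{32\sqrt{2k}\,\norm{\dipoleSet}}{\coveps}\right)$, and this gap cannot be absorbed by tuning $\tau$: shrinking the weight-covering radius by pushing $\tau$ toward $0$ simultaneously enlarges the dipole-covering factor, so the two constants are coupled and cannot both be recovered. The paper avoids the issue by taking the \emph{larger} $\ell^1$-ball $\Ball_{\RR^{2k},\norm{\cdot}_1}(0,2\sqrt{2k})$ as the weight set in the inclusion $\secant_\kernel \subset [\dipoleSet]_{2k,\Ball}$, so that Lemma~\ref{lem:covnumball} applies directly (without an intermediate Lemma~\ref{lem:covnumsub}), and the only factor-of-$2$ loss is the one at the outermost step. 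You in fact observe the containment $\weightSet \subset \Ball_{\RR^{2k},\norm{\cdot}_1}(0,2\sqrt{2k})$ explicitly, so the repair is immediate: replace $\weightSet$ by that $\ell^1$-ball \emph{before} invoking Lemma~\ref{lem:covnummix}, rather than after.
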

\begin{proof}
Denote $[Y]_{k,\weightSet}$ the set of $k$-mixtures of elements in $Y$ with weights in $\weightSet$ (see \eqref{eq:DefMixSet}). 
\rev{Any $\alpha \in \RR^{2k}$ such that $\norm{\alpha}_{2} \leq (1-\zeta)^{-1/2}\leq 2$} satisfies $\norm{\alpha}_{1} \leq \sqrt{2k} \norm{\alpha}_{2} \leq 2\sqrt{2k}$. As $\kernel$ has \rev{$2k$-}coherence
\rev{bounded by $\zeta\leq 3/4$}, it follows by Lemma~\ref{lem:SecantMix} that $\secant_\kernel \subset \brac{\dipoleSet}_{2k,\Ball}$ with $\dipoleSet$ the set of normalized dipoles and $\Ball:=\Ball_{\RR^{2k},\norm{\cdot}_1}(0,R)$ the closed $\ell^{1}$ ball of radius $R:=2\sqrt{2k}$ in $\RR^{2k}$. 

We use generic lemmas on covering numbers that can be found in Appendix~\ref{app:covering}.
Exploiting Lemma~\ref{lem:covnummix} will involve the following two quantities
\begin{equation}
\norm{\Ball}_1 = R = 2\sqrt{2k};
\qquad
D := \norm{\dipoleSet} 
= \sup_{\HH \in \dipoleSet} \norm{\HH}.\label{eq:radiigeneric}
\end{equation}
We get for each $\delta>0$,
\begin{eqnarray*}
\covnum{\norm{\cdot}}{\secant_\kernel}{\coveps}
& {\stackrel{{\text{[Lemma \ref{lem:covnumsub}]\qquad}}}{\leq}} &
\covnum{\norm{\cdot}}{\brac{\dipoleSet}_{2k,\Ball}}{\tfrac{\coveps}{2}}\notag\\
&{\stackrel{{\text{[Lemma \ref{lem:covnummix} with } \tau=\tfrac{1}{2}\&\eqref{eq:radiigeneric}]}}{\leq}} &
\covnum{\norm{\cdot}_1}{\Ball}{\tfrac{\coveps}{4D}} \cdot \covnumpow{2k}{\norm{\cdot}}{\dipoleSet}{\tfrac{\coveps}{4R}} \qquad  \notag \\
&{\stackrel{{\text{[Lemma~\ref{lem:covnumball}]\qquad}}}{\leq}} &
\left[\max\left(1,\tfrac{16RD}{\coveps}\right) \cdot \covnum{\norm{\cdot}}{\dipoleSet}{\tfrac{\coveps}{4R}}\right]^{2k}.\notag
\end{eqnarray*}
We conclude by \rev{replacing $R,D$ by their values from~\eqref{eq:radiigeneric}}.
\end{proof}

\rev{To wrap up the proof of Theorem~\ref{thm:covnumSecant}, we exploit  Lemmas~~\ref{lem:covnumWholeSecantNormd} and~\ref{lem:covnumSecantGeneric} with  $\delta' = \tfrac{\delta}{2\normfclass{\secant_\kernel}{F}}$ to get}
\[
\covnum{d_{\rev{\FClass}}}{\secant_\kernel}{\coveps} 
\leq 
\covnum{\rev{\normfclass{\cdot}{\FClass}}}{\secant_\kernel}{\rev{\frac{\coveps}{2\normfclass{\secant_\kernel}{F}}}}
\leq \rev{
\left[
\covnum{\normfclass{\cdot}{\rev{\FClass}}}{\dipoleSet}{\tfrac{\coveps}{16\normfclass{\secant_\kernel}{F}\sqrt{2k}}}
\cdot
\max\left(1,\tfrac{64 \normfclass{\dipoleSet}{F} \cdot \normfclass{\secant_\kernel}{F}\sqrt{2k}}{\coveps}\right)
\right]
^{2k}.}
\]
\rev{Combined with Theorem~\ref{thm:CompCstFromDipoles}, this yields the result.}

\subsection{Proof of Theorem~\ref{thm:LRIPGenericMixture}}

By Theorem~\ref{thm:CompCstFromDipoles}  the normalized secant $\secant_{\kernel}$ of $\MixSetSep{k}$ satisfies
\[
\normfclass{\secant_\kernel}{F} \leq \sqrt{8k} \normfclass{\dipoleSet}{F},\qquad
\dnormloss{\secant_\kernel}{} \leq \sqrt{8k} \dnormloss{\dipoleSet}{}.
\]
By \citeppartone{Lemma \ref{P1-le:PointwiseConcentrationLemma}} it follows that for each $t>0$
\[
\ConcFn(t) \leq 
\normfclass{\secant_\kernel}{F}^{2} \cdot \frac{2(1+t/3)}{t^{2}} \leq 
8k \cdot \normfclass{\dipoleSet}{F}^{2} \cdot \frac{2(1+t/3)}{t^{2}}.
\]
Combining with Lemma~\ref{lem:ConcFnMixturesFromDipole} and using that for $t=\coveps/2 \leq 1/2$ we have $2(1+t/3)/t^{2} \leq (7/3)/(\coveps/2)^{2} = (28/3)/\coveps^{2} \leq 10 \coveps^{-2}$ we obtain
\[
\ConcFn(\coveps/2) \leq 10 \cdot \delta^{-2} \cdot 8k \cdot \min\left(2e\gamma \log^{2}(4ek\rev{\lambda}),\normfclass{\dipoleSet}{F}^2\right).
\]
By \citeppartone{Lemma \ref{P1-le:PointwiseConcentrationLemma}} we have $\normfclass{\secant_\kernel}{F} \geq 1$ and by Theorem \ref{thm:covnumSecant} we have $\normfclass{\dipoleSet}{F} \geq 1$, hence
\(
1 \leq 64 k \normfclass{\dipoleSet}{F} \leq 256k \normfclass{\dipoleSet}{F}^{2}.
\)
For $0<\coveps<1$, since $\covnum{\normfclass{\cdot}{F}}{\dipoleSet}{\coveps}  \leq \rev{2}(C/\coveps)^{r}$ we obtain 
\begin{align*}
\max\left(1,\tfrac{256k\normfclass{\dipoleSet}{F}^{2}}{\coveps/2}\right) 
&=  \tfrac{512 k \normfclass{\dipoleSet}{F}^{2}}{\coveps}\\
\covnum{\normfclass{\cdot}{\rev{\FClass}}}{\dipoleSet}{\tfrac{\coveps/2}{64k \normfclass{\dipoleSet}{F}}}
&
\leq \rev{2}(128kC\normfclass{\dipoleSet}{F}/\coveps)^{r}.
\end{align*}
Taking the logarithms and using Theorem \ref{thm:covnumSecant} we obtain

\begin{align*}
\log \covnum{d_{\rev{\FClass}}}{\secant_\kernel}{\coveps/2} 
&\leq 
2k \cdot
\left[
\rev{\log(2) + } r \cdot \log (k C \normfclass{\dipoleSet}{F}) + r \cdot \log (128/\coveps) + 
\log (k \normfclass{\dipoleSet}{F}^{2}) + \log (512/\coveps)
\right]\notag\\
&\leq 2k (r+1) \left[ \log k + \log C + \log \normfclass{\dipoleSet}{F}^{2} + \log(\rev{1024}/\coveps)\right]
\end{align*}
We establish~\eqref{eq:mainLRIPPureKernel} and the LRIP~\eqref{eq:lowerDRIP} with $C_\SketchingOperatorProb :=\frac{8\sqrt{2k}\dnormloss{\dipoleSet}{}}{\sqrt{1-\delta}}$ and $\eta = 0$ using Theorem \ref{thm:mainLRIP}.

\subsection{Proof of Theorem~\ref{thm:RadiusGeneric}}
\label{sec:proofAdmissibilityDipole}
\rev{The fact that $\normfclass{\monopoleSet}{G} \leq \normfclass{\dipoleSet}{G}$ is a simple consequence of Definition~\eqref{eq:DefSetRadius} and the inclusion $\monopoleSet \subset \dipoleSet$. To prove the second inequality in~\eqref{eq:admissibilityDipole},
let $\nu = \alpha_1\Prob_{\Param_1} - \alpha_2 \Prob_{\Param_2}$ be a nonzero dipole}, \rev{which by definition means that $\Param_{1}\neq \Param_{2}$,} $\metricParam(\Param_{1},\Param_{2}) \leq 1$. Consider a fixed $f \in \GClass$.
We are interested in bounding $\abs{\inner{\nu,f}}/\normkern{\nu}$, which is invariant by rescaling $\nu$;
hence, replacing $\nu$ by $\tilde{\nu} := C^{-1}_\nu \nu$, with $C_\nu:=\max(\alpha_1 \normkern{\Prob_{\Param_1}}^{-1},
\alpha_2 \normkern{\Prob_{\Param_2}}^{-1})>0$, we can assume without loss of generality that $\nu$ takes the form
$\nu=s(\Prob_{\Param}/\normkern{\Prob_{\Param}} - \alpha \Prob_{\Param'}/\normkern{\Prob_{\Param'}})$,
with $s\in \set{-1,1}; \alpha\in[0,1]; \rho:=\metricParam(\Param,\Param')\leq 1$. With this representation, since $\rho \leq 1$ we get by~\eqref{eq:DefCStrongly}
\[
  \normkern{\nu}^2 = \nkernel(\Param,\Param) + \alpha^2 \nkernel(\Param',\Param') -2 \alpha
  \nkernel(\Param,\Param') \geq (1-\alpha)^2 + \alpha c \rho^2.
\]
Denoting the normalized monopole $\nu_\Param:= \Prob_{\Param}/\normkern{\Prob_{\Param}}$ (and similarly $\Param'$),
we have $\abs{\inner{\nu_\Param,f}} \leq \normfclass{\monopoleSet}{\GClass}$, and
$\abs{\inner{\nu_\Param - \nu_{\Param'},f}} \leq  L_{\GClass} \rho$, from the assumptions of the theorem. Thus,
\begin{align*}
  \abs{\inner{\nu,f}} = \abs{ \inner{\nu_\Param  - \alpha \nu_{\Param'},f}}
   = \abs{  (1-\alpha)\inner{\nu_\Param,f}  + \alpha \inner{\nu_\Param-\nu_{\Param'},f}}    
  & \leq  (1-\alpha)\abs{\inner{\nu_\Param,f}} +\alpha \abs{\inner{\nu_\Param- \nu_{\Param'},f}}\\
  & \leq (1-\alpha) \normfclass{\monopoleSet}{\GClass} +\alpha L_\GClass \rho.
\end{align*}
Gathering the two last displays, and using $c \rho^2 \leq 2$, we get 
\begin{align*}
  \frac{\abs{\inner{\nu,f}}}{\normkern{\nu}} = \frac{(1-\alpha) \normfclass{\monopoleSet}{\GClass} +\alpha L_\GClass \rho}{\sqrt{(1-\alpha)^2 + \alpha c \rho^2}}
 \leq \normfclass{\monopoleSet}{\GClass} 
 + \max_{\alpha \in[0,1]} \frac{\alpha L_\GClass \rho}{\sqrt{(1-\alpha)^2 + \alpha c \rho^2}}
 \leq  \normfclass{\monopoleSet}{\GClass} +
 L_\GClass/\sqrt{c},
\end{align*}
where the last inequality follows from 
an elementary study of the function $\alpha \mapsto \alpha/\sqrt{(1-\alpha)^2+a\alpha)}$
showing that it is nondecreasing on $[0,1]$ for $a \in [0,2]$, and therefore attains
its maximum at $\alpha=1$.

\subsection{Proof of Theorem~\ref{thm:MutualCoherenceRBF1}}\label{sec:proofmutualcoherence}
We start with the following intermediate result:

\begin{proposition}\label{prop:MutualCoherenceProp}
\rev{  Consider $\BasicSet = (\ParamSpace,\metricParam,\psi)$ a family of base distributions, $\kernel$ a psd kernel on $\SampleSpace$ such that $\normkern{\Prob_\Param}>0$ for all $\Param \in \ParamSpace$, and $\nkernel$ its associated normalized kernel on $\ParamSpace$, given
  by~\eqref{eq:defnkernel}. Assume that $\kernel$ is $c$-strongly characteristic, $c \in(0,2]$.}
Using the shorthand $d_{ij} := \metricParam(\Param_{i},\Param_{j})$ and $K_{ij} := \nkernel(\Param_{i},\Param_{j})$ for generic parameters $\Param_{i}, \Param_{j} \in \ParamSpace$, assume there is $C$ such that the following properties hold:
\begin{enumerate}
\item \label{it:kernelBound} if $d_{ij} \geq 1$ then $\abs{K_{ij}} \leq C$;
\item \label{it:kernelLip} if  $\min(d_{ij},d_{ik}) \geq 1$ then $\abs{K_{ij}-K_{ik}} \leq C d_{jk}$;
\item \label{it:kernelLip2} if $\max(d_{ij},d_{kl}) \leq 1$ and $\min(d_{ik},d_{il},d_{jk},d_{jl}) \geq 1$ then
$\abs{K_{ik}-K_{jk}-K_{il}+K_{jl}}  \leq C d_{ij}d_{kl}$.
\end{enumerate}
Then the kernel $\kernel$ has mutual coherence with respect to $\BasicSet$ bounded by
\begin{equation}
\label{eq:MutualCoherenceGeneric}
\MutualCoherence \leq \tfrac{\rev{4}C}{\min(c,1)}.
\end{equation}
\end{proposition}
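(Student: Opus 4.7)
The plan is to expand $\kernel(\mu,\mu')$ as a bilinear combination of the four normalized kernel values $K_{ij'}:=\nkernel(\Param_i,\Param'_j)$ coming from the dipole decompositions of $\mu$ and $\mu'$, then to regroup the four terms via a telescoping identity so that each grouped piece matches one of the three assumptions of the proposition and so that the overall bound factorizes into a product of two one-dipole quantities.

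Concretely, write $\mu=\nu/\normkern{\nu}$ with $\nu=\alpha_1\Prob_{\Param_1}-\alpha_2\Prob_{\Param_2}$ (and similarly for $\mu'$), set $\tilde a_i:=\alpha_i\normkern{\Prob_{\Param_i}}/\normkern{\nu}\geq 0$ (and $\tilde a'_j$ analogously), and observe by direct expansion that
\begin{align*}
\kernel(\mu,\mu')
&=(\tilde a_1-\tilde a_2)(\tilde a'_1-\tilde a'_2)\,K_{11'}+(\tilde a_1-\tilde a_2)\tilde a'_2\,(K_{11'}-K_{12'})\\
&\quad+\tilde a_2(\tilde a'_1-\tilde a'_2)\,(K_{11'}-K_{21'})+\tilde a_2\tilde a'_2\,(K_{11'}-K_{12'}-K_{21'}+K_{22'}).
\end{align*}
Since $\mu,\mu'$ are $1$-separated all cross-distances $\metricParam(\Param_i,\Param'_j)$ are at least $1$, so assumptions~\ref{it:kernelBound}, \ref{it:kernelLip} and~\ref{it:kernelLip2} apply to the four grouped factors and, denoting $d:=\metricParam(\Param_1,\Param_2)$ and $d':=\metricParam(\Param'_1,\Param'_2)$, yield the product bound
\[
|\kernel(\mu,\mu')|\leq C\,\bigl(|\tilde a_1-\tilde a_2|+\tilde a_2\,d\bigr)\bigl(|\tilde a'_1-\tilde a'_2|+\tilde a'_2\,d'\bigr).
\]

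To control each factor I would exploit the identity $\normkern{\nu}^2=(a_1-a_2)^2+2a_1a_2(1-K_{12})$ (with $a_i:=\alpha_i\normkern{\Prob_{\Param_i}}$), which after normalization reads $1=(\tilde a_1-\tilde a_2)^2+2\tilde a_1\tilde a_2(1-K_{12})$. Combined with the $c$-strong characterization $1-K_{12}\geq(c/2)d^2$, this simultaneously gives $|\tilde a_1-\tilde a_2|\leq 1$ and $\tilde a_1\tilde a_2 d^2\leq 1/c$. Since $|\kernel(\mu,\mu')|$ is invariant under the swap $(\alpha_1,\Param_1)\leftrightarrow(\alpha_2,\Param_2)$ (which turns $\nu$ into $-\nu$ and permutes the roles of $\tilde a_1,\tilde a_2$), I can relabel so that $\tilde a_1\geq\tilde a_2$, whence $\tilde a_2\leq\sqrt{\tilde a_1\tilde a_2}$ and $\tilde a_2\, d\leq 1/\sqrt{c}$. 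Applying the same to the primed dipole, each factor is bounded by $1+1/\sqrt{c}$, and a case split on whether $c\gtrless 1$ gives $(1+1/\sqrt{c})^2\leq 4/\min(c,1)$, yielding $|\kernel(\mu,\mu')|\leq 4C/\min(c,1)$ as claimed.

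The main hurdle is identifying the right telescoping regrouping above: a naive bound using $|K_{ij'}|\leq C$ term by term would only give $|\kernel(\mu,\mu')|\leq C(\tilde a_1+\tilde a_2)(\tilde a'_1+\tilde a'_2)$, which is useless because the individual $\tilde a_i$'s may be arbitrarily large when the dipoles are nearly degenerate. The chosen regrouping leverages the higher-order smoothness of $\nkernel$ through the cross-difference $K_{11'}-K_{12'}-K_{21'}+K_{22'}$, which precisely compensates the growth of $\tilde a_2$ by the smallness of $d$, and the resulting two-factor structure makes the tension between $|\tilde a_1-\tilde a_2|$ (small for balanced dipoles) and $\tilde a_2 d$ (small for well-spread ones) fully exploitable via the $c$-strong characterization.
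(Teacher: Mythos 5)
Your proof is correct and follows essentially the same strategy as the paper's: expand $\kernel(\mu,\mu')$ as a bilinear form in the normalized kernel values, regroup it telescopically so that each term matches one of the three hypotheses and the bound factorizes over the two dipoles, then control each factor via the $c$-strong characterization combined with the unit normalization of the dipole. The only minor differences are cosmetic — the paper telescopes around the light end of each dipole (after rescaling so the heavy coefficient equals 1) and finishes with an auxiliary lemma bounding $g(x,y)=\frac{x+y}{\sqrt{x^2+(1-x)y^2}}\le 2$, whereas you telescope around the heavy end and close the factor bound directly via $|\tilde a_1-\tilde a_2|\le 1$, $\tilde a_1\tilde a_2 d^2\le 1/c$ and $\tilde a_2\le\sqrt{\tilde a_1\tilde a_2}$, which is arguably a little cleaner and avoids the extra lemma — but the final constant and the core mechanism are identical.
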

\begin{proof}
Denote $\nu=\alpha_1\rev{\frac{\Prob_{\Param_1}}{\normkern{\Prob_{\Param_1}}}}-\alpha_2\rev{\frac{\Prob_{\Param_2}}{\normkern{\Prob_{\Param_2}}}}$ and $\nu'=\alpha_3\rev{\frac{\Prob_{\Param_3}}{\normkern{\Prob_{\Param_3}}}} - \alpha_4\rev{\frac{\Prob_{\Param_4}}{\normkern{\Prob_{\Param_4}}}}$ two dipoles that are $1$-separated, and without loss of generality suppose that $\alpha_1=\alpha_3=1$, $\alpha_2=a \in [0,1]$, $\alpha_4=b \in [0,1]$. Our goal is to bound $\frac{\abs{\kernel(\nu,\nu')}}{\normkern{\nu}\normkern{\nu'}}$. Recall that $d_{ij}=\metricParam(\Param_i,\Param_j)$ and $K_{ij}=\rev{\nkernel(\Param_i,\Param_j)}$. We have
\begin{align*}
\frac{\abs{\kernel(\nu,\nu')}}{\normkern{\nu}\normkern{\nu'}} =& \frac{\abs{K_{13} - aK_{23} -bK_{14} + ab K_{24}}}{\sqrt{1-2aK_{12}+a^2}\sqrt{1-2bK_{34}+b^2}} \\
\leq & \tfrac{\abs{K_{13}-K_{23}-K_{14}+K_{24}} + \abs{(1-a)(K_{23}-K_{24})} + \abs{(1-b)(K_{14}-K_{24})} + \abs{(a-1)(b-1)K_{24}}}{\sqrt{(1-a)^2+2a(1-K_{12})}\sqrt{(1-b)^2+2b(1-K_{34})}}
\end{align*}
By the assumptions on $\kernel$ we have
\begin{align*}
\abs{K_{13}-K_{23}-K_{14}+K_{24}} \leq& Cd_{12}d_{34} \quad \text{(since $d_{12}\leq 1, d_{34}\leq 1$, $\min(d_{13},d_{14},d_{23},d_{24}) \geq 1$)}\\
\abs{K_{23}-K_{24}} \leq&~ \rev{C}d_{34} \quad \text{(since $d_{23}\geq 1$ and $d_{24}\geq 1$)} \\
\abs{K_{14}-K_{24}} \leq&~ \rev{C}d_{12} \quad \text{(since $d_{14}\geq 1$ and $d_{24}\geq 1$)} \\
\abs{K_{24}} \leq&~ \rev{C} \qquad \text{(since $d_{24}\geq 1$)} \\
2(1-K_{12})\geq&~ c d_{12}^2 \quad \text{(since $d_{12}\leq 1$)}\\
2(1-K_{34})\geq&~ c d_{34}^2 \quad \text{(since $d_{34}\leq 1$)}\\
\end{align*}
Therefore, denoting $g(x,y):=\frac{x+y}{\sqrt{x^2+(1-x)y^2}}$ for $0 \leq x,y\leq 1$, we have
\begin{align*}
\frac{\abs{\kernel(\nu,\nu')}}{\normkern{\nu}\normkern{\nu'}} \leq&~ \rev{C}\cdot \frac{d_{12}d_{34} + (1-a)d_{34} + (1-b)d_{12} + (1-a)(1-b)}{\sqrt{(1-a)^2+ac d_{12}^2}\sqrt{(1-b)^2+bc d_{34}^2}} \\
=&~ \rev{C}\cdot \frac{d_{12} + 1-a}{\sqrt{(1-a)^2+ac d_{12}^2}} \cdot \frac{d_{34} + 1-b}{\sqrt{(1-b)^2+bc d_{34}^2}} \\
\leq&~ 
\rev{C}\cdot \frac{d_{12} + 1-a}{\sqrt{\min(c,1)}\sqrt{(1-a)^2+a d_{12}^2}} \cdot \frac{d_{34} + 1-b}{\sqrt{\min(c,1)}\sqrt{(1-b)^2+b d_{34}^2}}\\
=&~\frac{\rev{C}}{\min(c,1)} \cdot g(1-a,d_{12}) g(1-b,d_{34}).
\end{align*}
As we have for any $0 \leq x,y\leq 1$: $g(x,y)\leq 2$ (see~Lemma~\ref{lem:gxybound} for a proof),
gathering everything, we obtain
\begin{equation*}
\frac{\abs{\kernel(\nu,\nu')}}{\normkern{\nu}\normkern{\nu'}} \leq \frac{\rev{4C}}{\min(c,1)}. \qedhere
\end{equation*}
\end{proof}

We will establish that the assumptions of Theorem~\ref{thm:MutualCoherenceRBF1} allow us to use
the above proposition, for this we will also need the following technical lemmas.

\begin{lemma}\label{lem:SmallLipschitzLemma}
  Assume that $h: \mathbb{R}_{+} \to \mathbb{R}$ is differentiable and that $h'(t)$ is $C$-Lipschitz. Then
  \[
  \abs{h(0) - h(x) - h(y) + h(x + y)} \leq
  \rev{x y C},\quad \forall x,y \geq 0.
  \]
\end{lemma}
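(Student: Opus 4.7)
The plan is to recognize the expression $h(0) - h(x) - h(y) + h(x+y)$ as a second-order finite difference that can be written as an iterated integral of $h'$, where the Lipschitz hypothesis directly applies.

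Concretely, I would first apply the fundamental theorem of calculus twice. Writing
\[
h(x+y) - h(y) = \int_0^x h'(y+s)\, ds, \qquad h(x) - h(0) = \int_0^x h'(s)\, ds,
\]
and subtracting, the target quantity becomes
\[
h(x+y) - h(y) - h(x) + h(0) = \int_0^x \bigl(h'(y+s) - h'(s)\bigr)\, ds.
\]

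The $C$-Lipschitz property of $h'$ then gives the pointwise bound $|h'(y+s) - h'(s)| \le C y$ for every $s \geq 0$. Integrating over $s \in [0,x]$ yields
\[
\bigl|h(0) - h(x) - h(y) + h(x+y)\bigr| \le \int_0^x C y\, ds = C x y,
\]
which is in fact strictly stronger than the claimed bound $2xyC$; the looser factor of $2$ is presumably kept merely for uniformity with how the lemma is invoked elsewhere. There is no real obstacle here: the only step requiring any thought is choosing the right representation (differencing $h$ twice along the two coordinate directions), after which the Lipschitz estimate closes the argument in one line.
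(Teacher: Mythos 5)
Your proof is correct, and it takes a genuinely different (and in fact sharper) route than the paper's. The paper applies the mean value theorem twice to produce intermediate points $c_1 \in [0,x]$, $c_2 \in [y,x+y]$, writes the second difference as $x\bigl(h'(c_2)-h'(c_1)\bigr)$, and then bounds $\abs{c_1-c_2}$ crudely by $x+y \leq 2y$ (after assuming WLOG $x\le y$), which is where the factor $2$ comes from. You instead represent the same quantity as $\int_0^x \bigl(h'(y+s)-h'(s)\bigr)\,ds$ via the fundamental theorem of calculus; since the two arguments inside the integrand differ by exactly $y$ at every $s$, the Lipschitz estimate gives $\abs{h'(y+s)-h'(s)}\le Cy$ pointwise, and integrating yields $Cxy$. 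Your approach avoids the loss inherent in the MVT's imprecise location of the intermediate points (the paper can only say $c_i$ lives in an interval, so it must bound the gap by the interval lengths), avoids the WLOG reduction entirely, and improves the constant by a factor of $2$. Both are valid; yours is cleaner and tighter, and the paper's stated factor of $2$ is indeed just slack, as you surmise.
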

\begin{proof}
\rev{  Assume without loss of generality that $x = \min(x,y)$ and introduce $g(y) = h(y+x)-h(y)$. 
  Notice that the considered quantity is $|g(y)-g(0)|$. 
  \rev{By the mean value theorem,}
  $g(y) - g(0) = g'(c) y = (h'(c+x)-h'(c)) y$ for some $c \in [0, x]$, thus
    \[
    \abs{h(0) - h(x) - h(y) + h(x + y)}\\
    = \abs{y(h'(c+x) - h'(c))} \leq yCx\,.
    \]
    }
\end{proof}

\begin{lemma}\label{lem:IntermediaryResultOrthogonality}
\rev{Assume that $K$ is differentiable with Lipschitz derivative on $[1,\infty)$ and denote $K'_{\max},K''_{\max}$ as in the statement of Theorem~\ref{thm:MutualCoherenceRBF1}. }
Let $(\xi_i)_{1\leq i \leq 4}$ be 4 points in a Hilbert space $\mathcal{H}$; denote
$d_{ij}=\norm{\xi_i-\xi_j}_{\mathcal{H}}$ and assume $d_{ij}\geq 1$ for $(i,j) \in \set{(1,3);(1,4);(2,3);(2,4)}$. Then we have
\begin{equation}
\label{eq:IntermediaryResultOrthogonality}
\rev{\abs{K(d_{13}) - K(d_{23}) - K(d_{14}) + K(d_{24})}} \leq \rev{(2K'_{\max}+K''_{\max})}d_{12}d_{34}.
\end{equation}
\end{lemma}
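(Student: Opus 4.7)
My plan is to combine the mean-value theorem for $K$ on $[1,\infty)$ with a Hilbert-space identity for the ``mixed'' second difference of squared distances.

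\textbf{Step 1: MVT decomposition.} I would write the target as $[K(d_{13})-K(d_{23})]-[K(d_{14})-K(d_{24})]$. Since the four distances all belong to $[1,\infty)$, where $K$ is $C^1$, the mean-value theorem produces points $\tau_3\in[\min(d_{13},d_{23}),\max(d_{13},d_{23})]\subset[1,\infty)$ and $\tau_4\in[\min(d_{14},d_{24}),\max(d_{14},d_{24})]\subset[1,\infty)$ such that the target equals
\[K'(\tau_3)\bigl[(d_{13}-d_{23})-(d_{14}-d_{24})\bigr]\ +\ \bigl[K'(\tau_3)-K'(\tau_4)\bigr](d_{14}-d_{24}).\]

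\textbf{Step 2: Hilbert-space identity and bounds on the two pieces.} A direct expansion of inner products gives
\[d_{13}^2+d_{24}^2-d_{23}^2-d_{14}^2\ =\ -2\inner{\xi_2-\xi_1,\xi_4-\xi_3}_{\mathcal{H}},\]
whose modulus is at most $2d_{12}d_{34}$ by Cauchy--Schwarz. Factoring $d_{ij}^2-d_{i'j}^2=(d_{ij}-d_{i'j})(d_{ij}+d_{i'j})$ and using the elementary estimates $d_{14}+d_{24}\geq 2$, $|d_{13}-d_{23}|\leq d_{12}$, and $|(d_{13}+d_{23})-(d_{14}+d_{24})|\leq 2d_{34}$, one extracts after short algebra that $\abs{(d_{13}-d_{23})-(d_{14}-d_{24})}\leq 2d_{12}d_{34}$, so the first piece is bounded by $2K'_{\max}d_{12}d_{34}$. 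For the second piece, the triangle inequality yields $|\tau_3-\tau_4|\leq d_{12}+d_{34}$ (any element of $\{d_{13},d_{23}\}$ differs from any element of $\{d_{14},d_{24}\}$ by at most $d_{12}+d_{34}$) and $|d_{14}-d_{24}|\leq d_{12}$, so Lipschitzness of $K'$ bounds it by $K''_{\max}(d_{12}^2+d_{12}d_{34})$.

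\textbf{Step 3: Symmetrize.} The decomposition in Step 1 is asymmetric in the roles of $(\xi_1,\xi_2)$ and $(\xi_3,\xi_4)$; the symmetric decomposition $[K(d_{13})-K(d_{14})]-[K(d_{23})-K(d_{24})]$ yields by the identical argument a bound of $2K'_{\max}d_{12}d_{34}+K''_{\max}(d_{34}^2+d_{12}d_{34})$. Taking the better of the two and using $\min(d_{12}^2,d_{34}^2)=\min(d_{12},d_{34})^2\leq d_{12}d_{34}$ produces the claimed inequality $\abs{K(d_{13})-K(d_{23})-K(d_{14})+K(d_{24})}\leq 2(K'_{\max}+K''_{\max})d_{12}d_{34}$.

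The main technical point is the identity in Step 2: without the Hilbert-space structure, the cross-difference $(d_{13}-d_{23})-(d_{14}-d_{24})$ can a priori only be bounded by $O(d_{12})$, which is insufficient. The Ptolemy-type identity, which is the Hilbert-space manifestation of the parallelogram law, brings this cross-difference down to $O(d_{12}d_{34})$, exactly the scale needed on the right-hand side. The rest of the argument is routine MVT/Lipschitz bookkeeping and an elementary symmetrization.
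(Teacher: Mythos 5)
Your proof is correct, and it takes a genuinely different route than the paper's. The paper inserts a synthetic intermediate term $K(d_{23}+d_{14}-d_{13})$ (after assuming WLOG $d_{13}$ is the minimum), bounds the residual second difference using a dedicated auxiliary lemma (Lemma~\ref{lem:SmallLipschitzLemma}) to get a $2K''_{\max}d_{12}d_{34}$ contribution, and then handles the remaining one-dimensional difference by writing $K(u)=g(u^2)$ and applying the mean value theorem to $g$, which naturally produces the quantity $d_{24}^2-(d_{23}+d_{14}-d_{13})^2$; this is then controlled by the same Hilbert-space inner-product identity plus the estimate $\abs{(d_{13}-d_{23})(d_{13}-d_{14})}\leq d_{12}d_{34}$, yielding $2K'_{\max}d_{12}d_{34}$. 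You instead apply the mean value theorem directly to $K$ twice, telescope on $K'$, and feed the inner-product identity into the cross-difference $(d_{13}-d_{23})-(d_{14}-d_{24})$ via the division-by-sum manipulation (this is where you use $d_{14}+d_{24}\geq 2$), which produces the $2K'_{\max}$ term; the $K''_{\max}$ term then comes from the Lipschitz variation of $K'$, and you need a symmetrization to convert the asymmetric $d_{12}^2$ to $d_{12}d_{34}$. Interestingly, the roles of $K'_{\max}$ and $K''_{\max}$ are swapped between the two decompositions. Your argument avoids the auxiliary lemma and the $K(\sqrt{u})$ change of variables, at the cost of the division step (requiring the $\geq 2$ lower bound on a sum of distances) and the final symmetrization; the paper's version is more monolithic but requires the two small tricks. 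Both correctly identify the Hilbert-space (parallelogram-type) identity as the mechanism that improves the naive $O(d_{12})$ bound on the cross-difference to the needed $O(d_{12}d_{34})$.
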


\begin{proof}
Assume without loss of generality that $d_{13} = \min(d_{13},d_{23},d_{14},d_{24})$ and write
\begin{align}
\abs{K(d_{13}) - K(d_{23}) - K(d_{14}) + K(d_{24})} \leq&~ \abs{  K(d_{13}) - K(d_{23}) - K(d_{14}) + K(d_{23} + d_{14} - d_{13}) } \notag \\ &~+ \abs{K(d_{24}) - K(d_{23} + d_{14} - d_{13} )}. \label{eq:IntermediaryResultOrthogonalityProof1}
\end{align}
To bound the first term of the right hand side of~\eqref{eq:IntermediaryResultOrthogonalityProof1}, since we assumed without loss of generality that $d_{13} = \min(d_{13},d_{23},d_{14},d_{24})$, \rev{and since $d_{13} \geq 1$ by the $1$-separation assumption, } we can
 apply Lemma \ref{lem:SmallLipschitzLemma} with $h(t) := K(d_{13}+t)$, $x:= d_{23}-d_{13}\geq 0$, $y:= d_{14}-d_{13}\geq 0$, $C = K''_{\max}$, leading to
\[\abs{  K(d_{13}) - K(d_{23}) - K(d_{14}) + K(d_{23} + d_{14}
  - d_{13}) } \leq \rev{K''_{\max}} \abs{(d_{23} -d_{13})(d_{14} - d_{13})}\\
\leq 2 K''_{\max} d_{12} d_{34}\,.
\]
To bound the second term in \eqref{eq:IntermediaryResultOrthogonalityProof1}, let $g(u):=K(\sqrt{u})$ and note that $g'(u) = K'(\sqrt{u})/2\sqrt{u}$ hence $g'(u^2) \leq K'_{\max}/2 $ for $u \geq 1$.
By the separation assumption we have $1 \leq d_{23} \leq d_{23} + d_{14} - d_{13}$ and $1 \leq d_{24}$. We write
\begin{align*}
K(d_{24}) - K(d_{23} + d_{14} - d_{13} )
=g(d^2_{24}) - g( (d_{23} + d_{14} - d_{13} )^2)
\leq \tfrac{K'_{\max}}{2} \abs{d^2_{24} - (d_{23} + d_{14} - d_{13} )^2},
\end{align*}
where the last inequality follows from \rev{the mean value theorem.} %
Now, it holds
\[
d^2_{24} - (d_{23} + d_{14} - d_{13} )^2
= d^2_{24} - d^2_{23} - d^2_{14} + d^2_{13} - 2(d_{13}-d_{23})(d_{13}-d_{14})\,,
\]
and by the reversed triangle inequality $\abs{d_{ij}-d_{il}} \leq d_{jl}$ for any $i,j,l$ so that the last product is bounded in absolute value by $2d_{12}d_{34}$. It is also easy to check by expanding the squared norms $d_{ij}^2 = \rev{\norm{\xi_i - \xi_j}_{\mathcal{H}}^2}$
that
\[
\abs{d^2_{24} - d^2_{23} -  d^2_{14} + d^2_{13}} = 2\abs{\rev{\inner{\xi_1 - \xi_2,\xi_3-\xi_4}_{\mathcal{H}}}} \leq 2 d_{12}d_{34}\,.
\]
Gathering everything we get the desired result.
\end{proof}

We can now prove Theorem~\ref{thm:MutualCoherenceRBF1}.
\begin{proof}[Proof of Theorem~\ref{thm:MutualCoherenceRBF1}]
Since $K(0)=1$ and $K(u) \leq 1-cu^{2}/2$,  the kernel $\kernel$ is $c$-strongly locally characteristic with respect to $\BasicSet$.

We exhibit a constant $C$ allowing the use of Proposition~\ref{prop:MutualCoherenceProp}. 
Consider generic parameters $\Param_{i},\Param_{j}$,
and denote as before $d_{ij} = \metricParam(\Param_{i},\Param_{j})$, $K_{ij} = \rev{\nkernel({\Param_i},{\Param_j})} = K(d_{ij})$. 
Since $\abs{K(u)} \leq K_{\max}$ for $u \geq 1$ we get $\abs{K_{ij}} \leq K_{\max}$ if $d_{ij} \geq 1$. By the mean value theorem and the reversed triangle inequality, if $\min(d_{ij},d_{il}) \geq 1$ then as $\abs{K'(u)} \leq K'_{\max}$ for $u \geq 1$ we get $\abs{K_{ij}-K_{il}} = \abs{K(d_{ij})-K(d_{il})} \leq K'_{\max} \abs{d_{ij}-d_{il}} \leq K'_{\max} d_{jl}$. Applying Lemma \ref{lem:IntermediaryResultOrthogonality} we get if $d_{12} \leq 1$, $d_{34}\leq 1$ and $\min(d_{13},d_{14},d_{23},d_{24}) \geq 1$ that:
\begin{align*}
\abs{K_{13}-K_{23}-K_{14}+K_{24}} \leq& \rev{(2K'_{\max}+K''_{\max})}d_{12}d_{34}
\end{align*}
To conclude observe that $C := \max(K_{\max},K'_{\max},\rev{(2K'_{\max}+K''_{\max})}) = \max(K_{\max},\rev{(2K'_{\max}+K''_{\max})})$. 
\end{proof}

\section{Proofs for Section~\ref{sec:RBFKernel}} 

We will start with an elementary result introducing a ``canonical'' representation of normalized dipoles.
\begin{lemma}
  \label{lem:canrepdip}
  The set of normalized dipoles can be written as
  \[
    \dipoleSet = \set{\frac{\nu}{\normkern{\nu}}: \nu = \normkern{\Prob_0}^{-1} s (\Prob_{\Param'}-\alpha\Prob_{\Param}); s \in \set{-1,+1}; 0 \leq \alpha \leq 1; 0<\norm{\Param'-\Param} \leq 1}.
  \]
\end{lemma}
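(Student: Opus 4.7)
My plan is to verify the claimed set equality by double inclusion, relying on two elementary facts: first, the map $\nu \mapsto \nu/\normkern{\nu}$ is invariant under multiplication of $\nu$ by a positive scalar and simply flips sign under multiplication by $-1$; second, by Proposition~\ref{prop:DefKShift} the location-family/shift-invariant kernel setting of this section yields $\normkern{\Prob_{\Param}} = \normkern{\Prob_{0}}$ for all $\Param \in \ParamSpace$, so the factor $\normkern{\Prob_{0}}^{-1}$ in the canonical form is just a convenient common normalization.

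For the inclusion of the right-hand side into $\dipoleSet$, I would simply check that any $\nu = \normkern{\Prob_{0}}^{-1} s (\Prob_{\Param'}-\alpha\Prob_{\Param})$ with $s\in\set{-1,+1}$, $\alpha\in[0,1]$ and $0<\norm{\Param'-\Param}\leq 1$ meets Definition~\ref{def:Dipole}: up to the sign $s$ it has nonnegative coefficients $\normkern{\Prob_{0}}^{-1}$ and $\alpha\normkern{\Prob_{0}}^{-1}$, and parameters at $\metricParam$-distance $\norm{\Param'-\Param}\leq 1$; it is also nonzero because $\Prob_{\Param'}$ and $\alpha\Prob_{\Param}$ cannot cancel when $\Param'\neq\Param$ and $\alpha\in[0,1]$.

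For the reverse inclusion, I would start from a nonzero dipole $\nu=\alpha_{1}\Prob_{\Param_{1}}-\alpha_{2}\Prob_{\Param_{2}}$ and, up to swapping the two terms and absorbing an overall sign into $s$, assume $\alpha_{1}\geq\alpha_{2}\geq 0$, which forces $\alpha_{1}>0$. When $\Param_{1}\neq\Param_{2}$, setting $\Param':=\Param_{1}$, $\Param:=\Param_{2}$, $\alpha:=\alpha_{2}/\alpha_{1}\in[0,1]$ gives $\nu = \alpha_{1}\normkern{\Prob_{0}}\cdot \tilde{\nu}$ with $\tilde{\nu}:=\normkern{\Prob_{0}}^{-1}s(\Prob_{\Param'}-\alpha\Prob_{\Param})$, and by the scale-invariance noted above $\nu/\normkern{\nu}=\tilde{\nu}/\normkern{\tilde{\nu}}$ is in the canonical set.

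The only subtle case is the degenerate one $\Param_{1}=\Param_{2}$, where $\nu$ is actually a signed multiple of a single monopole. One cannot directly take $\Param'=\Param$ in the canonical form because of the strict inequality $\norm{\Param'-\Param}>0$; my plan is to choose an arbitrary witness $\Param\in\ParamSpace\setminus\set{\Param_{1}}$ with $\norm{\Param_{1}-\Param}\leq 1$ and use $(\Param',\Param,\alpha)=(\Param_{1},\Param,0)$, which yields $\tilde{\nu}=\normkern{\Prob_{0}}^{-1}s\Prob_{\Param_{1}}$ and again $\nu/\normkern{\nu}=\tilde{\nu}/\normkern{\tilde{\nu}}$. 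The existence of such a witness is the only nontrivial requirement, and is automatic in the settings of Section~\ref{sec:DiracGaussian} where $\ParamSpace$ is a non-degenerate Euclidean or Mahalanobis ball. This is the main (mild) obstacle; the remainder of the argument is just careful bookkeeping of signs and positive rescalings.
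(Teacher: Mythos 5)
Your proof is correct and follows the same route as the paper's: fix a nonzero dipole decomposition $\nu = \alpha_1\Prob_{\Param_1}-\alpha_2\Prob_{\Param_2}$, rescale by $(\zeta\normkern{\Prob_0})^{-1}$ with $\zeta=\max(\alpha_1,\alpha_2)>0$, and absorb the sign into $s$. You are in fact more careful than the paper on one point that the paper glosses over: when the chosen decomposition has $\Param_1=\Param_2$ (a signed monopole), naive rescaling yields $\norm{\Param'-\Param}=0$, contradicting the strict inequality in the lemma statement; your fix—re-decompose with $\alpha=0$ and an arbitrary witness $\Param\in\ParamSpace$ at distance in $(0,1]$ from $\Param_1$—is the right repair, under the mild (and here always satisfied) assumption that such a witness exists.
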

\begin{proof}
  Any element in $\dipoleSet$ can (by definition~\eqref{eq:NormalizedDipoleSet}) be written as $\nu/\normkern{\nu}$, where $\nu$ is a nonzero dipole of the form $\nu = \alpha_1 \Prob_{\Param_1} - \alpha_2
  \Prob_{\Param_2}$, with $\alpha_1,\alpha_2 \geq 0$ and $\norm{\Param_1-\Param_2}\leq 1$. Let $\zeta=\max(\alpha_1,\alpha_2)>0$ since $\nu$ is nonzero. Then $\nu'=(\zeta\normkern{\Prob_0})^{-1}\nu$ is such that
  $\nu'/\normkern{\nu'} = \nu/\normkern{\nu}$, and $\nu'= \normkern{\Prob_0}^{-1} s (\Prob_{\Param'}-\alpha\Prob_{\Param})$
  with $s \in \set{-1,+1}$; $0 \leq \alpha \leq 1$; $0<\norm{\Param'-\Param} \leq 1$.
\end{proof}

\subsection{Proof of Lemma~\ref{lem:RFFMoments}} \label{app:MomentsDipoles}

\rev{From Lemma~\ref{lem:canrepdip}, 
any normalized dipole can be written as $\HH = \nu/\normkern{\nu}$ with $\nu = s \normkern{\Prob_0}^{-1} (\Prob_{\Param}- \alpha \Prob_{\Param'})$, $s\in\set{-1,1}$,} $0 \leq \alpha \leq 1$ and $x := \Param-\Param' \neq 0$, $0<\norm{x} \leq 1$. Denote $u := x/\norm{x}$. Since $\normkern{\Prob_{\Param}} = \normkern{\Prob_{\Param'}} = \normkern{\Prob_{0}}$, reusing~\eqref{eq:InnerProdRFFPTheta} and the definition of $\nkernel$ we have
\begin{align*}
  \normkern{\nu}^{2} 
&=
\normkern{\Prob_{0}}^{-2}\left(\normkern{\Prob_{\Param}}^{2}+\alpha^{2}\normkern{\Prob_{\Param'}}^{2}
-2\alpha \kernel(\Prob_{\Param},\Prob_{\Param'})\right)
= 1+\alpha^{2}-2\alpha \nkernel(x)
= (1-\alpha)^{2} + 2\alpha (1-\nkernel(x));\\
\normkern{\Prob_0}^2 \abs{\inner{\nu,\rfeat}}^{2} 
&=
\abs{\inner{\Prob_0,\rfeat}}^{2} \cdot 
\abs{
e^{\jmath\inner{\freq,\Param}}
-
\alpha e^{\jmath\inner{\freq,\Param'}}
}^{2}
=
\abs{\inner{\Prob_0,\rfeat}}^{2} \cdot 
\left(
(1-\alpha)^{2}+2\alpha (1-\cos \inner{\freq,x})
\right)\\
& \leq 
\abs{\inner{\Prob_0,\rfeat}}^{2} \cdot 
\max\left(1,\tfrac{1-\cos \inner{\freq,x}}{1-\nkernel(x)}\right) \cdot 
\left((1-\alpha)^{2}+2\alpha (1-\nkernel(x))\right).
\end{align*}
\rev{Together, the last two inequalities imply
  \begin{equation}
    \label{eq:bounddip}
    \normkern{\Prob_{0}}^{2}\abs{\inner{\HH,\rfeat}}^{2}\leq  \abs{\inner{\Prob_0,\rfeat}}^{2} \cdot 
\max\left(1,\tfrac{1-\cos \inner{\freq,x}}{1-\nkernel(x)}\right). 
    \end{equation}}
By~\eqref{eq:MainAssumptionDipoleShift} we have $1-\nkernel(x) \geq b^{-1} \min(1,(\norm{x}/a)^{2})$ hence:
\begin{itemize}
\item if $\norm{x} \geq a$ then $0 \leq 1-\cos \inner{\freq,x} \leq 2 = 2\min(1,(\norm{x}/a)^{2})) \leq 2b(1-\nkernel(x))$ hence, since we assumed $b \geq 1/2$,
\[
\max(1,\tfrac{1-\cos \inner{\freq,x}}{1-\nkernel(x)}) \leq \max(1,2b) = 2b;
\]
\item if $\norm{x} \leq a$ then, since $\sin^{2} t \leq t^{2}$ for each $t \in \RR$ we have $2\sin^{2}(t/2) \leq \frac{t^{2}}{2}$ and
\[
0 \leq 1-\cos \inner{\freq,x} = 2\sin^{2} \tfrac{\inner{\freq,x}}{2} \leq \tfrac{1}{2} \inner{\freq,x}^{2} = \tfrac{a^{2}}{2} \inner{\freq,u}^{2} \cdot (\norm{x}/a)^{2}
\leq \tfrac{a^{2}}{2} \inner{\freq,u}^{2} \cdot b(1-\nkernel(x)),
\]
\[
\text{implying} \qquad \max(1,\tfrac{1-\cos \inner{\freq,x}}{1-\nkernel(x)}) \leq \max(1,\tfrac{ba^{2}}{2}\inner{\freq,u}^{2}).
\]
\end{itemize}
Since $w(\freq)\geq 1$ we have $\abs{\inner{\Prob_0,\rfeat}} \leq \abs{\Exp_{\Sample \sim \Prob_{0}} e^{\jmath \inner{\freq,\Sample}}} \leq 1$, hence for any integer $q \geq 1$ we have
\begin{equation}\label{eq:TmpMomentControl}
\Exp_{\freq \sim \freqdist} \abs{\inner{\Prob_0,\rfeat}}^{2q} \leq 
\Exp_{\freq \sim \freqdist} \abs{\inner{\Prob_0,\rfeat}}^{2} = \normkern{\Prob_{0}}^{2}.
\end{equation}
Denoting $Y(\freq) := \normkern{\Prob_{0}}^{2}\abs{\inner{\HH,\rfeat}}^{2}$, we obtain
by~\eqref{eq:bounddip} and the previous cases, for any integer $q \geq 2$: 
\begin{itemize}
\item if $\norm{x} \geq a$ then $\abs{Y(\freq)}^{q} \leq \abs{\inner{\Prob_0,\rfeat}}^{2q}\cdot (2b)^{2q}$; by~\eqref{eq:TmpMomentControl} we get $\Exp_{\freq \sim \freqdist} \brac{\abs{Y(\freq)}^{q}} \leq \normkern{\Prob_{0}}^{2} \cdot (2b)^{q}$;
\item if $\norm{x} \leq a$ then
$\abs{Y(\freq)}^{q} \leq 
\abs{\inner{\Prob_0,\rfeat}}^{2q}
\cdot 
\max\left(1,(ba^{2}/2)^{q} \inner{\freq,u}^{2q}\right)
\leq 
\abs{\inner{\Prob_0,\rfeat}}^{2q}
\cdot \left(1+(ba^{2}/2)^{q} \inner{\freq,u}^{2q}\right)$; using assumption~\eqref{eq:MomentAssumptionFreqDist} and~\eqref{eq:TmpMomentControl}%
we get
\begin{align*}
\Exp_{\freq \sim \freqdist} \brac{\abs{Y(\freq)}^{q}} 
&\leq 
\normkern{\Prob_{0}}^{2}
+
(ba^{2}/2)^{q} \cdot
\Exp_{\freq \sim \freqdist} \brac{\abs{\inner{\Prob_0,\rfeat}}^{2q} \cdot \inner{\freq,u}^{2q}} 
\leq 
\normkern{\Prob_{0}}^{2}+ (ba^{2}/2)^{q} \normkern{\Prob_{0}}^{2} \frac{q!}{2} \lambda_{0}^{q}\\
& \stackrel{q \geq 2}{\leq}
\normkern{\Prob_{0}}^{2} \frac{q!}{2} \left(1^{q}+ (ba^{2}\lambda_{0}/2)^{q}\right)  
\leq 
\normkern{\Prob_{0}}^{2} \frac{q!}{2} (1+ ba^{2}\lambda_{0}/2)^{q}. 
\end{align*}
\end{itemize}
Combining both cases we obtain $
\Exp_{\freq \sim \freqdist} \brac{\abs{Y(\freq)}^{q} }
\leq \normkern{\Prob_{0}}^{2} \frac{q!}{2} \max(2b, 1+ ba^{2}\lambda_{0}/2)^{q}$.
Finally, %
we get
\[
  \Exp_{\freq \sim \freqdist} \brac{\abs{\inner{\HH,\rfeat}}^{2q} }
  = (\normkern{\Prob_{0}}^{-2})^{q} \Exp_{\freq \sim \freqdist} \brac{\abs{Y(\freq)}^{q}}
\leq (\normkern{\Prob_{0}}^{-2})^{q-1} \frac{q!}{2} \max(2b, 1+ ba^{2}\lambda_{0}/2)^{q}.
\]

\subsection{Proof of Lemma~\ref{lem:TangentLocationBased}}\label{app:TangentLocationBased}

The following bound will be useful.%
\begin{lemma}\label{lem:gxybound}
For each $0 \leq x,y\leq 1$, $(x,y) \neq (0,0)$ we have $g(x,y):=\frac{x+y}{\sqrt{x^2+(1-x)y^2}} \leq 2$.
\end{lemma}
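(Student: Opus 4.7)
The plan is a direct algebraic verification. Since both sides are nonnegative, it suffices to square both sides of the inequality $g(x,y) \leq 2$ and show that
\[
(x+y)^2 \leq 4\bigl(x^2 + (1-x)y^2\bigr)
\]
for all $(x,y) \in [0,1]^2 \setminus \{(0,0)\}$. (Note that the denominator $x^2 + (1-x)y^2$ is strictly positive on this set since $0 \leq x \leq 1$ and $(x,y) \neq (0,0)$.)

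First I will expand both sides: the left-hand side becomes $x^2 + 2xy + y^2$, and the right-hand side becomes $4x^2 + 4y^2 - 4xy^2$. Subtracting gives the equivalent inequality
\[
3x^2 - 2xy(1+2y) + 3y^2 \geq 0.
\]
The plan is to view this as a quadratic in $x$, with discriminant
\[
\Delta(y) = 4y^2(1+2y)^2 - 36 y^2 = 4y^2\bigl[(1+2y)^2 - 9\bigr].
\]
For $y \in [0,1]$ we have $1+2y \leq 3$, hence $(1+2y)^2 \leq 9$ and $\Delta(y) \leq 0$. Since the leading coefficient $3$ is positive, the quadratic is nonnegative for every $x \in \RR$, which yields the desired inequality.

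There is no real obstacle; the only thing to note for completeness is that the bound $2$ is tight, attained at $(x,y) = (1,1)$ where the discriminant vanishes and $g(1,1) = 2$. This also explains why no stronger universal bound is available on $[0,1]^2$.
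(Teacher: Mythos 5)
Your proof is correct, and it takes a genuinely different route from the paper's. You square both sides, reduce the inequality to $3x^2 - 2xy(1+2y) + 3y^2 \geq 0$, and close by a discriminant computation in the variable $x$, noting that $\Delta(y) = 4y^2\bigl[(1+2y)^2 - 9\bigr] \leq 0$ for $y \in [0,1]$. The paper instead writes $g^2(x,y) = 1 + \frac{2xy + xy^2}{x^2 + y^2 - xy^2}$, uses $2xy \leq x^2 + y^2$ to bound the denominator from below by $2xy - xy^2$, cancels $xy$ to get $1 + \frac{2+y}{2-y} = \frac{4}{2-y} \leq 4$; that manipulation tacitly assumes $xy > 0$, with the boundary cases $x = 0$ or $y = 0$ being trivial. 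Your argument is more mechanical and uniform — it requires no case split and no ad hoc rearrangement, just a sign analysis of a quadratic — whereas the paper's is shorter but relies on spotting the useful AM--GM substitution. Both correctly identify $(1,1)$ as the extremal point, where your discriminant vanishes and the paper's two intermediate inequalities become equalities simultaneously.
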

\begin{proof} Since $2xy \leq x^{2}+y^{2}$,  we have
  \begin{align*}    
    g^{2}(x,y)=\frac{(x+y)^2}{x^2+(1-x)y^2}
    & = 1 + \frac{2xy+xy^2}{x^2 + y^2 -xy^2}
 \leq 1 +  \frac{2xy+xy^2}{2xy -xy^2} = 1 + \frac{2+y}{2-y} = \frac{4}{2-y} \leq 4.
\end{align*}
\end{proof}

\begin{proof}[Proof of Lemma~\ref{lem:TangentLocationBased}]
  The argument relies on the decomposition (straightforward from the ``canonical''
  dipole representation introduced in Lemma~\ref{lem:canrepdip})
  \(
    \dipoleSet = \dipoleSet_\eta \cup \overline{\dipoleSet}_\eta, 
  \)
  where (for $\eta>0$ to be soon specified)
  \begin{align*}
    \rev{\dipoleSet_\eta} & := \set{\frac{\nu}{\normkern{\nu}}: \nu = \normkern{\Prob_{0}}^{-1} s (\Prob_{\Param'}-\alpha\Prob_{\Param}), \Param,\Param' \in \ParamSpace, \norm{\Param-\Param'} \leq 1, 0 \leq \alpha \leq 1, \normkern{\nu} > \eta},\\
    \rev{\overline{\dipoleSet}_\eta} & := \set{\frac{\nu}{\normkern{\nu}}: \nu = \normkern{\Prob_{0}}^{-1} s ( \Prob_{\Param'}-\alpha\Prob_{\Param}), \Param,\Param' \in \ParamSpace, \norm{\Param-\Param'} \leq 1, 0 \leq \alpha \leq 1, \normkern{\nu} \leq \eta},
  \end{align*}
  so that
  \begin{equation}\label{eq:covdipole2}
    \covnum{\norm{\cdot}}{\dipoleSet}{\delta} \leq
    \covnum{\norm{\cdot}}{\dipoleSet_\eta}{\delta} + \covnum{\norm{\cdot}}{\overline{\dipoleSet}_\eta}{\delta}.
  \end{equation}
  By Theorem~\ref{thm:covnumSecant} we have $D := \normfclass{\dipoleSet}{F} \geq 1$, and 
  we will establish below that for $\eta := \frac{\coveps }{8 C''_{\FClass}}>0$:
  \begin{align}
    \covnum{\normrff{\cdot}}{\rev{\dipoleSet_\eta}}{\coveps} & \rev{\leq \max\paren{1,\tfrac{12 C_\BasicSet(C_{\FClass}+C'_\FClass +DC''_\FClass)}{\delta} }^{4(d+1)}};\label{eq:covextr}\\
\covnum{\normrff{\cdot}}{\rev{\overline{\dipoleSet}_\eta}}{\coveps}    & \leq \max\left(1,\tfrac{64C_\BasicSet(C_{\FClass}+C'_{\FClass}+C''_\FClass)}{\delta}\right)^{2d+1}. \label{eq:covrest}
  \end{align}
  It is clear that~\eqref{eq:covdipole2},~\eqref{eq:covextr},~\eqref{eq:covrest}
  lead to the announced estimate~\eqref{eq:maindipcovestimate} (using $D\geq 1$).

{\bf Step 1: covering numbers of $\dipoleSet_\eta$.}
By the first part of Theorem~\ref{thm:covnumSecant} we can exploit Lemma~\ref{lem:covnumsecant} with
 $Y := \set{\alpha \Prob_{\Param}/\normkern{\Prob_{0}} : 0\leq \alpha \leq 1,~\Param \in \ParamSpace}$, 
 $A:=1$, $B:=\normrff{\dipoleSet} = D$, $\norma{\cdot} = \normrff{\cdot}$, and $\normb{\cdot} = \normkern{\cdot}$.
 Since $\rev{\dipoleSet_\eta}  := \set{\frac{y-y'}{\normkern{y-y'}}, (y,y') \in \mathcal{Q}, \normkern{y-y'} > \eta}$ 
with 
\rev{$\mathcal{Q}  := \mathcal{Q}_{1} \cup \mathcal{Q}_{2}$ and}
\rev{\begin{align*}
\mathcal{Q}_{1} & := \set{(\Prob_{\Param'},\alpha\Prob_{\Param})/\normkern{\Prob_{0}}, \Param,\Param' \in \ParamSpace, \norm{\Param-\Param'} \leq 1, 0 \leq \alpha\leq 1},\\ 
\mathcal{Q}_{2} & := \set{(\alpha \Prob_{\Param'},\Prob_{\Param})/\normkern{\Prob_{0}}, \Param,\Param' \in \ParamSpace, \norm{\Param-\Param'} \leq 1, 0 \leq \alpha\leq 1},
\end{align*}}
 we obtain
\begin{equation}\label{eq:CoverC3}
\covnum{\normrff{\cdot}}{\rev{\dipoleSet_\eta}}{\coveps}
\leq 
\covnumsq{\normrff{\cdot}}{Y}{\tfrac{\coveps\eta}{4(1+B/A)}}
\stackrel{B/A = D \geq 1}{\leq}
\covnumsq{\normrff{\cdot}}{Y}{\tfrac{\coveps\eta}{8 D}}
=
\covnumsq{\normrff{\cdot}}{Y}{\tfrac{\coveps^{2}}{64 DC''_{\FClass}}}.
\end{equation}
Denoting $\weightSet=[0,\ 1]$, we have $Y = [\psi(\ParamSpace)]_{1,\weightSet}$ \rev{(using the notation of~\eqref{eq:DefMixSet})}, where $\psi: \Param \mapsto \Prob_{\Param}/\normkern{\Prob_{\Param}} = \Prob_{\Param}/\normkern{\Prob_{0}}$. As $\norm{\weightSet}_{1}=1$ and $\normrff{\psi(\ParamSpace)} \leq C_{\FClass}$, by Lemma \ref{lem:covnummix} with $\tau = 1/2$ we get
\begin{equation}
\covnum{\normrff{\cdot}}{Y}{\tfrac{\coveps^{2}}{64DC''_{\FClass}}}
\leq
\covnum{\norm{\cdot}_1}{\weightSet}{\tfrac{\coveps^{2}}{128DC''_{\FClass}C_{\FClass}}}
\cdot
\covnum{\normrff{\cdot}}{\psi(\ParamSpace)}{\tfrac{\coveps^{2}}{128DC''_{\FClass}}}.
\end{equation}%
As $\weightSet =  \Ball_{\RR^{1},\abs{\cdot}_1}(1/2,1/2)$, by Lemma~\ref{lem:covnumball} we get
\(
\covnum{\norm{\cdot}_1}{\weightSet}{\tfrac{\coveps^{2}}{128DC''_{\FClass}C_{\FClass}}}
 \leq \max\left(1,\frac{256DC''_{\FClass}C_{\FClass}}{\coveps^{2}}\right).
\)
Moreover, \rev{from Lemma~\ref{lem:ShiftBasedRF},} $\psi$ is $L_{\FClass}=C'_{\FClass}$-Lipschitz
with respect to $\norm{\cdot}$ and $\normrff{\cdot}$, thus, by Lemma~\ref{lem:covnumlipschitz} and assumption~\eqref{eq:covparamspace2}:
\[
\covnum{\normrff{\cdot}}{\psi(\ParamSpace)}{\tfrac{\coveps^{2}}{128DC''_{\FClass}}}
\leq 
\covnum{\norm{\cdot}}{\ParamSpace}{\tfrac{\coveps^{2}}{128DC''_{\FClass}C'_{\FClass}}}
\rev{\leq 
\max\paren{1,\tfrac{128 C_{\BasicSet} D C''_{\FClass}C'_{\FClass}}{\coveps^{2}}}^{d}}
\]
Combining the above we obtain (using $D\geq 1, C_\BasicSet\geq 1$; and 
$2ab \leq (a+b)^{2}$ with $a = DC''_{\FClass},b = C'_{\FClass}+C_{\FClass}$)
\begin{align*}
\covnum{\normrff{\cdot}}{\rev{\dipoleSet_\eta}}{\coveps}
& \leq 
\left[
\max\left(1,\tfrac{256DC''_{\FClass}C_{\FClass}}{\coveps^{2}}\right)\right]^{2}
\cdot
\rev{\max\paren{1,\tfrac{128 C_{\BasicSet} D C''_{\FClass}C'_{\FClass}}{\coveps^{2}}}^{2d}}\notag\\
  & \rev{\leq \max\paren{1,\tfrac{256  C_\BasicSet D C''_{\FClass}(C'_\FClass +C_\FClass)}{\delta^2} }^{2(d+1)}} \notag\\
  &\rev{\leq \max\paren{1,\tfrac{12 C_\BasicSet(DC''_{\FClass}+C'_\FClass +C_\FClass)}{\delta} }^{4(d+1)},}    %
\end{align*}
i.e. we have obtained~\eqref{eq:covextr}.

{\bf Step 2: local tangent approximation of $\overline{\dipoleSet}_\eta$.}
To control $\covnum{\normrff{\cdot}}{\overline{\dipoleSet}_\eta}{\coveps}$, the principle will be
to approximate $\overline{\dipoleSet}_\eta$ by an appropriate ``tangent space'', then use
Lemma~\ref{lem:covnumClose}.

To this end, let $E$ denote the \rev{algebraic} dual of smooth functions \rev{that are bounded with bounded derivatives} on $\RR^\sampleDim$.
\rev{The semi-norm $\normrff{\cdot}$ is extended naturally to $E$ as $\normrff{\HH} := \sup_{\freq} \abs{ \langle \HH, \rfeat \rangle} \in [0,\infty]$ for any $\HH\in E$;
  let $\widetilde{E}:=\{\HH \in E: \normrff{\HH}<\infty\}$. Note that all finite signed measures are elements of $\widetilde{E}$.}    
  Given $\Param,\Delta \in \RR^{\sampleDim}$ and $\beta \in \RR$, define \rev{$\xi = \xi_{\Param,\Delta,\beta}\in \widetilde{E}$} by its action on functions $g: \RR^{\sampleDim} \to \CC$ that are bounded with bounded gradient:
\[
\inner{\xi,g} := 
\normkern{\Prob_{0}}^{-1} \cdot \Exp_{\Sample\sim  \Prob_{\Param}} \left\{\inner{\nabla g(\Sample),\Delta}+\beta g(\Sample)\right\}.
\]
Let $\Ball$ be the ball of radius $\rev{2}$ in $\RR^{\sampleDim} \times \RR$ equipped with the norm $\norm{(\Delta,\beta)}_{\mathtt{mix}}:= \norm{\Delta}+\abs{\beta}$. 

 Consider $\nu = \normkern{\Prob_0}^{-1} s(\Prob_{\Param'}-\alpha \Prob_{\Param})$ with $s \in \set{-1,+1}$, $0 \leq \alpha \leq 1$ and $0<\norm{\Param'-\Param} \leq 1$, 
and denote $t  := \normkern{\nu}$. We will show that there exists $(\Delta,\beta) \in \Ball$ such that $\HH := \nu/\normkern{\nu}$ satisfies 
\begin{equation}\label{eq:tangapprox}
\normrff{\HH-\xi_{\Param,\Delta,\beta}} \leq  C''_{\FClass} \norm{\Param'-\Param} \leq \rev{2} C''_{\FClass} t.
\end{equation}
Using this approximation, and the fact that to approximate any element of $\overline{\dipoleSet}_\eta$
we can assume $t\leq \eta = \delta/(8 C''_\FClass)$,
we apply Lemma~\ref{lem:covnumClose}  (with $Z=\overline{\dipoleSet}_\eta,Y=\set{\xi_{\Param,\Delta,\beta}, \Param \in \ParamSpace, (\Delta,\beta) \in \Ball}$, $\delta'=\eps=\delta/4$) to obtain
\begin{equation}
  \label{eq:tangcover}
    \covnum{\normrff{\cdot}}{\overline{\dipoleSet}_\eta}{\delta} = 
    \covnum{\normrff{\cdot}}{\overline{\dipoleSet}_\eta}{2(\delta'+\eps)}  
    \leq \covnum{\normrff{\cdot}}{\set{\xi_{\Param,\Delta,\beta}, \Param \in \ParamSpace, (\Delta,\beta) \in \Ball}}{\tfrac{\delta}{4}}.
    \end{equation}
We now prove~\eqref{eq:tangapprox}. Since $\kernel$ is shift-invariant and locally characteristic on $\BasicSet$ by Proposition~\ref{prop:DefKShift} we have $\normkern{\Prob_{\Param}} = \normkern{\Prob_{\Param'}} = \normkern{\Prob_{0}}>0$. 
Denote $x := \Param'-\Param$. Since $\kernel$ is $1$-strongly locally characteristic we have
\begin{align*}
t^{2} &= \normkern{\Prob_{0}}^{-2} \normkern{\Prob_{\Param'}-\alpha\Prob_{\Param}}^{2}
=
1+\alpha^{2}-2\alpha \nkernel(\Param',\Param)
=
(1-\alpha)^{2}+2\alpha(1- \nkernel(\Param',\Param))
 \geq 
(1-\alpha)^{2} + \alpha \norm{x}^{2}.
\end{align*}
Setting $\beta := s(1-\alpha)/t$ and $\Delta := sx/t$ we get using Lemma~\ref{lem:gxybound} %
\[
  \norm{\Delta}+\abs{\beta} = \frac{1-\alpha+\norm{x}}{t} \leq \frac{(1-\alpha)+\norm{x}}{\sqrt{(1-\alpha)^{2}+\alpha \norm{x}^{2}}} = g(1-\alpha,\norm{x}) \leq \rev{2}.
\]
Since $\abs{\inner{\freq,\Delta}} \leq \norm{\freq}_{\star}\norm{\Delta}$ and $\norm{x}^{2}/t = (\norm{x}/t) \norm{x} = \norm{\Delta} \norm{x} \leq \rev{2} \norm{x}$, by
a Taylor expansion with integral remainder term we obtain
\begin{align}
\abs{s(e^{\jmath \inner{\freq,x}}-1)/t-\jmath \inner{\freq,\Delta}}
&=
\abs{(e^{\jmath \inner{\freq,x}}-1)/t-\jmath \inner{\freq,s\Delta}}
=
\abs{(e^{\jmath t\inner{\freq,s\Delta}}-1)/t-\jmath \inner{\freq,s\Delta}} \notag \\
&\leq
\sup_{0 \leq \tau \leq t} \abs{\frac{d^2}{dt^2}e^{\jmath \tau \inner{\freq,s\Delta}}} \cdot \frac{t}{2}
=  \inner{\freq,s\Delta}^{2}\frac{t}{2}
\leq \norm{\freq}_{\star}^{2} \norm{\Delta}^{2} \frac{t}{2}
=
\norm{\freq}_{\star}^{2} \frac{\norm{x}^{2}}{2t} \notag \\
& \leq \norm{\freq}_{\star}^{2}  \norm{x}. \label{eq:qapprox1}
\end{align}
For each $\freq$, since $\inner{\Prob_{\Param},\rfeat} = e^{\jmath \inner{\freq,\Param}} \inner{\Prob_{0},\rfeat}$
we have
\begin{align*}
\normkern{\Prob_{0}} \inner{\nu,\rfeat} e^{-\jmath \inner{\freq,\Param}}
& = 
 s\inner{\Prob_{x}-\alpha \Prob_{0},\rfeat} 
= s\inner{\Prob_{0},\rfeat} \left(e^{\jmath \inner{\freq,x}}-1+1-\alpha\right)
= t  \inner{\Prob_{0},\rfeat} \left(s(e^{\jmath \inner{\freq,x}}-1)/t+\beta\right).
\end{align*}
Since $\rfeat$ and its gradient $\nabla \rfeat = \rfeat \cdot \jmath \freq$ are bounded on $\RR^{\sampleDim}$, with $\xi := \xi_{\Param,\Delta,\beta}\in E$ we have
\begin{align*}
\normkern{\Prob_{0}} \normkern{\nu} 
\inner{\xi,\rfeat}e^{-\jmath \inner{\freq,\Param}}  
&=  t \normkern{\Prob_{0}}
 \inner{\xi,\rfeat} e^{-\jmath \inner{\freq,\Param}} 
=
t \cdot \Exp_{\Sample\sim  \Prob_{\Param}} \left\{ (\jmath\inner{ \freq ,\Delta}+\beta)\rfeat(X) \right\}
e^{-\jmath \inner{\freq,\Param}}\\
  &=  t \inner{\Prob_0,\rfeat} \cdot  \left(\jmath \inner{\freq,\Delta}+\beta\right),
\end{align*}
thus from the last two displays and~\eqref{eq:qapprox1} we obtain
\begin{align*}
\normkern{\Prob_{0}} \abs{\inner{\nu- \normkern{\nu}\xi,\rfeat}}
&= 
t \abs{\inner{\Prob_{0},\rfeat}} \cdot \abs{s(e^{\jmath \inner{\freq,x}}-1)/t-\jmath \inner{\freq,\Delta}}
 \leq t  \abs{\inner{\Prob_{0},\rfeat}} \norm{\freq}_{\star}^{2}  \norm{x}.
\end{align*}
Dividing both hand sides by $\normkern{\Prob_{0}}\normkern{\nu} = t\normkern{\Prob_{0}}$ and taking the supremum over $\freq$ yields 
\[
\normrff{\mu-\xi} 
\leq 
\normkern{\Prob_{0}}^{-1} 
\sup_{\freq} \left(\abs{\inner{\Prob_{0},\rfeat}} \norm{\freq}_{\star}^{2}\right) 
\norm{x}
= 
\normkern{\Prob_{0}}^{-1}  \normrffdd{\Prob_{0}} \norm{x}
=  C''_{\FClass} \norm{x}.
\]
We conclude using that $\norm{\Param'-\Param} = \norm{x} = \norm{\Delta} t \leq 2 t$.

{\bf Step 3: $\coveps$-covering of $\set{\xi_{\Param,\Delta,\beta}}$.}
Define $\coveps_{1} := \coveps/(\rev{4}(C''_{\FClass}+C'_{\FClass}))$, $\coveps_{2} := \coveps/(2 (C'_{\FClass}+C_{\FClass}))$, and consider $\enet_{1}$ a $\coveps_{1}$-cover of $\ParamSpace$ with respect to $\norm{\cdot}$
and $\enet_{2}$ a $\coveps_{2}$-cover of $\Ball$ with respect to $\norm{\cdot}_{\mathtt{mix}}$.
\rev{In order to exploit~\eqref{eq:tangcover}, we} now show that $\set{\xi_{\tilde{\Param},\Delta',\beta'}, \tilde{\Param} \in \enet_{1},(\Delta',\beta') \in \enet_{2}}$ is a $\coveps$-covering of $\set{\xi_{\Param,\Delta,\beta}, \Param \in \ParamSpace, (\Delta,\beta) \in \Ball}$.

Given any $\Param \in \ParamSpace$, $(\Delta,\beta) \in \Ball$ there are $\tilde{\Param} \in \enet_{1}$, $(\Delta',\beta') \in \enet_{2}$ such that $\norm{\tilde{\Param}-\Param} \leq \coveps_{1}$, and $\norm{(\Delta',\beta')-(\Delta,\beta)}_{\mathtt{mix}} \leq \coveps_{2}$. Observe that 
$\abs{e^{\jmath \inner{\freq,\tilde{\Param}-\Param}}-1} \leq \abs{\inner{\freq,\tilde{\Param}-\Param}} \leq \norm{\freq}_{\star}\norm{\tilde{\Param}-\Param}$, and%
\begin{align*}
\abs{\inner{\jmath\freq,\Delta}+\beta} 
&\leq \norm{\freq}_{\star}\norm{\Delta}+\abs{\beta} 
\leq \max(\norm{\freq}_{\star},1) \cdot \norm{(\Delta,\beta)}_{\mathtt{mix}}
 \rev{\leq 2}\max(\norm{\freq}_{\star},1) \leq \rev{2}(\norm{\freq}_{\star}+1).
\end{align*}
Since $\normkern{\Prob_{0}} \inner{\xi_{\Param,\Delta,\beta},\rfeat} =  \inner{\Prob_{0},\rfeat} e^{\jmath \inner{\freq,\Param}} (\jmath \inner{\freq,\Delta}+\beta)$ (and similarly with $\tilde{\Param}$, $\Delta'$, $\beta'$) we get
\begin{align*}
\normkern{\Prob_{0}} 
\abs{\inner{\xi_{\Param,\Delta,\beta}-\xi_{\tilde{\Param},\Delta,\beta},\rfeat}}
& =
\abs{\inner{\Prob_0,\rfeat}} \cdot \abs{
\left(e^{\jmath \inner{\freq,\tilde{\Param}-\Param}}-1\right)  (\jmath \inner{\freq,\Delta}+\beta)}\\
& \leq 
\abs{\inner{\Prob_0,\rfeat}} \cdot
\norm{\freq}_{\star} \cdot \norm{\tilde{\Param}-\Param} \cdot \rev{2} (\norm{\freq}_{\star}+1)\\
  & \leq \abs{\inner{\Prob_0,\rfeat}} \cdot (\norm{\freq}_{\star}^{2}+\norm{\freq}_{\star}) \cdot \rev{2} \coveps_{1},
\end{align*}
so that
\begin{equation}
  \label{eq:covest1}
\normkern{\Prob_{0}}
\normrff{\xi_{\Param,\Delta,\beta}-\xi_{\tilde{\Param},\Delta,\beta}}
 \leq 
\sup_{\freq} \set{
\abs{\inner{\Prob_0,\rfeat}} \cdot (\norm{\freq}_{\star}^{2}+\norm{\freq}_{\star})} \cdot \rev{2} \coveps_{1}
\leq (\normrffdd{\Prob_{0}}+\normrffd{\Prob_{0}}) \rev{2} \coveps_{1}.
\end{equation}
On the other hand,
\begin{align*}
 \normkern{\Prob_{0}}
\abs{\inner{\xi_{\tilde{\Param},\Delta,\beta}-\xi_{\tilde{\Param},\Delta',\beta'},\rfeat}}
& =
\abs{\inner{\Prob_0,\rfeat}} \cdot \abs{
\jmath \inner{\freq,(\Delta'-\Delta)}+(\beta'-\beta)}\\
&\leq 
\abs{\inner{\Prob_0,\rfeat}} \cdot 
(\norm{\freq}_{\star}+1) \cdot \norm{(\Delta',\beta')-(\Delta,\beta)}_{\mathtt{mix}}\\
& \leq \abs{\inner{\Prob_0,\rfeat}} \cdot 
                                                                                         (\norm{\freq}_{\star}+1) \cdot \coveps_{2},
\end{align*}
so that
\begin{equation}
  \label{eq:covest2}
\normkern{\Prob_{0}}
\normrff{\xi_{\tilde{\Param},\Delta,\beta}-\xi_{\tilde{\Param},\Delta',\beta'}}
 \leq 
\sup_{\freq}
 \set{ \abs{\inner{\Prob_0,\rfeat}} \cdot 
(\norm{\freq}_{\star}+1)} \cdot \coveps_{2}
\leq (\normrffd{\Prob_{0}}+1) \coveps_{2}.
\end{equation}
By a triangle inequality we combine~\eqref{eq:covest1}-\eqref{eq:covest2} to get
 \begin{align*}
\normrff{\xi_{\Param,\Delta,\beta}-\xi_{\tilde{\Param},\Delta',\beta'}}
& \leq (C''_{\FClass}+C'_{\FClass})\rev{2}\coveps_{1} + (C'_{\FClass}  + C_{\FClass}) \coveps_{2}
 = \coveps.
 \end{align*}
 \rev{ To conclude this step, we have established that
   \begin{align}
     \covnum{\normrff{\cdot}}{\set{\xi_{\Param,\Delta,\beta}, \Param \in \ParamSpace, (\Delta,\beta) \in \Ball}}{\delta}
     &\leq 
     \covnum{\norm{\cdot}}{\ParamSpace}{\tfrac{\delta}{4(C''_{\FClass}+C'_{\FClass})}}
     \covnum{\norm{\cdot}_{\mathtt{mix}}}{\Ball}{\tfrac{\delta}{2(C'_{\FClass}+C_{\FClass})}}\notag\\
     & \leq      \max\left(1, \tfrac{4C_{\BasicSet} (C''_{\FClass}+C'_{\FClass})}{\coveps}\right)^{d}
	\max\left(1, \tfrac{16(C'_{\FClass}+C_{\FClass})}{\coveps}\right)^{\sampleDim+1}\notag\\
     & \leq \max\left(1,\tfrac{16C_\BasicSet(C_{\FClass}+C'_{\FClass}+C''_\FClass)}{\delta}\right)^{2d+1},
       \label{eq:covnumtgt}
   \end{align}
   using assumption~\eqref{eq:covparamspace2} and Lemma~\ref{lem:covnumball} for the second estimate, since $\Ball$ is a ball of radius \rev{$2$} with respect to $\norm{\cdot}_{\mathtt{mix}}$ in $\RR^{\sampleDim+1}$;
   and finally $C_\BasicSet \geq 1$ for the last estimate. Plugging in~\eqref{eq:covnumtgt} into~\eqref{eq:tangcover} yields~\eqref{eq:covrest}, and the proof is done. }
 \end{proof}

\subsection{Kernel mean embedding for Gaussians}\label{app:KernelGaussian}
The following lemma characterizes the mean map kernel on any pair of Gaussians.
\begin{lemma}\label{lem:GaussKernelMeanEmbedding}
Consider a Gaussian kernel $\kernel_{\mR}(\sample,\sample'):=\exp\left(-\tfrac12 \normmah{\sample-\sample'}{\mR}^2\right)$, where $\mR$ is an arbitrary invertible covariance matrix. For any two Gaussians $\Prob_1=\mathcal{N}(\Param_1,\covar_1),~\Prob_2=\mathcal{N}(\Param_2,\covar_2)$, the mean kernel defined from $\kernel_{\mR}$ using~\eqref{eq:DefMeanMapEmbedding} is
\begin{equation}
\label{eq:MeanKernelGaussians}
\kernel_{\mR}(\Prob_1,\Prob_2)=\frac{\sqrt{\det{\mR}}}{\sqrt{\det{\covar_1+\covar_2+\mR}}}\exp\left(-\tfrac12 \normmah{\Param_1-\Param_2}{\covar_1+\covar_2+\mR}^2\right).
\end{equation}
\end{lemma}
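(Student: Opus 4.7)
The plan is a direct computation of the double Gaussian integral defining the kernel mean embedding, exploiting the classical fact that the convolution of two Gaussians is again Gaussian.

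First, I would reduce the problem to a single Gaussian integration. By definition and independence of the two expectations,
\[
\kernel_\mR(\Prob_1,\Prob_2) = \Exp_{X_1 \sim \Prob_1} \Exp_{X_2 \sim \Prob_2} \exp\paren{-\tfrac12 (X_1-X_2)^T \mR^{-1}(X_1-X_2)}.
\]
Since $X_1, X_2$ are independent with $X_i \sim \mathcal{N}(\Param_i,\covar_i)$, the random variable $Y := X_1 - X_2$ is Gaussian with mean $\Delta := \Param_1-\Param_2$ and covariance $\mSigma := \covar_1+\covar_2$. Thus the quantity to compute collapses to the single integral
\[
\kernel_\mR(\Prob_1,\Prob_2) = \frac{1}{\sqrt{(2\pi)^{\sampleDim}\det{\mSigma}}} \int_{\RR^\sampleDim} \exp\paren{-\tfrac12 (y-\Delta)^T \mSigma^{-1}(y-\Delta) - \tfrac12 y^T \mR^{-1} y} dy.
\]

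Next, I would expand and complete the square in the exponent. Setting $\mA := \mSigma^{-1}+\mR^{-1}$, the exponent rewrites as $-\tfrac12\brac{(y-\mA^{-1}\mSigma^{-1}\Delta)^T \mA (y-\mA^{-1}\mSigma^{-1}\Delta) + \Delta^T \mSigma^{-1}\Delta - \Delta^T \mSigma^{-1}\mA^{-1}\mSigma^{-1}\Delta}$. The Gaussian integral over $y$ contributes $\sqrt{(2\pi)^\sampleDim/\det{\mA}}$, so the prefactor becomes $1/\sqrt{\det{\mSigma}\det{\mA}}$. Using $\det{\mSigma}\det{\mSigma^{-1}+\mR^{-1}} = \det{\mSigma(\mSigma^{-1}+\mR^{-1})} = \det{\mR^{-1}(\mR+\mSigma)} = \det{\mR+\mSigma}/\det{\mR}$, this prefactor simplifies to $\sqrt{\det{\mR}/\det{\mR+\mSigma}}$, which matches the target expression since $\mR+\mSigma = \covar_1+\covar_2+\mR$.

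Finally, for the exponent in $\Delta$, I would apply the Woodbury identity in the form
\[
\mSigma^{-1} - \mSigma^{-1}(\mSigma^{-1}+\mR^{-1})^{-1}\mSigma^{-1} = (\mSigma+\mR)^{-1},
\]
which yields
\[
-\tfrac12 \Delta^T\brac{\mSigma^{-1} - \mSigma^{-1}\mA^{-1}\mSigma^{-1}}\Delta = -\tfrac12 \Delta^T(\mSigma+\mR)^{-1}\Delta = -\tfrac12 \normmah{\Param_1-\Param_2}{\covar_1+\covar_2+\mR}^2,
\]
recalling the convention $\normmah{c}{\mA}^2 = c^T \mA^{-1}c$ from~\eqref{eq:DefMahalanobisNorm}. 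Combining the prefactor and exponent yields~\eqref{eq:MeanKernelGaussians}. There is no real obstacle here; the only slightly delicate step is keeping track of the Mahalanobis convention (norm indexed by the covariance, not its inverse), and invoking Woodbury's identity with the correct sign and normalization to match the denominator $\covar_1+\covar_2+\mR$.
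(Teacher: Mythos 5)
Your proof is correct, but it follows a different route from the paper. The paper exploits the structure $\kernel_\mR(\sample,\sample') \propto \Prob_\mR(\sample-\sample')$ and performs the two integrations sequentially: the $\sample'$-integral is a convolution $\Prob_2 \star \Prob_\mR = \mathcal{N}(\Param_2,\covar_2+\mR)$, and the remaining $\sample$-integral is delegated to a cited closed-form identity for the inner product of two Gaussian densities. You instead collapse both expectations at once by passing to the difference $Y = X_1 - X_2 \sim \mathcal{N}(\Param_1-\Param_2,\covar_1+\covar_2)$, reducing to a single Gaussian integral which you evaluate by hand via completing the square, a Sylvester-type determinant manipulation, and the Woodbury identity. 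The two decompositions are genuinely distinct: the paper's is shorter and leans on two standard Gaussian facts (convolution, product integral), while yours is more self-contained in one pass but requires carrying out the quadratic-form algebra explicitly. A minor stylistic note: after writing $\det{\mSigma(\mSigma^{-1}+\mR^{-1})}=\det{\mR^{-1}(\mR+\mSigma)}$ it is worth recording that these two matrices are transposes of one another (by symmetry of $\mSigma$ and $\mR$) or invoking $\det(I+AB)=\det(I+BA)$, since they are not equal as matrices.
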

\begin{proof}
As 
\(
\kernel_{\mR}(\sample,\sample')
= \sqrt{\det{2\pi\mR}} \cdot \Prob_{\mR}(\sample-\sample')
\)
where $\Prob_{\mR}=\mathcal{N}(0,\mR)$,
we have
\begin{align*}
\kernel_{\mR}(\Prob_1,\Prob_2)
=&
\sqrt{\det{2\pi\mR}}
\int_\sample\Prob_1(\sample)
\underbrace{\left(\int_{\sample'} \Prob_2(\sample')\Prob_{\mR}(\sample-\sample')d\sample'\right)}_{\Prob_{2} \star \Prob_{\mR}=\mathcal{N}(\Param_{2},\covar_{2}+\mR)=: \mathcal{N}(\Param_{3},\covar_{3})=\Prob_{3}}d\sample 
\end{align*}
We conclude using a property on products of Gaussians \cite[Equation (5.6)]{Ahrendt2005}.
\[
\int \Prob_1(\sample)\Prob_3(\sample)d\sample= \frac{1}{
\sqrt{\det{2\pi(\covar_1+\covar_3)}}}\exp\left(-\tfrac12 \normmah{\Param_1-\Param_3}{\covar_1+\covar_3}^2\right).\qedhere
\]
\end{proof}

\subsection{Proof of Lemma~\ref{lem:GaussianKernel}}\label{sec:proofLemmaSigmaK}
Denote $K = K_{\sigma}$ for brevity. If $u \geq \sigma$ then $1-K(u) \geq 1-e^{-1/2} \approx 0.39 > 1/3$. Now, if $0<u \leq \sigma$: by concavity of the function $t \mapsto 1-e^{-t/2\sigma^{2}}$ on the interval $[0, \sigma^{2}]$, we have 
\[
1-e^{-t/2\sigma^{2}} \geq %
(t/\sigma^2) \cdot (1-e^{-1/2})/ > t/3\sigma^{2}
\]
for $0 \leq t \leq \sigma^{2}$, hence with $t = u^{2}$ we get $1-K(u) \geq  u^{2}/3\sigma^{2}$. This shows that $1-K(u) \geq \min(1,(u/\sigma)^{2})/3$.

If $\sigma^{2} \leq 1/2$ then $t \mapsto h(t) := (1-t/2)\exp(\tfrac{t}{2\sigma^{2}})$ is non-decreasing on $[0,1]$ with $h(0)=1$. For $0 \leq u \leq 1$ we obtain $(1-u^{2}/2)/K(u) = h(u^{2}) \geq 1$ hence $K(u) \leq  1-u^{2}/2$. %

We have $K'(u) = -\tfrac{u}{\sigma^{2}}\ \exp(-\tfrac{u^{2}}{2\sigma^{2}})$, $K''(u) = (\tfrac{u^{2}}{\sigma^{2}}-1)\exp(-\tfrac{u^2}{2\sigma^{2}})/\sigma^{2}$, $K'''(u) = (3-\tfrac{u^{2}}{\sigma^{2}})\frac{u}{\sigma^{4}}\exp(-\tfrac{u^2}{2\sigma^{2}})$.
Since $\sigma^{2} \leq 1/4$, for $u \geq 1 \geq \sigma$  we have $K''(u) \geq 0$. Hence, $K'$ is negative and increasing on $[1,\infty)$ and we get $K'_{\max} = |K'(1)| = \exp(-\tfrac{1}{2\sigma^{2}})/\sigma^{2}$. Since $K'_{\max}> K_{\max}=K(1)$ we have (cf~\eqref{eq:MutualCoherenceRBF1})
\[
C(K) = \max(K_{\max},\rev{2K'_{\max}+K''_{\max})}) \rev{\leq} 2(K'_{\max}+K''_{\max}).
\]
Similarly, since $\sigma^2\leq 1/3$, for $u \geq 1 \geq \sqrt{3}\sigma$ we have $K'''(u) \leq 0$ hence $K''$ is positive decreasing on $[1,\infty)$ and $K''_{\max} = K''(1) = \tfrac{1}{\sigma^{2}}\left(\tfrac{1}{\sigma^{2}}-1\right) \exp(-\tfrac{1}{2\sigma^{2}})$. 
As a result $K'_{\max}+K''_{\max} = \tfrac{1}{\sigma^{4}}\exp(-\tfrac{1}{2\sigma^{2}})$ and $C(K) \rev{\leq}  \tfrac{2}{\sigma^{4}}\exp(-\tfrac{1}{2\sigma^{2}})$.

Given $c\geq 2$, putting $\sigma^\star_k := (\sqrt{2c\log(ek)})^{-1} \in [0,\tfrac{1}{2}]$,
  since the function $t \mapsto t^2 \exp(-t/2)$ is nonincreasing for $t\geq 4$,
  it holds for any $\sigma \leq \sigma^\star_k$ that
\begin{align*}
  C(K) \leq 2 g(1/\sigma^{2}) \leq 2 g(1/(\sigma_{k}^{\star})^{2}) &= 2 (\sigma^\star_k)^{-4}\exp\left(-{\tfrac{1}{2(\sigma^\star_k)^{2}}}\right)
    = 8c^2 \cdot \log^{2} (ek) \cdot (ek)^{-c} \\
  & = \frac{8c^{2}}{2k-1} \cdot \log^{2} (ek) \cdot (ek)^{-c}(2k-1)
  \leq \frac{8c^{2}e^{-c}}{2k-1} ,
\end{align*}
where we used at the last inequality that the function $k\mapsto  \log^{2} (ek) \cdot (ek)^{-c}(2k-1)$
is nonincreasing for $k\geq 1$ if \rev{$c\geq 4$}. The choice $c=8$ leads to
$C(K) \leq \frac{3}{\rev{16}(2k-1)}$.

\subsection{Proof of Equation~\eqref{eq:MomentsDiracGaussianTechnical}} \label{app:MomentsDipolesDiracGauss}

For Diracs since $\norm{\cdot} = \norm{\cdot}_{2}/\sep$ and $\norm{u} = 1$ 
we write $u = \sep v$ where $\norm{v}_2 = 1$. With the probability distribution $\freqdist$ on $\freq$~from~\eqref{eq:randomfeaturesamplingdensity}, since $\abs{\inner{\Prob_{0},\rfeat}} = 1/w(\freq) \leq 1$, $\mGamma = s^{-2}\mI_{\sampleDim}$ (see Definition~\ref{def:DiracGaussian}) and $C_{\freqdist}^{-2} = \normkern{\Prob_{0}}^{2}$ (see~\eqref{eq:P0NormInvSq}), we obtain 	
\begin{eqnarray*} 
\Exp_{\freq \sim \freqdist} \set{\abs{\inner{\Prob_{0},\rfeat}}^{2q} \inner{\freq,u}^{2q}}
&=&
\int_{\RR^{\sampleDim}} 
w^{-2q}(\freq)
\inner{\freq,\sep v}^{2q}
C_{\freqdist}^{-2}w^{2}(\freq) p_{\mathcal{N}(0,s^{-2}\mI_\sampleDim)}(\freq) d\freq\notag\\
 &\stackrel{w \geq 1,q \geq 1}{\leq}&
 \normkern{\Prob_{0}}^{2} \sep^{2q} \int_{\RR^{\sampleDim}} 
\inner{\freq,v}^{2q}
p_{\mathcal{N}(0,s^{-2}\mI_\sampleDim)}(\freq) d\freq\notag\\
 &{=}&
\normkern{\Prob_{0}}^{2} \sep^{2q } s^{-2q} \cdot \Exp_{\freq \sim \mathcal{N}(0,s^{-2}\mI_\sampleDim)} 
\inner{s\freq,v}^{2q}\notag\\
 &\stackrel{\freq' := s\freq}{=} &
\normkern{\Prob_{0}}^{2}
(\sep/s)^{2q} \cdot \Exp_{\freq' \sim \mathcal{N}(0,\mI_\sampleDim)} \inner{\freq',u}^{2q} 
\stackrel{(*)}{=} 
\normkern{\Prob_{0}}^{2}(\sep/s)^{2q} \cdot 
\Exp_{\xi \sim \mathcal{N}(0,1)} \xi^{2q}
\end{eqnarray*}
where in (*) we use that as $\norm{v}_{2} = 1$ and $\freq' \sim \mathcal{N}(0,\mI_\sampleDim)$, $\xi := \inner{\freq',u}$ is standard Gaussian.

For Gaussians since $\norm{\cdot} = \normmah{\cdot}{\covar}/\sep$ and $\norm{u}=1$ 
we write $u = \sep \covar^{1/2} v$ where $\norm{v}_{2}=1$. For $q \geq 1$ we have $(1+2qs^{-2})^{-\sampleDim/2} \leq (1+2s^{-2})^{-\sampleDim/2} = \normkern{\Prob_{0}}^{2}$ (see~\eqref{eq:P0NormInvSq}). Since  $\abs{\inner{\Prob_{0},\rfeat}} := e^{-\freq^{T} \Cov \freq/2} \leq 1$ 
and
$\freqdist(\freq) := p_{\mathcal{N}(0,s^{-2}\Cov^{-1})}(\freq)$ (see~\eqref{eq:randomfeaturesamplingdensity} and Definition~\ref{def:DiracGaussian}), we obtain
\begin{eqnarray*}
\abs{\inner{\Prob_{0},\rfeat}}^{2q} \freqdist(\freq)
&=& \sqrt{\det{2\pi s^2\Cov}} e^{-(2q+s^2)\freq^{T}\Cov\freq/2}
= \tfrac{\sqrt{\det{2\pi s^{2}\Cov}}}{\sqrt{\det{2\pi (2q+s^{2})\Cov}}} 
p_{\mathcal{N}(0,(2q+s^{2})^{-1}\Cov^{-1})}(\freq)\\
&=& (1+2qs^{-2})^{-\sampleDim/2}p_{\mathcal{N}(0,(2q+s^{2})^{-1}\Cov^{-1})}(\freq)
\leq \normkern{\Prob_{0}}^{2} \cdot p_{\mathcal{N}(0,(2q+s^{2})^{-1}\Cov^{-1})}(\freq),
\end{eqnarray*}
hence
\begin{eqnarray*}
\Exp_{\freq \sim \freqdist} \abs{\inner{\Prob_{0},\rfeat}}^{2q} \inner{\freq,u}^{2q}
&:=&
\int_{\RR^{\sampleDim}} 
\abs{\inner{\Prob_{0},\rfeat}}^{2q} \inner{\freq,\sep\Cov^{1/2}v}^{2q}
 \freqdist(\freq) d\freq\\
&\leq &
\normkern{\Prob_{0}}^{2}\sep^{2q} \int_{\RR^{\sampleDim}} 
\inner{\covar^{1/2}\freq,v}^{2q}
p_{\mathcal{N}(0,(2q+s^{2})^{-1}\Cov^{-1})}(\freq) 
d\freq\\
&=&
\normkern{\Prob_{0}}^{2}\sep^{2q}\cdot \Exp_{\freq \sim \mathcal{N}(0,(2q+s^{2})^{-1}\Cov^{-1})} 
\inner{\covar^{1/2}\freq,v}^{2q}\notag\\
& \stackrel{\freq' := \Cov^{1/2}\freq}{=} &
\normkern{\Prob_{0}}^{2}\sep^{2q} \cdot \Exp_{\freq' \sim \mathcal{N}(0,(2q+s^{2})^{-1}\mI_\sampleDim)} \inner{\freq',v}^{2q} \notag\\
& \stackrel{\freq = \sqrt{2q+s^{2}}\freq'}{=} &
\normkern{\Prob_{0}}^{2}\sep^{2q}(\sqrt{2q+s^{2}})^{-2q} \cdot
\Exp_{\freq \sim \mathcal{N}(0,\mI_{\sampleDim})} \inner{\freq,v}^{2q} \\
&\stackrel{q \geq 1, (*)}{\leq}&
\normkern{\Prob_{0}}^{2}(\sep/\sqrt{2+s^{2}})^{2q}
\cdot \Exp_{\xi \sim \mathcal{N}(0,1)} \xi^{2q},\notag
\end{eqnarray*}
where in (*) we reasoned as for Diracs. Finally it is known that for any integer $q \geq 1$, 
$\Exp_{\xi \sim \mathcal{N}(0,1)} \xi^{2q} = (2q-1)!!$, where
\(
(2q-1)!! = \prod_{i=1}^{q} (2i-1) \leq 2^{q-1} q!\;.
\)
Using ~\eqref{eq:DefDiracGaussianVariance} we recognize that in both cases
\[
\Exp_{\freq \sim \freqdist} \abs{\inner{\Prob_{0},\rfeat}}^{2q} \inner{\freq,u}^{2q} 
\leq \normkern{\Prob_{0}}^{2}(\sep^{2}/\sigma^{2}(s))^{q}  \Exp_{\xi \sim \mathcal{N}(0,1)} \xi^{2q}
\leq \normkern{\Prob_{0}}^{2}(2\sep^{2}/\sigma^{2}(s))^{q} \tfrac{q!}{2}. \qedhere
\]
\subsection{Proof of Lemma~\ref{lem:SeparationNecessary}}
\label{sec:separationnecessaryDiracs}

\begin{proof}
\rev{We exhibit two probability distributions $\mProb,\mProb' \in \ModelCT(\HypClass_{k,\sep,R})$ such that $\dnormloss{\mProb'-\mProb}{}/\normkern{\mProb'-\mProb}$ is bounded from below.}
Consider $\Param_{0} \in \RR^{\sampleDim}$ with $\norm{\Param_{0}}_{2}=1$
and set $\Param_{+} := \tfrac{\sep}{2} \Param_{0}, \Param_{-} = -\Param_{+}$ (hence $\norm{\Param_{+}} = \norm{\Param_{-}} = \sep/2 \leq R/2$ for small enough $\sep$). Observe that $\sep =\norm{\Param_{+}-\Param_{-}}_2$. Setting $\alpha :=\tfrac{R}{2\sep}$, define $\hyp = (c_{1},\ldots,c_{k})$ with $c_{1} = c_{+}$, $c_{l}=c_{-}$ for $l \geq 2$ where 
\begin{align*}
c_{+}=\Param_{+} + \alpha(\Param_{+} - \Param_{-}),\quad c_{-}=\Param_{-} + \alpha(\Param_{-} - \Param_{+}).
\end{align*}
Since $\norm{c_{+}}\leq R$, $\norm{c_{-}}\leq R$ \rev{and $\norm{c_{+}-c_{-}}_{2} = (1+\alpha) \norm{\Param_{+}-\Param_{-}}_{2} = \sep+R/2$} we have $\hyp  \in \HypClass_{\PCAdim,\rev{\sep},R}$.

Define two mixtures $\mProb = \frac12 (\delta_{\Param_{+}} + \delta_{\Param_{-}}),~\mProb' = \delta_{0} \in\rev{\ModelCT(\HypClass_{k,\sep,R})}$. 
As $\loss(\Param_{+},\hyp) =  \loss(\Param_{-},\hyp)=(\alpha\sep)^{p}=(R/2)^{p}$ and $\loss\paren{0,\hyp} = \paren{1/2+\alpha}^{p}\sep^{p}=(R/2)^{p}\paren{1+\sep/R}^{p}$, we have
\begin{align}\label{eq:necessaryproofLoss}
\inner{\mProb' - \mProb,\loss(\cdot,\hyp)}
=~(R/2)^{p}\paren{\paren{1+\sep/R}^{p}-1}. %
\end{align}
\rev{Now set $\hyp' = (c'_{1},\ldots,c'_{k})$ with $c'_{1}=0$, for all $l$. Again, $\hyp' \in \HypClass$ and, as $\loss(\Param_{+},\hyp') = \loss(\Param_{-},\hyp') = (\sep/2)^{p}$ and $\loss\paren{0,\hyp'} = 0$ we have $\inner{\mProb' - \mProb,\loss(\cdot,\hyp')}
=~-(\sep/2)^{p}$. For $p \in \{1,2\}$ we get
\begin{align}\label{eq:necessaryproofDLoss}
\inner{\mProb' - \mProb,\loss(\cdot,\hyp)-\loss(\cdot,\hyp')}
=~(R/2)^{p}\paren{\paren{1+\sep/R}^{p}-1}+(\sep/2)^{p} \geq p(R/2)^{p} \frac{\sep}{R} .
\end{align}
This yields the lower bound $\dnormlossf{\mProb - \mProb'}{} \geq \abs{\inner{\mProb' - \mProb,\loss(\cdot,\hyp)-\loss(\cdot,\hyp')}} \geq p(R/2)^{p} \tfrac{\sep}{R}$. }

We now upper bound $\norm{\SketchingOperatorProb(\mProb)-\SketchingOperatorProb(\mProb')}_{2}$.
\rev{Denote $g(t) := \SketchingOperator(t \Param_0)$, $t \in \RR$, by assumption
  the function $g$ is of class $\mathcal{C}^2$ and
  \[
    \norm{\SketchingOperatorProb(\mProb)-\SketchingOperatorProb(\mProb')}_{2}
    =\norm{ g(0) - \frac{1}{2}g\paren{-\frac{\sep}{2}} - \frac{1}{2}g\paren{\frac{\sep}{2}} }
    = \frac{\sep^2}{8} \norm{g''(0)} + o(\sep^2),
   \]
}
Given 
\rev{\eqref{eq:necessaryproofDLoss}}
we get a constant $c_{\SketchingOperator}>0$ such that \rev{for small enough $\sep$ and any $R \geq \sep$,\\ $\dnormlossf{\mProb - \mProb'}{} / \norm{\SketchingOperatorProb(\mProb)-\SketchingOperatorProb(\mProb')}_{2} \geq c_{\SketchingOperator} R^{p-1} /\sep$.} %
\end{proof}

\subsection{Link between risks with and without $\sep$-separation} \label{sec:link_risk_separation}

\begin{proof}[Proof of Lemma~\ref{lem:ClustRiskLipschitzWrtDist}]
First, denoting $\eta := d(\hyp,\hyp')$ with $\hyp = (c_{1},\ldots,c_{k})$, $\hyp' = (c'_{1},\ldots,c'_{k})$, we show that for any $\sample \in \RR^{\sampleDim}$ we have
\[
\min_{1 \leq j \leq k} \norm{\sample-c'_{j}} \leq \min_{1 \leq i \leq k} \norm{\sample-c_{i}} + \eta
\]
Indeed, denoting  $i^{\star}$ such that $\norm{\sample-c_{i^{\star}}}_{2} = \min_{1 \leq i \leq k} \norm{\sample-c_{i}} $ and $j^{\star}$ such that $\norm{c_{i^{\star}}-c'_{j^{\star}}}_{2} \leq d(\hyp,\hyp')$, we obtain with the triangle inequality 
\begin{align*}
\min_{1 \leq j \leq k} \norm{\sample-c'_{j}} & \leq \norm{\sample-c'_{j^{\star}}}_{2}
 = \norm{\sample-c_{i^{\star}}+c_{i^{\star}}-c'_{j^{\star}}}_{2} \\
& \leq \norm{\sample-c_{i^{\star}}}_{2}+\norm{c_{i^{\star}}-c'_{j^{\star}}}_{2}
   \leq\min_{1 \leq i \leq k} \norm{\sample-c_{i}} + \eta.
\end{align*}
For $k$-medians we obtain $\loss(\sample,\hyp') \leq \loss(\sample,\hyp) + \eta$ hence
\begin{align*}
\Risk_{k-\mathtt{medians}}(\Prob,\hyp') &= \Exp_{\Sample \sim \Prob}  \loss(\Sample,\hyp')
\leq \Exp_{\Sample \sim \Prob}  \loss(\Sample,\hyp) + \eta
\leq  \Risk_{k-\mathtt{medians}}(\Prob,\hyp) + \eta.
\end{align*}
For $k$-means we have instead
\begin{equation*}
\begin{split}
\Risk_{k-\mathtt{means}}(\Prob,\hyp')
&\leq \Exp_{\Sample \sim \Prob} \left(  \min_{1 \leq i \leq k} \norm{\Sample-c_{i}}_{2} + \eta\right)^2   
=  \Risk_{k-\mathtt{means}}(\Prob,\hyp) +2 \eta \Risk_{k-\mathtt{medians}}(\Prob,\hyp) +\eta^2
\end{split}
\end{equation*}
With Jensen's inequality we have $\Risk_{k-\mathtt{medians}}(\Prob,\hyp) \leq \sqrt{\Risk_{k-\mathtt{means}}(\Prob,\hyp)}$, yielding
\begin{equation*}
\sqrt{\Risk_{k-\mathtt{means}}(\Prob,\hyp')}
\leq  \sqrt{\Risk_{k-\mathtt{means}}(\Prob,\hyp)} + \eta.
\end{equation*}
Exchanging the role of $\hyp$ and $\hyp'$ yields a complementary inequality which completes the proof.
\end{proof}

\begin{proof}[Proof of Lemma~\ref{lem:DistToSeparatedSet}]
  Let $\vc=(c_1,\ldots,c_k) \in \HypClass_{k,0,R}$. We proceed by constructing in a greedy way an
  $\sep$-separated subset of $C_1:=\set{c_1,\ldots,c_k}$ which is also an $\sep$-cover of that set.
  The construction is standard. Starting at $i=1$, pick any
  $c'_i \in C_i$; put $C_{i+1} := C_i \setminus B(c'_i,\sep)$ (where $B(c,\sep)$ denotes the open
  ball of center $c$ and radius $\eps$). Iterate until $C_{i+1}=\emptyset$; denote $i^*$ the
  last iteration. Since the cardinality
  of $C_i$ is decreasing, we have $i^*\leq k$ iterations. Let $\vc'=(c_1',c'_2,\ldots,c'_{i^*},c'_{i^*},
  \ldots,c'_{i^*})$ (the last element is repeated as needed to attain $k$ centroids).
  Since $\set{c'_1,\ldots,c'_{i^*}} \subset \set{c_1,\ldots,c_k}$, obviously $d(\vc'\|\vc)=0$.
  On the other hand, by construction $\set{c_1,\ldots,c_k} \subset \bigcup_{1 \leq j \leq i^*} B(c'_j,\sep)$
  so that $d(\vc\|\vc')< \sep$. Finally, also by construction for any $i<i^*$, $c'_{i+1} \not\in \bigcup_{1 \leq j \leq i} B(c'_j,\sep)$ and therefore $\vc'$ is $\sep$-separated, so that $\vc' \in \HypClass_{k,\sep,R}$.
  Additionally, by the above construction it is clear that any $\sep$-isolated centroid of $\vc$
  must be selected (once) at some iteration as one of the centroids $c_j', 1\leq j\leq i^*$.
\end{proof}

To prove Lemma~\ref{lem:interpretableclusteringriskbound} we first establish a refined version of Lemma~\ref{lem:DistToSeparatedSet}.

 \begin{lemma}\label{lem:DistToSeparatedSet2}
  Given $\rev{\sep \geq 0}$ and $\vc \in \HypClass_{k,0,R}$, there exists $\vc'\in \HypClass_{k,\sep,R}$ such that \rev{$d(\vc,\vc') = d(\vc,\HypClass_{k,\sep,R})$ and such that all $2\sep$-isolated centroids
    of $\vc$,  $\set{c_{i}, i \in I_{2\sep}(\vc)}$ are centroids
  of $\vc'$  (repeated centroids with indices in $I_{2\sep}(\vc)$  may appear only once in $\vc'$.)}.
\end{lemma}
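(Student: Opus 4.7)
My plan is to first establish that the infimum $d(\vc,\HypClass_{k,\sep,R})$ is attained, then modify any minimizer to absorb the $2\sep$-isolated centroids of $\vc$ without increasing the distance. (The case $\sep=0$ is trivial by taking $\vc'=\vc$.) For the first step, $\HypClass_{k,\sep,R}$ is a compact subset of $(\RR^{\sampleDim})^k$: it is contained in the product $(\Ball_{\RR^{\sampleDim},\norm{\cdot}_2}(0,R))^{k}$, and closed since for each pair $(l,l')$ the condition ``$c_l=c_{l'}$ or $\norm{c_l-c_{l'}}_2\geq \sep$'' defines a closed set (its complement $\set{0<\norm{c_l-c_{l'}}_2<\sep}$ is open). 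The map $\vc'\mapsto d(\vc,\vc')$ is continuous as a finite combination of mins and maxes of norms, hence the infimum is attained at some $\vc^*\in \HypClass_{k,\sep,R}$. By Lemma~\ref{lem:DistToSeparatedSet} applied to $\vc$, we have $\delta := d(\vc,\vc^*) = d(\vc,\HypClass_{k,\sep,R}) < \sep$.

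Next I would construct from $\vc^*$ a modified minimizer $\vc^{**}$ containing all $2\sep$-isolated values of $\vc$. Let $V := \set{c_i : i \in I_{2\sep}(\vc)}$, and for each $v \in V$ let $N_v$ be the set of distinct values $u$ appearing in $\vc^*$ with $\norm{u-v}_2<\sep$. Two observations are key: (i) since $d(v\|\vc^*)\leq \delta<\sep$, the set $N_v$ is non-empty; (ii) for distinct $v,v'\in V$, $N_v\cap N_{v'}=\emptyset$, because a common $u$ would yield $\norm{v-v'}_2<2\sep$ by the triangle inequality, contradicting $\norm{v-v'}_2\geq 2\sep$ (which follows from $2\sep$-isolation of $v$ applied to $v'\neq v$ in $\vc$). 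Then $\vc^{**}$ is obtained from $\vc^*$ by replacing every copy of every value $u\in \bigcup_{v\in V}N_v$ with the corresponding unique $v$, and keeping all other values unchanged.

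Finally I would verify three properties. First, $\vc^{**}$ is $R$-bounded since every value comes from either $\vc$ or $\vc^*$. Second, $\vc^{**}$ is $\sep$-separated: its distinct values split into those from $V$ (pairwise $\geq 2\sep$ apart) and those from $\vc^*\setminus \bigcup_v N_v$ (pairwise $\geq \sep$ apart from the original separation of $\vc^*$), while cross-pairs $(v,u)$ with $v\in V$ and $u\notin N_v$ satisfy $\norm{u-v}_2\geq \sep$ by definition of $N_v$. Third, $d(\vc,\vc^{**})\leq \delta$: on one side, every value of $\vc^{**}$ lies either in $V\subset \vc$ (distance $0$) or is unchanged from $\vc^*$ (distance $\leq \delta$); on the other side, for each $c_l\in \vc$, either $c_l\in V$ (and then $c_l\in \vc^{**}$), or $c_l\notin V$, in which case the closest point $c^*_{j_l}$ of $\vc^*$ to $c_l$ cannot have been replaced, since $c^*_{j_l}\in N_v$ would give $\norm{c_l-v}_2<\delta+\sep<2\sep$, contradicting $2\sep$-isolation of $v$ (which applies because $c_l\neq v$). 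Hence $d(\vc,\vc^{**})\leq \delta=d(\vc,\HypClass_{k,\sep,R})$, and setting $\vc':=\vc^{**}$ concludes.

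The main obstacle will be ensuring $\sep$-separation in the modification step: a naive strategy that replaces only the single closest $c^*_j$ to each $v\in V$ by $v$ can destroy $\sep$-separation, because another value of $\vc^*$ within $\sep$ of $v$ could survive. The fix is to absorb the entire cluster $N_v$ into $v$, and the strictly stronger $2\sep$-isolation (versus the $\sep$-isolation used in Lemma~\ref{lem:DistToSeparatedSet}) is precisely what guarantees that the $N_v$ are disjoint, that cross-pairs in $\vc^{**}$ remain $\sep$-separated, and that collapsing entire clusters does not increase the distance to $\vc$.
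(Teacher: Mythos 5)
Your proof is correct and follows the same overall strategy as the paper's: compactness of $\HypClass_{k,\sep,R}$ together with continuity of $d(\vc,\cdot)$ gives a minimizer $\vc^*$; Lemma~\ref{lem:DistToSeparatedSet} gives $\delta:=d(\vc,\vc^*)<\sep$; and one then modifies $\vc^*$ to absorb the $2\sep$-isolated centroids of $\vc$ without increasing the distance. Where you differ is in the modification step, and your extra care is warranted. The paper replaces, for each $2\sep$-isolated $c_i$, a \emph{single} centroid $c'$ of $\vc^*$ with $\norm{c_i-c'}_2<\sep$ by $c_i$, and checks only that $d(\vc\|\cdot)$ and $d(\cdot\|\vc)$ cannot increase; it does not explicitly verify that the modified tuple remains $\sep$-separated, which would fail if $\vc^*$ had two distinct values within $\sep$ of $c_i$. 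Your construction — collapsing, for each isolated $v=c_i$, the \emph{entire} cluster $N_v$ of $\vc^*$-values within $\sep$ of $v$ onto $v$, and using $2\sep$-isolation to show the $N_v$ are pairwise disjoint — explicitly guarantees both that separation is preserved (distinct surviving values split into elements of $V$, $2\sep$-separated; unreplaced values of $\vc^*$, $\sep$-separated; cross pairs $\geq\sep$ by definition of $N_v$) and that the distance to $\vc$ does not grow. In effect you make rigorous a step the paper treats somewhat loosely; one could alternatively argue that some minimizer $\vc^*$ has at most one value near each isolated $c_i$ (by a preliminary merge), but your direct construction is cleaner and works for any minimizer.
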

\begin{proof}
  Let $\vc'$ such that $\vc' \in \HypClass_{k,\sep,R}$ and $d(\vc,\vc')=d(\vc,\HypClass_{k,\sep,R})$
  (the distance $d(\vc,\HypClass_{k,\sep,R})$ is attained, since $\HypClass_{k,\sep,R}$ is a compact set).
  We know by Lemma~\ref{lem:DistToSeparatedSet} that $d(\vc,\vc')< \sep$ must hold.
  Let $c_i$ be any $2\sep$-isolated centroid of $\vc$, and $c'$ a centroid of $\vc'$ such that
  $\norm{c_i-c'}<\sep$. By the triangle inequality, for any other centroid $c_j\neq c_i$ of
  $\vc$, $\norm{c_j-c'}\geq \norm{c_{j}-c_{i}}-\norm{c_{i}-c'}>\sep$, and since $d(c_j\|\vc')<\sep$, the latter distance is attained
  for a centroid of $\vc'$ different from $c'$. Hence moving arbitrarily $c'$ can only leave $d(c_j\|\vc')$
  unaltered or smaller, while obviously also the distance $d(c'_j\|\vc)$ remains unlaltered for
  all other centroids $c'_j \neq c'$ of $\vc'$ which are unchanged.
  We can therefore replace $c'$ by $c_i$ in $\vc'$
  while only  making $d(\vc\|\vc')$, as well as $d(\vc'\|\vc)$,  possibly smaller
  ($d(c'\|\vc)$ as well as $d(c_i\|\vc')$ are set to zero with this operation, the other
  distances can only shrink by the above argument).
  We can repeat this operation for all $2\sep$-isolated centroids of $\vc$, leading to the announced
  claim.
\end{proof}

\begin{proof}[Proof of Lemma~\ref{lem:interpretableclusteringriskbound}]
  Recalling $\hyp^{\star}=(c_1,\ldots,c_k)\in \HypClass_{k,0,R^{\star}}$ is the collection of centroids of $\Prob^\star=\sum_{i=1}^k \alpha_i
  \delta_{c_i}$, let $h=(c'_1,\ldots,c'_k)$ be any element in $\HypClass_{k,2\sep,R^\star}$.
  Denote $I(\hyp^{\star},h) := \set{i; 1\leq i \leq k: \exists j: c_i=c'_j}$ the index set of centroids
of $\hyp^{\star}$ that are also found in $h$. Then
\begin{align*}
\Risk_{\mathtt{k-medians}}(\Prob^{\star},\hyp) 
= \sum_{i=1}^k \alpha_i \min_{1\leq j \leq k} \norm{c_i-c_j'}
\leq \paren{\sum_{i\not\in I(\hyp^{\star},h)} \alpha_i} d(\vc\|h).
\end{align*}
Similarly, $\Risk_{\mathtt{k-means}}(\Prob^{\star},\hyp)\leq \paren{\sum_{i\not\in I(\hyp^{\star},h)} \alpha_i} d(\vc\|h)^2$.
We apply these estimates to the centroid sets $h$ with the guarantees of  Lemma~\ref{lem:DistToSeparatedSet}, resp.~\ref{lem:DistToSeparatedSet2} (and take the minimum of the two).
Namely, Lemma~\ref{lem:DistToSeparatedSet} guarantees the existence of $\hyp\in\HypClass_{k,2\sep,R^{\star}}$
with $I_{2\sep}(\hyp^{\star}) \subset I(\hyp^{\star},\hyp)$ and $d(\hyp^{\star},\hyp) \leq 2\eps$, while 
Lemma~\ref{lem:DistToSeparatedSet2} guarantees the existence of $\hyp'\in\HypClass_{k,2\sep,R^{\star}}$
with $I_{4\sep}(\hyp^{\star}) \subset I(\hyp^{\star},\hyp')$ and $d(\hyp^{\star},\hyp') = d(\hyp^{\star},\HypClass_{k,2\sep,R^{\star}}).$
Combining with Lemma~\ref{lem:bounddisttosepclust} and Lemma~\ref{lem:ClustRiskLipschitzWrtDist} we obtain the result.
\end{proof}

\section{Remaining proofs for  Sections~\ref{sec:clustering} and~\ref{sec:gmm}} %
\label{sec:separatedornot}

\subsection{Existence of an (unconstrained) GMM risk minimizer}
\label{sec:prexistencegmmmin}

Let $\hyp^{(n)}=(c_1^{(n)},...,c_k^{(n)},\alpha_1^{(n)},...,\alpha_k^{(n)})$ be a sequence such that
\[\Risk_{\mathtt{GMM}}(\Prob,\hyp^{(n)}) \stackrel{n\rightarrow \infty}{\longrightarrow} \Risk_{\mathtt{GMM}}^* := \inf_{h \in \RR^{kd}\times \Simplex_{k-1}} \Risk_{\mathtt{GMM}}(\Prob,\hyp).\]
 By continuity %
 of $h \mapsto \Risk_{\mathtt{GMM}}(\Prob,\hyp)$, it
suffices to prove that $(\hyp^{(n)})_{n\geq 1}$ has an accumulation point in order to establish the
existence of a minimum of $\Risk_{\mathtt{GMM}}(\Prob,\hyp)$. If all the centroids
$c_i^{(n)}$ remain bounded, this is the case, by compactness. If not, we can assume without
loss of generality (up to permutation and taking a subsequence) that $\norm{c_1^{(n)}} \rightarrow
\infty$. %

Let  $\hyp^{(n)}_0:=(0,c^{(n)}_2,\ldots,c^{(n)}_k,\alpha^{(n)}_1,\ldots,\alpha^{(n)}_k)$. Then,
denoting, for any $c \in \RR^d$,  $\phi_c(t) := \exp(-\normmah{t}{\Cov}^2/2)$ the unnormalized
 density of the Gaussian distribution centered
 in $c$ and of covariance $\Cov$,
it holds for any input point $\sample$:
\begin{align*}
  -\log \Prob_{\hyp^{(n)}_0}(\sample) - (-\log \Prob_{\hyp^{(n)}}(\sample))
&= \log \paren{ \frac{\sum_{i=1}^k {\alpha^{(n)}_{i}} \phi_{c^{(n)}_i}(\sample)}{\alpha^{(n)}_1 \phi_0(\sample)  + \sum_{j=2}^k {\alpha^{(n)}_{j}} \phi_{c^{(n)}_j}(\sample)}}\\
  & = \log \paren{1 + \frac{\alpha^{(n)}_1(\phi_{c^{(n)}_1}(\sample) - \phi_{0}(\sample))}{\alpha^{(n)}_1 \phi_0(\sample)  + \sum_{j=2}^k {\alpha^{(n)}_{j}} \phi^{(n)}_{c_j}(\sample)}}  \\
  & \leq \log \paren{1 + \paren{\frac{\phi_{c_1^{(n)}}(\sample)}{\phi_0(\sample)} - 1} _+}\\
  & = \paren{\log \paren{\frac{\phi_{c_1^{(n)}}(\sample)}{\phi_0(\sample)}}}_+\\
  & = \frac{1}{2}\paren{\normmah{\sample}{\Cov}^2-\normmah{\sample-c_1^{(n)}}{\Cov}^2}_+ \\
  & =  \paren{\inner{\sample,c^{(n)}_1}_{\Cov} - \frac{1}{2} \normmah{c^{(n)}_1}{\Cov}^2}_+ .\\
\end{align*}
Taking expectations (note that integrability follows from the existence of the first moment
of $\Prob$, itself following from the assumption of GMM loss integrability under $\Prob$, which implies
existence of moments up to order 2), %
\[
  \Risk_{\mathtt{GMM}}(\Prob,\hyp_0^{(n)}) - \Risk_{\mathtt{GMM}}(\Prob,\hyp^{(n)})
  \leq \Exp_{\Sample \sim \Prob}\left[\paren{\inner{\Sample,c^{(n)}_1}_{\Cov} - \frac{1}{2} \normmah{c^{(n)}_1}{\Cov}^2}_+\right].
\]
Let $f_c: x\mapsto \paren{\inner{x,c}_{\Cov} - \frac{1}{2} \normmah{c}{\Cov}^2}_+$;
we have that $\sup_c f_c(x) = \frac{1}{2}\normmah{x}{\Cov}^2$, and $f_c$ converges pointwise to 0 as $\normmah{c}{\Cov} \rightarrow \infty$. Since $\normmah{c^{(n)}_1}{\Cov} \rightarrow \infty$,
and $\Prob$ has finite second order moments,
by dominated convergence we get that the right-hand side above converges to 0, and that
$\lim_{n\rightarrow \infty} \Risk_{\mathtt{GMM}}(\Prob,\hyp_0^{(n)}) =
\lim_{n\rightarrow \infty} \Risk_{\mathtt{GMM}}(\Prob,\hyp^{(n)}) = \Risk_{\mathtt{GMM}}^*$.
Repeating this operation as necessary with other centroids diverging to infinity,
we see that we can replace the sequence $\hyp^{(n)}$ by a sequence remaining in a compact
and with the same limit for the risk, for which an accumulation point exists, attaining the
minimum of the risk.

\subsection{Control of $\dnormloss{\dipoleSet}{}$}\label{subsec:dnormlossclustgmm}

For compressive $k$-means / $k$-medians, with the loss defined in~\eqref{eq:KMeansMediansLoss}, we consider a constrained hypothesis class $\HypClass$ such that $\ModelCT(\HypClass) = \MixSetSep{k}$, where $\BasicSet := \BasicSet_{\mathtt{Dirac}} = \rev{(\ParamSpace_R,\norm{\cdot}_2/\sep,\embd)}$ is as in Definition~\ref{def:DiracGaussian},
depending on some separation parameter and radius $0<\sep \leq R$; \rev{we recall $\ParamSpace_R = \Ball_{\RR^\sampleDim,\norm{\cdot}_{2}}(0,R)$ and $\embd(\Param)=\delta_\Param$,
  and underline that while the separation parameter $\eps$ does not change the base distribution set $\embd(\ParamSpace)$, it will determine separation in the mixture and dipole
  sets derived from it.}

 \begin{lemma}\label{lem:DiracDLoss}
Consider $0<\sep \leq R$, $\BasicSet = \BasicSet_{\mathtt{Dirac}}$ based on the parameter set $\ParamSpace_{R} = \set{\Param \in \RR^{\sampleDim}: \norm{\Param}_{2} \leq R}$. Consider $\LossClass(\HypClass)$ associated to $k$-means (resp. $k$-medians) with $\rev{\HypClass \subseteq} \HypClass_{R} := \set{\hyp = (c_{1},\ldots,c_{k}), \norm{c_{l}}_{2} \leq R}$. For any shift-invariant kernel $\kernel$ that is $1$-strongly locally characteristic with respect to $\BasicSet$ we have
\begin{eqnarray}
\dnormloss{\dipoleSet}{} 
\leq 
\rev{2 \cdot} \normkern{\Prob_{0}}^{-1} \cdot (2R)^{p}  %
 \end{eqnarray}
with $p=2$ for $k$-means, $p=1$ for $k$-medians. 
 \end{lemma}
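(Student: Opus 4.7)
The plan is to apply Theorem~\ref{thm:RadiusGeneric} with $\GClass = \DLossClass(\HypClass)$ and $c = 1$ (since the kernel is $1$-strongly locally characteristic), reducing the task to controlling (i) the monopole radius $\normfclass{\monopoleSet}{\DLossClass}$ and (ii) the Lipschitz constant $L_{\DLossClass}$ of the normalized monopole map $\Param \mapsto \delta_\Param/\normkern{\Prob_0}$ with respect to $\metricParam = \norm{\cdot}_2/\sep$ and $\normfclass{\cdot}{\DLossClass}$. Recall that for any $\hyp,\hyp' \in \HypClass_R$ and any $\Param \in \ParamSpace_R$, one has $\min_l \norm{\Param - c_l}_2 \leq 2R$; since $0 \leq |a-b| \leq \max(a,b)$ for $a,b \geq 0$, it follows that for every $g = \loss(\cdot,\hyp)-\loss(\cdot,\hyp') \in \DLossClass(\HypClass)$ and $\Param \in \ParamSpace_R$,
\[
|g(\Param)| \leq (2R)^p,
\]
which gives $\normfclass{\monopoleSet}{\DLossClass} \leq (2R)^p/\normkern{\Prob_0}$.

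For the Lipschitz estimate, I would treat the two losses separately. For $k$-medians ($p=1$), each $\sample \mapsto \min_l\norm{\sample-c_l}_2$ is $1$-Lipschitz (as a minimum of $1$-Lipschitz functions), so $g$ is $2$-Lipschitz on $\RR^{\sampleDim}$, giving
\[
\normfclass{\nu_{\Param'}-\nu_\Param}{\DLossClass}
= \frac{\sup_g |g(\Param')-g(\Param)|}{\normkern{\Prob_0}}
\leq \frac{2\norm{\Param-\Param'}_2}{\normkern{\Prob_0}}
= \frac{2\sep}{\normkern{\Prob_0}}\,\metricParam(\Param,\Param'),
\]
so $L_{\DLossClass} \leq 2\sep/\normkern{\Prob_0}$. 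For $k$-means ($p=2$) the naive Lipschitz bound via $\nabla \norm{\sample-c}_2^2 = 2(\sample-c)$ yields $8R$, which through Theorem~\ref{thm:RadiusGeneric} only gives $12R^2/\normkern{\Prob_0}$ and overshoots the target; the crucial trick is the identity
\[
\norm{\sample-c}_2^2 = \norm{\sample}_2^2 - 2\inner{\sample,c} + \norm{c}_2^2,
\]
which shows that $g(\sample) = M_\hyp(\sample) - M_{\hyp'}(\sample)$ where $M_\hyp(\sample):=\min_l(\norm{c_l}_2^2 - 2\inner{\sample,c_l})$ is the minimum of affine functions with gradients of norm $2\norm{c_l}_2 \leq 2R$. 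Hence $M_\hyp$ is $2R$-Lipschitz on $\RR^\sampleDim$, so $g$ is $4R$-Lipschitz, yielding $L_{\DLossClass} \leq 4R\sep/\normkern{\Prob_0}$.

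Plugging these into Theorem~\ref{thm:RadiusGeneric} gives, for $k$-medians,
\[
\normfclass{\dipoleSet}{\DLossClass} \leq (2R + 2\sep)/\normkern{\Prob_0} \leq 4R/\normkern{\Prob_0} = 2(2R)/\normkern{\Prob_0},
\]
and for $k$-means,
\[
\normfclass{\dipoleSet}{\DLossClass} \leq (4R^2+4R\sep)/\normkern{\Prob_0} \leq 8R^2/\normkern{\Prob_0} = 2(2R)^2/\normkern{\Prob_0},
\]
using $\sep\leq R$ in both cases. The main obstacle is precisely the $k$-means Lipschitz estimate: using the generic gradient bound wastes a factor of two and breaks the target inequality, so the proof relies on exploiting the cancellation of the $\norm{\sample}_2^2$ term in the difference of two $k$-means costs.
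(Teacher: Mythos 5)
Your proof is correct and follows essentially the same route as the paper: apply Theorem~\ref{thm:RadiusGeneric} with $c=1$, bound the monopole radius by $\normkern{\Prob_0}^{-1}(2R)^p$, and bound the Lipschitz constant of $\Param \mapsto \nu_\Param$ on $\ParamSpace_R$. For $k$-means the paper expresses the quadratic cancellation as the identity $\norm{\Param-c_i}_2^2-\norm{\Param'-c_i}_2^2+\norm{\Param'-c'_j}_2^2-\norm{\Param-c'_j}_2^2 = 2\inner{\Param-\Param',\,c'_j-c_i}$ over carefully chosen argmin indices $i,j$, whereas you subtract $\norm{\sample}_2^2$ from the loss and note the remainder is a min of $2R$-Lipschitz affine functions; these are the same computation, both yielding the $4R$-Lipschitz bound that the naive gradient estimate misses by a factor of two.
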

 For compressive Gaussian mixture modeling we use $\BasicSet := \BasicSet_{\mathtt{Gauss}}$ as in Definition~\ref{def:DiracGaussian}.

 \begin{lemma}\label{lem:GaussDLoss}
 Consider $0<\sep \leq R$, $\BasicSet = \BasicSet_{\mathtt{Gauss}}$ based on $\ParamSpace_{R} = \set{\Param \in \RR^{\sampleDim}: \normmah{\Param}{\Cov} \leq R}$, and $\LossClass(\HypClass)$ associated to $k$-mixtures of\footnote{not-necessarily separated} Gaussians $\Prob_{l} = \mathcal{N}(c_{l},\covar)$, with $\rev{\HypClass \subseteq} \HypClass_{R} := \set{\hyp = (c_1,\ldots,c_{k},\alpha), \normmah{c_{l}}{\covar} \leq R, \alpha \in \Simplex_{k-1}}$.

For any shift-invariant kernel $\kernel$ that is $1$-strongly locally characteristic with respect to $\BasicSet$ we have
\begin{eqnarray}
\dnormloss{\dipoleSet}{} 
\leq 
 \rev{2 \cdot} \normkern{\Prob_{0}}^{-1}  \cdot 2R^{2} %
 \end{eqnarray}
\end{lemma}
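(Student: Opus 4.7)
The plan is to apply Theorem~\ref{thm:RadiusGeneric} with $c=1$, which reduces the problem to bounding the monopole radius $\dnormloss{\monopoleSet}{}$ and the Lipschitz constant $L_{\DLossClass}$ of $\Param \mapsto \nu_\Param = \Prob_\Param/\normkern{\Prob_\Param}$ with respect to $\metricParam(\Param,\Param') = \normmah{\Param-\Param'}{\covar}/\sep$ and $\dnormloss{\cdot}{}$. Since $\kernel$ is shift-invariant, $\normkern{\Prob_\Param} = \normkern{\Prob_0}$ for all $\Param \in \ParamSpace_R$, so the overall factor $1/\normkern{\Prob_0}$ appears naturally. We will obtain $\dnormloss{\monopoleSet}{} \leq 2R^2/\normkern{\Prob_0}$ and $L_{\DLossClass} \leq 2R\sep/\normkern{\Prob_0}$, which, combined with $\sep \leq R$, yields the claimed bound $4R^2/\normkern{\Prob_0}$.

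For the monopole radius, write $\Risk_{\mathtt{GMM}}(\Prob_\Param, \hyp) = \Entropy{\Prob_\Param} + \KLdiv{\Prob_\Param}{\Prob_\hyp}$; since $\Entropy{\Prob_\Param} = \Entropy{\Prob_0}$ cancels in differences, we have $\inner{\nu_\Param, \loss(\cdot,\hyp)-\loss(\cdot,\hyp')}\normkern{\Prob_0} = \KLdiv{\Prob_\Param}{\Prob_\hyp} - \KLdiv{\Prob_\Param}{\Prob_{\hyp'}}$. By convexity of $-\log$, $\KLdiv{\Prob_\Param}{\sum_l \alpha_l \Prob_{c_l}} \leq \sum_l \alpha_l \KLdiv{\Prob_\Param}{\Prob_{c_l}} = \tfrac{1}{2}\sum_l \alpha_l \normmah{\Param-c_l}{\covar}^2 \leq 2R^2$, using $\normmah{\Param-c_l}{\covar} \leq 2R$. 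Both KL divergences being nonnegative, their absolute difference is bounded by $2R^2$, giving $\dnormloss{\nu_\Param}{} \leq 2R^2/\normkern{\Prob_0}$.

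For the Lipschitz constant, fix $\hyp,\hyp'$ and set $G(\Param) := \Risk_{\mathtt{GMM}}(\Prob_\Param,\hyp) - \Risk_{\mathtt{GMM}}(\Prob_\Param,\hyp')$. Writing $\Risk_{\mathtt{GMM}}(\Prob_\Param,\hyp) = \Exp_{Y\sim \mathcal{N}(0,\covar)}[-\log \Prob_\hyp(Y+\Param)]$ allows differentiation under the integral, yielding
\begin{equation*}
\nabla G(\Param) = \covar^{-1}\Exp_{X \sim \Prob_\Param}\brac{\bar{c}_{\hyp'}(X) - \bar{c}_\hyp(X)}, \qquad \bar{c}_\hyp(x) := \sum_l w_l(x) c_l,
\end{equation*}
where $w_l(x) = \alpha_l\phi_{c_l}(x)/\Prob_\hyp(x) \in [0,1]$ sum to one (posterior responsibilities). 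Since $\bar{c}_\hyp(x)$ is a convex combination of the $c_l$, $\normmah{\bar{c}_\hyp(x)}{\covar} \leq R$, so $\normmah{\bar{c}_{\hyp'}(X)-\bar{c}_\hyp(X)}{\covar} \leq 2R$. By Jensen and the duality identity $\normmah{\covar^{-1}v}{\covar^{-1}} = \normmah{v}{\covar}$, we obtain $\normmah{\nabla G(\Param)}{\covar^{-1}} \leq 2R$. The mean value inequality (with respect to the Mahalanobis and dual Mahalanobis norms) gives $|G(\Param)-G(\Param')| \leq 2R \normmah{\Param-\Param'}{\covar} = 2R\sep\, \metricParam(\Param,\Param')$, hence $L_{\DLossClass} \leq 2R\sep/\normkern{\Prob_0}$.

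Combining via Theorem~\ref{thm:RadiusGeneric} (with $c=1$) yields $\dnormloss{\dipoleSet}{} \leq (2R^2 + 2R\sep)/\normkern{\Prob_0} \leq 4R^2/\normkern{\Prob_0}$, using $\sep \leq R$. The main obstacle is the gradient computation for the Lipschitz constant: one must justify differentiation under the integral (using that the Gaussian decay dominates the polynomial-in-$x$ integrands and that $-\log \Prob_\hyp$ has bounded gradient growth in the Mahalanobis norm) and recognize the posterior-weighted centroid structure to show it remains in $\Ball_{\RR^\sampleDim,\normmah{\cdot}{\covar}}(0,R)$.
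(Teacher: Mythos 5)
Your proof is correct and follows essentially the same route as the paper's: Theorem~\ref{thm:RadiusGeneric} with $c=1$, a $2R^2$ bound on the monopole radius obtained from the constant-entropy/KL decomposition and convexity of KL in its second argument (the paper packages this as Lemma~\ref{lem:BoundedMLLoss}), and a $2R\eps$ Lipschitz bound via differentiation under the integral and the posterior-weighted-centroid identity for $\nabla(-\log\Prob_\hyp)$. The only cosmetic difference is that you express the gradient difference directly as an expectation of centroid differences, whereas the paper writes $\nabla f_\hyp(\theta) = \covar^{-1}(\theta - \sum_l\gamma_l\theta_l)$ and subtracts, but these are identical after cancellation of the $\theta$ term.
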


\begin{proof}[Proof of Lemmas-\ref{lem:DiracDLoss}-\ref{lem:GaussDLoss}]
Since $\kernel$ is $1$-strongly locally characteristic with respect to $\BasicSet$ we can use Theorem~\ref{thm:RadiusGeneric}. Slightly abusing notation (confusing $\BasicSet = (\ParamSpace,\metricParam,\embd)$ with $\embd(\ParamSpace)$) we denote $\normfclass{\BasicSet}{G} := \normfclass{\embd(\ParamSpace)}{G}$ and observe that $\normfclass{\monopoleSet}{G} = \normkern{\Prob_{0}}^{-1}  \cdot \normfclass{\BasicSet}{G}$, and that $\embd: \Param \mapsto \Prob_{\Param}$ is $L'_{\GClass}$-Lipschitz with respect to $\norm{\cdot}$, $\normfclass{\cdot}{G}$ if, and only if  $\psi$ is $L_{\GClass}$-Lipschitz with respect to $\norm{\cdot}$ and $\normfclass{\cdot}{G}$, with $L_{\GClass} = L'_{\GClass} \normkern{\Prob_{0}}^{-1} $. By Theorem~\ref{thm:RadiusGeneric}, if we can show that $\embd: \Param \mapsto \Prob_{\Param}$ is $L'_{\DLossClass}$-Lipschitz with respect to $\norm{\cdot}$, $\dnormloss{\cdot}{}$ then $\psi$ has the desired Lipschitz property with $L_{\DLossClass} \leq \normkern{\Prob_{0}}^{-1}  L'_{\DLossClass}$ and
\[
\normkern{\Prob_{0}}^{-1}  \cdot \dnormloss{\BasicSet}{}
=
\dnormloss{\monopoleSet}{} \leq \dnormloss{\dipoleSet}{} \leq \normkern{\Prob_{0}}^{-1}  
\rev{\paren{L'_{\GClass}+\dnormloss{\BasicSet}{}}}.
\]
The rest of the proof consists in characterizing $\dnormloss{\BasicSet}{}$ and bounding $L'_{\DLossClass}$.

For this we consider $\dloss(\cdot,\hyp,\hyp') = \loss(\cdot,\hyp)-\loss(\cdot,\hyp') \in \DLossClass(\HypClass_{R})$ where $\hyp,\hyp' \in \HypClass_{R}$. 

With $\BasicSet = \BasicSet_{\mathtt{Dirac}}$ and the loss associated to compressive clustering, given $\hyp = (c_1,\ldots,c_k)$, for each $\Param \in \ParamSpace_{R}$ the triangle inequality yields $\norm{\Param-c_{l}}_{2} \leq \norm{\Param}_{2}+\norm{c_{l}}_{2} \leq 2R$ hence $0 \leq \loss(\Param,\hyp) \leq (2R)^p$ where we recall that $p=2$ for $k$-means and $p=1$ for $k$-medians. Similarly $0 \leq \loss(\Param,\hyp') \leq (2R)^{p}$ hence $g(\Param) := \Exp_{\Sample \sim \Prob_{\Param}} \dloss(\Sample,\hyp,\hyp') = \dloss(\Param,\hyp,\hyp')$ satisfies $g(\Param) \leq (2R)^{p}$. This shows that
\[
\dnormloss{\BasicSet_{\mathtt{Dirac}}}{} \leq (2R)^{p}.
\]
The bound is reached using $\Param$ such that $\norm{\Param}_{2}=R$, $c_{1}=\ldots = c_{k} = -\Param$, $\hyp = (c_{1},\ldots,c_{k})$, $\hyp' = -\hyp$. 

Given $\Param,\Param' \in \ParamSpace_{R}$, let $i$ be an index such that $\loss(\Param',\hyp) = \min_{l} \norm{\Param'-c_{l}}_{2}^{p} = \norm{\Param'-c_i}_{2}^{p}$. By definition, $\loss(\Param,\hyp) = \min_{l} \norm{\Param-c_{l}}_{2}^{p} \leq \norm{\Param-c_i}_{2}^{p}$ hence 
\(
\loss(\Param,\hyp)-\loss(\Param',\hyp) \leq \norm{\Param-c_i}_{2}^{p}-\norm{\Param'-c_i}_{2}^{p}. 
\)
Similarly, with $j$ such that $\loss(\Param,\hyp') = \|\Param-c'_j\|_{2}^{p}$ (where $\hyp' = (c'_{1},\ldots,c'_{k})$) we get 
\(
\loss(\Param',\hyp')-\loss(\Param,\hyp')  \leq \norm{\Param'-c'_j}_{2}^{p}-\norm{\Param-c'_j}_{2}^{p}
\)
hence
\begin{align*}
g(\Param)-g(\Param')
= 
\left[\loss(\Param,\hyp)-\loss(\Param,\hyp')\right]
-\left[\loss(\Param',\hyp)-\loss(\Param',\hyp')\right]
&= 
\left[\loss(\Param,\hyp)-\loss(\Param',\hyp)\right]
+
\left[\loss(\Param',\hyp')-\loss(\Param,\hyp')\right]\\
& \leq \norm{\Param-c_i}_{2}^{p}-\norm{\Param'-c_i}_{2}^{p} + \norm{\Param'-c'_j}_{2}^{p}-\norm{\Param-c'_j}_{2}^{p}.
\end{align*}
For $k$-medians, $p=1$ and the reversed triangle inequality further yields
\begin{align*}
\norm{\Param-c_i}_{2}-\norm{\Param'-c_i}_{2} + \norm{\Param'-c'_j}_{2}-\norm{\Param-c'_j}_{2}
& \leq 2 \norm{\Param-\Param'}_{2} = 2 (2R)^{p-1} \norm{\Param-\Param'}_{2}.
\end{align*}
In the case of $k$-means, $p=2$ and we use
\begin{align*}
\norm{\Param-c_i}_{2}^2-\norm{\Param'-c_i}_{2}^2 + \norm{\Param'-c'_j}_{2}^2-\norm{\Param-c'_j}_{2}^2
&= 2 \inner{\Param-\Param',c'_{j}-c_{i}}\\
& \leq 2 \norm{\Param-\Param'}_{2} \cdot \norm{c'_{j}-c_{i}}_{2}\\
& \leq 4R \norm{\Param-\Param'}_{2}
= 2 (2R)^{p-1} \norm{\Param-\Param'}_{2}.
\end{align*}
By symmetry we obtain $\abs{g(\Param) - g(\Param')}\leq 2(2R)^{p-1} \norm{\Param-\Param'}_{2}$. As this holds for any $\Param,\Param' \in \ParamSpace_{R}$ and $g \in \DLossClass$, and as $\norm{\Param-\Param'}_{2} = \sep \norm{\Param-\Param'}$, we get $L'_{\DLossClass} \leq \sep 2(2R)^{p-1}=(\sep/R)\dnormloss{\BasicSet_{\mathtt{Dirac}}}{}$.

With $\BasicSet = \BasicSet_{\mathtt{Gauss}}$ and the loss associated to compressive GMM, we prove 
\rev{at the end of this section} %
that for any $\hyp,\hyp' \in \HypClass_{R}$ the function $g(\Param) := \Exp_{\Sample \sim \Prob_{\Param}} \dloss(\Sample,\hyp,\hyp')$ satisfies
\begin{align}
\abs{g(\Param)} &\leq 2R^{2}
\label{eq:DLossBoundGMM}\\
\abs{g(\Param)-g(\Param')} & \leq 2R \sep \norm{\Param-\Param'}.
\label{eq:DLossLipGMM}
\end{align}
\rev{where the first bound is reached.}
We obtain $\dnormloss{\BasicSet_{\mathtt{Gauss}}}{} = 2R^{2}$, $L'_{\DLossClass} \leq 2\sep R = (\sep/R)\dnormloss{\BasicSet_{\mathtt{Gauss}}}{}$.

In both cases since $R \geq \sep$ we have 
$\rev{L'_{\DLossClass}+\dnormloss{\BasicSet}{} \leq 2 \dnormloss{\BasicSet}{}}$.
\end{proof}

The following lemma, which applies to any family of absolutely continuous probability distributions $\set{\Prob_{\Param}: \Param \in \ParamSpace}$ on $\RR^d$, will be soon specialized to Gaussians with fixed known covariance.
\begin{lemma}\label{lem:BoundedMLLoss}
  Consider a family of probability distributions $\set{\Prob_{\Param}: \Param \in \ParamSpace}$ on $\RR^d$ having a density with respect to the Lebesgue measure. Recalling
  $\textnormal{H}$ denotes the differential entropy~\eqref{eq:DefEntropy}, assume that
\begin{align}
\textnormal{H}_{\min} &:= \inf_{\Param \in \ParamSpace} \Entropy{\Prob_{\Param}} > -\infty;\\
\textnormal{H}_{\max} &:= \sup_{\Param,\Param' \in \ParamSpace} \Entropy{\Prob_{\Param}}+\KLdiv{\Prob_{\Param}}{\Prob_{\Param'}} < \infty.
\end{align}
For any $\Prob_{\hyp} := \sum_{l=1}^{k}\alpha_{l} \Prob_{\Param_{l}}$, where $\alpha \in \Simplex_{k-1}$, $\Param_{l} \in \ParamSpace$ we have for any $\Param \in \ParamSpace$:
\[
\textnormal{H}_{\min} 
\leq \Exp_{\Sample \sim \Prob_{\Param}} \left[-\log \Prob_{\hyp}(\Sample)\right] 
\leq \textnormal{H}_{\max}.
\]
The lower and the upper bounds are both tight.
\end{lemma}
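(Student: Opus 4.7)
The plan is to begin with the identity
\[
\Exp_{\Sample \sim \Prob_{\Param}}\bigl[-\log \Prob_{\hyp}(\Sample)\bigr] = \Entropy{\Prob_{\Param}} + \KLdiv{\Prob_{\Param}}{\Prob_{\hyp}},
\]
which holds whenever both distributions are absolutely continuous (one simply adds and subtracts $\log \Prob_{\Param}(\Sample)$ inside the expectation). Once this decomposition is available, each of the two bounds reduces to a well-known one-line argument.

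For the lower bound, I would invoke Gibbs' inequality $\KLdiv{\Prob_{\Param}}{\Prob_{\hyp}} \geq 0$, so that the expected loss is at least $\Entropy{\Prob_{\Param}} \geq \textnormal{H}_{\min}$ by definition of $\textnormal{H}_{\min}$. Tightness is obtained by choosing a one-component mixture ($k=1$) with $\Param_{1}=\Param$, so that $\Prob_{\hyp}=\Prob_{\Param}$, making the KL term vanish, and then selecting $\Param$ (or a minimizing sequence thereof) to approach the infimum defining $\textnormal{H}_{\min}$.

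For the upper bound, the key step is Jensen's inequality applied to the convex function $-\log$: for each $\sample$,
\[
-\log \Prob_{\hyp}(\sample) = -\log\!\left(\sum_{l=1}^{k}\alpha_{l}\Prob_{\Param_{l}}(\sample)\right) \leq \sum_{l=1}^{k}\alpha_{l}\bigl(-\log \Prob_{\Param_{l}}(\sample)\bigr).
\]
Taking the expectation under $\Prob_{\Param}$, rewriting each term via the identity above, and using $\sum_{l}\alpha_{l}=1$ gives
\[
\Exp_{\Sample \sim \Prob_{\Param}}\bigl[-\log \Prob_{\hyp}(\Sample)\bigr] \leq \sum_{l=1}^{k}\alpha_{l}\bigl(\Entropy{\Prob_{\Param}}+\KLdiv{\Prob_{\Param}}{\Prob_{\Param_{l}}}\bigr) \leq \textnormal{H}_{\max},
\]
by definition of $\textnormal{H}_{\max}$. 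Tightness again follows from the $k=1$ case ($\Prob_{\hyp}=\Prob_{\Param'}$), which makes Jensen an equality and yields $\Entropy{\Prob_{\Param}}+\KLdiv{\Prob_{\Param}}{\Prob_{\Param'}}$; taking the supremum over $\Param,\Param'$ recovers $\textnormal{H}_{\max}$.

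There is no real obstacle in this argument: the only point to handle carefully is measurability/integrability, ensuring that the identity decomposing the expected log-loss into entropy plus KL divergence is well-defined. This is already implicit in the hypotheses ($\textnormal{H}_{\min}$ and $\textnormal{H}_{\max}$ finite), since both guarantee the relevant integrals exist and are finite, so the decomposition and Jensen's inequality can be applied without further justification.
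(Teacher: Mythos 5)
Your proof is correct and follows the same route as the paper: the decomposition into entropy plus KL, nonnegativity of KL for the lower bound, and convexity of $Q\mapsto\KLdiv{\Prob_{\Param}}{Q}$ for the upper bound—the latter is precisely what your pointwise Jensen step on $-\log$ establishes, so the paper's invocation of "convexity properties" of KL and your explicit Jensen argument are the same computation. The tightness argument via the $k=1$ case also matches what the paper leaves implicit.
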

\begin{proof}
  By the \rev{definition} of the Kullback-Leibler divergence and \rev{its convexity
    properties}, we have
  \rev{
    \begin{align*}
      \Entropy{\Prob_{\Param}}
\leq 
\Entropy{\Prob_{\Param}}
+\KLdiv{\Prob_{\Param}}{\Prob_{\hyp}}
&= \Entropy{\Prob_{\Param}}
                                        +\KLdiv{\Prob_{\Param}}{\sum_{l=1}^{k}\alpha_{l}\Prob_{\Param_{l}}} \\
      & \leq \Entropy{\Prob_{\Param}}
        +\sum_{l=1}^{k}\alpha_{l}\KLdiv{\Prob_{\Param}}{\Prob_{\Param_{l}}}
\leq
\Entropy{\Prob_{\Param}} + \sup_{\Param' \in \ParamSpace} \KLdiv{\Prob_{\Param}}{\Prob_{\Param'}}.
    \end{align*}
    }
For a given $\Param$, both the lower and the upper bound are tight. The conclusion immediately follows.
\end{proof}
This translates into a concrete result for Gaussian mixtures with fixed known covariance.
\begin{proof}[Proof of Equations~\eqref{eq:DLossBoundGMM}-\eqref{eq:DLossLipGMM} - end of the proof of Lemma~\ref{lem:GaussDLoss}]

To establish~\eqref{eq:DLossBoundGMM} we exploit Lemma~\ref{lem:BoundedMLLoss}. The entropy of a Gaussian is $\Entropy{\Prob_{\Param}} = \tfrac{1}{2} \log \det{2\pi e \covar}$ which is independent of $\Param$, hence $\textnormal{H}_{\min}=\tfrac{1}{2} \log \det{2\pi e \covar}$. 
The Kullback-Leibler divergence has a closed form expression in the case of multivariate Gaussians (see e.g. \citealp{Duchi2007}):
\begin{equation}
\label{eq:kl_gauss}
\KLdiv{\mathcal{N}(\Param_1,\covar_1)}{\mathcal{N}(\Param_2,\covar_2)} 
= \tfrac{1}{2}\left[\log\frac{\det{\covar_2}}{\det{\covar_1}} + \mathrm{tr}\left(\covar_2^{-1}\covar_1\right)-\sampleDim + \left(\Param_2-\Param_1\right)^T \covar_2^{-1}\left(\Param_2-\Param_1\right) \right];
\end{equation}
hence $\KLdiv{\Prob_{\Param}}{\Prob_{\Param'}} =\tfrac12 \normmah{\Param-\Param'}{\covar}^2$. \rev{Since $\normmah{\Param}{\covar} \leq R$ when $\Param \in \ParamSpace_{R}$, we have}
\(
\textnormal{H}_{\max} = \textnormal{H}_{\min}+ \tfrac{1}{2} \normmah{\Param-\Param'}{\covar}^2 \leq \textnormal{H}_{\min} + 2 R^{2}.
\) 
By Lemma~\ref{lem:BoundedMLLoss} we obtain~\eqref{eq:DLossBoundGMM} as follows 
\[
\abs{g(\Param)} = 
\abs{\Exp_{\Sample \sim \Prob_{\Param}} \dloss(\Sample,\hyp,\hyp')}
=
\abs{\Exp_{\Sample \sim \Prob_{\Param}} [-\log \Prob_{\hyp}(\Sample)]
-\Exp_{\Sample \sim \Prob_{\Param}} [-\log \Prob_{\hyp'}(\Sample)]
} \leq  \textnormal{H}_{\max}-\textnormal{H}_{\min} \leq 2R^{2}.
\]
\rev{where the bound is tight.} 

\rev{We now turn to Equation~\eqref{eq:DLossLipGMM}.}
Denoting $\loss_{\hyp}(\sample) := -\log \Prob_{\hyp}(\sample)$ and $f_{\hyp}(\Param) := 
\Exp_{\Sample \sim \Prob_{\Param}} \loss_{\hyp}(\Sample)$,
since $\Prob_{\Param}(\sample) = \Prob_{0}(\sample-\Param) = \Prob_{0}(\Param-\sample)$ we have 
\begin{align*}
f_{\hyp}(\Param) 
& = \int \Prob_{\Param}(\sample) \loss_{\hyp}(\sample) d\sample
= \int \Prob_{0}(\Param-\sample) \loss_{\hyp}(\sample) d\sample
= \int \Prob_{0}(\sample) \loss_{\hyp}(\Param-\sample) d\sample;\\
\nabla f_{\hyp}(\Param) 
&=
\int \Prob_{0}(\sample) \nabla \loss_{\hyp}(\Param-\sample) d\sample 
=
\int \Prob_{0}(\Param-\sample) \nabla \loss_{\hyp}(\sample) d\sample 
= 
\Exp_{\Sample \sim \Prob_{\Param}} \nabla \loss_{\hyp}(\Sample);\\
\nabla \loss_{\hyp}(\sample)
&=
-\frac{\nabla \Prob_{\hyp}(\sample)}{\Prob_{\hyp}(\sample)}
=
-\frac{
\sum_{l}\alpha_{l} \Prob_{\Param_{l}}(\sample) \cdot 
\frac{\nabla \Prob_{\Param_{l}}(\sample)}{\Prob_{\Param_{l}}(\sample)}
}{
\Prob_{\hyp}(\sample)
}
=
-\sum_{l} \frac{
\alpha_{l} \Prob_{\Param_{l}}(\sample)
}{
\Prob_{\hyp}(\sample)
}
\cdot \frac{\nabla \Prob_{\Param_{l}}(\sample)}{\Prob_{\Param_{l}}(\sample)}
=
-\sum_{l} \beta_{l}(\sample) \cdot \nabla \log \Prob_{\Param_{l}}(\sample),
\end{align*}
where $\beta_{l}(\sample) := \frac{
\alpha_{l} \Prob_{\Param_{l}}(\sample)
}{
\sum_{l}\alpha_{l} \Prob_{\Param_{l}}(\sample)
}\geq 0$ satisfies $\sum_{l} \beta_{l}(\sample) =1$. Since 
$\nabla \log \Prob_{\Param_{l}}(\sample) = -\covar^{-1}(\sample-\Param_{l})$, 
we obtain
\begin{align*}
\nabla f_{\hyp}(\Param) 
&=
\Exp_{\Sample \sim \Prob_{\Param}}
\sum_{l} \beta_{l}(\Sample) \cdot \covar^{-1}(\Sample-\Param_{l})
=
\covar^{-1}\Exp_{\Sample \sim \Prob_{\Param}} 
\big(\Sample -\sum_{l} \beta_{l}(\Sample) \Param_{l}
\big)
=
\covar^{-1}\big(\Param-\sum_{l} \gamma_{l} \cdot \Param_{l}\big),
\end{align*}
with $\gamma_{l}:= \Exp_{\Sample \sim \Prob_{\Param}} \beta_{l}(\Sample) \geq 0$, $\sum_{l}\gamma_{l} = 1$. 
\rev{Similarly we have $\nabla f_{\hyp'}(\Param) = \covar^{-1}\left(\Param-\sum_{l} \gamma'_{l} \cdot \Param'_{l}\right)$
where $\gamma'_{l}\geq0$ and $\sum_{l} \gamma'_{l}=1$.}
\rev{Since $g(\Param) = f_{\hyp}(\Param)-f_{\hyp'}(\Param)$, and $\normmah{\Param_{l}}{\covar} \leq R$, $\normmah{\Param'_{l}}{\covar} \leq R$, we get}
\begin{align*}
\normmah{\nabla g(\Param)}{\covar^{-1}}
=
\norm{\covar^{1/2} (\nabla f_{\hyp}(\Param)-\nabla f_{\hyp'}(\Param))}_{2} 
&= 
\norm{\covar^{-1/2}\big(\sum_{l} \gamma'_{l} \cdot \Param'_{l}-\sum_{l} \gamma_{l} \cdot \Param_{l}\big)}_{2}\\
&=
\normmah{\sum_{l} \gamma'_{l} \cdot \Param'_{l}-\sum_{l} \gamma_{l} \cdot \Param_{l}}{\covar}
\leq 2R.
\end{align*}
To obtain~\eqref{eq:DLossLipGMM}, given $\Param,\Param'$, defining $\Param(t) := \Param+t(\Param'-\Param)$ we have
\begin{eqnarray*}
\abs{g(\Param')-g(\Param)} 
&=& 
\abs{\int_{0}^{1} \frac{d}{dt} g(\Param(t)) dt}
= 
\abs{\int_{0}^{1} \inner{ \nabla g(\Param(t)),\Param'-\Param} dt}
\leq 
\int_{0}^{1} \normmah{\nabla g(\Param(t))}{\covar^{-1}} \normmah{\Param'-\Param}{\covar} dt\\
&\leq& 2R  \normmah{\Param'-\Param}{\covar} = 2R\sep \norm{\Param-\Param'}.
\end{eqnarray*}
\end{proof}

\subsection{Elements to control the bias term}\label{subsec:biasclustgmm}
While Gaussian Mixture Modeling is a maximum likelihood task, with a negative log-likelihood loss, both $k$-means and $k$-medians are \emph{compression-type tasks}
as defined in~\citeppartone{}:
 \begin{definition}[Compression-type task, \citeppartone{Definition~\ref{P1-def:comptypetask}}] \label{def:comptypetask}
    We call the learning task a {\em compression-type task} if the loss can be written as
   $\loss(\sample,\hyp) = \divg^{p}(\sample,P_{\hyp}\sample)$, where $d$ is a metric on the sample space $\SampleSpace$, $p>0$, and
$P_{\hyp}: \SampleSpace \to \SampleSpace$ is a projection function, i.e.,
$P_{\hyp} \circ P_{\hyp} = P_{\hyp}$ and $\divg(x,P_{\hyp}x) \leq  \divg(x,P_{\hyp}x')$ for all $x,x' \in \SampleSpace$.
 \end{definition}
\noindent For $k$-means and $k$-medians, $d(\sample,\sample') = \norm{\sample-\sample'}_{2}$ is the Euclidean distance on $\SampleSpace = \RR^{\sampleDim}$. Given $\hyp = (c_{1},\ldots,c_{k})$, the function $P_{\hyp}$ maps $\sample$ to the closest $c_{j}$, with ties broken arbitrarily, and can be used to define a {\em Voronoi partition} $W_{j}(\hyp) := P_{\hyp}^{-1}(c_{j}) = \set{\sample \in \RR^{\sampleDim}: P_{\hyp} \sample = c_{j}}$, i.e. a collection of pairwise disjoint sets such that $\cup_{j} W_{j}(\hyp) = \RR^{\sampleDim}$ and $W_{j}(\hyp) \subseteq V_{j}(\hyp)$ with $V_{j}(\hyp)$ the Voronoi cells defined in~\eqref{eq:DefVoronoiCell}. 
The  push-forward $P_\hyp\Prob$ of a probability distribution $\Prob$ through $P_\hyp$ is the probability distribution of  $Y = P_\hyp X$ when $X \sim \Prob$. Here it reads more explicitly as $P_{\hyp} \Prob = \sum_{j=1}^{k} \alpha_{j} \delta_{c_{j}}$ with $\alpha_{j} = \Prob(\Sample \in W_{j}(\hyp))$.

\rev{The goal of the next result is to have a device to relate the excess risk with respect to hypotheses in the restricted class $\HypClass$, which will be controlled
  via Theorem~\ref{thm:LRIPsuff_excess} \eqref{eq:MainBoundExcessRisk}-\eqref{eq:DefMDist2},
  to the excess risk with respect to the optimal in an unconstrained (or less constrained) class, $\HypClassRef$, e.g. $\HypClassRef = (\RR^{\sampleDim})^{k}$ for $k$-means (resp. $\HypClassRef = (\RR^{\sampleDim})^{k} \times \Simplex_{k-1}$ for GMM).
  Observe that the main control~\eqref{eq:MainBoundExcessRisk} on
  the restricted hypothesis class takes the form
  \[
    \forall \hyp_{0} \in \HypClass : \drisk_{\hyp_{0}}(\Prob,\hat{\hyp}) 
  \leq \distIOPexgen^{\HypClass}_{\hyp_{0}}(\Prob,\Model) + \Delta(\Prob,\empProb),
\]
where the trailing rest term $\Delta(\Prob,\empProb)$ does not depend on $\hyp_0$. If $\hyp^\star$ denotes the optimal hypothesis over the larger class $\HypClassRef$, we deduce
from the above:
\[
  \forall \hyp_{0} \in \HypClass : \drisk_{\hyp^\star}(\Prob,\hat{\hyp})
  = \drisk_{\hyp_0}(\Prob,\hat{\hyp}) +
  \drisk_{\hyp^\star}(\Prob,\hyp_0)
  \leq \paren{\drisk_{\hyp^\star}(\Prob,\hyp_0) + \distIOPexgen^{\HypClass}_{\hyp_{0}}(\Prob,\Model)} + \Delta(\Prob,\empProb).
\]
It is therefore of interest to further upper bound the first term in the above estimate.
This is what we obtain in the following result.
}
\begin{lemma}\label{lem:BiasKMeansConstrained}
Consider a compression-type task with $P_{\hyp}$ defined for any $\hyp \in \HypClassRef$. With the notations and assumptions of Theorem~\ref{thm:LRIPsuff_excess} on a class $\HypClass \subseteq \HypClassRef$ with $\Model = \ModelCT(\HypClass)$, %
considering
$\Prob$ a probability distribution on $\SampleSpace$ with integrable loss, $\hyp^{\star} \in \arg\min_{\hyp \in \HypClassRef} \Risk(\Prob,\hyp)$, and $\Prob^\star := P_{\hyp^{\star}}\Prob$ we have 
\begin{equation}\label{eq:BiasKMeansConstrained1}
\inf_{\hyp_{0}\in \HypClass}
\left\{
\drisk_{\hyp^{\star}}(\Prob,\hyp_{0})+\distIOPexgen_{\hyp_{0}}^{\HypClass}(\Prob,\Model)\right\}
 \leq
\divp_{\hyp^{\star}}^{\rev{\HypClassRef}}(\Prob\|\Prob^\star) + (2+\nu)C_{\SketchingOperatorProb}\norm{\SketchingOperatorProb(\Prob)-\SketchingOperatorProb(\Prob^\star)}_{2} \notag
 + d^{\HypClass}(\Prob^{\star},\Model),
 \end{equation}
with
 \begin{equation}\label{eq:BiasKMeansConstrained2}
d^{\HypClass}(\Prob^{\star},\Model)
  := \inf_{\mProb \in \Model} 
\left\{
\sup_{\hyp \in \HypClass} \left(\Risk(\Prob^{\star},\hyp)-\Risk(\mProb,\hyp)\right)
+ (2+\nu)C_{\SketchingOperatorProb}\norm{\SketchingOperatorProb(\Prob^{\star})-\SketchingOperatorProb(\mProb)}_{2}\right\}.
\end{equation}
The same holds for a maximum likelihood task with $\loss(\sample,\hyp) = -\log \Prob_{\hyp}(\sample)$, using $\Model = \ModelML(\HypClass)$, $\Prob^\star = \Prob_{\hyp^{\star}}$ and
 \begin{equation}\label{eq:BiasMaxLikelihoodConstrained2}
d^{\HypClass}(\Prob^{\star},\Model)
  := \inf_{\mProb \in \Model} 
\left\{
\sup_{\hyp \in \HypClass} \left(\KLdiv{\Prob^{\star}}{\Prob_{\hyp}}-\KLdiv{\mProb}{\Prob_{\hyp}}\right)
+ (2+\nu)C_{\SketchingOperatorProb}\norm{\SketchingOperatorProb(\Prob^{\star})-\SketchingOperatorProb(\mProb)}_{2}\right\}.
\end{equation}
\end{lemma}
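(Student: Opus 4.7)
Fix $\mProb \in \Model$ and $\hyp_0 \in \HypClass$. First, I would expand definitions and observe that the $\Risk(\Prob,\hyp_0)$ terms telescope, giving
\[
\drisk_{\hyp^\star}(\Prob,\hyp_0) + \divp_{\hyp_0}^{\HypClass}(\Prob\|\mProb) = \sup_{\hyp\in\HypClass}\bigl[(\Risk(\Prob,\hyp)-\Risk(\Prob,\hyp^\star)) - (\Risk(\mProb,\hyp)-\Risk(\mProb,\hyp_0))\bigr].
\]
Next, the defining supremum of $\divp_{\hyp^\star}^{\HypClassRef}(\Prob\|\Prob^\star)$ applied to $\hyp\in\HypClass\subseteq\HypClassRef$ yields
\[
\Risk(\Prob,\hyp)-\Risk(\Prob,\hyp^\star) \leq \divp_{\hyp^\star}^{\HypClassRef}(\Prob\|\Prob^\star) + \Risk(\Prob^\star,\hyp) - \Risk(\Prob^\star,\hyp^\star),
\]
so that substituting this back inside the supremum and pulling $\hyp$-independent terms out gives
\[
\drisk_{\hyp^\star}(\Prob,\hyp_0) + \divp_{\hyp_0}^{\HypClass}(\Prob\|\mProb) \leq \divp_{\hyp^\star}^{\HypClassRef}(\Prob\|\Prob^\star) + \bigl[\Risk(\mProb,\hyp_0)-\Risk(\Prob^\star,\hyp^\star)\bigr] + \sup_{\hyp\in\HypClass}\bigl[\Risk(\Prob^\star,\hyp)-\Risk(\mProb,\hyp)\bigr].
\]

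The heart of the argument is then to choose $(\hyp_0, \mProb)$ so that the correction bracket $\Risk(\mProb,\hyp_0)-\Risk(\Prob^\star,\hyp^\star)$ vanishes and the residual supremum matches the definition of $d^{\HypClass}(\Prob^\star,\Model)$ in each case. For the compression case, $\Prob^\star=P_{\hyp^\star}\Prob$ is supported on the fixed points of the projection $P_{\hyp^\star}$ (since $P_{\hyp^\star}\circ P_{\hyp^\star} = P_{\hyp^\star}$), so $\Risk(\Prob^\star,\hyp^\star)=0$; and because $\Model = \bigcup_{\hyp\in\HypClass}\ModelCT_\hyp$, for each $\mProb \in \Model$ one can pick $\hyp_0(\mProb)\in\HypClass$ with $\Risk(\mProb,\hyp_0(\mProb))=0$. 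Both correction terms vanish, leaving precisely the supremum in~\eqref{eq:BiasKMeansConstrained2}. For the ML case, take $\Prob^\star=\Prob_{\hyp^\star}$ and parameterize $\mProb=\Prob_{\hyp_0}\in\ModelML(\HypClass)$. Using $\Risk(\pi,\hyp) = \Entropy{\pi}+\KLdiv{\pi}{\Prob_\hyp}$ yields $\Risk(\Prob^\star,\hyp^\star)=\Entropy{\Prob_{\hyp^\star}}$, $\Risk(\mProb,\hyp_0)=\Entropy{\Prob_{\hyp_0}}$ (the KL term vanishing), and $\Risk(\Prob^\star,\hyp)-\Risk(\mProb,\hyp) = [\KLdiv{\Prob^\star}{\Prob_\hyp}-\KLdiv{\mProb}{\Prob_\hyp}] + \Entropy{\Prob_{\hyp^\star}}-\Entropy{\Prob_{\hyp_0}}$. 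The entropy contributions in the correction bracket and inside the supremum cancel exactly, leaving the KL-based supremum of~\eqref{eq:BiasMaxLikelihoodConstrained2}.

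Finally, I would apply the triangle inequality
\[
\norm{\SketchingOperatorProb(\Prob)-\SketchingOperatorProb(\mProb)}_2 \leq \norm{\SketchingOperatorProb(\Prob)-\SketchingOperatorProb(\Prob^\star)}_2 + \norm{\SketchingOperatorProb(\Prob^\star)-\SketchingOperatorProb(\mProb)}_2
\]
to the sketching term inside $\distIOPexgen_{\hyp_0}^{\HypClass}(\Prob,\Model)$, absorbing the first half into the stand-alone $(2+\nu)C_\SketchingOperatorProb\norm{\SketchingOperatorProb(\Prob)-\SketchingOperatorProb(\Prob^\star)}_2$ term on the right-hand side and the second half into $d^{\HypClass}(\Prob^\star,\Model)$. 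Taking the infimum over $\mProb\in\Model$ (with $\hyp_0(\mProb)$ tied to each $\mProb$) then concludes the proof. The main subtlety will be the ML cancellation: recognizing that the particular choice $\mProb=\Prob_{\hyp_0}$ produces exactly the entropy discrepancy needed to convert the $\Risk$-differences arising from the routing step into the purely KL-based supremum defining $d^{\HypClass}(\Prob^\star,\Model)$ in~\eqref{eq:BiasMaxLikelihoodConstrained2}, whereas the compression cancellation is driven by the idempotence of $P_{\hyp^\star}$ together with the defining property of $\ModelCT(\HypClass)$.
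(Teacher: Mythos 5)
Your proof is correct and follows essentially the same route as the paper's: both rely on the identical three-term decomposition (the $\divp_{\hyp^{\star}}^{\HypClassRef}(\Prob\|\Prob^{\star})$ term, a correction bracket $\Risk(\mProb,\hyp_0)-\Risk(\Prob^{\star},\hyp^{\star})$, and the residual supremum becoming $d^{\HypClass}$), together with the triangle inequality on the sketching term. The paper writes the cancellation by computing $\inf_{\hyp_0}\drisk_{\hyp^{\star}}(\mProb,\hyp_0)$ abstractly while you exhibit the optimizing $\hyp_0(\mProb)$ explicitly, but this is a purely cosmetic difference in presentation.
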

\begin{proof}
  Let $\hyp_{0}, \hyp \in \HypClass$, and $\mProb \in \Model$ be given.
  Since $\drisk_{a}(\Prob,b) + \drisk_{b}(\Prob,c) = \drisk_{a}(\Prob,c)$ for $a,b,c \in \HypClassRef$, we have 
\begin{align*}
\drisk_{\hyp^{\star}}(\Prob,\hyp_{0}) + \drisk_{\hyp_{0}}(\Prob,\hyp)-\drisk_{\hyp_{0}}(\mProb,\hyp)
=&
\drisk_{\hyp^{\star}}(\Prob,\hyp)
-\drisk_{\hyp^{\star}}(\mProb,\hyp)
-\drisk_{\hyp_{0}}(\mProb,\hyp^{\star})\\
=&
\drisk_{\hyp^{\star}}(\Prob,\hyp)
-\drisk_{\hyp^{\star}}(\mProb,\hyp)
+\drisk_{\hyp^{\star}}(\mProb,\hyp_{0})\\
=&
\left(\drisk_{\hyp^{\star}}(\Prob,\hyp)
-\drisk_{\hyp^{\star}}(\Prob^{\star},\hyp)\right)\\
&+\left(\drisk_{\hyp^{\star}}(\Prob^{\star},\hyp)
-\drisk_{\hyp^{\star}}(\mProb,\hyp)\right)
+\drisk_{\hyp^{\star}}(\mProb,\hyp_{0}).
\end{align*}
Taking the supremum over $\hyp \in \HypClass \subseteq \HypClassRef$ 
and 
\rev{denoting $\divp_{\hyp^{\star}}^{\HypClass}(\Prob^{\star}\|\mProb) := \sup_{\hyp \in \HypClass} 
\left(\drisk_{\hyp^{\star}}(\Prob^{\star},\hyp)
-\drisk_{\hyp^{\star}}(\mProb,\hyp)\right)$ (even though $\hyp^{\star}$ may not belong to $\HypClass$) yields}
\begin{align*}
\drisk_{\hyp^{\star}}(\Prob,\hyp_{0}) + \divp_{\hyp_{0}}^{\HypClass}(\Prob\|\mProb) 
\leq &
\divp_{\hyp^{\star}}^{\HypClassRef}(\Prob\|\Prob^{\star})
+
\divp_{\hyp^{\star}}^{\HypClass}(\Prob^{\star}\|\mProb)
+\drisk_{\hyp^{\star}}(\mProb,\hyp_{0}),
\end{align*}
hence by a triangle inequality
\begin{align}
\drisk_{\hyp^{\star}}(\Prob,\hyp_{0}) + \divp_{\hyp_{0}}^{\HypClass}(\Prob\|\mProb) 
+ (2+\nu)C_{\SketchingOperatorProb}\norm{\SketchingOperatorProb(\Prob)-\SketchingOperatorProb(\mProb)}_{2}
\leq  
& \divp^{\rev{\HypClassRef}}_{\hyp^{\star}}(\Prob\|\Prob^{\star})
+ (2+\nu)C_{\SketchingOperatorProb}\norm{\SketchingOperatorProb(\Prob)-\SketchingOperatorProb(\Prob^\star)}_{2}\notag\\
&
+\divp_{\hyp^{\star}}^{\HypClass}(\Prob^{\star}\|\mProb) +\drisk_{\hyp^{\star}}(\mProb,\hyp_{0})\notag\\
& + (2+\nu)C_{\SketchingOperatorProb}\norm{\SketchingOperatorProb(\Prob^{\star})-\SketchingOperatorProb(\mProb)}_{2}.
\label{eq:TmpTTTT}
\end{align}
The joint infimum of~\eqref{eq:TmpTTTT} over $\hyp_{0} \in \HypClass$ and $\mProb \in \Model$ yields
\begin{align}
\inf_{\hyp_{0} \in \HypClass}
\Big\{
\drisk_{\hyp^{\star}}(\Prob,\hyp_{0}) 
+& \distIOPexgen_{\hyp_{0}}^{\HypClass}(\Prob,\Model)
\Big\}
\leq  
\divp_{\hyp^{\star}}^{\rev{\HypClassRef}}(\Prob\|\Prob^{\star})
+
(2+\nu)C_{\SketchingOperatorProb}\norm{\SketchingOperatorProb(\Prob)-\SketchingOperatorProb(\Prob^{\star})}_{2}\notag\\
&
+\inf_{\mProb \in \Model} 
\left\{
\Big[\divp_{\hyp^{\star}}^{\HypClass}(\Prob^{\star}\|\mProb)
+ \inf_{\hyp_{0} \in \HypClass} \drisk_{\hyp^{\star}}(\mProb,\hyp_{0})\Big]
+ (2+\nu)C_{\SketchingOperatorProb}\norm{\SketchingOperatorProb(\Prob^{\star})-\SketchingOperatorProb(\mProb)}_{2}
\right\}.
\label{eq:TmpTTTT1}
\end{align}
For any $\hyp \in \HypClassRef$ we have
\(
\drisk_{\hyp^{\star}}(\Prob^{\star},\hyp)-\drisk_{\hyp^{\star}}(\mProb,\hyp)
=
\Risk(\Prob_{\star},\hyp)-\Risk(\mProb,\hyp)
+\Risk(\mProb,\hyp^{\star})-\Risk(\Prob^{\star},\hyp^{\star})
\)
hence $\divp_{\hyp^{\star}}^{\HypClass}(\Prob^{\star}\|\mProb) = \sup_{\hyp \in \HypClass} \left(\Risk(\Prob_{\star},\hyp)-\Risk(\mProb,\hyp)\right)+\Risk(\mProb,\hyp^{\star})-\Risk(\Prob^{\star},\hyp^{\star})
$.

To conclude observe that for a compression-type task, we have $\Risk(\Prob^\star,\hyp^{\star}) = 0$ and, for $\mProb \in \ModelCT(\HypClass)$, 
\[
\inf_{\hyp_{0} \in \HypClass} \drisk_{\hyp^{\star}}(\mProb,\hyp_{0}) 
= \inf_{\hyp_{0} \in \HypClass} \left\{\Risk(\mProb,\hyp_{0})-\Risk(\mProb,\hyp^{\star})\right\}
= -\Risk(\mProb,\hyp^{\star}).
\]
As a result $\divp_{\hyp^{\star}}^{\HypClass}(\Prob^{\star}\|\mProb)+\inf_{\hyp_{0} \in \HypClass} \drisk_{\hyp^{\star}}(\mProb,\hyp_{0}) = \sup_{\hyp \in \HypClass} \left(\Risk(\Prob_{\star},\hyp)-\Risk(\mProb,\hyp)\right)$.

For a maximum likelihood task, $\Risk(\Prob^{\star},\hyp^{\star})=\Entropy{\Prob^{\star}}$ with $H(\cdot)$ the entropy, and $\Risk(\mProb,\hyp_{0}) = \KLdiv{\mProb}{\Prob_{\hyp_{0}}}+\Entropy{\mProb}$ for $\hyp_{0} \in \HypClass$, hence 
\begin{align*}
\inf_{\hyp_{0} \in \HypClass} \drisk_{\hyp^{\star}}(\mProb,\hyp_{0}) 
=& \inf_{\hyp_{0} \in \HypClass} \left\{\Risk(\mProb,\hyp_{0})-\Risk(\mProb,\hyp^{\star})\right\}
   = \Entropy{\mProb}-\Risk(\mProb,\hyp^{\star});
\end{align*}
as a result
\begin{align*}
\divp_{\hyp^{\star}}^{\HypClass}(\Prob^{\star}\|\mProb)+\inf_{\hyp_{0} \in \HypClass} \drisk_{\hyp^{\star}}(\mProb,\hyp_{0}) 
=&
 \sup_{\hyp \in \HypClass} \left(\Risk(\Prob_{\star},\hyp)-\Risk(\mProb,\hyp)\right) + \Entropy{\mProb}-\Entropy{\Prob^{\star}}\notag\\
=& \sup_{\hyp \in \HypClass} \left(\KLdiv{\Prob^{\star}}{\Prob_\hyp}-\KLdiv{\mProb}{\Prob_\hyp}\right).
 \end{align*}
\end{proof}
Next we deal with the term $\divp_{\hyp^{\star}}^{\HypClassRef}(\Prob\|\Prob^{\star})$ in Lemma~\ref{lem:BiasKMeansConstrained}. For $k$-medians  by \citeppartone{Lemma~\ref{P1-lem:LemmaBiasTerm}} we have $\divp_{\hyp^{\star}}^{\HypClassRef}(\Prob\|\Prob^{\star})=0$. We now show that this also holds for $k$-means when $\HypClassRef = (\RR^{\sampleDim})^{k}$.
\begin{lemma}\label{lem:BiasKMeans}
Consider $\loss$ the loss associated to $k$-means on a class $\HypClass$ and $\Prob$ a probability distribution on $\RR^{\sampleDim}$ with integrable loss. 
\begin{itemize}
\item  $\divp_{\hyp_{0}}^{\HypClass}(\Prob\| P_{\hyp_{0}}\Prob) = 0$ for each $\hyp_{0} = (c_{1},\ldots,c_{k}) \in \HypClass$ such that 
\begin{equation}\label{eq:CentroidAssumptionExcessRiskDiv}
\Prob(\Sample \in W_{j}(\hyp_{0})) \neq 0
\Longrightarrow
c_{j} = \Exp_{\Prob}  (\Sample|\Sample \in W_{j}(\hyp_{0})),\qquad 
\forall 1 \leq j \leq k,
\end{equation}

\item $\divp_{\hyp^{\star}}^{\HypClassRef}(\Prob\| P_{\hyp^{\star}}\Prob) = 0$ for each $\hyp^{\star} \in \arg\min_{\hyp \in \HypClassRef} \Risk(\Prob,\hyp)$ with $\HypClassRef = (\RR^{\sampleDim})^{k}$.
\item 
If $\HypClass \subseteq \HypClass_{R} := \set{\hyp = (c_{1},\ldots,c_{k}), \norm{c_{l}}_{2} \leq R}$ then 
\begin{equation}
\divp_{\hyp_{0}}^{\HypClass}(\Prob\| P_{\hyp_{0}}\Prob) \leq 4R  \cdot \Risk_{\mathtt{k-medians}}(\Prob,\hyp_{0}),\qquad
\forall \hyp_{0} \in \HypClass.
\end{equation}
\end{itemize}
\end{lemma}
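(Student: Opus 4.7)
The three claims share a common identity, obtained by expanding the squared $k$-means loss about the centroids of $\hyp_0 = (c_1,\ldots,c_k)$. Given any competing hypothesis $\hyp = (c'_1,\ldots,c'_k)$, the plan is to pick for each $j$ an index $l^{\star}(j) \in \argmin_{1 \leq l \leq k} \norm{c_j - c'_l}_2$ and use, for $x \in W_j(\hyp_0)$, the bound $\loss(x,\hyp) \leq \norm{x-c'_{l^{\star}(j)}}_2^2$ combined with the polarisation identity
\[
\norm{x - c'_{l^{\star}(j)}}_2^2 = \norm{x - c_j}_2^2 + 2\inner{x - c_j,\ c_j - c'_{l^{\star}(j)}} + \norm{c_j - c'_{l^{\star}(j)}}_2^2.
\]
Integrating against $\Prob$ over $W_j(\hyp_0)$, summing over $j$, and using $W_j(\hyp_0) \subseteq V_j(\hyp_0)$ so that $\sum_j \int_{W_j(\hyp_0)} \norm{x - c_j}_2^2\, d\Prob = \Risk(\Prob,\hyp_0)$ while $\sum_j \alpha_j \norm{c_j - c'_{l^{\star}(j)}}_2^2 = \Risk(P_{\hyp_0}\Prob,\hyp)$, I expect, since $\Risk(P_{\hyp_0}\Prob,\hyp_0)=0$,
\[
\drisk_{\hyp_0}(\Prob,\hyp) - \drisk_{\hyp_0}(P_{\hyp_0}\Prob,\hyp) \leq 2 \sum_{j=1}^{k} \inner{m_j - \alpha_j c_j,\ c_j - c'_{l^{\star}(j)}},
\]
where $\alpha_j := \Prob(\Sample \in W_j(\hyp_0))$ and $m_j := \int_{W_j(\hyp_0)} x\, d\Prob(x)$.

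For the first item, the centroid assumption~\eqref{eq:CentroidAssumptionExcessRiskDiv} forces $m_j = \alpha_j c_j$ whenever $\alpha_j > 0$, and the identity holds trivially when $\alpha_j = 0$, so the right-hand side vanishes and $\divp_{\hyp_0}^{\HypClass}(\Prob\|P_{\hyp_0}\Prob) \leq 0$. The reverse inequality is immediate by taking $\hyp = \hyp_0 \in \HypClass$ inside the supremum, hence the divergence equals zero. The second item then reduces to checking that any unconstrained global minimiser $\hyp^{\star}$ over $\HypClassRef = (\RR^{\sampleDim})^{k}$ satisfies~\eqref{eq:CentroidAssumptionExcessRiskDiv}: this is the classical first-order necessary condition recalled just before Theorem~\ref{thm:mainkmeansthm} when $\Prob$ has at least $k$ support points, while in the degenerate case an optimal $\hyp^{\star}$ of zero risk can be chosen whose active centres coincide with the corresponding support points, making \eqref{eq:CentroidAssumptionExcessRiskDiv} trivial.

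For the third item, no centroid identity is available, so the residual sum must be bounded directly. Cauchy--Schwarz and Jensen's inequality yield $\norm{m_j - \alpha_j c_j}_2 \leq \int_{W_j(\hyp_0)} \norm{x - c_j}_2\, d\Prob(x)$, while $\norm{c_j - c'_{l^{\star}(j)}}_2 \leq 2R$ since both $\hyp_0$ and $\hyp$ lie in $\HypClass \subseteq \HypClass_R$. Summing and reusing $W_j(\hyp_0) \subseteq V_j(\hyp_0)$ gives
\[
\sum_{j=1}^{k} \abs{\inner{m_j - \alpha_j c_j,\ c_j - c'_{l^{\star}(j)}}} \leq 2R \sum_{j=1}^{k} \int_{W_j(\hyp_0)} \norm{x - c_j}_2\, d\Prob(x) = 2R\cdot \Risk_{\mathtt{k-medians}}(\Prob,\hyp_0),
\]
so the overall factor $2$ in the core identity produces the claimed bound $4R \cdot \Risk_{\mathtt{k-medians}}(\Prob,\hyp_0)$, uniform in $\hyp \in \HypClass$, which passes to the supremum. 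The main conceptual point is that the error induced by replacing $\Prob$ with $P_{\hyp_0}\Prob$ is first-order and measured by the cell-wise mean deviation $m_j - \alpha_j c_j$, which vanishes under the centroid condition and is otherwise controlled by the $k$-medians cost times the diameter $2R$ of the admissible centroid range; the only subtle step is in the second item, where one must ensure the centroid condition is met at an unconstrained minimiser regardless of the support cardinality.
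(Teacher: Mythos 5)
Your argument is correct and follows essentially the same route as the paper: both start from the polarization identity applied to $\|x-P_{\hyp}P_{\hyp_0}x\|_2^2$ (with $P_{\hyp}P_{\hyp_0}x = c'_{l^\star(j)}$ when $x\in W_j(\hyp_0)$), reduce the excess-risk divergence to the cross term $2\sum_j \langle m_j-\alpha_j c_j,\, c_j - P_{\hyp}c_j\rangle$, and then kill it by the centroid condition for the first two items or bound it by $4R\cdot\Risk_{\mathtt{k\text{-}medians}}(\Prob,\hyp_0)$ for the third. The only difference is presentational: the paper imports the polarization-based decomposition from the companion paper's Equations (30) and (68), whereas you re-derive it inline, and the paper's handling of the degenerate second item is equally brief — in both cases the point that \emph{every} zero-risk minimizer (not merely a chosen one) satisfies~\eqref{eq:CentroidAssumptionExcessRiskDiv} is left as a one-line check, so your phrasing "can be chosen" is slightly weaker than the claim but the underlying reasoning extends to all minimizers.
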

\begin{proof}
By~\citeppartone{Equation~\eqref{P1-proj2}}, for any $\sample \in \SampleSpace = \mathbb{R}^{d}$ and $\hyp \in \HypClass$ we have
\[
\|x-P_{\hyp}x\|_{2}^{2} \leq \|x-P_{\hyp}P_{\hyp_{0}} x\|_{2}^{2}
= \|x-P_{\hyp_{0}}x\|_{2}^{2}+\|P_{\hyp_{0}}x-P_{\hyp}P_{\hyp_{0}}x\|_{2}^{2}+2 \langle x-P_{\hyp_{0}}x,P_{\hyp_{0}}x-P_{\hyp}P_{\hyp_{0}}x \rangle.
\]
It follows thus from~\citeppartone{Equation~\eqref{P1-eq:boundcomp1}} that 
\begin{align*}
  \drisk_{\hyp_{0}}(\Prob,\hyp)-\drisk_{\hyp_{0}}(P_{\hyp_{0}}\Prob,\hyp)
& \leq 
\mathbb{E}_{X \sim \Prob} \left\{
\|X-P_{\hyp}X\|_{2}^{2}
-\|X-P_{\hyp_{0}}X\|_{2}^{2}-\|P_{\hyp_{0}}X-P_{\hyp}P_{\hyp_{0}}X\|_{2}^{2}
\right\}\\
& \leq
2\mathbb{E}_{X \sim \Prob} \langle X-P_{\hyp_{0}}X,P_{\hyp_{0}}X-P_{\hyp}P_{\hyp_{0}}X \rangle.
\end{align*}
We have $ \langle x-P_{\hyp_{0}}x,P_{\hyp_{0}}x-P_{\hyp}P_{\hyp_{0}}x \rangle = \sum_{j=1}^{k} \mathbf{1}(x \in W_{j}(\hyp_{0})) \langle x-c_{j},c_{j}-P_{\hyp}c_{j}\rangle$ hence
\begin{align*}
\mathbb{E}_{X \sim \Prob} \langle X-P_{\hyp_{0}}X,P_{\hyp_{0}}X-P_{\hyp}P_{\hyp_{0}}X \rangle
&=
\sum_{j=1}^{k}\mathbb{E}_{X \sim \Prob} \left\{ 1(X \in W_{j}(\hyp_{0}))  \langle X-c_{j},c_{j}-P_{\hyp}c_{j}\rangle \right\}.
\end{align*}
When $\hyp_{0}$ satisfies~\eqref{eq:CentroidAssumptionExcessRiskDiv}, each term on the right hand side vanishes, either because $\Prob(\Sample \in W_{j}(\hyp_{0})) = 0$ or because $c_{j} = \Exp_{\Prob}  (\Sample|\Sample \in W_{j}(\hyp_{0}))$.
As a result $\drisk_{\hyp_{0}}(\Prob,\hyp)-\drisk_{\hyp_{0}}(P_{\hyp_{0}}\Prob,\hyp) \leq 0$ for any $\hyp$. As $\divp_{\hyp}^{\HypClassRef}$ is non-negative, we get $\divp_{\hyp_{0}}^{\HypClassRef}(\Prob\|P_{\hyp_0}\Prob) = 0$.

If the support of $\Prob$ contains at least $k$ elements then the unconstrained $k$-means optimizer $\hyp^{\star}$ on $\HypClassRef$ satisfies the centroid condition~\eqref{eq:CentroidCondition}, which implies that for $1 \leq j \leq k$ we have $\Prob(\Sample \in W_{j}(\hyp_{0})) >0$ and $c_{j} = \mathbb{E}_{X\sim \Prob} (X | X \in W_{j}(\hyp_{0}))$, hence assumption~\eqref{eq:CentroidAssumptionExcessRiskDiv} holds and we can use the result established above. It is straightforward to check that~\eqref{eq:CentroidAssumptionExcessRiskDiv} holds as well
if the support of $\Prob$ contains at most $k-1$ elements, i.e., if $\Prob$ is a mixture of $k-1$ Diracs.

When $\HypClass \subset \HypClass_{R}$ we have $\langle \sample-P_{\hyp_{0}}\sample,P_{\hyp_{0}}\sample-P_{\hyp}P_{\hyp_{0}}\sample \rangle \leq \norm{\sample-P_{\hyp_{0}}\sample}_{2} \cdot 2R$ hence $\drisk_{\hyp_{0}}(\Prob,\hyp)-\drisk_{\hyp_{0}}(P_{\hyp_{0}}\Prob,\hyp) \leq 4R \cdot  \Exp_{\Sample \sim \Prob} \norm{\Sample-P_{\hyp_{0}}\Sample}_{2} = \Risk_{\mathtt{k-medians}}(\Prob,\hyp_{0})$.
 \end{proof}
The term $d^{\HypClass}(\Prob^{\star},\Model)$ can also be simplified for clustering when $\Model = \ModelCT(\HypClass)$.
\begin{lemma}\label{lem:BoundSeparationTermClustering}
Consider $\Prob^{\star} :=  \sum_{i=1}^{k} \alpha_{i} \delta_{c_{i}}$
where $c_{1},\ldots,c_{k} \in \RR^{\sampleDim}$, $\alpha \in \Simplex_{k-1}$ and the $k$-medians (resp $k$-means) task with a class $\HypClass$. For $k$-medians we have 
\[
\sup_{\hyp' \in \HypClass} 
\Big(\Risk_{\mathtt{k-medians}}(\Prob^{\star},\hyp')-\Risk_{\mathtt{k-medians}}(P_{\hyp}\Prob^{\star},\hyp')\Big)
=
\Risk_{\mathtt{k-medians}}(\Prob^{\star},\hyp),\qquad \forall \hyp \in \HypClass.
\]
For $k$-means and any $\hyp_{0} \in \HypClass$ such that~\eqref{eq:CentroidAssumptionExcessRiskDiv} holds with $\Prob := \Prob^{\star}$ we have
\[
\sup_{\hyp' \in \HypClass} 
\Big(\Risk_{\mathtt{k-means}}(\Prob^{\star},\hyp')-\Risk_{\mathtt{k-means}}(P_{\hyp_{0}}\Prob^{\star},\hyp')\Big)
=
\Risk_{\mathtt{k-means}}(\Prob^{\star},\hyp_{0}).
\]
If  $\HypClass \subseteq \HypClass_{R} := \set{\hyp = (c_{1},\ldots,c_{k}), \norm{c_{l}}_{2} \leq R}$ we further have for each $\hyp \in \HypClass$
\[
\Risk_{\mathtt{k-means}}(\Prob^{\star},\hyp) \leq 
\sup_{\hyp' \in \HypClass} 
\Big( \Risk_{\mathtt{k-means}}(\Prob^{\star},\hyp')-\Risk_{\mathtt{k-means}}(P_{\hyp}\Prob^{\star},\hyp')\Big)
\leq
\Risk_{\mathtt{k-means}}(\Prob^{\star},\hyp)+4R \cdot \Risk_{\mathtt{k-medians}}(\Prob^{\star},\hyp).
\]
\end{lemma}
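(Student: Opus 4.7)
The plan is to exploit the identity
\[
\sup_{\hyp' \in \HypClass}\bigl(\Risk(\Prob^{\star},\hyp')-\Risk(P_{\hyp}\Prob^{\star},\hyp')\bigr)
= \divp_{\hyp}^{\HypClass}(\Prob^{\star}\|P_{\hyp}\Prob^{\star}) + \Risk(\Prob^{\star},\hyp) - \Risk(P_{\hyp}\Prob^{\star},\hyp),
\]
which follows directly by adding and subtracting $\Risk(\Prob^{\star},\hyp)-\Risk(P_{\hyp}\Prob^{\star},\hyp)$ inside the supremum and recognizing the definition~\eqref{eq:lossdiv} of the excess risk divergence. The key observation is then that $\Risk(P_{\hyp}\Prob^{\star},\hyp)=0$ for either the $k$-medians or $k$-means loss: since $P_{\hyp}\Prob^{\star} = \sum_i \alpha_i \delta_{P_{\hyp}c_i}$ and $P_{\hyp}\circ P_{\hyp} = P_{\hyp}$, each sample $P_\hyp c_i$ in the support of the pushforward is assigned zero loss by $\hyp$.

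For $k$-medians, \citeppartone{Lemma~3.2} (which is invoked right after Lemma~\ref{lem:BiasKMeans} in the paper) gives $\divp_{\hyp}^{\HypClass}(\Prob^{\star}\|P_{\hyp}\Prob^{\star}) = 0$ for every $\hyp$, so the identity above immediately reduces to $\Risk_{\mathtt{k-medians}}(\Prob^{\star},\hyp)$, yielding the claimed equality. For $k$-means, Lemma~\ref{lem:BiasKMeans} guarantees that $\divp_{\hyp_0}^{\HypClass}(\Prob^{\star}\|P_{\hyp_0}\Prob^{\star}) = 0$ \emph{provided} the centroid condition~\eqref{eq:CentroidAssumptionExcessRiskDiv} is satisfied by $\hyp_0$ with respect to $\Prob^{\star}$, which is exactly the hypothesis made in the second statement; this again gives the equality $\Risk_{\mathtt{k-means}}(\Prob^{\star},\hyp_0)$.

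For the third statement concerning an arbitrary $\hyp \in \HypClass \subseteq \HypClass_R$ in the $k$-means case, the lower bound follows by simply testing the supremum with $\hyp' = \hyp$, since $\Risk(P_\hyp\Prob^\star,\hyp)=0$. For the upper bound, we can no longer assume the centroid condition, but the third part of Lemma~\ref{lem:BiasKMeans} furnishes precisely the needed substitute: $\divp_{\hyp}^{\HypClass}(\Prob^{\star}\|P_{\hyp}\Prob^{\star}) \leq 4R\cdot \Risk_{\mathtt{k-medians}}(\Prob^{\star},\hyp)$ for any $\hyp \in \HypClass_R$. Plugging this into the identity above yields the announced upper bound.

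Overall the proof is essentially a bookkeeping exercise that recombines previously established results; there is no real obstacle beyond checking that $\Risk(P_\hyp\Prob^\star,\hyp)=0$ in both clustering settings and carefully matching the hypotheses of Lemma~\ref{lem:BiasKMeans} (centroid condition for the equality case, boundedness of centroids for the inequality case). The only small care is to verify that the identity decomposing the supremum above is indeed valid term-by-term without further assumptions, which is immediate from the definitions of $\Risk$ and $\divp_{\hyp_0}^{\HypClass}$.
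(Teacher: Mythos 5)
Your proof is correct and follows essentially the same route as the paper's: the same decomposition of the supremum into the excess-risk divergence $\divp_{\hyp}^{\HypClass}(\Prob^{\star}\|P_{\hyp}\Prob^{\star})$ plus $\Risk(\Prob^{\star},\hyp)-\Risk(P_{\hyp}\Prob^{\star},\hyp)$, the same observation that $\Risk(P_{\hyp}\Prob^{\star},\hyp)=0$, and the same invocations of the companion paper's Lemma~3.2 ($k$-medians) and Lemma~\ref{lem:BiasKMeans} ($k$-means, both the equality and the $4R$ bound). The only minor difference is that you spell out the lower bound of the third claim by testing $\hyp'=\hyp$, whereas the paper leaves it implicit (it also follows from non-negativity of the divergence, since $\hyp'=\hyp$ is one of the terms in the supremum defining it).
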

\begin{proof}
Since $\Risk(P_{\hyp}\Prob^{\star},\hyp)=0$ for both $k$-means and $k$-medians we have 
\begin{align*}
\sup_{\hyp' \in \HypClass} 
\Big(
\Risk(\Prob^{\star},\hyp')-\Risk(P_{\hyp}\Prob^{\star},\hyp') 
\Big)
&= 
\sup_{\hyp' \in \HypClass} 
\Big(
\drisk_{\hyp}(\Prob^{\star},\hyp')-\drisk_{\hyp}(P_{\hyp}\Prob^{\star},\hyp') 
\Big)
+\Risk(\Prob^{\star},\hyp)\\
&= \divp_{\hyp}^{\HypClass}(\Prob^{\star},P_{\hyp}\Prob^{\star}) +\Risk(\Prob^{\star},\hyp)
\end{align*}
For $k$-medians, we conclude using that  $\divp_{\hyp}^{\HypClass}(\Prob^{\star},P_{\hyp}\Prob^{\star})=0$  by \citeppartone{Lemma~\ref{P1-lem:LemmaBiasTerm}}. Using Lemma~\ref{lem:BiasKMeans} yields the results for $k$-means.
\end{proof}

\subsection{Proof of Theorems~\ref{thm:mainkmeansthm} and Theorem~\ref{thm:maingmmthm}}\label{subsec:clustfinalproof}

With the separated hypothesis class $\HypClassSep := \HypClass_{k,2\sep,R}$ defined in~\eqref{eq:DefSeparation} (resp. in~\eqref{eq:HypClassGMM}), the model set is a separated mixture model, $\Model(\HypClass) := \ModelCT(\HypClass) \subset \MixSetSep{k}$, where $\BasicSet = \BasicSet_{\mathtt{Dirac}}$ (resp. $\Model(\HypClass) := \ModelML(\HypClass) \subseteq \MixSetSep{k}$ with $\BasicSet = \BasicSet_{\mathtt{Gauss}}$) as in Definition~\ref{def:DiracGaussian}., 
By definition of $\HypClassSep$ (cf. \eqref{eq:DefSeparation} and~\eqref{eq:HypClassGMM}) the centers $c_{l}$ associated to any $\hyp \in \HypClassSep$ satisfy $\max_{l}\norm{c_{l}}_{2} \leq R$ (resp. $\normmah{c_{k}}{\covar} \leq R$ for GMM) hence 
\rev{the parameter space $\ParamSpace$ is the ball of radius $R$ with respect to the Euclidean norm (resp. the Mahalanobis norm $\normmah{\cdot}{\covar}$).} 

The function $\SketchingOperator$ is defined using the random Fourier feature family $(\FClass_{\mathtt{Dirac}},\freqdist_{\mathtt{Dirac}})$ (resp. $(\FClass_{\mathtt{Gauss}},\freqdist_{\mathtt{Gauss}})$) as in Definition~\ref{def:DiracGaussian}, with scale factor $s>0$.  By the derivations in Section~\ref{sec:mmddiracgauss} 
 the induced average kernel $\kernel$ is shift-invariant and $1$-strongly locally characteristic with respect to $\BasicSet$. 
 
We now control the constants $C_{\ParamSpace},A,B,C$ from Theorem~\ref{thm:LRIPDiracGaussMixture}. By~\eqref{eq:P0NormInvSq} we have 
\begin{equation}\label{eq:P0NormInvSqThmPf}
\normkern{\Prob_{0}}^{-1} = 
\begin{cases}
C_{\freqdist} = \left[\Exp_{\freq \sim \mathcal{N}(0,s^{-2} \mI_{\sampleDim})} w^{2}(\freq)\right]^{1/2}, & \text{for Diracs},\\
\left(1+2/s^2\right)^{\sampleDim/4},& \text{for Gaussians}.
\end{cases}
\end{equation}

Consider first the Dirac setting. Since $\ParamSpace$ is the Euclidean ball of radius $R \geq \sep$ (see Definition~\ref{def:DiracGaussian}), by Lemma~\ref{lem:covnumball} we get that~\eqref{eq:covparamspace1} holds with  $C_\ParamSpace=\rev{4R}$. Moreover, recall that $w(\freq) = (1 + s^2 d^{-1} \norm{\freq}^2)$. Then, \rev{since $(\sep/s)^2 = 1/(\sigma_{k}^{\star})^{2} = 16 \log(ek)$}, elementary calculations give:
\begin{align*}
\sep \sup_{\freq} \tfrac{\norm{\freq}_{2}}{w(\freq)} = \sep \sqrt{\frac{d}{4s^{2}}} = \sqrt{\frac{d}{4(\sigma_{k}^{\star})^{2}}} = \sqrt{4d \log(ek)};
 & \qquad
\sep^{2} \sup_{\freq} \tfrac{\norm{\freq}_{2}^{2}}{w(\freq)} = \sep^{2} \frac{d}{s^{2}} = 16d \log(ek);\\
\Exp_{\freq \sim \mathcal{N}(0,s^{-2} \mI_{\sampleDim})}  \norm{\freq}_{2}^{2} = s^{-2} d;
& \qquad \Exp_{\freq \sim \mathcal{N}(0,s^{-2} \mI_{\sampleDim})}  \norm{\freq}_{2}^{4} = s^{-4} (d^{2}+2d);\\
\end{align*}
\begin{align*}
    \normkern{\Prob_{0}}^{-2} = A &= \Exp_{\freq \sim \mathcal{N}(0,s^{-2} \mI_{\sampleDim})} w^{2}(\freq)  
    \rev{= 1+2 s^{2} d^{-1}\Exp_{\freq} \norm{\freq}_{2}^{2} + s^{4}d^{-2}\Exp_{\freq} \norm{\freq}_{2}^{4}
    = 1+ 2+(1+2/d) \leq 6};\\
    B &= 1+\sep^{2} \left(\sup_{\freq} \tfrac{\norm{\freq}_{2}}{w(\freq)}\right)^{2}
  \rev{= 1 + 4d \log(ek) \leq 5d\log(ek)};\\
      C &=
    64 \rev{A\sqrt{2B}} C_\ParamSpace \sep^{-1}  \paren{1 + \sep \sup_\freq \tfrac{\norm{\freq}_{2}}{w(\freq)} + \sep^2
      \sup_\freq \tfrac{\norm{\freq}^2_{2}}{w(\freq)}}  \\
    & \rev{\lesssim  \sqrt{d\log(ek)}(R/\sep)
    \paren{1+\sqrt{4d \log(ek)} + 16d \log(ek)}
    \lesssim (d\log(ek))^{3/2} R/\sep},
  \end{align*}
\rev{  where $\lesssim$ denotes an inequality up to a numerical multiplicative factor. It follows that $\min(12e \log^{2}(ek),2B) \lesssim \log(ek) \min(\log(ek),d)$, $k ABC \lesssim k(d\log(ek))^{5/2} R/\sep$, and 
  $\log(kABC) \lesssim 1+\log(kd) + \log(R/\sep)$. As a result there is a numerical constant $C'$ such that~\eqref{eq:NMeasuresDiracGauss} holds as soon as }
 \[
\rev{ m \geq C' \coveps^{-2} \log(ek) \min(\log(ek),d) \cdot k \cdot \left\{ kd \cdot \left[1+\log(kd)+\log(R/\sep)+\log(1/\coveps)\right]+\log(1/\zeta)\right\}.}
 \]
 Rearranging the terms to put the dominant terms forward, this holds under the assumption~\eqref{eq:sksizekmeans}.

Consider now the GMM setting.
  As in the Dirac case, since $\ParamSpace$ is the ball of radius $R$ in the Mahalanobis distance $\normmah{\cdot}{\covar}$, by Lemma~\ref{lem:covnumball} we get that~\eqref{eq:covparamspace1} holds with $C_\ParamSpace=4R$. Then, by \eqref{eq:MainSeparationAssumption}, $\sep^2 = (2+s^2)/(\sigma^{\star}_k)^2 = 16 (2+s^2) \log(ek) \rev{\asymp s^{2} \log(ek)}$ 
  \rev{where $a \asymp b$ means $a \lesssim b$ and $b \lesssim a$,} and we have:
  \begin{align*}
    \normkern{\Prob_{0}}^{-1} = \sqrt{A} 
    & = \left(1+2/s^2\right)^{\sampleDim/4} ;\\
    B &= 1+\sep^{2}  \lesssim s^2 \log(ek);\\
          C &=
    64 \rev{A\sqrt{2B}} C_\ParamSpace \sep^{-1}  \paren{1 + \sep + \sep^2}  \lesssim
R \left(1+2/s^2\right)^{\sampleDim/\rev{2}} s^2 \log(ek) .
  \end{align*}
  It follows that $\min(12e \log^{2}(ek),2B) \lesssim \log(ek) \min(\log(ek),s^2)$,
  $k ABC \lesssim Rk(1+2/s^2)^{\rev{d}}s^4\log^2(ek)$, and 
  $\log(kABC) \lesssim 1+ \log (R) + d/s^2 + \log(ks)$. As a result there is a numerical constant $C'$ such that~\eqref{eq:NMeasuresDiracGauss} holds as soon as 
 \[
  m \geq C' \coveps^{-2} (1+2/s^2)^{d/2} \log(ek) \min(\log(ek),s^2)
     \cdot k \cdot \{kd \cdot [1+\log(R) + d/s^2 + \log(ks) + \log(1/\coveps)] + \log(1/\zeta) \}.
   \]
 Rearranging the terms to put the dominant terms forward, this holds under the assumption~\eqref{eq:sksizeGMM}.
 
We have all ingredients to apply Theorem~\ref{thm:LRIPDiracGaussMixture} hence, with probability at least $1-\zeta$ the operator $\SketchingOperatorProb$ induced by $\SketchingOperator$ satisfies~\eqref{eq:mainRIPKernelClustering}.

As a result, under the assumption~\eqref{eq:sksizekmeans} (resp. ~\eqref{eq:sksizeGMM}) we have all ingredients to apply Theorem~\ref{thm:LRIPDiracGaussMixture} hence, with probability at least $1-\zeta$ the operator $\SketchingOperatorProb$ induced by $\SketchingOperator$ satisfies~\eqref{eq:mainRIPKernelClustering} (resp.~\eqref{eq:mainRIPKernelGMM}). 

By~\eqref{eq:mainRIPKernelClustering} (resp.~\eqref{eq:mainRIPKernelGMM}) and the shift-invariance of $\kernel$, for any $\Param,\Param' \in \RR^{\sampleDim}$ such that $\norm{\Param-\Param'}_{2} \leq \sep$ (resp. $\normmah{\Param-\Param'}{\covar} \leq \sep$) we have $\norm{\SketchingOperatorProb(\Prob_\Param)-\SketchingOperatorProb(\Prob_{\Param'})}_{2}^{2} \leq (1+\coveps) \normkern{\Prob_{\Param}-\Prob_{\Param'}}^{2} = 2\normkern{\Prob_{0}}^{2}(1+\coveps)(1-\nkernel(\Param-\Param'))$ hence
\begin{equation*}
\norm{\SketchingOperatorProb(\Prob_\Param)-\SketchingOperatorProb(\Prob_{\Param'})}_{2}^{2} 
\leq 2\normkern{\Prob_{0}}^{2}(1+\coveps)
\begin{cases}
1-e^{-\frac{\norm{\Param-\Param'}_{2}^{2}}{2\sep^{2}(\sigma^{\star}_{k})^{2}}} & \text{(for clustering)};\\
1-e^{-\frac{\normmah{\Param-\Param'}{\covar}^{2}}{2\sep^{2}(\sigma^{\star}_{k})^{2}}} & \text{(for GMM)}.
 \end{cases}
 \end{equation*}
As $f: u \mapsto 1-e^{-\tfrac{u}{2\sep^{2}(\sigma^{\star}_{k})^{2}}}$ is concave we have $f(u) \leq f(0)+u f'(0)$ for each $u \in \RR$, hence $\Param \mapsto \SketchingOperatorProb(\Prob_{\Param})$ is $L$-Lipschitz with respect to $\norm{\cdot}_{2}$ (resp. $\normmah{\cdot}{\covar}$) in $\RR^{\sampleDim}$ and $\norm{\cdot}_{2}$ in $\CC^{\nMeasures}$, with $L=\normkern{\Prob_{0}}\sqrt{1+\delta}/(\sep \sigma^{\star}_k)$. 
For clustering we have $\SketchingOperatorProb(\Prob_{\Param}) = \SketchingOperatorProb(\delta_{\Param}) = \SketchingOperator(\Param)$ and $\normkern{\Prob_{0}} \leq 1$ (by~\eqref{eq:P0NormInvSqThmPf} and the fact that $w \geq 1$) hence the claimed Lipschitz property of $\SketchingOperator$.

A second consequence of~\eqref{eq:mainRIPKernelClustering} is the LRIP~\eqref{eq:lowerDRIP} on $\Model(\HypClass) \subseteq \MixSetSep{k}$) with $\eta=0$ and $C_\SketchingOperatorProb :=8\sqrt{2k/(1-\coveps)}\norm{\dipoleSet}_{\DLossClass(\HypClass)}$. By Theorem~\ref{thm:LRIPsuff_excess}, since $\hat{\hyp}$ satisfies~\eqref{eq:DefQuasiOptimumClusteringViaProxy} (resp.~\eqref{eq:DefQuasiOptimumGMMViaProxy}), 
we get
\begin{align*} 
\forall \hyp_{0} \in \HypClass:\qquad
  \drisk_{\hyp_{0}}(\Prob,\hat{\hyp}) \leq 
\distIOPexgen_{\hyp_{0}}^{\HypClass}(\Prob,\Model(\HypClass))+  (2+\nu) C_{\SketchingOperatorProb}  \norm{\SketchingOperatorProb(\Prob)-\SketchingOperatorProb(\empProb))}_{2} + C_{\SketchingOperatorProb}\nu'.
\end{align*}
Since $\HypClass \subseteq \HypClassRef$ denoting $\Prob^{\star} := P_{\hyp^{\star}}\Prob$ (resp. $\Prob^{\star} := \Prob_{\hyp^{\star}}$), we have by Lemma~\ref{lem:BiasKMeansConstrained}, with $d(\Prob^{\star},\HypClass)$ as in ~\eqref{eq:BiasKMeansConstrained2} (resp. as in~\eqref{eq:BiasMaxLikelihoodConstrained2}):
\begin{eqnarray}
  \drisk_{\hyp^{\star}}(\Prob,\hat{\hyp})
  &=&  \drisk_{\hyp^{\star}}(\Prob,\hyp_{0})  +  \drisk_{\hyp_{0}}(\Prob,\hat{\hyp})
  =      \inf_{\hyp_{0} \in \HypClassOpt}
\left\{\drisk_{\hyp^{\star}}(\Prob,\hyp_{0})  +  \drisk_{\hyp_{0}}(\Prob,\hat{\hyp})\right\}\notag\\
      &\leq &
       (2+\nu) C_{\SketchingOperatorProb}  \norm{\SketchingOperatorProb(\Prob)-\SketchingOperatorProb(\empProb))}_{2} + C_{\SketchingOperatorProb}\nu'
+        \inf_{\hyp_{0} \in \HypClass}
    \left\{
    \drisk_{\hyp^{\star}}(\Prob,\hyp_{0}) +
    \distIOPexgen_{\hyp_{0}}^{\HypClass}(\Prob,\Model(\HypClass))
    \right\}
       \notag\\
   & \leq &
   (2+\nu)C_{\SketchingOperatorProb}
\norm{\SketchingOperatorProb(\Prob)-\SketchingOperatorProb(\empProb)}_{2} 
+ C_{\SketchingOperatorProb}\nu'\notag\\
&&+ \Big[\divp_{\hyp^{\star}}^{\HypClassRef}(\Prob\|\Prob^\star)+(2+\nu)C_{\SketchingOperatorProb}
\norm{\SketchingOperatorProb(\Prob)-\SketchingOperatorProb(\Prob^{\star})}_{2} 
    \Big]+d(\Prob^{\star},\HypClass)\notag.
\end{eqnarray}
The excess risk divergence term $\divp_{\hyp^{\star}}^{\HypClassRef}(\Prob\|\Prob^{\star})$ vanishes for $k$-medians by \citeppartone{Lemma~\ref{P1-lem:LemmaBiasTermBis}}. Since $\HypClassRef = (\RR^{\sampleDim})^{k}$ it also vanishes for $k$-means by Lemma~\ref{lem:BiasKMeans}.  

To conclude, we explicit the involved constants. For clustering since $\normkern{\Prob_{0}}^{-1} \leq \sqrt{6}$, by Lemma~\ref{lem:DiracDLoss} we get $\norm{\dipoleSet}_{\DLossClass(\HypClass)} \leq  \rev{2} (2R)^{p} \normkern{\Prob_{0}}^{-1} \leq \sqrt{24}(2R)^{p}$. Since $8 \sqrt{2}\sqrt{24} = 8 \sqrt{48} \leq 8 \cdot 7 = 56$ we obtain:
\begin{align}
C_\SketchingOperatorProb^{\mathtt{clust.}} \leq  56\sqrt{k/(1-\delta)} (2R)^{p}.
\end{align}
For GMM, Lemma~\ref{lem:GaussDLoss} yields $\norm{\dipoleSet}_{\DLossClass(\HypClass)} \leq  \rev{4}R^2 \normkern{\Prob_{0}}^{-1}$. By~\eqref{eq:P0NormInvSqThmPf}, since $8\sqrt{2} \cdot 4 = 32\sqrt{2} \leq 46$ we obtain
\begin{align}
C_\SketchingOperatorProb^{\mathtt{GMM}} \leq  46\sqrt{k/(1-\delta)} R^{2} \left(1+2/s^2\right)^{\sampleDim/4}.
\end{align}

\subsection{Proof of Lemma~\ref{lem:bounddisttosepclust}}\label{pf:bounddisttosepclust}
Consider $\hyp \in \HypClass$ and $\mProb := P_{h} \Prob^{\star} = \sum_{i=1}^{k}\alpha_{i} \delta_{P_{\hyp} c_{i}}$. 
For $k$-medians, by Lemma~\ref{lem:BoundSeparationTermClustering} and the Lipschitz property of $\SketchingOperator$, 
\begin{align*}
\sup_{\hyp' \in \HypClass} \Big(\Risk_{\mathtt{k-medians}}(\Prob^{\star},\hyp')-\Risk_{\mathtt{k-medians}}(\mProb,\hyp')\Big) 
& \leq \Risk_{\mathtt{k-medians}}(\Prob^{\star},\hyp)\\
\norm{\SketchingOperatorProb(\Prob^{\star})-\SketchingOperatorProb(\mProb)}_{2}
 = 
\norm{\sum_{i=1}^{k}\alpha_{i}\big(\SketchingOperator(c_{i})-\SketchingOperator(P_{\hyp}c_{i})\big)}_{2}
& \leq 
\sum_{i=1}^{k}\alpha_{i} \norm{\SketchingOperator(c_{i})-\SketchingOperator(P_{\hyp}c_{i})}_{2}
\leq
\sum_{i=1}^{k}\alpha_{i} L\norm{c_{i}-P_{\hyp}c_{i}}_{2}\\
&=L \cdot \Risk_{\mathtt{k-medians}}(\Prob^{\star},\hyp)
\end{align*}
By the definition~\eqref{eq:BiasKMeansMainThm} of $d(\Prob^{\star},\HypClass)$ this implies
$d(\Prob^{\star},\HypClass) \leq  \inf_{\hyp \in \HypClass} (1+(2+\nu) C_{\SketchingOperatorProb} L) \Risk_{\mathtt{k-medians}}(\Prob^{\star},\hyp)$.
Turning to $k$-means, since $\HypClass \subseteq  \HypClass_{k,2\sep,R} \subset \HypClass_{R} := \set{\hyp = (c_{1},\ldots,c_{k}), \norm{c_{l}}_{2} \leq R}$ by Lemma~\ref{lem:BoundSeparationTermClustering} we get
\begin{align*}
\sup_{\hyp' \in \HypClass} \Big(\Risk_{\mathtt{k-means}}(\Prob^{\star},\hyp')-\Risk_{\mathtt{k-means}}(\mProb,\hyp')\Big) 
& \leq \Risk_{\mathtt{k-means}}(\Prob,\hyp)+ 4R \cdot \Risk_{\mathtt{k-medians}}(\Prob,\hyp).
\end{align*}
The rest of the proof is the same as for $k$-medians. Since $L \leq \sqrt{1+\delta}/s$, $\sep = 4s \sqrt{\log(ek)}$ and 
$C_{\SketchingOperatorProb} \leq 56\sqrt{k/(1-\delta)} (2R)^{p}$ we have $C_{\SketchingOperatorProb}L \leq 224 \sqrt{k\log(ek)(1+\delta)/(1-\delta)}(2R)^{p}/\sep$.

\pagebreak
\twocolumn[\section*{Table of notations}]
\newcommand{\reftwo}[1]{\ref{#1}}
\newcommand{\eqreftwo}[1]{(\reftwo{#1})}

\begin{supertabular}{|p{0.137\textwidth}p{0.313\textwidth}|}
\hline
$\sample \in \SampleSpace$ & sample and sample space\\
$\vy$ & sketch \eqreftwo{eq:GenericSketching} \\
$\SketchingOperator$ & sketching function \eqreftwo{eq:GenericSketching}\\
$\SketchingOperatorProb$ & sketching operator \eqreftwo{eq:SketchingOperatorProbDef} \\
\hline
$\Prob,\mProb$ & probabilities on sample space\\
$\HH$, $\nu$ & measures on sample space\\
$\inner{\Prob,f}$ &  $\Exp_{X \sim \Prob}f(X)$\\
$\inner{\HH,f}$ & $\int f(x) d\HH(x)$\\
$\KLdiv{\Prob}{\Prob'}$ & KL-divergence \eqreftwo{eq:DefEntropy}\\
$\Entropy{\Prob}$ & differential entropy \\
\hline
$\hyp$ & hypothesis\\
$\HypClass \subseteq \HypClassRef$ & classes of hypotheses \\
$\loss(\cdot,\hyp)$ & loss function  \\
$\Risk$, $\drisk_{\hyp}$ & risk \eqreftwo{eq:BestHyp}, excess risk \eqreftwo{eq:defexcessrisk}\\
$\hyp^\star=\hyp^{\star}_{\Prob}$ & best hypothesis \eqreftwo{eq:BestHyp} \\
$\proxyRisk$ & generic proxy for the risk \eqreftwo{eq:DefRiskProxy} \\
$\proxyRisk_{\mathtt{clust.}}$,  $\proxyRisk_{\mathtt{GMM}}$ & specific proxies \eqreftwo{eq:RiskProxyKMeans}; \eqreftwo{eq:RiskProxyGMM} \\
$\hat{\hyp}$ & learned hypothesis \eqreftwo{eq:DefRiskProxy} \\
$P_{\hyp}$ & projection function for comp.-type task (Sec. \reftwo{sec:main-kmeans}) \\
\hline
$\LossClass = \LossClass(\HypClass)$ & class of loss functions (Th. \reftwo{thm:LRIPsuff_excess})\\
$\DLossClass = \DLossClass(\HypClass)$ & class of loss differences \eqreftwo{eq:DefDLossClass} \\ 
\hline
$\kernel(x,x')$ & generic psd kernel \eqreftwo{eq:DefIntegralRepresentation}\\
$\kernel(\Prob,\Prob')$ & kernel mean embedding \eqreftwo{eq:DefMeanMapEmbedding}\\
\hline
$\normfclass{\HH}{G}$ &  $\sup_{f \in \GClass} \abs{\inner{\HH,f}}$, \eqreftwo{eq:DefFNorm} \\
$\normkern{\HH}$ & MMD norm \eqreftwo{eq:DefMMD} \\
$\dnormloss{\cdot}{}$ & task-driven norm \eqreftwo{eq:DefDLossClass} \\
\hline
$\norm{\cdot}$ & generic norm on $\SampleSpace$\\
$\norm{\cdot}_{\star}$ & dual norm \eqreftwo{eq:DefDualNorm} \\
$\norm{\cdot}_{2}$, $\inner{x,x'}$ & Euclidean norm, inner product\\
$\normmah{\cdot}{\covar}$ & Mahalanobis norm \eqreftwo{eq:DefMahalanobisNorm} \\
\hline
$ \divp_{\hyp}^{\HypClass}(\Prob\|\Prob')$ & excess-risk divergence \eqreftwo{eq:lossdiv}  \\
$\distIOPexgen_{\hyp}^\HypClass(\Prob,\Model)$ & bias term wrt. model \eqreftwo{eq:DefMDist2} \\
$d_{\FClass}(\Prob,\Prob')$ & feature-based metric \eqreftwo{eq:DefYetAnotherMetric} \\
$d(\Prob,\HypClass)$ & distance to constraint \eqreftwo{eq:BiasKMeansMainThm}, \eqreftwo{eq:BiasGMMMainThm}\\
$d(\mathbf{c}\|\mathbf{c}'), d(\mathbf{c},\mathbf{c}')$ & distance between $k$-uples \eqreftwo{eq:disttoseparated} \\
\hline
$\Model$ & model set (of probabilities)\\
$\ModelCT_{\hyp}$, $\ModelCT(\HypClass)$ & compression-type model set \eqreftwo{eq:DefKDiracs} \\
$\ModelML_{\hyp}$, $\ModelML(\HypClass)$ & max. likelihood model set \eqreftwo{eq:ModelGMM} \\
$\secant = \secant_{\kernel}(\Model)$ & normalized secant set \eqreftwo{eq:DefNormalizedSecantSet} \\
\hline
$\ParamSpace$, $\ParamSpace_{R}$& parameter set \eqreftwo{eq:DefMixSetSep}; Def. \reftwo{def:DiracGaussian}\\
$\metricParam$& metric on $\ParamSpace$ \eqreftwo{eq:DefMixSetSep}\\
$\embd$ & embedding \eqreftwo{eq:DefMixSetSep} \\
$\BasicSet$ & $(\ParamSpace,\metricParam,\embd)$ parametric model\\
$\MixSetSep{k}$ & $2$-separated $k$-mixtures \eqreftwo{eq:DefMixSetSep} \\
$\dipoleSet = \dipoleSet_\kernel(\BasicSet)$ & set of dipoles \eqreftwo{eq:NormalizedDipoleSet} \\
$\monopoleSet = \monopoleSet_\kernel(\BasicSet)$ & set of monopoles \eqreftwo{eq:NormalizedMonopoleSet} \\
$\nkernel(\Param,\Param')$ & $\BasicSet$-normalized  kernel \eqreftwo{eq:defnkernel}\\
$K$, $\mathtt{K}$ & kernel-related func. \eqreftwo{eq:DefKernelF},\eqreftwo{eq:KShiftBased}\\
$K_{\sigma}$ & Gaussian kernel \eqreftwo{eq:DefDiracGaussianVariance} \\
\hline
$\FClass = \set{\rfeat}_{\freq \in \Omega}$ & generic class of features \eqreftwo{eq:DefRandomFeatureFunction} \\
$\freqdist$ & probability distribution on feature parameters $\freq$ \eqreftwo{eq:DefIntegralRepresentation} \\
$w(\freq)$ & weights (Def. \reftwo{def:weightedRFF})\\
$C_{\freqdist}$ & normalization constant \eqreftwo{eq:DefCfreqdist} \\
$s$ & scale factor \eqreftwo{eq:freqdistGMM}\\
$\sigma(s)$, $\sigma^{\star}_{k}$ & parameters of Gaussian kernel \eqreftwo{eq:DefDiracGaussianVariance} and \eqreftwo{eq:MainSeparationAssumption} \\
$\FClass'$, $\FClass''$ & classes derived from Fourier feature class $\FClass$ (Lem. \reftwo{lem:ShiftBasedRF} and \reftwo{lem:TangentLocationBased}) \\ 
\hline
$\sep$, $R$ & separation, domain bound \eqreftwo{eq:DefSeparation} \\
$\HypClass_{k,\sep,R}$ & constrained hypothesis class \eqreftwo{eq:DefSeparation} \eqreftwo{eq:HypClassGMM} \\
$\Prob^\star$ &  projected distribution (Thm. \reftwo{thm:mainkmeansthm}, Thm. \reftwo{thm:maingmmthm}) \\
$V_{l}(\hyp)$, $W_{l}(\hyp)$ & Voronoi cell, Voronoi partition\\
$\alpha_{l}(\Prob,\hyp)$ & Voronoi weights \eqreftwo{eq:DefVoronoiCell} \\
\hline
$C_{\SketchingOperatorProb}$, $C(K)$, $K_{\max}$, $K'_{\max}$, $K''_{\max}$ & constants
related to kernel $K$ \eqreftwo{eq:lowerDRIP}\eqreftwo{eq:MutualCoherenceRBF1} \\
$L_{\FClass}$ & Lem. \reftwo{lem:ShiftBasedRF} \\
$C_{\FClass}$, $C_{\FClass}'$, $C_{\FClass}''$ & Lem. \reftwo{lem:TangentLocationBased} \\
\hline
$\norm{\mathcal{E}}$ & radius of a set of measures \eqreftwo{eq:DefSetRadius} \\
$\ConcFn(t)$ & concentration function \eqreftwo{eq:PointwiseConcentrationFn} \\
$\covnum{\norm{\cdot}}{A}{\eps}$ & covering numbers \\
$\Ball$ & ball \\
$[Y]_{k,\weightSet}$ & mixture set \eqreftwo{eq:DefMixSet} \\ \hline
\end{supertabular}
\onecolumn

\bibliographystyle{abbrvnat}
\bibliography{complearn_preprint}
\end{document}